\title{Scalable Gaussian Processes}
\author{Jihao Andreas Lin}
\begin{document}

\frontmatter

\maketitle


\begin{declaration}

This thesis is the result of my own work and includes nothing which is the outcome of work done in collaboration except as declared in the preface and specified in the text. It is not substantially the same as any work that has already been submitted, or, is being concurrently submitted, for any degree, diploma or other qualification at the University of Cambridge or any other University or similar institution except as declared in the preface and specified in the text. It does not exceed the prescribed word limit for the relevant Degree Committee.


\end{declaration}
\begin{abstract}
Driven by advances in hardware for massively-parallel computation, machine learning models trained on large amounts of data have become capable of accomplishing complex tasks, such as generating realistic images or maintaining conversations in natural language.
However, the inability to \emph{know when they don't know} often leads to overconfidence and hallucinations.

Gaussian processes are a powerful framework for uncertainty-aware function approximation and sequential decision-making.
Unfortunately, their classical formulation does not scale gracefully to large amounts of data and modern hardware for massively-parallel computation, prompting many researchers to develop techniques which improve their scalability.

This dissertation focuses on the powerful combination of iterative methods and pathwise conditioning to develop methodological contributions which facilitate the use of Gaussian processes in modern large-scale settings.
By combining these two techniques synergistically, expensive computations are expressed as solutions to systems of linear equations and obtained by leveraging iterative linear system solvers.
This drastically reduces memory requirements, facilitating application to significantly larger amounts of data, and introduces matrix multiplication as the main computational operation, which is ideal for modern hardware.

In particular, this dissertation introduces stochastic gradient algorithms as a computationally efficient method to solve linear systems iteratively.
To this end, custom optimisation objectives, stochastic gradient estimators, and variance reduction techniques are developed and analysed.
Empirically, the proposed methods achieve state-of-the-art performance on large- scale regression, Bayesian optimisation, and molecular binding affinity prediction tasks.

Additionally, generic improvements, which are applicable to any iterative linear system solver in the context of Gaussian processes, are contributed, leading to computational speed-ups of up to $72\times$ compared to established approaches.
Furthermore, iterative methods and pathwise conditioning are combined with structured linear algebra techniques to attain even greater scalability, which is demonstrated on real-world datasets with up to five million examples, including robotics, automated machine learning, and climate modelling applications.
\end{abstract}


\begin{acknowledgements}
My academic journey at the University of Cambridge has been a transformative experience, which would not have been possible without the support and guidance from many individuals.

First and foremost, I want to thank my supervisor José Miguel Hernández-Lobato for guiding me through my journey as a PhD student.
Miguel always provided me with ample academic freedom, unequivocal feedback, and efficacious advice, all of which I truly appreciate.
He also encouraged me to attend scientific conferences, expand my professional network, and collaborate with others, which led to numerous eye-opening and life-changing experiences.

This brings me to the next group of people who I want to thank: my outstanding collaborators.
In particular, I want to start by expressing my deepest gratitude to Javier Antorán for playing the role of a second, more hands-on supervisor for me.
His endless amounts of enthusiasm, cleverness, and optimism continue to inspire me.
Next, I would like to thank Shreyas Padhy for working closely together on a daily basis and sharing the joys and challenges of research.
Our pair programming days have become a nostalgic memory of mine.
Additionally, I am grateful to David Janz and Csaba Szepesvári for all of their bountiful theoretical insights, and I want to thank Alexander Terenin for teaching me about scientific writing and presentation.
Furthermore, I want to thank Sebastian Ament, Maximilian Balandat, David Eriksson, and Eytan Bakshy.
Working together provided me with many novel perspectives and insights about industry research in comparison to academia.
I also want to thank Austin Tripp, from whom I learned about computational chemistry, and Gergely Flamich, who taught me about neural compression.
In addition to being collaborators, I am grateful to Bruno Mlodozeniec and Kenza Tazi for being good friends and lab mates, which leads me to the next category.

I want to thank Runa Eschenhagen, for teaching me about second-order optimisation and how to improve my bench press,
Bruno Mlodozeniec, for proving complicated theorems and making great coffee for me, Kenza Tazi for sharing my passion for Gaussian processes and always bringing a smile into the office,
Juyeon Heo, for helping me organise my thoughts and motivating me when facing challenging situations, and Isaac Reid, for working on a coursework project and leading the MLG reading group together in our first PhD year.

Additionally, I also want to thank James Allingham, for calming me down during stressful times and providing the best food recommendations,
Jonathan So, for generously ordering dinner for me when working late,
Yichao Liang, for sharing various kinds of tasty snacks with me,
and Tony OuYang, for keeping me up to date on the latest football match.

Before moving on to individuals outside of the research lab, I want to thank Carl Rasmussen, Richard Turner, and Hong Ge for fostering a collaborative research environment.
I am also grateful to Hong Ge for being my advisor, and Kimberly Cole and Francesca McCaughan for their daily administrative support.

Furthermore, I am grateful to Noah Winkel, Daniel Fey, Lukas Schlitt, Sven Kantwill, Leonie Schenck, Christian Pöpel and Moritz Mathes, for being friends since middle school, Timm Behnecke and Jens Geisse, for our fascinating philosophical discussions, Veronika Kaletta and Marius van der Wijden, for being my personal counsellors, and Daniel Vasconcelos, Tom Kayser, and Andre Dinter, for our entertaining adventures together.

I am also deeply indebted to numerous individuals who influenced my academic journey before starting my PhD, including Peter Theissen, Frank Bühler, and Gerhard Röhner, who provided me with solid foundations to study computer science, Horst Fey, who vouched for me by recommending me to Stefan Roth, who offered me to become his research assistant during the first year of my undergraduate studies, Jochen Gast, who convinced me to learn the Python programming language and patiently explained machine learning and computer vision topics to me, Frank Wood, who introduced me to the world of Bayesian machine learning, and Joe Watson and Pascal Klink, who supervised me during my time as a MSc student in the research lab of Jan Peters.
Joe and Pascal taught me that successful research involves curiosity and creativity, but also requires perseverance at times. 
Without the immense support from all of these individuals, it would not have been possible for me to pursue my PhD.

Speaking about prerequisites for my PhD, I am grateful to David and Claudia Harding, who funded my studies through the Harding Distinguished Postgraduate Scholars Programme.
It is not an exaggeration to claim that, without their generous philanthropy, it would not have been possible for me to pursue my PhD.
In this context, I also want to thank Queens' College and the corresponding donors, and the Department of Engineering for numerous academic travel grants, which allowed me to attend international conferences to present my work.

My research has been performed using resources provided by the University of Cambridge Research Computing Service, funded by EPSRC Tier-2 capital grants and my supervisor, which I am grateful for.
I thank Stuart Rankin for maintaining these computing facilities.

Last but certainly not least, I thank my parents and my sister for their unwavering support.
\end{acknowledgements}


\tableofcontents





\mainmatter


\chapter{Introduction}
\label{chap:intro}

\ifpdf
    \graphicspath{{Chapter1/Figs/Raster/}{Chapter1/Figs/PDF/}{Chapter1/Figs/}}
\else
    \graphicspath{{Chapter1/Figs/Vector/}{Chapter1/Figs/}}
\fi
Recently, machine learning undoubtedly transformed the world, improving capabilities in areas such as computer vision or natural language understanding at an unprecedented pace.
Powered by advances in hardware for massively-parallel computation, learning from large amounts of data has become the norm.
Nonetheless, common failure modes include erratic behaviour and hallucinations, often in combination with overconfidence, obstructing the use of machine learning models in safety-critical environments, where calibrated uncertainty quantification is crucial to prevent serious consequences.

Providing data-driven models with the ability to \emph{know when they don't know} has historically been the domain of probabilistic modelling and Bayesian inference, which leverage principled mathematics to incorporate prior knowledge and quantify uncertainty. 
Gaussian processes are a prominent member in this category, because they can be used for flexible uncertainty-aware function approximation.
However, the classical formulation of Gaussian processes requires the inverse of a large positive-definite matrix, which is typically computed via matrix decomposition.
This does not scale well to large amounts of data and modern hardware.

To improve the scalability of Gaussian processes, researchers have proposed a multitude of different approaches, including sparse and variational methods \citep{candela2005,titsias09,titsias2009report,hensman13,wu2022variational,wenger2022posterior,wenger2024computation}, random feature approximations \citep{rahimi08,sutherland15,li2019towards,tripp2023tanimoto}, approaches exploiting linear algebraic structure \citep{bonilla2007multi,stegle2011efficient,wilson2015kernel,gardner2018product,kapoor2021skiing,lin2024scaling,lin2025scalable}, and iterative methods using linear system solvers \citep{gardner18,WangPGT2019exactgp,lin2023sampling,lin2024stochastic,lin2024warm,lin2024improving,lin2024scaling,lin2025scalable}.
An overview of prevalent approaches will be given in \Cref{sec:scalable_inference}.

This dissertation focuses on the powerful combination of \emph{iterative methods} and \emph{pathwise conditioning} \citep{wilson20,wilson21} to develop novel methodological contributions which facilitate the use of Gaussian processes in modern large-scale settings.
The fundamental idea of iterative methods is to express computationally expensive terms as solutions to systems of linear equations with a positive-definite coefficient matrix.
The solutions to these linear systems are then obtained via optimisation of corresponding convex quadratic objectives using iterative gradient-based optimisers, circumventing explicit matrix decompositions.

Iterative methods have several intrinsic advantages.
Their iterative nature implies that computations are amenable to mini-batching, which substantially reduces memory requirements.
Additionally, iterative methods rely on matrix multiplications instead of matrix decompositions, which is more suitable for modern hardware.
Furthermore, iterative methods generally perform approximate inference in the exact Gaussian process model up to a specified numerical tolerance, whereas other methods for scalable inference often perform inference in an approximate model.
Moreover, iterative methods can be combined with other approaches for scalable inference, resulting in even greater scalability.
For example, \Cref{chap:sgd} combines them with sparse methods and \Cref{chap:lkgp} combines them with structured linear algebra.

Pathwise conditioning \citep{wilson20,wilson21} is a method to generate samples from a Gaussian process posterior.
However, to understand and appreciate the benefits of pathwise conditioning, it is necessary to introduce some context about posterior inference first.

In standard Gaussian process regression, which will be discussed in \Cref{sec:gp_regression}, posterior inference consists of updating the Gaussian process prior based on observed data to obtain the posterior.
Although the posterior of a Gaussian process is technically a distribution over functions, the conventional approach considers its evaluation at a finite number of locations, resulting in a finite-dimensional posterior distribution which is explicitly represented by its mean and covariance matrix.
The process of computing the posterior mean and covariance matrix at selected locations is typically regarded as \emph{making predictions}, especially if the selected locations are not included in the observed data.

While the posterior mean and covariance matrix are often useful by themselves, there are certain situations where samples from the posterior are required.
For example, many acquisition functions in Bayesian optimisation \citep{Snoek2012,garnett2023bobook} can be expressed as the expected value of some function with respect to an input which follows a Gaussian process posterior.
However, it is generally intractable to derive analytical expressions for such expected values.
Instead, an unbiased estimate can be obtained via Monte Carlo integration, which computes an average over function values evaluated at samples from the Gaussian process posterior, requiring actual samples rather than its mean and covariance matrix.

The conventional way of generating a sample from a Gaussian process posterior consists of evaluating the posterior mean and covariance matrix, followed by computing the Cholesky decomposition of the latter, which is used to apply an affine transformation to a sample from a standard normal distribution.
Details of this procedure will be discussed in \Cref{sec:generating_posterior_samples}.
Since the posterior mean and covariance matrix depend on the predetermined locations, the whole procedure must be repeated to evaluate the sample at other locations, which can be computationally expensive.
In the same example as before, acquisition functions are typically maximised using iterative optimisation, leading to the evaluation of posterior samples at many sequentially dependent locations and the repetition of the whole sampling procedure for each new set of locations.

In contrast to the conventional way of generating a posterior sample, pathwise conditioning expresses a sample from the Gaussian process posterior directly as a random function by applying a data-dependent update to a sample from the Gaussian process prior.
In particular, pathwise conditioning does not explicitly manifest the posterior mean or covariance matrix.
In terms of required computations, the standard way of drawing posterior samples requires the solution to one system of linear equations per location at which the sample shall be evaluated, whereas pathwise conditioning requires one linear system solution per sample, independent of the number of locations.
This comparison illustrates why pathwise conditioning is particularly useful if evaluations are necessary at many different locations, such as during acquisition function maximisation in Bayesian optimisation.
However, a subtle yet substantial caveat of pathwise conditioning is the fact that it requires a sample from the Gaussian process prior, which can be computationally demanding on its own.
Nonetheless, effective approximations such as random features exist, which will be discussed in \Cref{sec:random_features}.
Furthermore, a more detailed discussion of pathwise conditioning is provided in \Cref{sec:generating_posterior_samples}.

While iterative methods and pathwise conditioning already have their own individual merits, their combination is particularly powerful, and also the overall topic of this dissertation.
In particular, iterative methods can be used to solve a single large system of linear equations, and the resulting posterior sample can then be evaluated at many different input locations via pathwise conditioning.
This synergy allows posterior inference in Gaussian processes to scale to massive amounts of observed data and large numbers of locations for prediction.
Throughout this dissertation, all original methodological contributions leverage this powerful and synergistic combination in different ways to improve the scalability of Gaussian processes.
The following section provides brief summaries of these contributions and an outline of the remaining dissertation.

\section{Outline and Contributions}
This section provides an outline of this dissertation and its contributions.
In the following, and throughout this dissertation, I write \emph{we} in a communal way, to refer to myself, but also to refer to my co-authors, the reader, and sometimes the scientific community in general.

\begin{itemize}
    \item \Cref{chap:background} provides the necessary background and concepts for the following chapters.
    It is organised into two sections, where the first section formally introduces Gaussian processes, including posterior inference, pathwise conditioning, commonly used covariance functions, and the basics of model selection via marginal likelihood optimisation.
    The second section discusses related work on scalable inference in Gaussian processes which is relevant for the contributions of this dissertation.
    \item \Cref{chap:sgd} introduces stochastic gradient descent as a novel iterative linear system solver for Gaussian processes which, in combination with pathwise conditioning, facilitates approximate inference with asymptotically linear time and memory requirements, in contrast to respectively cubic and quadratic requirements of conventional approaches.
    We develop low-variance optimisation objectives to draw samples from the posterior and further extend these to inducing points for even greater scalability.
    Additionally, we observe that stochastic gradient descent often produces accurate predictions, even in cases where it does not quickly converge to the optimum.
    We explain this through a spectral characterisation of its implicit bias, showing that our algorithm produces predictive distributions close to the true posterior both in regions with sufficient data coverage, and in regions sufficiently far away from the data.
    Furthermore, we demonstrate empirically that stochastic gradient descent achieves state-of-the-art performance on sufficiently large-scale or ill-conditioned regression tasks, and a large-scale Bayesian optimisation problem.
    This chapter is based on \citet{lin2023sampling}.
    \item \Cref{chap:sdd} builds on the previous chapter to develop stochastic \emph{dual} descent for Gaussian processes.
    We introduce a dual optimisation objective, which shares the same unique minimiser, but exhibits more favourable curvature properties, allowing for significantly larger step sizes, faster convergence, and reduced computational costs.
    Additionally, we compare and analyse different stochastic gradient estimation, momentum acceleration, and iterate averaging techniques, leading to recommendations with lower variance and superior convergence properties.
    Empirically, we improve upon the state-of-the-art from the previous chapter, and further demonstrate that stochastic dual descent places Gaussian process regression on par with state-of-the-art graph neural networks on a molecular binding affinity prediction task.
    This chapter is based on \citet{lin2024stochastic}.
    \item \Cref{chap:solvers} makes generic contributions for iterative linear system solvers in the context of marginal likelihood optimisation for Gaussian processes, which are applicable to any iterative linear system solver.
    We introduce a pathwise gradient estimator, which reduces the required number of solver iterations and amortises the computational costs of drawing posterior samples by leveraging a connection to pathwise conditioning.
    Additionally, we propose to initialise linear system solvers with intermediate results, leading to faster convergence at the cost of introducing negligible bias.
    Furthermore, we investigate the behaviour of linear system solvers on a limited computational budget, stopping them before reaching convergence, which is common practice in large-scale settings.
    Empirically, we show that our techniques provide speed-ups of up to $72\times$ when solving until convergence, and decrease the average residual norm by up to $7\times$ when stopping early.
    This chapter is based on \citet{lin2024improving} and \citet{lin2024warm}.
    \item \Cref{chap:lkgp} considers Gaussian process regression with Kronecker product-structured kernel matrices, which enable scalable inference via factorised matrix decompositions, but usually require fully gridded data from a Cartesian product space to be applicable.
    To lift this limitation, we propose \emph{latent} Kronecker structure, expressing the covariance matrix of observed values as the projection of a latent Kronecker product.
    However, due to the introduced projections, the structure can no longer be exploited via factorised matrix decompositions.
    Since the structure still permits fast matrix multiplication, we apply iterative linear system solvers and pathwise conditioning for scalable inference, requiring substantially fewer computational resources than standard iterative methods without latent Kronecker structure.
    Additionally, we derive an asymptotic break-even point, which quantifies the level of sparsity up to which latent Kronecker structure will be efficient, and demonstrate empirically that the formula is accurate.
    Furthermore, we conduct experiments on real-world datasets with up to five million examples, including robotics, automated machine learning, and climate modelling applications, and show that our method outperforms state-of-the-art sparse and variational approaches.
    This chapter is based on \citet{lin2024scaling} and \citet{lin2025scalable}.
    \item \Cref{chap:conclusion} concludes this dissertation by summarising original contributions from the previous chapters, discussing their key insights, notable takeaways, and limitations.
    Finally, I provide an overview and brief discussion of potential directions for future research, including further improvements for stochastic gradient descent and generic improvements for iterative linear system solvers.
\end{itemize}

\section{List of Publications}
This section enumerates, in chronological order, all peer-reviewed publications which I have written or contributed to while working towards this dissertation.
Titles of publications whose content is included in this dissertation, and my own name are boldfaced.
Superscript asterisks denote equal contribution and shared first authorship.

\begin{enumerate}
    \item \textbf{J. A. Lin}, J. Antorán, and J. M. Hernández-Lobato. Online Laplace Model Selection Revisited. In \emph{Advances in Approximate Bayesian Inference}, 2023. (\textbf{Contributed Talk})
    \item K. Tazi, \textbf{J. A. Lin}, R. Viljoen, A. Gardner, T. John, H. Ge, and R. E. Turner. Beyond Intuition, a Framework for Applying GPs to Real-World Data. In \emph{ICML Structured Probabilistic Inference \& Generative Modeling Workshop}, 2023.
    \item \textbf{J. A. Lin}, G. Flamich, and J. M. Hernández-Lobato. Minimal Random Code Learning with Mean-KL Parameterization. In \emph{ICML Neural Compression Workshop}, 2023.
    \item \textbf{J. A. Lin}*, J. Antorán*, S. Padhy*, D. Janz, J. M. Hernández-Lobato, and A. Terenin. \textbf{Sampling from Gaussian Process Posteriors using Stochastic Gradient Descent}. In \emph{Advances in Neural Information Processing Systems}, 2023. (\textbf{Oral Presentation})
    \item \textbf{J. A. Lin}*, S. Padhy*, J. Antorán*, A. Tripp, A. Terenin, C. Szepesvári, J. M. Hernández-Lobato, and D. Janz. \textbf{Stochastic Gradient Descent for Gaussian Processes Done Right}. In \emph{International Conference on Learning Representations}, 2024.
    \item \textbf{J. A. Lin}, S. Padhy, B. Mlodozeniec, and J. M. Hernández-Lobato. \textbf{Warm Start Marginal Likelihood Optimisation for Iterative Gaussian Processes}. In \emph{Advances in Approximate Bayesian Inference}, 2024.
    \item \textbf{J. A. Lin}, S. Padhy, B. Mlodozeniec, J. Antorán, and J. M. Hernández-Lobato. \textbf{Improving Linear System Solvers for Hyperparameter Optimisation in Iterative Gaussian Processes}. In \emph{Advances in Neural Information Processing Systems}, 2024.
    \item \textbf{J. A. Lin}, S. Ament, M. Balandat, and E. Bakshy. \textbf{Scaling Gaussian Processes for Learning Curve Prediction via Latent Kronecker Structure}. In \emph{NeurIPS Bayesian Decision-making and Uncertainty Workshop}, 2024.
    \item \textbf{J. A. Lin}, S. Ament, M. Balandat, D. Eriksson, J. M. Hernández-Lobato, and E. Bakshy. \textbf{Scalable Gaussian Processes with Latent Kronecker Structure}. In \emph{International Conference on Machine Learning}, 2025. 
\end{enumerate}

\chapter{Background and Related Work}
\label{chap:background}

\ifpdf
    \graphicspath{{Chapter2/Figs/Raster/}{Chapter2/Figs/PDF/}{Chapter2/Figs/}}
\else
    \graphicspath{{Chapter2/Figs/Vector/}{Chapter2/Figs/}}
\fi

This chapter formally introduces Gaussian processes and related work on scalable inference.
To successfully follow the discussion of these topics, a basic understanding of linear algebra, multivariable calculus, and probability is required.
In particular, positive-definite matrices and the multivariate normal distribution will be very important.
In the following, the first section will introduce Gaussian processes and how to use them for regression, covariance functions and examples of popular choices thereof, and the marginal likelihood and its role in model selection.
We refer to \citet{rasmussen2006} for a more comprehensive introduction to Gaussian processes.
The second section will discuss relevant related work on performing scalable inference in Gaussian processes, including sparse and variational methods, random features, approaches involving structured kernel matrices, and applications of iterative linear system solvers.
Many of these methods for scalable inference will reappear in later chapters, either used by contributed methods or as baselines for comparison.

This chapter includes content which is adapted from the following publications:
\begin{itemize}
    \item J. A. Lin, J. Antorán, S. Padhy, D. Janz, J. M. Hernández-Lobato, and A. Terenin. Sampling from Gaussian Process Posteriors using Stochastic Gradient Descent. In \emph{Advances in Neural Information Processing Systems}, 2023.
    \item J. A. Lin, S. Padhy, J. Antorán, A. Tripp, A. Terenin, C. Szepesvári, J. M. Hernández-Lobato, and D. Janz. Stochastic Gradient Descent for Gaussian Processes Done Right. In \emph{International Conference on Learning Representations}, 2024.
    \item J. A. Lin, S. Ament, M. Balandat, D. Eriksson, J. M. Hernández-Lobato, and E. Bakshy. Scalable Gaussian Processes with Latent Kronecker Structure. In \emph{International Conference on Machine Learning}, 2025. 
\end{itemize}

\section{Gaussian Processes}
Intuitively, a Gaussian process is a generalisation of the multivariate normal distribution over random vectors to a distribution over random functions.
While it may seem formidable to think about distributions over infinite-dimensional objects, the main strategy is to only ever reason with and about finite-dimensional quantities, but to ensure that the reasoning applies to any finite subset.
This leads us to the formal definition of a Gaussian process.

Let $f: \c{X} \rightarrow \R$ be a stochastic process which maps elements from an index set $\c{X}$ into $\R$.
The stochastic process $f$ is a Gaussian process if and only if for any finite subset $\{ \v{x}_i \}_{i=1}^n \subset \c{X}$, the collection of random variables $\{ f(\v{x}_i) \}_{i=1}^n$ follows a multivariate normal distribution.
Alternatively, a Gaussian process $f$ can be uniquely identified by its mean function $\mu$ and its kernel or covariance function $k$, which are defined as
\begin{equation}
    \mu(\v{x}) = \E [ f(\v{x}) ] \quad \text{and} \quad
    k(\v{x}, \v{x}') = \E [ (f(\v{x}) - \mu(\v{x}))(f(\v{x}') - \mu(\v{x}')) ],
\end{equation}
for $\v{x}, \v{x}' \in \c{X}$.
We write $f \~[GP](\mu, k)$ to indicate that $f$ is a Gaussian process with mean function $\mu$ and covariance function $k$.
The mean function $\mu$ defines the expected value of $f(\v{x})$ for any given input $\v{x}$.
It is often assumed to be zero to simplify derivations, which is reasonable because data can usually be centred around zero by subtracting its empirical mean.
The covariance function $k$ defines the concept of similarity or closeness, which is important for modelling purposes because we expect similar inputs to have similar outputs.
Formally, $k$ must be positive semi-definite and it defines the covariance in output space as a function of the input space, namely $k(\v{x}, \v{x}') = \Cov (f(\v{x}), f(\v{x}'))$.
When Gaussian processes are applied to model data, the covariance function typically carries the main properties of the associated model and is able to encode properties such as stationarity or periodicity.

\subsection{Gaussian Process Regression}
\label{sec:gp_regression}
So far, we have discussed Gaussian processes as an abstract mathematical object.
However, in machine learning, we are interested in leveraging Gaussian processes as a tool to model data.
To this end, let $\c{D} = \{ (\v{x}_i, y_i) \}_{i=1}^n$ be a dataset consisting of $n$ pairs of inputs $\v{x}_i \in \c{X}$ and outputs $y_i \in \R$, which we denote as $\m{X}$ and $\v{y}$.
Unless otherwise specified, we assume $\c{X} = \R^d$ throughout this dissertation.
Our goal is to learn a function $f$ which can describe their relationship as
$y_i = f(\v{x}_i) + \eps_i$, where each $\eps_i \~[N](0, \sigma^2)$ is independent and identically distributed observation noise.
We can approach this task by introducing a Gaussian process prior over the latent function $f$ and performing Bayesian inference.

In Bayesian inference, we first express our \emph{prior} belief over a random quantity and later, after observing data, update this belief to obtain the \emph{posterior} according to Bayes' rule.
In Gaussian process regression, our prior belief consists of assuming that $f$ follows a Gaussian process with a particular mean function $\mu$ and covariance function $k$, which implies that
\begin{equation}
\label{eq:gp_prior}
    \ubr{
        \begin{bmatrix}
            f(\v{x}_1) \\ \vdots \\ f(\v{x}_n)
        \end{bmatrix}
    }_{{\v{f}_\m{X}}}
    \~[N]\Bigg(
    \ubr{
        \begin{bmatrix}
            \mu(\v{x}_1) \\ \vdots \\ \mu(\v{x}_n)
        \end{bmatrix}
    }_{\v\mu_\m{X}},
    \ubr{
        \begin{bmatrix}
            k(\v{x}_1, \v{x}_1) & \dots & k(\v{x}_1, \v{x}_n) \\
            \vdots & \ddots & \vdots \\
            k(\v{x}_n, \v{x}_1) & \dots & k(\v{x}_n, \v{x}_n)
        \end{bmatrix}
    }_{\m{K}_\m{XX}}
    \Bigg),
\end{equation}
where $\v{f}_\m{X} \in \R^n$ and $\v\mu_\m{X} \in \R^n$ refer to vectors consisting of $f(\v{x}_i)$ and $\mu(\v{x}_i)$ for $i \in [1, ..., n]$, respectively, and $\m{K}_\m{XX} \in \R^{n \times n}$ is the \emph{kernel matrix}, containing pairwise evaluations $k(\v{x}_i,\v{x}_j)$ for $i,j \in [1,\dots,n]$.
In other words, under the Gaussian process prior, that is, before actually observing any data, the latent function evaluated at the location of our observed inputs $\v{f}_\m{X}$ is assumed to follow a multivariate normal distribution with mean $\v\mu_\m{X}$ and covariance matrix $\m{K}_\m{XX}$.
This may sound confusing because we are trying to evaluate $f$ at input locations $\v{x}_i$ before actually observing $\v{x}_i$.
However, the prior distribution over $\v{f}_\m{X}$ follows directly from the definition of a Gaussian process, namely that, for any finite subset $\{ \v{x}_i \}_{i=1}^n \subset \c{X}$, the collection of random variables $\{ f(\v{x}_i) \}_{i=1}^n$ follows a multivariate normal distribution.

To find the \emph{likelihood} of latent function values $\v{f}_\m{X}$ given observed noisy outputs $\v{y}$, we recall the relationship $y_i = f(\v{x}_i) + \eps_i$ with $\eps_i \~[N](0, \sigma^2)$ independent and identically distributed, which defines the conditional distribution of $\v{y} \given \v{f}_\m{X}$ as
\begin{equation}
\label{eq:gp_likelihood}
    \ubr{
        \begin{bmatrix}
            y_1 \given f(\v{x}_1) \\ \vdots \\ y_n \given f(\v{x}_n)
        \end{bmatrix}
    }_{\v{y} \given \v{f}_\m{X}}
    \~[N]\Bigg(
    \ubr{
        \begin{bmatrix}
            f(\v{x}_1) \\ \vdots \\ f(\v{x}_n)
        \end{bmatrix}
    }_{\v{f}_\m{X}},
    \ubr{
        \begin{bmatrix}
            \sigma^2 & \dots & 0 \\
            \vdots & \ddots & \vdots \\
            0 & \dots & \sigma^2
        \end{bmatrix}
    }_{\sigma^2 \m{I}}
    \Bigg).
\end{equation}
Similarly, we can use the same relationship to find the \emph{marginal likelihood} to be
\begin{equation}
\label{eq:gp_marginal_likelihood}
    \ubr{
        \begin{bmatrix}
            y_1 \\ \vdots \\ y_n
        \end{bmatrix}
    }_{\v{y}}
    \~[N]\Bigg(
    \ubr{
        \begin{bmatrix}
            \mu(\v{x}_1) \\ \vdots \\ \mu(\v{x}_n)
        \end{bmatrix}
    }_{\v{\mu}_\m{X}},
    \ubr{
        \begin{bmatrix}
            k(\v{x}_1, \v{x}_1) + \sigma^2 & \dots & k(\v{x}_1, \v{x}_n) \\
            \vdots & \ddots & \vdots \\
            k(\v{x}_n, \v{x}_1) & \dots & k(\v{x}_n, \v{x}_n) + \sigma^2
        \end{bmatrix}
    }_{\m{K}_\m{XX} + \sigma^2 \m{I}}
    \Bigg),
\end{equation}
which, in the context of Bayesian inference, is also sometimes called the \emph{evidence}.
It plays an important role in model selection and will be discussed in \Cref{sec:model_selection}.

Finally, to be able to make predictions at new input locations $\m{X}_* \subset \c{X}$, we want to find the \emph{posterior} distribution over $\v{f}_{\m{X}_*}$, the latent function values at $\m{X}_*$, given noisy observations $\v{y}$.
To this end, we first consider the \emph{joint} distribution over $\v{y}$ and $\v{f}_{\m{X}_*}$, which can be written as
\begin{equation}
    \begin{bmatrix}
        \v{y} \\ \v{f}_{\m{X}_*}
    \end{bmatrix}
    \~[N]\Bigg(
    \begin{bmatrix}
        \v\mu_{\m{X}} \\ \v\mu_{\m{X}_*}
    \end{bmatrix},
    \begin{bmatrix}
       \m{K}_\m{XX} + \sigma^2 \m{I} & \m{K}_{\m{X}\m{X}_*} \\
        \m{K}_{\m{X}_*\m{X}} & \m{K}_{\m{X}_*\m{X}_*}
    \end{bmatrix}
    \Bigg).
\end{equation}
The posterior is simply the conditional distribution of $\v{f}_{\m{X}_*}$ given $\v{y}$, which can be computed using properties of the multivariate normal distribution, resulting in updates to the prior,
\begin{align}
\label{eq:posterior}
    \v{f}_{\m{X}_* \given \v{y}}
    & \~[N]\del{
        \v\mu_{\m{X}_* \given \v{y}},
        \m{K}_{\m{X}_*\m{X}_* \given \v{y}}
    }, \\
\label{eq:posterior_mean}
    \v\mu_{\m{X}_* \given \v{y}}
    &= \v\mu_{\m{X}_*} + \m{K}_{\m{X}_*\m{X}} (\m{K}_\m{XX} + \sigma^2 \m{I})\inv (\v{y} - \v\mu_{\m{X}}), \\
\label{eq:posterior_cov}
    \m{K}_{\m{X}_*\m{X}_* \given \v{y}}
    &= \m{K}_{\m{X}_*\m{X}_*} - \m{K}_{\m{X}_*\m{X}} (\m{K}_\m{XX} + \sigma^2 \m{I})\inv \m{K}_{\m{X}\m{X}_*}.
\end{align}
With these equations, we are now able to make predictions for new inputs $\m{X}_*$ while taking our observed data and assumptions about the latent function $f$ into consideration.
However, the inverse matrices in the expressions above generally imply $\c{O}(n^3)$ time and $\c{O}(n^2)$ space complexities when using direct methods, which does not scale gracefully.
Dealing with this issue and improving scalability is the overall topic of this dissertation.

In contrast to many other methods, Gaussian process regression predicts a \emph{joint distribution} over latent function values instead of a single prediction per input.
This provides uncertainty quantification, which is essential for downstream applications such as Bayesian optimisation.
Although some deep learning methods, for example, also model predictive uncertainties, they are typically still limited to predicting \emph{marginal} variances for each output and not able to predict \emph{joint} covariances over a set of multiple outputs.
Therefore, the ability to predict joint distributions differentiates Gaussian process regression from most other regression methods.

\subsection{Generating Posterior Samples}
\label{sec:generating_posterior_samples}
The standard way of drawing a sample from a Gaussian process posterior predictive distribution consists of evaluating \Cref{eq:posterior_mean,eq:posterior_cov} at $n_*$ desired input locations $\m{X}_*$ to obtain the posterior mean $\v\mu_{\m{X}_* \given \v{y}}$ and covariance matrix $\m{K}_{\m{X}_*\m{X}_* \given \v{y}}$, computing the Cholesky decomposition of $\m{K}_{\m{X}_*\m{X}_* \given \v{y}}$, and using the resulting Cholesky factor $\m{L}$ to perform an affine transformation of a sample $\v{w}$ from a standard normal distribution,
\begin{equation}
\label{eq:loc_scale_sample}
    \v{f}_{\m{X}_* \given \v{y}} = \v\mu_{\m{X}_* \given \v{y}} + \m{L} \v{w}
    \quad \text{with} \quad \m{K}_{\m{X}_*\m{X}_* \given \v{y}} = \m{L} \m{L}\T
    \quad \text{and} \quad \v{w} \~[N](\v{0}, \m{I}),
\end{equation}
such that each unique posterior sample $\v{f}_{\m{X}_* \given \v{y}}$ corresponds to a unique $\v{w}$, and 
$\v{f}_{\m{X}_* \given \v{y}}$ satisfies
\begin{align}
    \E \sbr{\v{f}_{\m{X}_* \given \v{y}}}
    &= \E \sbr{\v\mu_{\m{X}_* \given \v{y}} + \m{L} \v{w}}
    = \v\mu_{\m{X}_* \given \v{y}} + \m{L} \E \sbr{\v{w}}
    = \v\mu_{\m{X}_* \given \v{y}} + \m{L} \v{0}
    = \v\mu_{\m{X}_* \given \v{y}}, \\
    \Var \del{\v{f}_{\m{X}_* \given \v{y}}}
    &= \Var \del{\v\mu_{\m{X}_* \given \v{y}} + \m{L} \v{w}}
    = \m{L} \Var \del{\v{w}} \m{L}\T
    = \m{L} \m{I} \m{L}\T
    = \m{K}_{\m{X}_*\m{X}_* \given \v{y}}.
\end{align}
There are several computational bottlenecks associated with drawing a posterior sample in this way.
First, we need to compute the posterior mean $\v\mu_{\m{X}_* \given \v{y}}$ and covariance matrix $\m{K}_{\m{X}_*\m{X}_* \given \v{y}}$, which is asymptotically dominated by the latter.
In particular, to compute $\m{K}_{\m{X}_*\m{X}_* \given \v{y}}$, we need to compute $\m{K}_{\m{X}_*\m{X}_*}$ and $\m{K}_{\m{X}_*\m{X}} (\m{K}_\m{XX} + \sigma^2 \m{I})\inv \m{K}_{\m{X}\m{X}_*}$.
Assuming a constant cost of evaluating the covariance function $k$, the first term requires $\c{O}(n_*^2)$ time and $\c{O}(n_*^2)$ space.
For the second term, calculation of the explicit inverse is typically avoided to improve numerical stability.
Instead, since $\m{K}_\m{XX} + \sigma^2 \m{I}$ is positive-definite, a Cholesky decomposition of $\m{K}_\m{XX} + \sigma^2 \m{I}$ followed by a batch of $n_*$ triangular $n \times n $ linear system solves against $\m{K}_{\m{X}\m{X}_*}$ can be used.
The Cholesky decomposition requires $\c{O}(n^3)$ time and $\c{O}(n^2)$ space, and the triangular linear system solves take $\c{O}(n^2 n_*)$ time and $\c{O}(n^2 + nn_*)$ space.

Once we computed $\m{K}_{\m{X}_*\m{X}_* \given \v{y}}$, we need to perform the affine transformation, whose computational complexity is dominated by the Cholesky decomposition of $\m{K}_{\m{X}_*\m{X}_* \given \v{y}}$ taking $\c{O}(n_*^3)$ time and $\c{O}(n_*^2)$ space.
Combining all the terms results in an overall asymptotic time complexity of $\c{O}(n^3 + n_*^3)$ and space complexity of $\c{O}(n^2 + n_*^2)$ to draw a single posterior sample evaluated at $n_*$ locations.
If posterior samples are drawn repeatedly without changing the observed training data, the Cholesky decomposition of $\m{K}_\m{XX} + \sigma^2 \m{I}$ can be cached and reused, lowering the time complexity to $\c{O}(n^2n_* + n_*^3)$.

\subsubsection{Pathwise Conditioning}
An alternative way of generating samples from the posterior was proposed by \citet{wilson20,wilson21}, who express a posterior sample as a transformed sample from the prior,
\begin{align}
\label{eq:pathwise_conditioning}
\v{f}_{\m{X}_* \given \v{y}}
= \ubr{\v{f}_{\m{X}_*} \vphantom{\v{f}_{\m{X}_*} + \m{K}_{\m{X}_*\m{X}} (\m{K}_{\m{XX}} + \sigma^2 \m{I})\inv (\v{y} - (\v{f}_\m{X} + \v\eps))} }_{\text{prior}} + \ubr{\m{K}_{\m{X}_*\m{X}} (\m{K}_{\m{XX}} + \sigma^2 \m{I})\inv (\v{y} - (\v{f}_\m{X} + \v\eps)) \vphantom{\v{f}_{\m{X}_*} + \m{K}_{\m{X}_*\m{X}} (\m{K}_{\m{XX}} + \sigma^2 \m{I})\inv (\v{y} - (\v{f}_\m{X} + \v\eps))} }_{\text{data-dependent update term}},
\end{align}
where $\v{f}_{\m{X}_* \given \v{y}}$ is the posterior sample evaluated at test inputs $\m{X}_*$, $\v{f}_{\m{X}_*}$ and $\v{f}_{\m{X}}$ represent the prior sample evaluated at test inputs $\m{X}_*$ and train inputs $\m{X}$ respectively, and $\v\eps \~[N](\v{0}, \sigma^2 \m{I})$ is a random normal vector.
In this formulation, each unique posterior sample $\v{f}_{\m{X}_* \given \v{y}}$ corresponds to a unique set of $\v{f}_{\m{X}_*}$, $\v{f}_\m{X}$, and $\v\eps$.
Furthermore, under a zero-mean prior, the update term can be interpreted as the posterior mean $\m{K}_{\m{X}_*\m{X}} (\m{K}_{\m{XX}} + \sigma^2 \m{I})\inv \v{y}$ (see \Cref{eq:posterior_mean}) minus an uncertainty reduction term $\m{K}_{\m{X}_*\m{X}} (\m{K}_{\m{XX}} + \sigma^2 \m{I})\inv (\v{f}_\m{X} + \v\eps)$, which becomes apparent by calculating the expected value and covariance matrix of $\v{f}_{\m{X}_* \given \v{y}}$,
\begin{align}
    \E \sbr{\v{f}_{\m{X}_* \given \v{y}}}
    &= \E \sbr{\v{f}_{\m{X}_*}} + \m{K}_{\m{X}_*\m{X}} (\m{K}_{\m{XX}} + \sigma^2 \m{I})\inv (\v{y} - (\E \sbr{\v{f}_\m{X}} + \E \sbr{\v\eps})), \\
    &= \v{\mu}_{\m{X}_*} + \m{K}_{\m{X}_*\m{X}} (\m{K}_{\m{XX}} + \sigma^2 \m{I})\inv (\v{y} - (\v{\mu}_\m{X} + \v{0})), \\
    &= \v{\mu}_{\m{X}_* \given \v{y}}, \\
    \Var \del{\v{f}_{\m{X}_* \given \v{y}}}
    &= \Var \del{\v{f}_{\m{X}_*}} + \Var \del{\m{K}_{\m{X}_*\m{X}} (\m{K}_{\m{XX}} + \sigma^2 \m{I})\inv (\v{f}_\m{X} + \v\eps)} \notag \\
    &\quad  - \Cov \del{\v{f}_{\m{X}_*}, \m{K}_{\m{X}_*\m{X}} (\m{K}_{\m{XX}} + \sigma^2 \m{I})\inv (\v{f}_\m{X} + \v\eps)}, \notag \\
    &\quad  - \Cov \del{\m{K}_{\m{X}_*\m{X}} (\m{K}_{\m{XX}} + \sigma^2 \m{I})\inv (\v{f}_\m{X} + \v\eps), \v{f}_{\m{X}_*}}, \\
    &= \m{K}_{\m{X}_*\m{X}_*} + \m{K}_{\m{X}_*\m{X}} (\m{K}_{\m{XX}} + \sigma^2 \m{I})\inv \Var \del{\v{f}_\m{X} + \v\eps} (\m{K}_{\m{XX}} + \sigma^2 \m{I})\inv \m{K}_{\m{X}\m{X}_*} \notag \\
    &\quad  - \Cov \del{\v{f}_{\m{X}_*}, \v{f}_\m{X} + \v\eps} (\m{K}_{\m{XX}} + \sigma^2 \m{I})\inv \m{K}_{\m{X}\m{X}_*}, \notag \\
    &\quad  - \m{K}_{\m{X}_*\m{X}} (\m{K}_{\m{XX}} + \sigma^2 \m{I})\inv \Cov \del{\v{f}_\m{X} + \v\eps, \v{f}_{\m{X}_*}}, \\
    &= \m{K}_{\m{X}_*\m{X}_*} + \m{K}_{\m{X}_*\m{X}} (\m{K}_{\m{XX}} + \sigma^2 \m{I})\inv \m{K}_{\m{X}\m{X}_*} \notag \\
    &\quad  - 2\m{K}_{\m{X}_* \m{X}} (\m{K}_{\m{XX}} + \sigma^2 \m{I})\inv \m{K}_{\m{X}\m{X}_*}, \\
    &= \m{K}_{\m{X}_*\m{X}_*} - \m{K}_{\m{X}_*\m{X}} (\m{K}_\m{XX} + \sigma^2 \m{I})\inv \m{K}_{\m{X}\m{X}_*}, \\
    &= \m{K}_{\m{X}_*\m{X}_* \given \v{y}},
\end{align}
revealing that subtracting the uncertainty reduction term results in subtracting a positive-definite matrix from the prior covariance matrix $\m{K}_{\m{X}_*\m{X}_*}$, reducing the posterior covariance in a Loewner order sense.
This calculation also confirms that $\v{f}_{\m{X}_* \given \v{y}}$ has the correct distribution.

Compared to using an affine transformation to generate posterior samples via \Cref{eq:loc_scale_sample}, pathwise conditioning has the property that the term $(\m{K}_{\m{XX}} + \sigma^2 \m{I})\inv (\v{y} - (\v{f}_\m{X} + \v\eps))$, which involves the computationally expensive matrix inverse, does not depend on the test inputs $\m{X}_*$.
Therefore, this term can be computed once, cached, and reused when drawing posterior samples at different test inputs $\m{X}_*$ to save computational costs.
This is particularly useful for acquisition function optimisation in the context of Bayesian optimisation, because the former may require the sequential evaluation of posterior samples at a really large number of different input locations.
Some later chapters of this dissertation will focus on methods to effectively compute this computationally expensive term with large amounts of training data.

An important detail is the method used to obtain $\v{f}_\m{X}$ and $\v{f}_{\m{X}_*}$, the prior sample evaluated at the train and test inputs respectively.
Even if $\m{X}$ and $\m{X}_*$ are disjoint, $\v{f}_\m{X}$ and $\v{f}_{\m{X}_*}$ are still correlated because they follow the underlying Gaussian process prior,
\begin{equation}
    \begin{bmatrix}
        \v{f}_\m{X} \\ \v{f}_{\m{X}_*}
    \end{bmatrix}
    \~[N]\Bigg(
    \begin{bmatrix}
        \v\mu_{\m{X}} \\ \v\mu_{\m{X}_*}
    \end{bmatrix},
    \begin{bmatrix}
       \m{K}_\m{XX} & \m{K}_{\m{X}\m{X}_*} \\
        \m{K}_{\m{X}_*\m{X}} & \m{K}_{\m{X}_*\m{X}_*}
    \end{bmatrix}
    \Bigg).
\end{equation}
Generating $\v{f}_\m{X}$ and $\v{f}_{\m{X}_*}$ jointly using an affine transformation of a standard normal random variable, akin to \Cref{eq:loc_scale_sample}, requires the Cholesky decomposition of the joint covariance matrix, whose direct computation would require $\c{O}((n + n_*)^3)$ time and $\c{O}((n + n_*)^2)$ space.
Therefore, \citet{wilson20,wilson21} suggest to use scalable approximations instead, such as random features, which will be discussed in \Cref{sec:random_features}.

If $\m{X}$ is fixed but $\m{X}_*$ changes repeatedly, for example, when maximising acquisition functions in Bayesian optimisation, the Cholesky decomposition of the joint covariance matrix can be calculated more efficiently by caching the Cholesky factor of $\m{K}_\m{XX}$, which is equivalent to conditional sampling \citep{jiang20}.
To derive the update equations, we first set the joint covariance matrix equal to its Cholesky decomposition,
\begin{equation}
    \begin{bmatrix}
       \m{K}_\m{XX} & \m{K}_{\m{X}\m{X}_*} \\
        \m{K}_{\m{X}_*\m{X}} & \m{K}_{\m{X}_*\m{X}_*}
    \end{bmatrix}
    = \m{L} \m{L}\T
    =
    \begin{bmatrix}
       \m{L}_{11} & \v{0} \\
        \m{L}_{21} & \m{L}_{22}
    \end{bmatrix}
    \begin{bmatrix}
       \m{L}_{11}\T & \m{L}_{21}\T \\
        \v{0} & \m{L}_{22}\T
    \end{bmatrix},
\end{equation}
where $\m{L}_{11}$ and $\m{L}_{22}$ are lower triangular matrices and $\m{L}_{21}$ is a dense matrix.
Performing the matrix multiplication on the right yields a block matrix equation,
\begin{equation}
    \begin{bmatrix}
       \m{K}_\m{XX} & \m{K}_{\m{X}\m{X}_*} \\
        \m{K}_{\m{X}_*\m{X}} & \m{K}_{\m{X}_*\m{X}_*}
    \end{bmatrix}
    =
    \begin{bmatrix}
       \m{L}_{11}\m{L}_{11}\T & \m{L}_{11}\m{L}_{21}\T \\
        \m{L}_{21}\m{L}_{11}\T & \m{L}_{21}\m{L}_{21}\T + \m{L}_{22}\m{L}_{22}\T
    \end{bmatrix},
\end{equation}
from which we can derive a system of three matrix equations,
\begin{align}
    \m{K}_\m{XX} &= \m{L}_{11}\m{L}_{11}\T, \\
    \m{K}_{\m{X}\m{X}_*} &= \m{L}_{11}\m{L}_{21}\T, \\
    \m{K}_{\m{X}_*\m{X}_*} &= \m{L}_{21}\m{L}_{21}\T + \m{L}_{22}\m{L}_{22}\T.
\end{align}
Upon inspection of the first equation, we identify $\m{L}_{11}$ as the lower triangular Cholesky factor of the train covariance matrix $\m{K}_\m{XX}$.
Calculating $\m{L}_{11}$ takes $\c{O}(n^3)$ time, but this cost can be amortised by caching and reusing $\m{L}_{11}$ as long as $\m{X}$ stays fixed.
Solving the remaining equations for $\m{L}_{21}$ and $\m{L}_{22}$ results in
\begin{align}
    \m{L}_{21} &= \del{\m{L}_{11}\inv \m{K}_{\m{X}\m{X}_*}}\T, \\
    \m{L}_{22} &= \del{\m{K}_{\m{X}_*\m{X}_*} - \m{L}_{21}\m{L}_{21}\T}^{\frac{1}{2}},
\end{align}
where calculating $\m{L}_{21}$ consists of solving $n_*$ triangular linear systems in $\c{O}(n^2n_*)$ time, and 
computing $\m{L}_{22}$ requires $\c{O}(nn_*^2 + n_*^3)$ time to perform some matrix operations and another Cholesky decomposition.
Thus, repeated cubic costs in $n$ can be avoided by caching $\m{L}_{11}$.

\subsection{Covariance Functions}
\label{sec:covariance_functions}
In the previous sections, we assumed the existence of a positive semi-definite covariance function $k$ which intuitively captures similarity by expressing the covariance in output space as a function of the input space, that is $k(\v{x}, \v{x}') = \Cov (f(\v{x}), f(\v{x}'))$.
In general, given two valid covariance functions $k_1$ and $k_2$, we can combine them to create a new valid covariance function $k_3$, for example, using multiplication $k_3(\v{x}, \v{x}') = k_1(\v{x}, \v{x}')k_2(\v{x}, \v{x}')$ or addition $k_3(\v{x}, \v{x}') = k_1(\v{x}, \v{x}') + k_2(\v{x}, \v{x}')$.
Choosing any particular covariance function will have a strong influence on the properties of the resulting Gaussian process regression model.
For example, some covariance functions are \emph{stationary}, which means that $k(\v{x}, \v{x}')$ only depends on $\v{x} - \v{x}'$ rather than $\v{x}$ and $\v{x}'$ individually.
This property makes stationary covariance functions invariant to translations in the input space.
In the following, we will discuss and illustrate some commonly used covariance functions for the case of $\c{X} = \R^d$.

\subsubsection{Squared Exponential}
The squared exponential covariance function is arguably the most commonly used covariance function in the context of Gaussian processes.
It is stationary and sometimes also called the Gaussian radial basis function kernel, because it can be expressed as
\begin{equation}
    k_{\mathrm{SE}}(\v{x}, \v{x}'; \ell) = \exp \del{-\frac{\norm{\v{x} - \v{x}'}_2^2}{2\ell^2}},
\end{equation}
which is proportional to the density of a Gaussian distribution.
In the expression above, $\ell$ is a length scale parameter which influences how quickly functions from the Gaussian process will change.
Smaller length scales lead to faster changes and larger length scales produce smoother functions (see \Cref{fig:squared_exponential} for an illustration).
\begin{figure}[hb]
    \centering
    \includegraphics[width=6in]{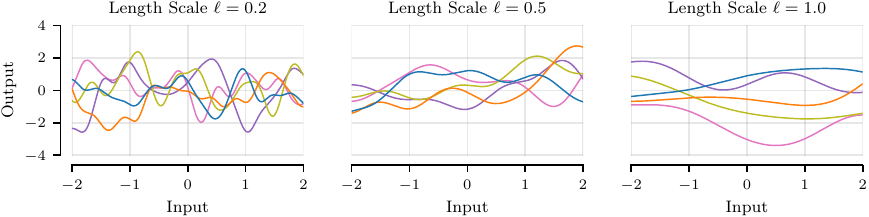}
    \caption{Samples from a Gaussian process prior using squared exponential covariance function and different length scales $\ell$. Smaller length scales lead to faster changes and larger length scales produce smoother functions.}
    \label{fig:squared_exponential}
\end{figure}

For multi-dimensional inputs, that is $d > 1$, it is common to use a separate length scale per input dimension, which is also known as automatic relevance determination.
It is also common to include a signal scale parameter which multiplies the whole expression to adapt the scale in the output space.
In general, functions from a Gaussian process prior with squared exponential covariance function will be infinitely differentiable.
If somewhat less smooth functions are desired, the next type of covariance function might be more appropriate.

\subsubsection{Matérn}
The general expression of the Matérn covariance function is given by
\begin{equation}
    k_{\mathrm{Mat}}(\v{x}, \v{x}'; \ell, \nu) = \frac{2^{1 - \nu}}{\Gamma(\nu)} \del{\sqrt{2 \nu} \frac{\norm{\v{x} - \v{x}'}_2}{\ell}}^\nu K_\nu \del{\sqrt{2 \nu} \frac{\norm{\v{x} - \v{x}'}_2}{\ell}}
\end{equation}
where $\ell$ is a length scale parameter akin to the one used by the squared exponential covariance function, $\nu$ is a smoothness parameter which controls the degree of differentiability of the resulting functions, $\Gamma$ is the gamma function which generalises the factorial function, and $K_\nu$ is the modified Bessel function of the second kind of order $\nu$.
\begin{figure}[hb]
    \centering
    \includegraphics[width=6in]{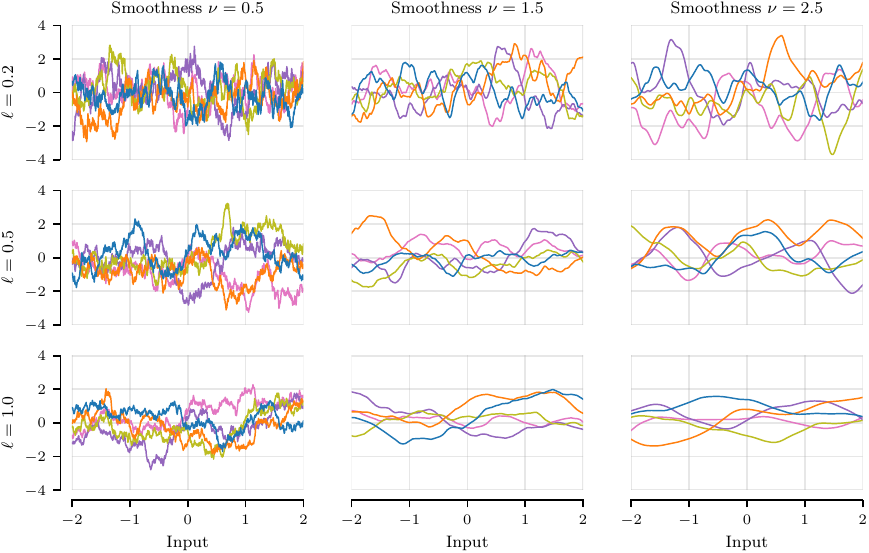}
    \caption{Samples from a Gaussian process prior with Matérn covariance function using different length scales $\ell$ and smoothness $\nu$. Increasing $\nu$ leads to smoother functions.}
    \label{fig:matern}
\end{figure}

The Matérn covariance function is a good choice to model continuous functions which are somewhat smooth but not infinitely differentiable.
It can also be decorated with an amplitude or signal scale parameter, and its length scale parameter can also be extended to length scales per input dimension if $d > 1$.
For specific values of $\nu$, its analytic expression can be substantially simplified.
Commonly used values of $\nu$ are $\nicefrac{1}{2}$, $\nicefrac{3}{2}$, and $\nicefrac{5}{2}$, for which
\begin{align}
    k_{\mathrm{Mat}}(\v{x}, \v{x}'; \ell, \nicefrac{1}{2})
    &= \exp \del{-\frac{\norm{\v{x} - \v{x}'}_2}{\ell}}, \\
    k_{\mathrm{Mat}}(\v{x}, \v{x}'; \ell, \nicefrac{3}{2})
    &= \del{1 {\,+\,} \frac{\sqrt{3}\norm{\v{x} - \v{x}'}_2}{\ell}} \exp \del{-\frac{\sqrt{3}\norm{\v{x} - \v{x}'}_2}{\ell}}, \\
    k_{\mathrm{Mat}}(\v{x}, \v{x}'; \ell, \nicefrac{5}{2})
    &= \del{1 {\,+\,} \frac{\sqrt{5}\norm{\v{x} - \v{x}'}_2}{\ell} + \frac{5\norm{\v{x} - \v{x}'}_2^2}{3\ell^2}} \exp \del{-\frac{\sqrt{5}\norm{\v{x} - \v{x}'}_2}{\ell}}.
\end{align}
As $\nu$ increases, the resulting functions become smoother (see \Cref{fig:matern}), and in the limit of $\nu \to \infty$, the Matérn covariance function converges to the squared exponential covariance function.
Like the squared exponential, the Matérn covariance function is also stationary.

\begin{figure}[hb]
    \centering
    \includegraphics[width=6in]{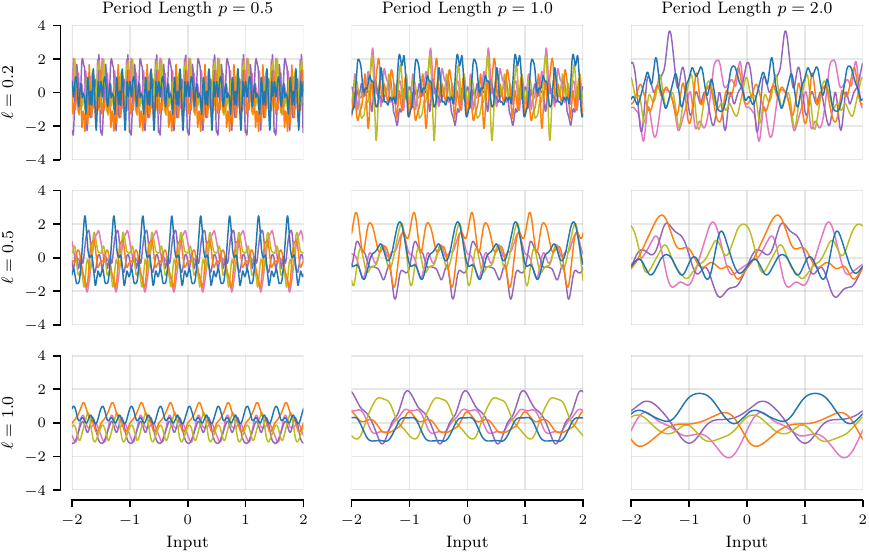}
    \caption{Samples from a Gaussian process prior with periodic covariance function using different length scales $\ell$ and period lengths $p$. Smaller period lengths $p$ result in functions which repeat themselves quicker, while larger $p$ increases the interval between repetitions.}
    \label{fig:periodic}
\end{figure}
\subsubsection{Periodic}
As the name suggests, the periodic covariance function models periodically repeating functions.
It can be written as
\begin{equation}
    k_{\mathrm{Per}}(\v{x}, \v{x}'; \ell, p) = \exp \del{-\frac{2 \sin^2 \del{\frac{\pi}{p} \norm{\v{x} - \v{x}'}_2}}{\ell^2}},
\end{equation}
where $\ell$ is a length scale and $p$ is the period length after which the function repeats itself (see \Cref{fig:periodic} for some examples).
In this form, the periodic covariance function is strictly periodic, allowing no room for deviations or changes over time, which might not be a realistic assumption for certain real-world settings.
Therefore, it is often combined with a squared exponential or Matérn covariance function.
Multiplying both covariance functions together results in functions which are locally periodic but still allowed to change slightly for different regions in the input space.

\subsection{Model Selection}
\label{sec:model_selection}
Most covariance functions, including the ones discussed in the previous section, require the selection of some parameters, such as length scales or the period length.
In rare circumstances, these parameters can be selected based on ground truth information or expert knowledge.
However, in most circumstances, we would prefer an automatic, rigorous, and data-informed decision.
The gold standard for Gaussian process regression, and arguably Bayesian inference in general, is marginal likelihood optimisation.

In Bayesian inference, the marginal likelihood represents the probability of observing the given training data under a particular model while considering all possible parameter configurations of the given model.
The marginal likelihood is also called the \emph{evidence}, and the procedure of maximising it is often referred to as \emph{type-II maximum likelihood} or \emph{empirical Bayes}.
For a thorough introduction to Bayesian inference and the marginal likelihood in the context of machine learning, we refer to \citet{bishop2006}.

For Gaussian process regression, a particular model refers to a particular covariance function and its parameters, all possible parameter configurations refers to all possible latent function values, and considering all possible latent function values refers to calculating the integral
\begin{equation}
    \ubr{p(\v{y} \given \m{X}, k, \v{\vartheta}, \sigma^2)}_{\text{marginal likelihood}}
    = \int
    \ubr{p(\v{y} \given \v{f}_\m{X}, \sigma^2)}_{\text{likelihood}} \,
    \ubr{p(\v{f}_\m{X} \given \m{X}, k, \v{\vartheta})}_{\text{prior}} \d \v{f}_\m{X},
\end{equation}
where $\m{X}$ and $\v{y}$ are respectively the observed inputs and outputs, $\v{f}_\m{X}$ are the corresponding latent function values, $\v{\vartheta}$ is a vector which contains all parameters used by the covariance function $k$, and $\sigma^2$ is the amount of observation noise.
Here, prior refers to the distribution over latent function values under the Gaussian process prior (see \Cref{eq:gp_prior}) and likelihood refers to the conditional distribution of noisy observations given latent function values (see \Cref{eq:gp_likelihood}).
In the context of Gaussian process regression, an analytical solution of this integral exists (see \Cref{eq:gp_marginal_likelihood}), although similar integrals are often intractable.

In the framework of Bayesian inference, it is desirable to maximise the marginal likelihood because it promotes data fit while penalising overly complex models.
Therefore, the resulting model will, at least in theory, achieve an optimal balance between data fit and complexity.
For the actual optimisation, it is typically preferred to consider the logarithm of the marginal likelihood as a function of all quantities which we want to select while keeping everything else constant.
In the context of Gaussian process regression, the observed data $\m{X}$ and $\v{y}$ are considered fixed.
Assuming that the covariance function $k$ is also fixed, the logarithm of the marginal likelihood as a function of $\v{\theta} = \{ \v{\vartheta}, \sigma^2 \}$ is given by
\begin{equation}
\label{eq:mll_background}
    \c{L}(\v{\theta})
    =
    \ubr{-\frac{1}{2} \del{\v{y} - \v{\mu}_\m{X}}\T \del{\m{K}_\m{XX} + \sigma^2 \m{I}}\inv \del{\v{y} - \v{\mu}_\m{X}}}_{\text{data fit term}}
    \ubr{-\frac{1}{2} \log \det \del{\m{K}_\m{XX} + \sigma^2 \m{I}}}_{\text{model complexity term}}
    \ubr{-\frac{n}{2} \log \del{2 \pi}}_{\text{constant}},
\end{equation}
where the fact that $\m{K}_\m{XX}$ depends on $\v{\vartheta}$ is suppressed to simplify the notation.
Technically, the mean function $\mu$ could also depend on further parameters, but, for simplicity, this case is also not considered here.
Typically, $\c{L}$ is optimised using gradient-based optimisation, which requires computation of the partial derivatives
\begin{align}
\label{eq:mll_grad_background}
    \frac{\partial}{\partial \theta_i}\c{L}(\v{\theta})
    &=
    \frac{1}{2} \del{\del{\m{K}_\m{XX} + \sigma^2 \m{I}}\inv \del{\v{y} - \v{\mu}_\m{X}}}\T
    \frac{\partial \del{\m{K}_\m{XX} + \sigma^2 \m{I}}}{\partial \theta_i}
    \del{\del{\m{K}_\m{XX} + \sigma^2 \m{I}}\inv \del{\v{y} - \v{\mu}_\m{X}}} \notag \\
    &\quad -\frac{1}{2} \tr \del{\del{\m{K}_\m{XX} + \sigma^2 \m{I}}\inv \frac{\partial \del{\m{K}_\m{XX} + \sigma^2 \m{I}}}{\partial \theta_i}}.
\end{align}
Due to the inverse matrices, calculating these partial derivatives tends to be computationally expensive or even intractable without further approximations.

\Cref{chap:solvers} discusses methodological contributions to make marginal likelihood optimisation in the context of Gaussian process regression more efficient.
An extensive amount of research literature on Gaussian processes introduces methods and techniques to improve the scalability of marginal likelihood optimisation or posterior inference.
Some of these existing approaches, which are particularly relevant for this dissertation, will be discussed in the next section.

\section{Scalable Inference}
\label{sec:scalable_inference}
In the previous sections, we have established some background about Gaussian processes and discussed relevant computations which are necessary to perform inference.
Unfortunately, in most cases, the time and space requirements of standard inference methods respectively obey cubic and quadratic scaling relative to the amount of training data, which quickly becomes intractable.
The overall goal of this dissertation is to contribute methods for scalable inference in Gaussian processes.
This section provides an overview of existing methods and related work on this topic.
We start by discussing sparse and variational methods, which generally represent a large dataset with a smaller amount of data or inducing points.
Afterwards, we explain random features, which construct unbiased estimators by leveraging the property that covariance functions correspond to implicit inner products.
Subsequently, we cover methods based on structured kernel matrices, which exploit certain linear algebraic structure for efficient computations.
Finally, we conclude with approaches involving iterative linear system solvers, which circumvent explicit matrix inverses or decompositions by performing iterative optimisation to obtain the same result.
Iterative linear system solvers are particularly important for this dissertation because they are a core component of all original contributions throughout this dissertation.

\subsection{Sparse and Variational Methods}
\label{sec:sparse_and_variational}
To deal with the $\c{O}(n^3)$ and $\c{O}(n^2)$ computational time and space complexities of matrix inversion via Cholesky decomposition, researchers developed \emph{sparse} methods which use a set of $m \ll n$ \emph{inducing points} to represent the observed training data.
While effective at improving inference speed, this approach requires a principled way to select inducing points.

A naive approach to choose inducing points is to select a subset of the data uniformly at random.
Conditioning the Gaussian process on a subset of size $m$ reduces the computational complexity to $\c{O}(m^3)$.
However, \citet{candela2005} state that they \emph{`would not generally expect [subset of data] to be a competitive method, since it would seem impossible (even with fairly redundant data and a good choice of the subset) to get a realistic picture of the uncertainties, when only a part of the training data is even considered'}.

The Nyström approximation \citep{williams2000using} uses a subset $\m{Z}$ of $m \ll n$ examples from the whole dataset $\m{X}$ to approximate $\m{K}_{\m{X}\m{X}}$ as $\m{Q}_{\m{X}\m{X}} = \m{K}_{\m{X}\m{Z}} \m{K}_{\m{Z}\m{Z}}\inv \m{K}_{\m{Z}\m{X}}$.
This approach is motivated by considering the eigendecomposition of a joint matrix $\m{K} = \m{U} \m{\Lambda} \m{U}\T$, which is assumed to be of rank $m$,
\begin{equation}
    \m{K}
    = \begin{bmatrix} \m{U}_{\m{Z}} \\ \m{U}_{\m{X}} \end{bmatrix} \m{\Lambda} \begin{bmatrix} \m{U}_{\m{Z}} \\ \m{U}_{\m{X}} \end{bmatrix}\T
    = \begin{bmatrix} \m{U}_{\m{Z}} \m{\Lambda} \m{U}_{\m{Z}}\T & \m{U}_{\m{Z}} \m{\Lambda} \m{U}_{\m{X}}\T \\ \m{U}_{\m{X}} \m{\Lambda} \m{U}_{\m{Z}}\T & \m{U}_{\m{X}} \m{\Lambda} \m{U}_{\m{X}}\T \end{bmatrix}
    = \begin{bmatrix} \m{K}_{\m{Z}\m{Z}} & \m{K}_{\m{Z}\m{X}} \\ \m{K}_{\m{X}\m{Z}} & \m{K}_{\m{X}\m{X}} \end{bmatrix}.
\end{equation}
Using $\m{K}_{\m{X}\m{Z}} = \m{U}_{\m{X}} \m{\Lambda} \m{U}_{\m{Z}}\T$, we can solve for $\m{U}_{\m{X}}$, obtaining $\m{U}_{\m{X}} = \m{K}_{\m{X}\m{Z}} \m{U}_{\m{Z}} \m{\Lambda}\inv$.
Finally, we can substitute this expression for $\m{U}_{\m{X}}$ to calculate $\m{K}_{\m{X}\m{X}}$ as
\begin{equation}
    \m{K}_{\m{X}\m{X}} = \m{U}_{\m{X}} \m{\Lambda} \m{U}_{\m{X}}\T
    = \m{K}_{\m{X}\m{Z}} \m{U}_{\m{Z}} \m{\Lambda}\inv \m{\Lambda} \m{\Lambda}\inv \m{U}_{\m{Z}}\T \m{K}_{\m{Z}\m{X}}
    = \m{K}_{\m{X}\m{Z}} \m{K}_{\m{Z}\m{Z}}\inv \m{K}_{\m{Z}\m{X}}
    = \m{Q}_{\m{X}\m{X}},
\end{equation}
obtaining the Nyström approximation.
The equation above requires $\m{K}$ to be full rank, but, in practice, $\m{K}$ will be low rank if $\m{Z}$ is a subset of $\m{X}$.
However, even if $\m{Q}_{\m{X}\m{X}}$ is low rank due to $m \ll n$, the matrix $\m{Q}_{\m{X}\m{X}} + \sigma^2 \m{I}$ still has full rank because $\sigma^2 \m{I}$ has full rank.

\citet{candela2005} created a taxonomy of sparse Gaussian processes, distinguishing between the Subset of Regressors (SoR) approximation \citep{silverman1985,wahba1999,smola2001}, the Deterministic Training Conditional (DTC) approximation \citep{csato2002,seeger2003}, the Fully Independent Training Conditional (FITC) approximation \citep{snelson2005} and the Partially Independent Training Conditional (PITC) approximation \citep{schwaighofer2002}.
Each category in this taxonomy corresponds to a different joint prior distribution over latent function values $\v{f}_\m{X}$ and $\v{f}_{\m{X}_*}$ of the observed data $\m{X}$ and unobserved data $\m{X}_*$.

With $\m{Q}_{\m{X}\m{X}_*} = \m{K}_{\m{X}\m{Z}} \m{K}_{\m{Z}\m{Z}}\inv \m{K}_{\m{Z}\m{X}_*}$, $\m{Q}_{\m{X}_*\m{X}} = \m{K}_{\m{X}_*\m{Z}} \m{K}_{\m{Z}\m{Z}}\inv \m{K}_{\m{Z}\m{X}}$ and $\m{Q}_{\m{X}_*\m{X}_*} = \m{K}_{\m{X}_*\m{Z}} \m{K}_{\m{Z}\m{Z}}\inv \m{K}_{\m{Z}\m{X}_*}$, the joint prior distributions of all approximations share the same mean but differ in their covariance matrices.
In particular,
\begin{align}
    \Var_{\mathrm{SoR}}\del{\begin{bmatrix} \v{f}_\m{X} \\ \v{f}_{\m{X}_*} \end{bmatrix}}
    &= \begin{bmatrix} \m{Q}_{\m{X}\m{X}} & \m{Q}_{\m{X} \m{X}_*} \\ \m{Q}_{\m{X}_*\m{X}} & \m{Q}_{\m{X}_*\m{X}_*} \end{bmatrix} \\
    \Var_{\mathrm{DTC}}\del{\begin{bmatrix} \v{f}_\m{X} \\ \v{f}_{\m{X}_*} \end{bmatrix}}
    &= \begin{bmatrix} \m{Q}_{\m{X}\m{X}} & \m{Q}_{\m{X} \m{X}_*} \\ \m{Q}_{\m{X}_*\m{X}} & \m{K}_{\m{X}_*\m{X}_*} \end{bmatrix} \\
    \Var_{\mathrm{FITC}}\del{\begin{bmatrix} \v{f}_\m{X} \\ \v{f}_{\m{X}_*} \end{bmatrix}}
    &= \begin{bmatrix} \m{Q}_{\m{X}\m{X}} - \mathrm{diag}(\m{Q}_{\m{X}\m{X}} - \m{K}_{\m{X}\m{X}}) & \m{Q}_{\m{X} \m{X}_*} \\ \m{Q}_{\m{X}_*\m{X}} & \m{K}_{\m{X}_*\m{X}_*} \end{bmatrix} \\
    \Var_{\mathrm{PITC}}\del{\begin{bmatrix} \v{f}_\m{X} \\ \v{f}_{\m{X}_*} \end{bmatrix}}
    &= \begin{bmatrix} \m{Q}_{\m{X}\m{X}} - \mathrm{blockdiag}(\m{Q}_{\m{X}\m{X}} - \m{K}_{\m{X}\m{X}}) & \m{Q}_{\m{X} \m{X}_*} \\ \m{Q}_{\m{X}_*\m{X}} & \m{K}_{\m{X}_*\m{X}_*} \end{bmatrix},
\end{align}
where $\m{Q}$ corresponds to using a low-rank covariance function and $\m{K}$ corresponds to using the true covariance function.

In this framework, the Nyström approximation \citep{williams2000using} corresponds to
\begin{equation}
    \Var_{\mathrm{Nyst}}\del{\begin{bmatrix} \v{f}_\m{X} \\ \v{f}_{\m{X}_*} \end{bmatrix}}
    = \begin{bmatrix} \m{Q}_{\m{X}\m{X}} & \m{K}_{\m{X} \m{X}_*} \\ \m{K}_{\m{X}_*\m{X}} & \m{K}_{\m{X}_*\m{X}_*} \end{bmatrix}.
\end{equation}
In other words, the Nyström approximation uses the low-rank approximation only for the training data and the true covariance function otherwise.

The methods discussed so far either use heuristics to select specific data examples as inducing points or perform continuous optimisation to select the inducing points by maximising the marginal likelihood of a modified Gaussian process model with approximate prior.
However, the former might be suboptimal and the latter does not guarantee that the modified model will be faithful to the exact model.

\citet{titsias09,titsias2009report} proposed a principled variational framework for sparse Gaussian processes which approximates the exact posterior with a variational posterior.
To this end, $m$ inducing points $\m{Z} = \{ \v{z}_j \}_{j=1}^m$ with corresponding latent function values $\v{f}_\m{Z}$ are introduced, such that
\begin{equation}
    \v{f}_\m{X} \given \v{f}_\m{Z} \~[N](\m{K}_{\m{X}\m{Z}} \m{K}_{\m{Z}\m{Z}}\inv \v{f}_\m{Z}, \m{K}_{\m{X}\m{X}} - \m{K}_{\m{X}\m{Z}} \m{K}_{\m{Z}\m{Z}}\inv \m{K}_{\m{Z}\m{X}})
    \quad \text{and} \quad
    \v{f}_\m{Z} \~[N](\v{0}, \m{K}_{\m{Z}\m{Z}}),
\end{equation}
where we identify $\m{K}_{\m{X}\m{Z}} \m{K}_{\m{Z}\m{Z}}\inv \m{K}_{\m{Z}\m{X}} = \m{Q}_{\m{X}\m{X}}$ as the Nyström approximation.
Subsequently, a variational distribution $q(\v{f}_{\m{Z}})$ is introduced and a lower bound on the true marginal likelihood,
\begin{equation}
    \log p(\v{y}) \geq \E_{q(\v{f}_\m{Z})} \sbr{\log \frac{p(\v{y}, \v{f}_{\m{Z}})}{q(\v{f}_\m{Z})}} = \c{L}(\m{Z}, q),
\end{equation}
is formed using Jensen's inequality.
\citet{titsias09,titsias2009report} derives the optimal variational distribution $q$, such that the lower bound becomes
\begin{equation}
    \c{L}_{\mathrm{SGPR}}(\m{Z}) = \log \c{N}\del{\v{y} \mid \v{0}, \m{Q}_{\m{X}\m{X}} + \sigma^2\m{I}}
    - \frac{1}{2\sigma^2}\mathrm{Tr}\del{\m{K}_{\m{X}\m{X}} - \m{Q}_{\m{X}\m{X}}},
\end{equation}
which can be maximised to select the inducing points $\m{Z}$ in a principled way.
Furthermore, the corresponding posterior predictive distribution becomes
\begin{align}
    \v{f}_{\m{X}_* \given \v{y}}^{[\m{Z}]}
    & \~[N]\del{
        \v\mu_{\m{X}_* \given \v{y}}^{[\m{Z}]},
        \m{K}_{\m{X}_*\m{X}_* \given \v{y}}^{[\m{Z}]}
    }, \\
    \label{eq:posterior_mean_titsias}
    \v\mu_{\m{X}_* \given \v{y}}^{[\m{Z}]}
    &= \sigma^{-2}\m{K}_{\m{X}_*\m{Z}} (\m{K}_{\m{Z}\m{Z}} + \sigma^{-2}\m{K}_{\m{Z}\m{X}} \m{K}_{\m{X}\m{Z}})\inv \m{K}_{\m{Z}\m{X}}\v{y}, \\
    \label{eq:posterior_cov_titsias}
    \m{K}_{\m{X}_*\m{X}_* \given \v{y}}^{[\m{Z}]}
    &= \m{K}_{\m{X}_*\m{X}_*} - \m{K}_{\m{X}_*\m{Z}} \del{\m{K}_{\m{Z}\m{Z}}\inv - \del{\m{K}_{\m{Z}\m{Z}} + \sigma^{-2} \m{K}_{\m{Z}\m{X}} \m{K}_{\m{X}\m{Z}}} \inv} \m{K}_{\m{Z}\m{X}_*},
\end{align}
resulting in an overall time complexity of $\c{O}(nm^2)$ and space complexity of $\c{O}(nm)$.

While a computational complexity of $\c{O}(nm^2)$ with $m \ll n$ is a great improvement over $\c{O}(n^3)$, inference can still be intractable if $n$ is really large.
Therefore, \citet{hensman13} developed a method to perform \emph{stochastic} variational inference in sparse Gaussian processes which is applicable to larger datasets because optimisation can be performed in mini-batches.

By replacing the \emph{collapsed} variational posterior with an \emph{explicitly parametrised} variational posterior with global mean and covariance parameters $\v{m} \in \R^m$ and $\m{S} \in \R^{m \times m}$, \citet{hensman13} introduce yet another lower bound on the lower bound of \citet{titsias09,titsias2009report}.
The new lower bound can then be written as
\begin{align}
    \c{L}_{\mathrm{SVGP}}(\m{Z}, \v{m}, \m{S}) &= \sum_{i=1}^n
    \del{ 
        \log \c{N}(y_i \mid \m{K}_{\v{x}_i \m{Z}}\m{K}_{\m{Z}\m{Z}}\inv \v{m}, \sigma^2)
        - \frac{c_i}{2\sigma^2}
        - \frac{1}{2}\tr \del{\m{S} \m{\Lambda}_i}
    } \notag \\
    &\quad - D_{\mathrm{KL}}\del{\c{N}(\v{m}, \m{S}) \from \c{N}(\v{0}, \m{K}_{\m{Z}\m{Z}})},
\end{align}
where $\m{K}_{\v{x}_i \m{Z}}$ is the $i^{\mathrm{th}}$ row of $\m{K}_{\m{X}\m{Z}}$, $c_i$ is the $i^{\mathrm{th}}$ diagonal element of $\m{K}_{\m{X}\m{X}} - \m{K}_{\m{X}\m{Z}}\m{K}_{\m{Z}\m{Z}}\inv \m{K}_{\m{Z}\m{X}}$, and  $\m{\Lambda}_i = \sigma^{-2} \m{K}_{\m{Z}\m{Z}}\inv \m{K}_{\m{Z} \v{x}_i} \m{K}_{\v{x}_i \m{Z}} \m{K}_{\m{Z}\m{Z}}\inv$.
Importantly, the sum over individual data pairs in $\c{L}_{\mathrm{SVGP}}$ enables stochastic gradient methods.
The gradients with respect to $\v{m}$ and $\m{S}$ are
\begin{equation}
    \frac{\partial \c{L}_{\mathrm{SVGP}}}{\partial \v{m}} = \sigma^{-2}\m{K}_{\m{Z}\m{Z}}\inv \m{K}_{\m{Z}\m{X}} \v{y} - \m{\Lambda} \v{m} \qquad \text{and} \qquad
    \frac{\partial \c{L}_{\mathrm{SVGP}}}{\partial \m{S}} = \frac{1}{2}\m{S}\inv - \frac{1}{2}\m{\Lambda},
\end{equation}
where $\m{\Lambda} = \sigma^{-2} \m{K}_{\m{Z}\m{Z}}\inv \m{K}_{\m{Z} \m{X}} \m{K}_{\m{X} \m{Z}} \m{K}_{\m{Z}\m{Z}}\inv + \m{K}_{\m{Z} \m{Z}}^{-1}$.
Furthermore, \citet{hensman13} proposed to use \emph{natural} gradients in the canonical parameters to improve convergence speed.
The natural parameters are $\v{\theta}_1 = \m{S}\inv \v{m}$ and $\v{\theta}_2 = -\frac{1}{2}\m{S}\inv$ and the corresponding natural gradient steps of length $\eta$ are
\begin{align}
    \v{\theta}_1^{(t+1)} &= \v{\theta}_1^{(t)} + \eta \del{\sigma^{-2}\m{K}_{\m{Z}\m{Z}}\inv \m{K}_{\m{Z}\m{X}} \v{y} - \v{\theta}_1^{(t)}}, \\
    \v{\theta}_2^{(t+1)} &= \v{\theta}_2^{(t)} + \eta \del{\v{\theta}_2^{(t)} - \frac{1}{2}\m{\Lambda}}.
\end{align}
Stochastic approximations of the natural gradients can be obtained using mini-batches, such that each stochastic gradient evaluation has a computational time complexity of $\c{O}(m^3)$.

More recent work on sparse Gaussian processes include \citet{wu2022variational}, who introduce a prior which only retains correlations for $\kappa$ nearest neighbours to produce a sparse kernel matrix, and \citet{wenger2022posterior,wenger2024computation}, who propose \emph{actions} to create low-rank approximations which guarantee that the approximate predictive variance is always larger than the exact predictive variance.
The additional variance is interpreted as \emph{computational uncertainty} and helps the method achieve well-calibrated predictions while scaling to large datasets.

\subsection{Random Features}
\label{sec:random_features}
An alternative way to construct low-rank kernel matrix approximations can be derived by considering \emph{Mercer's theorem} \citep{mercer1909}, which informally states that a symmetric positive semi-definite kernel function $k(\v{x}, \v{x}')$ can be expressed as an inner product $\innerprod{\v{\varphi}(\v{x})}{\v{\varphi}(\v{x}')}_{\c{F}}$ in some potentially infinite-dimensional feature space $\c{F}$, where $\v{\varphi}: \c{X} \to \c{F}$ is a feature projection, which maps elements $\v{x} \in \c{X}$ into $\c{F}$.
Random features \citep{rahimi08} leverage this property by constructing a particular feature projection $\v{\phi}_{\v{\omega}}: \R^d \to \R^m$, which depends on random parameters $\v{\omega}$ and maps $\v{x} \in \R^d$ into $\R^m$, such that the dot product in $\R^m$ becomes an unbiased approximation of the kernel function,
\begin{equation}
    k(\v{x}, \v{x}')
    = \innerprod{\v{\varphi}(\v{x})}{\v{\varphi}(\v{x}')}_{\c{F}}
    \approx \v{\phi}_{\v{\omega}}(\v{x})\T \v{\phi}_{\v{\omega}}(\v{x}').
\end{equation}
Random Fourier features \citep{rahimi08,sutherland15} are a prominent example which applies to stationary covariance functions.
To construct random Fourier features, \citet{rahimi08} leverage \emph{Bochner's theorem}, which informally states that a covariance function is stationary if and only if it is the Fourier transform of a unique probability measure, leading to the relationship
\begin{equation}
    k(\v{x}, \v{x}')
    = \int p(\v{\omega}) \exp \del{i \v{\omega}\T(\v{x} - \v{x}')} \d \v{\omega}
    = \E_{\v{\omega}} \sbr{\exp \del{i \v{\omega}\T(\v{x} - \v{x}')}}.
\end{equation}
In this setting, \citet{rahimi08} show that choosing $\phi_{\v{\omega}}(\v{x}) = \sqrt{2} \cos \del{\v{\omega}\T \v{x} + b}$, where $b$ is sampled uniformly at random from $[0, 2\pi]$, satisfies $k(\v{x}, \v{x}') = \E_{\v{\omega}} \sbr{\phi_{\v{\omega}}(\v{x}) \phi_{\v{\omega}}(\v{x}')}$.
The variance can be reduced by considering a sample average, which can be expressed as
\begin{equation}
    k(\v{x}, \v{x}')
    \approx \frac{1}{m} \sum_{j=1}^m \phi_{\v{\omega}_j}(\v{x}) \phi_{\v{\omega}_j}(\v{x}')
    = \v{\phi}_{\v{\omega}}(\v{x})\T \v{\phi}_{\v{\omega}}(\v{x}'),
\end{equation}
where $\v{\phi}_{\v{\omega}}(\v{x})$ consists of stacking all $\phi_{\v{\omega}_j}(\v{x})$ into a vector and introducing a rescaling factor,
\begin{equation}
    \v{\phi}_{\v{\omega}}(\v{x}) = \sqrt{\frac{2}{m}}
    \begin{bmatrix}
        \cos \del{\v{\omega}_1\T \v{x} + b_1} &
        \hdots &
        \cos \del{\v{\omega}_m\T \v{x} + b_m}
    \end{bmatrix}\T,
\end{equation}
where each $\v{\omega}_j$ and $b_j$ are independent and identically distributed.
It is also possible to define
\begin{equation}
    \v{\phi}_{\v{\omega}}(\v{x}) = \frac{1}{\sqrt{m}}
    \begin{bmatrix}
        \sin \del{\v{\omega}_1\T \v{x}} &
        \cos \del{\v{\omega}_1\T \v{x}} &
        \hdots &
        \sin \del{\v{\omega}_m\T \v{x}} &
        \cos \del{\v{\omega}_m\T \v{x}}
    \end{bmatrix}\T,
\end{equation}
which does not depend on $b_j$ and also has lower variance \citep{sutherland15}.

For both variants of $\v{\phi}_{\v{\omega}}(\v{x})$, the probability distribution of $\v{\omega}_j$, also known as the \emph{spectral density}, determines which covariance function is approximated.
For example, the squared exponential covariance function $k_{\mathrm{SE}}$ corresponds to the multivariate normal distribution and the Matérn covariance function $k_{\mathrm{Mat}}$ corresponds to the multivariate Student's $t$-distribution.

Random features can be used to efficiently draw samples from a Gaussian process.
Suppose we want to draw latent function values $\v{f}_\m{X}$ at $n$ input locations $\m{X}$ from a Gaussian process with mean $\mu$ and covariance function $k$, such that $\v{f}_\m{X} \~[N](\v{\mu}_\m{X}, \m{K}_{\m{XX}})$.
We can write $\v{f}_\m{X}$ as
\begin{equation}
\label{eq:random_feature_sampling}
    \v{f}_\m{X} = \v{\mu}_\m{X} +  \m{\Phi}_\m{X} \v{w}
    \qquad \text{with} \qquad
    \v{w} \in \R^m \~[N](\v{0}, \m{I}),
\end{equation}
where $\m{\Phi}_\m{X} \in \R^{n \times m}$ is a matrix which collects $m$ random features for each input,
\begin{equation}
    \m{\Phi}_\m{X} = 
    \frac{1}{\sqrt{m}}
    \begin{bmatrix}
        \phi_{\v{\omega}_1}(\v{x}_1) &
        \dots &
        \phi_{\v{\omega}_m}(\v{x}_1) \\
        \vdots & \ddots & \vdots \\
        \phi_{\v{\omega}_1}(\v{x}_n) &
        \dots &
        \phi_{\v{\omega}_m}(\v{x}_n)
    \end{bmatrix}
    =
    \begin{bmatrix}
        \v{\phi}_{\v{\omega}}(\v{x}_1)\T \\
        \vdots \\
        \v{\phi}_{\v{\omega}}(\v{x}_n)\T
    \end{bmatrix}.
\end{equation}
To verify that \Cref{eq:random_feature_sampling} yields samples with the desired distribution, we calculate
\begin{align}
    \E \sbr{\v{f}_\m{X}}
    &= \E \sbr{ \v{\mu}_\m{X} + \m{\Phi}_\m{X} \v{w} }
    = \v{\mu}_\m{X} + \m{\Phi}_\m{X} \E \sbr{\v{w}}
    = \v{\mu}_\m{X} + \m{\Phi}_\m{X} \v{0}
    = \v{\mu}_\m{X}, \\
    \Var \del{\v{f}_\m{X}}
    &= \Var \del{\v{\mu}_\m{X} + \m{\Phi}_\m{X} \v{w}}
    = \m{\Phi}_\m{X} \Var \del{\v{w}} \m{\Phi}_\m{X}\T
    = \m{\Phi}_\m{X} \m{I} \m{\Phi}_\m{X}\T
    = \m{\Phi}_\m{X} \m{\Phi}_\m{X}\T,
\end{align}
confirming that $\v{f}_\m{X}$ has the correct mean $\v{\mu}_\m{X}$.
For the variance, we observe that each entry in $\m{\Phi}_\m{X} \m{\Phi}_\m{X}\T$ has the form $\v{\phi}_{\v{\omega}}(\v{x})\T \v{\phi}_{\v{\omega}}(\v{x}') \approx k(\v{x}, \v{x}')$ and thus conclude that $\m{\Phi}_\m{X} \m{\Phi}_\m{X}\T \approx \m{K}_\m{XX}$.

Drawing samples in this way has a time complexity of $\c{O}(nm)$, which is considerably faster than the usual $\c{O}(n^3)$, especially if $m \ll n$.
If we are interested in joint samples $\v{f}_\m{X}$ and $\v{f}_{\m{X}_*}$ at $n$ inputs $\m{X}$ and $n_*$ inputs $\m{X}_*$, the time complexity becomes $\c{O}((n + n_*)m)$ instead of $\c{O}((n + n_*)^3)$.
Furthermore, if $\m{X}$ is fixed and $\m{X}_*$ changes, such that we want to keep $\v{f}_\m{X}$ but update $\v{f}_{\m{X}_*}$, expensive conditional sampling, as discussed in \Cref{sec:generating_posterior_samples}, is not necessary.
Instead, reusing the same random feature parameters $\v{\omega}$ and standard normal sample $\v{w}$ results in samples $\v{f}_{\m{X}_*}$ which are jointly distributed with $\v{f}_\m{X}$ in $\c{O}(n_*m)$ instead of $\c{O}(nn_*^2 + n_*^3)$ time.
To verify the claim that $\v{f}_\m{X}$ and $\v{f}_{\m{X}_*}$ are jointly distributed, we calculate
\begin{align}
    \Cov \del{\v{f}_\m{X}, \v{f}_{\m{X}_*}}
    &= \Cov \del{\v{\mu}_\m{X} + \m{\Phi}_\m{X} \v{w}, \v{\mu}_{\m{X}_*} + \m{\Phi}_{\m{X}_*} \v{w}}, \\
    &= \m{\Phi}_\m{X} \E \sbr{\v{w} \v{w}\T} \m{\Phi}_{\m{X}_*} - \del{\m{\Phi}_\m{X} \E \sbr{\v{w}}} \del{\m{\Phi}_{\m{X}_*} \E \sbr{\v{w}}}\T, \\
    &= \m{\Phi}_\m{X} \m{\Phi}_{\m{X}_*} \approx \m{K}_{\m{X} \m{X}_*},
\end{align}
confirming that $\v{f}_\m{X}$ and $\v{f}_{\m{X}_*}$ indeed have the correct covariance.

\subsection{Structured Kernel Matrices}
\label{sec:structured_kernel_matrices}
In certain special cases, kernel matrices will be structured, facilitating more efficient linear algebra operations.
A prominent example are product kernels of the form
\begin{equation}
    k(\v{x}, \v{x}') = \prod_{j=1}^m k_j(\v{x}_j, \v{x}_j')
    \quad \text{with} \quad
    \v{x} = \sbr{\v{x}_1, \dots, \v{x}_m} \quad \text{and} \quad
    \v{x}' = \sbr{ \v{x}_1', \dots, \v{x}_m'},
\end{equation}
where $k_j$ are individual covariance functions, and $\v{x}, \v{x}'$ are the concatenation of all $\v{x}_j, \v{x}_j'$ respectively.
They are commonly used in multi-task or spatio-temporal settings and induce a kernel matrix $\m{K}_\m{XX}$ which can be factorised as a Kronecker product,
\begin{equation}
    \m{K}_\m{XX} = \bigotimes_{j=1}^m \m{K}_j
    \quad \text{with} \quad
    \m{K}_j = k_j(\m{X}_j, \m{X}_j)
    \quad \text{and} \quad
    \m{X} = \m{X}_1 \times \dots \times \m{X}_m,
\end{equation}
where the inputs $\m{X}$ factorise as a Cartesian product $\m{X}_1 \times \dots \times \m{X}_m$ and each Kronecker factor $\m{K}_j$ consists of the pairwise kernel distances over rows in $\m{X}_j$.
To facilitate efficient inference, we can represent the eigendecomposition of $\m{K}_\m{XX}$ as
\begin{equation}
    \m{K}_\m{XX}
    = \bigotimes_{j=1}^m \m{K}_j
    = \bigotimes_{j=1}^m \m{Q}_j \m{\Lambda}_j \m{Q}_j\T
    = \del{\bigotimes_{j=1}^m \m{Q}_j} \del{\bigotimes_{j=1}^m \m{\Lambda}_j} \del{\bigotimes_{j=1}^m\m{Q}_j}\T,
\end{equation}
where $\m{Q}_j \m{\Lambda}_j \m{Q}_j\T$ is the eigendecomposition of Kronecker factor $\m{K}_j$ with orthogonal matrix $\m{Q}_j$ and diagonal matrix $\m{\Lambda}_j$.
This allows us to compute the inverse of $\m{K}_\m{XX} + \sigma^2 \m{I}$ as
\begin{align}
    \del{\m{K}_\m{XX} + \sigma^2 \m{I}}\inv
    &= \del{\del{\bigotimes_{j=1}^m \m{Q}_j} \del{\bigotimes_{j=1}^m \m{\Lambda}_j} \del{\bigotimes_{j=1}^m\m{Q}_j}\T + \sigma^2 \m{I}}\inv, \\
    &= \del{\del{\bigotimes_{j=1}^m \m{Q}_j} \del{\bigotimes_{j=1}^m \m{\Lambda}_j + \sigma^2 \m{I}} \del{\bigotimes_{j=1}^m\m{Q}_j}\T}\inv, \\
    &= \del{\bigotimes_{j=1}^m \m{Q}_j} \del{\bigotimes_{j=1}^m \m{\Lambda}_j + \sigma^2 \m{I}}\inv \del{\bigotimes_{j=1}^m\m{Q}_j}\T,
\end{align}
where $\bigotimes_{j=1}^m \m{\Lambda}_j + \sigma^2 \m{I}$ is a diagonal matrix whose inverse is easily computed by taking the reciprocal of each entry on the diagonal \citep{bonilla2007multi,stegle2011efficient}.
Importantly, the factorisation implies that the $n \times n$ kernel matrix $\m{K}_\m{XX}$ factorises as a Kronecker product of $m$ factors of size $n_j \times n_j$ and $n = \prod_{j=1}^m n_j$.
Therefore, computing the eigendecomposition of all Kronecker factors has a total time complexity of $\c{O}(\sum_{j=1}^m n_j^3)$ compared to $\c{O}(\prod_{j=1}^m n_j^3)$ without the factorisation, turning multiplicative scaling into additive scaling.
Similarly, the space complexity becomes $\c{O}(\sum_{j=1}^m n_j^2)$ instead of $\c{O}(\prod_{j=1}^m n_j^2)$.

To generate samples, we can express the Cholesky decomposition of $\m{K}_\m{XX}$ as
\begin{equation}
    \m{K}_\m{XX}
    = \bigotimes_{j=1}^m \m{K}_j
    = \bigotimes_{j=1}^m \m{L}_j \m{L}_j\T
    = \del{\bigotimes_{j=1}^m \m{L}_j} \del{\bigotimes_{j=1}^m \m{L}_j}\T,
\end{equation}
such that $\bigotimes_{j=1}^m \m{L}_j$ can be used in an efficient way.

In addition to the product kernel assumption, the main limitation of this method is that it must be possible to express the observed inputs $\m{X}$ as a Cartesian product $\m{X}_1 \times \dots \times \m{X}_m$, because otherwise the joint covariance matrix $\m{K}_\m{XX}$ no longer exhibits Kronecker structure.
In \Cref{chap:lkgp}, we propose a method which overcomes this limitation.

\citet{wilson2015kernel} approximate the original kernel matrix $\m{K}_\m{XX}$ as 
\begin{equation}
    \m{K}_\m{XX}
    \approx \m{K}_\m{XZ} \m{K}_\m{ZZ}\inv \m{K}_\m{ZX}
    \approx \m{W} \m{K}_\m{ZZ} \m{K}_\m{ZZ}\inv \m{K}_\m{ZZ} \m{W}\T
    = \m{W} \m{K}_\m{ZZ} \m{W}\T,
\end{equation}
where $\m{K}_\m{XZ} \approx \m{W} \m{K}_\m{ZZ}$ corresponds to the interpolation of kernel values at inducing points $\m{Z}$ using interpolation weights $\m{W}$ to compute kernel values for inputs $\m{X}$.
Since $\m{Z}$ are inducing points, their locations can be chosen arbitrarily.
\citet{wilson2015kernel} leverage this property by placing $\m{Z}$ on a grid such that $\m{K}_\m{ZZ}$ exhibits Kronecker or Toeplitz structure, facilitating fast matrix multiplication which can be exploited by iterative methods to perform inference (see \Cref{sec:iterative_methods}).
Since iterative methods do not require explicit matrix inverses or decompositions, the number of inducing points used for interpolation can be substantially larger than the number of inducing points used by variational methods.
However, scalability in the number of input dimensions is strongly limited by the fact that the required number of inducing points to create a dense grid increases exponentially with the number of dimensions.

To address these scalability issues with respect to the number of input dimensions, \citet{gardner2018product} approximate the original kernel with a Hadamard product of one-dimensional interpolated kernels.
Every interpolated kernel is further approximated as a low-rank Lanczos decomposition,
\begin{equation}
    \m{K}_\m{XX}
    \approx \m{Q}_1 \m{T}_1 \m{Q}_1\T \odot \dots \odot \m{Q}_d \m{T}_d \m{Q}_d\T,
\end{equation}
which gives rise to an algebraic structure that allows for efficient matrix multiplication.
While this addresses the scalability issues, it is usually not competitive with other methods due to the errors introduced by the approximations~\citep{kapoor2021skiing}.

\subsection{Iterative Linear System Solvers}
\label{sec:iterative_methods}
In the context of Gaussian processes, the biggest computational challenge typically consists of evaluating expressions such as $(\m{K}_\m{XX} + \sigma^2 \m{I})\inv \v{y}$ or alike, which feature the inverse of a large positive-definite matrix multiplied with a vector or another matrix.
In general, we can equivalently view such an expression as the solution of a system of linear equations or the minimiser of a convex quadratic objective,
\begin{equation}
    \v{v} = \m{A}\inv \v{b}
    \quad \iff \quad
    \m{A} \v{v} = \v{b}
    \quad \iff \quad
    \v{v} = \argmin_{\v{u}} \; \frac{1}{2} \v{u}\T \m{A} \v{u} - \v{u}\T \v{b},
\end{equation}
where setting $\m{A} = \m{K}_\m{XX} + \sigma^2 \m{I}$ and $\v{b} = \v{y}$ results in the previously mentioned expression.
The benefits of taking this view become apparent by observing that the matrix $\m{A}$ no longer needs to be inverted.
Instead, the solution $\v{v}$ can be computed via iterative gradient-based optimisation of the quadratic objective, because the gradient is simply the residual $\m{A}\v{u} - \v{b}$.
Furthermore, the matrix $\m{A}$ does not have to be fully instantiated in memory, as long as each entry in $\m{A}$ can be evaluated independently.
Therefore, by iterating over the rows of $\m{A}$, the product $\m{A} \v{u}$ can be computed with $\c{O}(n)$ space instead of $\c{O}(n^2)$.

The most widely used algorithm to solve these kinds of linear systems or their equivalent quadratic optimisation problems is the method of conjugate gradients \citep{hestenes1952}.
The algorithm leverages the fact that the solution $\v{v}$ can be written as a set of mutually conjugate vectors $\v{p}_1, \dots, \v{p}_n$, where $\v{p}_i$ is conjugate to $\v{p}_j$ if $\v{p}_i\T \m{A} \v{p}_j = 0$.
Since these vectors form a basis of $\R^n$, we can express $\v{v}$ as a linear combination $\v{v} = \sum_{i=1}^n c_i \v{p}_i$, which implies that $\m{A}\v{v} = \sum_{i=1}^n c_i \m{A}\v{p}_i$.
For a particular $\v{p}_k$, we thus have
\begin{equation}
    \v{p}_k\T \v{b} = \v{p}_k\T \m{A}\v{v} = \sum_{i=1}^n c_i \v{p}_k\T \m{A} \v{p}_i = c_k \v{p}_k\T \m{A} \v{p}_k
    \quad \implies \quad c_k = \frac{\v{p}_k\T \v{b}}{\v{p}_k\T \m{A} \v{p}_k},
\end{equation}
where the last equality on the left follows from $\v{p}_k\T \m{A} \v{p}_i = 0$ for $i \neq k$.
Therefore, we can compute $\v{v}$ by finding a sequence of conjugate vectors $\v{p}_1, \dots, \v{p}_n$ and calculating the corresponding coefficients $c_1, \dots, c_n$.
The sequence of conjugate vectors is typically computed using an iterative Gram-Schmidt process whose necessary computations can be amortised.
The overall algorithm can be expressed as
\begin{equation}
\label{eq:cg}
\begin{aligned}
\v{v}_0 &\gets \v{0} \text{ (typically)}, \\
\v{r}_0 &\gets \v{b} - \m{A} \v{v}_0, \\
\v{d}_0 &\gets \v{r}_0,
\end{aligned}
\qquad
\qquad
\begin{aligned}
\alpha_t &\gets \frac{\v{r}_t\T \v{r}_t}{\v{d}_t \m{A} \v{d}_t}, \\
\beta_{t} &\gets \frac{\v{r}_{t + 1}\T \v{r}_{t + 1}}{\v{r}_t\T \v{r}_t},
\end{aligned}
\qquad
\qquad
\begin{aligned}
\v{v}_{t + 1} &\gets \v{v}_t + \alpha_t \v{d}_t, \\
\v{r}_{t + 1} &\gets \v{r}_t - \alpha_t \m{A} \v{d}_t, \\
\v{d}_{t + 1} &\gets \v{r}_{t + 1} + \beta_t \v{d}_t.
\end{aligned}
\end{equation}

Assuming numerically exact computations, the method of conjugate gradients converges after at most $n$ iterations, where the time complexity of each iteration is $\c{O}(n^2)$.
However, the algorithm is typically stopped after a constant number of iterations or when the residual norm $\norm{\v{r}_t}_2$ reaches a certain tolerance, and its convergence properties depend on the condition number of $\m{A}$.
In practice, accumulation of rounding errors due to floating point arithmetic can cause problems with keeping track of the current residual and maintaining the property that all $\v{p}_j$ are mutually conjugate.
The current residual can be recomputed at the expense of some additional computations, but the latter is difficult to address.
For a more elaborate discussion on the method of conjugate gradients, we refer to \citet{shewchuk1994}.

In the context of Gaussian processes, \citet{gardner18,WangPGT2019exactgp} used conjugate gradients to estimate the derivative of the marginal likelihood (see \Cref{eq:mll_grad_background}), which is necessary for model selection.
In particular, the derivative contains two terms which involve an inverse matrix.
The first term is $(\m{K}_\m{XX} + \sigma^2 \m{I})\inv (\v{y} - \v\mu_{\m{X}})$ and it is also used when calculating the posterior mean (see \Cref{eq:posterior_mean}).
The second term is a trace term, which is approximated using \emph{Hutchinson's trace estimator} \citep{Hutchinson1990},
\begin{equation}
    \tr \del{\m{H}_{\v{\theta}}\inv \frac{\partial \m{H}_{\v{\theta}}}{\partial \theta_i}}
    = \tr \del{ \E \sbr{\v{z} \v{z}\T} \m{H}_{\v{\theta}}\inv \frac{\partial \m{H}_{\v{\theta}}}{\partial \theta_i}}
    = \E \sbr{\v{z}\T \m{H}_{\v{\theta}}\inv \frac{\partial \m{H}_{\v{\theta}}}{\partial \theta_i} \v{z}}
    \approx \frac{1}{s} \sum_{j=1}^s \v{z}_j\T \m{H}_{\v{\theta}}\inv \frac{\partial \m{H}_{\v{\theta}}}{\partial \theta_i} \v{z}_j,
\end{equation}
where $\m{H}_{\v{\theta}} = \m{K}_{\m{X}\m{X}} + \sigma^2 \m{I}$, $\v{\theta}$ refers to $\sigma^2$ and kernel hyperparameters $\v{\vartheta}$, and $\v{z}_j$ are random probe vectors for which $\E \sbr{\v{z}_j \v{z}_j\T} = \m{I}$.
Both terms can be addressed at the same time by solving a batch of linear systems,
\begin{equation}
    (\m{K}_\m{XX} + \sigma^2 \m{I}) \; \sbr{\v{v}_{\v{y}}, \v{v}_1, \dots, \v{v}_s}
    = \sbr{\v{y} - \v{\mu}_\m{X}, \v{z}_1, \dots, \v{z}_s},
\end{equation}
using the method of conjugate gradients.
Since these linear systems share the same coefficient matrix $\m{K}_\m{XX} + \sigma^2 \m{I}$, efficient matrix multiplication is applicable.
To calculate the posterior covariance matrix or to generate posterior samples, \citet{WangPGT2019exactgp} rely on approximate kernel matrix decompositions using the Lanczos algorithm \citep{pleiss2018}.

\citet{wilson20,wilson21} used conjugate gradients to compute $(\m{K}_{\m{XX}} + \sigma^2 \m{I})\inv (\v{y} - (\v{f}_\m{X} + \v\eps))$ to generate a posterior sample
via pathwise conditioning (see \Cref{eq:pathwise_conditioning}).
As discussed in \Cref{sec:generating_posterior_samples}, the result can be cached and reused when evaluating the posterior sample at another location.
Therefore, using an iterative linear system solver to compute this expensive term once is a powerful concept which allows Gaussian processes to scale to large amounts of training data.
In the following chapters, all original contributions leverage this powerful combination of iterative linear system solvers and pathwise conditioning in various ways.


\chapter[Stochastic Gradient Descent for Gaussian Processes]{Stochastic Gradient Descent\\for Gaussian Processes}
\label{chap:sgd}

\ifpdf
    \graphicspath{{Chapter3/Figs/Raster/}{Chapter3/Figs/PDF/}{Chapter3/Figs/}}
\else
    \graphicspath{{Chapter3/Figs/Vector/}{Chapter3/Figs/}}
\fi

After reviewing Gaussian processes and some of the existing literature on scalable inference, this chapter introduces the first methodological contribution of this dissertation.
In particular, we introduce a novel iterative linear system solver for Gaussian processes, namely stochastic gradient descent, which facilitates approximate inference with asymptotically linear time and memory requirements.
Additionally, we develop low-variance optimisation objectives and extend these to inducing points for even greater scalability.
Furthermore, we observe that stochastic gradient descent often produces accurate predictions, even in cases where it does not quickly converge to the optimum.
We explain this through a spectral characterisation of its implicit bias, showing that our algorithm produces predictive distributions close to the true posterior both in regions with sufficient data coverage, and in regions sufficiently far away from the data.
Furthermore, we demonstrate empirically that stochastic gradient descent achieves state-of-the-art performance on sufficiently large-scale or ill-conditioned regression tasks, and a large-scale Bayesian optimisation problem.

This chapter includes content which is adapted from the following publication:
\begin{itemize}
    \item J. A. Lin, J. Antorán, S. Padhy, D. Janz, J. M. Hernández-Lobato, and A. Terenin. Sampling from Gaussian Process Posteriors using Stochastic Gradient Descent. In \emph{Advances in Neural Information Processing Systems}, 2023.
\end{itemize}
My contributions to this project consist of developing major parts of the software implementation, conducting a large amount of experiments, and contributing to writing the manuscript and some of the theoretical results.

\section{Introduction}
Gaussian processes (GPs) provide a comprehensive framework for learning unknown functions in an uncertainty-aware manner.
This often makes Gaussian processes the model of choice for sequential decision-making, achieving state-of-the-art performance in tasks such as optimising molecules in computational chemistry settings \citep{Gomez-Bombarelli18} and automated hyperparameter tuning \citep{Snoek2012,Hernandez-Lobato14}.

The main limitation of Gaussian processes is that their computational cost is cubic in the size of the training dataset.
Significant research efforts have been directed at addressing this limitation, resulting in two key classes of scalable inference methods: (i) \emph{inducing point} methods \citep{titsias09,hensman13}, which approximate the GP posterior, and (ii) \emph{conjugate gradient} methods \citep{gardner18,WangPGT2019exactgp}, which approximate the computation needed to obtain the GP posterior.
Note that in structured settings, such as geospatial learning in low dimensions, specialised techniques are available \citep{wilson2015kernel,gardner2018product,Wilkinson20}.
Throughout this work, we focus on the generic setting, where scalability limitations remain unresolved.

In recent years, stochastic gradient descent (SGD) has emerged as the leading technique for training machine learning models at scale, in both deep learning \citep{Zhang2023sgd} and in related settings such as kernel methods \citep{dai2014doubly} and Bayesian modelling \citep{Mandt17Descent}.
While the principles behind the effectiveness of SGD are not yet fully understood, empirically, SGD often leads to good predictive performance, even when it does not fully converge.
The latter is the default regime in deep learning, and has motivated researchers to study \emph{implicit biases} and related properties of SGD \citep{Belkin2019,ZouWBGK2021benign}.

In the context of GPs, SGD is commonly used to learn kernel hyperparameters by optimising the marginal likelihood \citep{Cunningham2016preconditioning,gardner18,chen20,chen22} or closely related variational objectives \citep{titsias09,hensman13}. 
In this work, we explore applying SGD to the complementary problem of approximating GP posterior samples given fixed kernel hyperparameters.
In one of his seminal books on statistical learning theory, \citet{vapnik95} famously said: \emph{"When solving a given problem, try to avoid solving a more general problem as an intermediate step."}
Motivated by this viewpoint, as well as the aforementioned property of good performance often not requiring full convergence when using SGD, we ask: \emph{Do the linear systems arising in GP computations necessarily need to be solved to a small error tolerance? If not, can SGD help accelerate these computations?}

We answer the latter question affirmatively, with specific contributions as follows:
\begin{itemize}
    \item We develop a scheme for drawing GP posterior samples by applying SGD to a quadratic problem.
    In particular, we recast the pathwise conditioning technique of \citet{wilson20,wilson21} as an optimisation problem to which we apply the low-variance SGD sampling estimator of \citet{antoran2023sampling}. Further, we extend the proposed method to inducing point Gaussian processes.
    \item We characterise the implicit bias in SGD-approximated GP posteriors showing that despite optimisation not fully converging, these match the true posterior in regions both near to and far from the data.
    \item Experimentally, we show that SGD produces strong results, compared to variational and conjugate gradient methods, on both large-scale and ill-conditioned regression tasks, and on a parallel Thompson sampling task, where error bar calibration is paramount.
\end{itemize}

\section{Posterior Samples via Stochastic Optimisation}
Assuming a zero-mean prior, we now develop and analyse techniques for drawing samples from GP posteriors using SGD by rewriting the pathwise conditioning formula \eqref{eq:pathwise_conditioning} in terms of two stochastic optimisation problems.
As a preview, \Cref{fig:scalable-learning-comparison} illustrates SGD on a pair of toy problems, designed to capture complementary computational difficulties.

\subsection{Computing the Posterior Mean}
\label{sec:mean-optim}
We begin by deriving a stochastic objective for the posterior mean.
In particular, the GP posterior mean $\mu_{(\.) \given \v{y}}$ can be expressed in terms of the quadratic optimisation problem
\begin{align}
\mu_{(\.) \given \v{y}} &= \m{K}_{(\.) \m{X}}\v{v}^* = \sum_{i=1}^n k(\., \v{x}_i) v_i^*, \\
\label{eqn:mean-optim}
\v{v}^* &= \argmin_{\v{v} \in \R^n} \; \frac{1}{2} \norm{\v{y} - \m{K}_{\m{X} \m{X}} \v{v}}_2^2 + \frac{\sigma^2}{2} \norm{\v{v}}_{\m{K}_{\m{X}\m{X}}}^2
,
\end{align}
where $\v{v}^* = (\m{K}_{\m{X}\m{X}} + \sigma^2 \m{I})^{-1}\v{y}$.
We say that $k(\., \v{x}_i)$ are the \emph{canonical basis functions}, $v_i$ are the \emph{representer weights} \citep{Scholkopf2001representer}, and that $\norm{\v{v}}_{\m{K}_{\m{X}\m{X}}}^2 = \v{v}\T\m{K}_{\m{X}\m{X}}\v{v}$ is the \emph{regulariser}.
The respective optimisation problem for obtaining posterior samples is similar, but involves a stochastic objective, which will be given and analysed in \Cref{sec:posterior-samples}.
\begin{figure}[ht]
\centering
\includegraphics[width=6in]{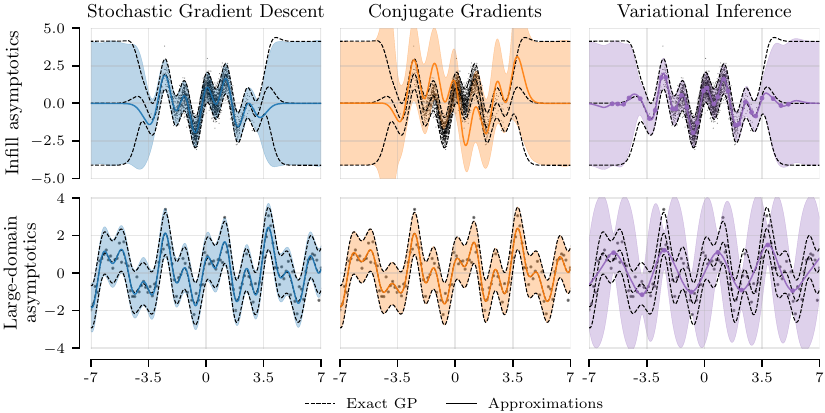}
\caption{Comparison of SGD, CG, and SVGP for GP inference with a squared exponential kernel on $10$k data points from $\sin(2x)+\cos(5x)$ with observation noise scale $0.5$. We draw $2000$ function samples with all methods by running them for $10$ minutes on an RTX $2070$ GPU.
\emph{Infill asymptotics} considers $x_i\~[N](0, 1)$. A large number of points near zero result in a very ill-conditioned kernel matrix, preventing CG from converging. SGD converges in all of input space except at the edges of the data. SVGP can summarise the data with only 20 inducing points.
\emph{Large domain asymptotics} considers data on a regular grid with fixed spacing.
This problem is better conditioned, allowing SGD and CG to recover the exact solution. However, 1024 inducing points are not enough for SVGP to summarise the data.
}
\label{fig:scalable-learning-comparison}
\end{figure}

The optimisation problem \eqref{eqn:mean-optim} with optimal representer weight solution $\v{v}^*$
requires $\c{O}(n^2)$ operations to compute both its square error and regulariser terms exactly.
The square error loss term is amenable to mini-batching, which gives an unbiased estimate in $\c{O}(n)$ operations. 
Assuming that $k$ is stationary, we can stochastically estimate the regulariser with random Fourier features \citep{rahimi08} using the approximation $\norm{\v{v}}_{\m{K}_{\m{X}\m{X}}}^2 \approx \v{v}\T \m{\Phi}_\m{X} \m{\Phi}_\m{X}\T \v{v}$, where $\m{\Phi}_\m{X} = \sbr{\v{\phi}_{\v{\omega}_1}(\m{X}), \dots, \v{\phi}_{\v{\omega}_q}(\m{X})}$ is a random feature matrix such that $\m{K}_\m{XX} \approx \m{\Phi}_\m{X} \m{\Phi}_\m{X}\T$ (see \Cref{sec:random_features} for details).
Combining both estimators gives our SGD objective
\begin{equation}
\label{eqn:stochastic_mean-objective}
 \c{L}(\v{v}) = \frac{n}{2p} \sum_{i=1}^p (y_i - \m{K}_{\v{x}_i \m{X}}\v{v})^2
 + \frac{\sigma^2}{2} \sum_{j=1}^q (\v{\phi}_{\v{\omega}_j}(\m{X})\T \v{v})^2
\end{equation}
where $p$ is the mini-batch size and $q$ is the number of random features.
The regulariser term is unbiased even when drawing a single random feature because the number of features only controls the variance.
\Cref{eqn:stochastic_mean-objective} presents $\c{O}(n)$ time complexity, in contrast to $\c{O}(n^2)$ for exact evaluation.

\subsection{Computing Posterior Samples}
\label{sec:posterior-samples}
We now frame GP posterior samples in a manner amenable to SGD computation similarly to \Cref{eqn:stochastic_mean-objective}.
First, we rewrite the pathwise conditioning expression \eqref{eq:pathwise_conditioning} as
\begin{equation}
\label{eqn:pathwise-zero-mean}
f_{(\.) \given \v{y}}
= \ubr{f_{(\.)} \vphantom{\v{\mu}_{(\.) \given \v{y}}}}_{\text{prior}}
+ \ubr{\m{K}_{(\.) \m{X}} \v{v}^* \vphantom{\v{\mu}_{(\.) \given \v{y}}}}_{\text{posterior mean}}
- \ubr{\m{K}_{(\.)\m{X}} (\m{K}_{\m{XX}} + \sigma^2 \m{I})\inv (\v{f}_\m{X} + \v\eps) \vphantom{\v{\mu}_{(\.) \given \v{y}}}}_{\text{uncertainty reduction term}},
\end{equation}
where $\v\eps \~[N](\v{0}, \sigma^2 \m{I})$, $f \~[GP](0, k)$, and $\v{f}_\m{X}$ is $f$ evaluated at $\m{X}$.
We approximate the prior function sample using random Fourier features (see \Cref{sec:random_features}) and the posterior mean with the minimiser $\v{v}^*$ of \Cref{eqn:stochastic_mean-objective} obtained by SGD. 
Each posterior sample's uncertainty reduction term is parametrised by a set of representer weights given by a linear solve against a noisy prior sample evaluated at the observed inputs, namely $\v{\alpha}^* = (\m{K}_{\m{X}\m{X}} + \sigma^2 \m{I})^{-1}(\v{f}_\m{X} + \v\eps)$. 
We construct an optimisation objective targeting a sample's optimal representer weights as
\begin{equation}
\label{eqn:samples-optim}
\v{\alpha}^* =\argmin_{\v{\alpha}\in\R^n} \; \frac{1}{2} \norm{\v{f}_{\m{X}} + \v\eps - \m{K}_{\m{X} \m{X}}\v{\alpha}}_2^2 + \frac{\sigma^2}{2} \norm{\v{\alpha}}_{\m{K}_{\m{X}\m{X}}}^2,
\quad
\begin{aligned}
\v{f}_{\m{X}} &\~[N](\v{0}, \m{K}_{\m{X}\m{X}}),
\\
\v\eps &\~[N](\v{0}, \sigma^2 \m{I}).
\end{aligned}
\end{equation}
Applying mini-batch estimation to this objective results in high gradient variance, since the presence of $\eps_i$ makes the targets noisy. 
To avoid this, we modify \Cref{eqn:samples-optim} as
\begin{equation}
\label{eqn:samples-optim-reduced}
\v{\alpha}^* =\argmin_{\v{\alpha}\in\R^n} \; \frac{1}{2} \norm{\v{f}_{\m{X}} - \m{K}_{\m{X} \m{X}}\v{\alpha}}_2^2 + \frac{\sigma^2}{2} \norm{\v{\alpha} - \v\delta}_{\m{K}_{\m{X}\m{X}}}^2,
\quad
\begin{aligned}
\v{f}_{\m{X}} &\~[N](\v{0}, \m{K}_{\m{X}\m{X}}),
\\
\v\delta &\~[N](\v{0}, \sigma^{-2} \m{I}),
\end{aligned}
\end{equation}
moving the noise term into the regulariser.
This modification preserves the optimal representer weights since both objectives have the same gradient and are thus equal up to a constant.
\begin{proofbox}
\begin{proof}
Let $\v{w} \~[N](\v{0}, \m{I})$, such that $\v\eps = \sigma \v{w}$ and $\v{\delta} = \sigma^{-1} \v{w}$.
\begin{align}
    \nabla_{\v{\alpha}} \; \eqref{eqn:samples-optim}
    &= \nabla_{\v{\alpha}} \sbr{{\textstyle \frac{1}{2}} \norm{\v{f}_{\m{X}} + \v\eps - \m{K}_{\m{X} \m{X}}\v{\alpha}}_2^2 + \smash{\textstyle \frac{\sigma^2}{2}} \norm{\v{\alpha}}_{\m{K}_{\m{X}\m{X}}}^2} \\
    &= - \m{K}_{\m{X} \m{X}} \del{\v{f}_{\m{X}} + \sigma \v{w} - \m{K}_{\m{X} \m{X}}\v{\alpha}} + \sigma^2 \m{K}_{\m{X}\m{X}} \v{\alpha} \\
    &= - \m{K}_{\m{X} \m{X}} \del{\v{f}_{\m{X}} - \m{K}_{\m{X} \m{X}}\v{\alpha} + \sigma \v{w} - \sigma^2 \v{\alpha}} \\
    \nabla_{\v{\alpha}} \; \eqref{eqn:samples-optim-reduced}
    &= \nabla_{\v{\alpha}} \sbr{{\textstyle \frac{1}{2}} \norm{\v{f}_{\m{X}} - \m{K}_{\m{X} \m{X}}\v{\alpha}}_2^2 + \smash{\textstyle \frac{\sigma^2}{2}} \norm{\v{\alpha} - \v{\delta}}_{\m{K}_{\m{X}\m{X}}}^2} \\
    &= - \m{K}_{\m{X} \m{X}} \del{\v{f}_{\m{X}} - \m{K}_{\m{X} \m{X}}\v{\alpha}} + \sigma^2 \m{K}_{\m{X}\m{X}} \del{\v{\alpha} - \sigma^{-1} \v{w}} \\
    &= - \m{K}_{\m{X} \m{X}} \del{\v{f}_{\m{X}} - \m{K}_{\m{X} \m{X}}\v{\alpha} + \sigma \v{w} - \sigma^2 \v{\alpha}}
\end{align}
The claim follows from the fact that these two expressions are identical.
\end{proof}
\end{proofbox}
This generalises the variance reduction technique of \citet{antoran2023sampling} to the GP setting.
\Cref{fig:grad_var_inducing_sgd} illustrates mini-batch gradient variance for these objectives.
Applying the mini-batch and random feature estimators of \Cref{eqn:stochastic_mean-objective}, each evaluation of the objective takes $\c{O}(ns)$ time, where $s$ is the number of posterior samples. 
\begin{figure}
\centering
\includegraphics[width=6in]{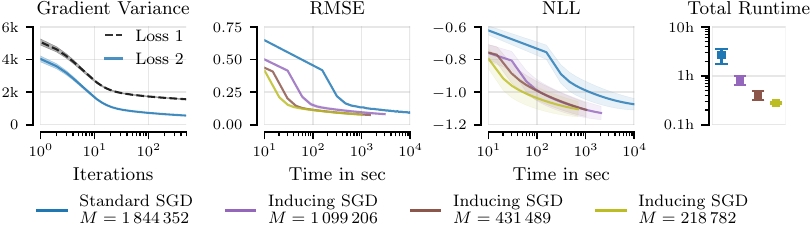}
\caption{Left: gradient variance throughout optimisation for a single-sample mini-batch estimator of \Cref{eqn:samples-optim} (Loss $1$), and the proposed sampling objective from \Cref{eqn:samples-optim-reduced} (Loss $2$), on the \textsc{elevators} dataset ($n \approx 16$k). Middle: RMSE and negative log-likelihood (NLL) obtained by SGD and its inducing point variants on the \textsc{houseelectric} dataset ($n \approx 2$M) for different numbers of inducing points. Right: total runtimes on an A100 GPU.}
\label{fig:grad_var_inducing_sgd}
\end{figure}

\vspace{-.1cm}
\subsection{Inducing Points}
\label{subsec:inducing_points}
So far, our sampling objectives have presented linear cost in dataset size.
In the large-scale setting, algorithms with costs independent of the dataset size are often preferable, which can be achieved through \emph{inducing point posteriors} \citep{titsias09,hensman13}.

Let $\m{Z}$ be a set of $m$ inducing points. 
Applying pathwise conditioning to the inducing point approximation of \citet{titsias09} (see \Cref{sec:sparse_and_variational} for details) gives the expression
\begin{equation}
\label{eqn:pathwise-zero-mean-inducing}
f_{(\.) \given \v{y}}^{[\m{Z}]}
= f_{(\.)}
+ \m{K}_{(\.) \m{Z}} \m{K}_{\m{Z} \m{Z}}\inv \m{K}_{\m{Z}\m{X}} (\m{K}_{\m{X}\m{Z}} \m{K}_{\m{Z}\m{Z}}\inv \m{K}_{\m{Z}\m{X}} + \sigma^2 \m{I})\inv (\v{y} - (\v{f}_\m{X}^{[\m{Z}]} + \v\eps)),
\end{equation}
where $f \~[GP](0, k)$ is a prior function sample, $\v{f}_\m{X}^{[\m{Z}]} = \m{K}_{\m{X}\m{Z}}\m{K}_{\m{Z}\m{Z}}^{-1}\v{f}_\m{Z}$, and $\v\eps\~[N](\v{0},\sigma^2 \m{I})$.
\begin{proofbox}
\begin{proof}
Let $\m{A} = \m{K}_{\m{Z} \m{Z}}\inv \m{K}_{\m{Z}\m{X}} (\m{K}_{\m{X}\m{Z}} \m{K}_{\m{Z}\m{Z}}\inv \m{K}_{\m{Z}\m{X}} + \sigma^2 \m{I})\inv$.
\begin{align}
\E \sbr{f_{(\.) \given \v{y}}^{[\m{Z}]}}
&= \E \sbr{f_{(\.)}} + \m{K}_{(\.) \m{Z}} \m{A} (\v{y}  - (\E [\v{f}_\m{X}^{[\m{Z}]}] + \E \sbr{\v\eps})) \\
&= \sigma^{-2}\m{K}_{(\.) \m{Z}}(\m{K}_{\m{Z}\m{Z}} + \sigma^{-2}\m{K}_{\m{Z} \m{X}} \m{K}_{\m{X} \m{Z}})^{-1} \m{K}_{\m{Z} \m{X}} \v{y} \\
&= \m{K}_{(\.) \m{Z}} \v{v}^* = \mu_{(\.) \given \v{y}}^{[\m{Z}]} \qquad \text{(see \Cref{eq:posterior_mean_titsias})}
\end{align}
\begin{align}
\Var \del{f_{(\.) \given \v{y}}^{[\m{Z}]}}
&= \Var \del{f_{(\.)}} + \Var \del{\m{K}_{(\.) \m{Z}} \m{A}(\v{f}_\m{X}^{[\m{Z}]} + \v\eps)} \notag \\
&\quad - \Cov \del{f_{(\.)}, \m{K}_{(\.) \m{Z}} \m{A}(\v{f}_\m{X}^{[\m{Z}]} + \v\eps)} \notag \\
&\quad - \Cov \del{\m{K}_{(\.) \m{Z}} \m{A}(\v{f}_\m{X}^{[\m{Z}]} + \v\eps), f_{(\.)}} \\
&= \m{K}_{(\.) (\.)} + \m{K}_{(\.) \m{Z}} \m{A}
\del{\m{K}_{\m{X} \m{Z}} \m{K}_{\m{Z} \m{Z}}\inv \m{K}_{\m{Z} \m{X}} + \sigma^2 \m{I}}
\m{A}\T \m{K}_{\m{Z} (\.)} \notag \\
&\quad -  \m{K}_{(\.) \m{Z}} \m{K}_{\m{Z} \m{Z}}\inv \m{K}_{\m{Z} \m{X}} \m{A}\T \m{K}_{\m{Z} (\.)} - \m{K}_{(\.) \m{Z}} \m{A} \m{K}_{\m{X} \m{Z}} \m{K}_{\m{Z} \m{Z}}\inv \m{K}_{\m{Z} (\.)} \\
&= \m{K}_{(\.) (\.)} - \m{K}_{(\.) \m{Z}} \m{A} \m{K}_{\m{X} \m{Z}} \m{K}_{\m{Z} \m{Z}}\inv \m{K}_{\m{Z} (\.)} \quad \text{(\emph{Woodbury identity})} \\
&= \m{K}_{(\.)(\.)} - \m{K}_{(\.)\m{Z}} \del{\m{K}_{\m{Z}\m{Z}}\inv - \del{\m{K}_{\m{Z}\m{Z}} + \sigma^{-2} \m{K}_{\m{Z}\m{X}} \m{K}_{\m{X}\m{Z}}} \inv} \m{K}_{\m{Z}(\.)} \\
&= \m{K}_{(\.)(\.) \given \v{y}}^{[\m{Z}]} \qquad \text{(see \Cref{eq:posterior_cov_titsias})}
\end{align}
This confirms the claim that $f_{(\.) \given \v{y}}^{[\m{Z}]}$ has the desired expected value and variance.
\end{proof}
\end{proofbox}

Following \citet[Theorem 5]{wild2023connections}, the optimal inducing point posterior mean $\mu_{(\.) \given \v{y}}^{[\m{Z}]}$ can therefore be written as
\begin{align}
\mu_{(\.) \given \v{y}}^{[\m{Z}]}
&= \m{K}_{(\.) \m{Z}}\v{v}^* = \sum_{j=1}^m k(\.,\v{z}_j) v_j^*, \\
\label{eqn:inducing-optim}
\v{v}^* &= \argmin_{\v{v}\in\R^m} \; \frac{1}{2} \norm{\v{y} - \m{K}_{\m{X} \m{Z}} \v{v}}_2^2 + \frac{\sigma^2}{2} \norm{\v{v}}_{\m{K}_{\m{Z}\m{Z}}}^2,
\end{align}
and we can again parametrise the uncertainty reduction term as $\m{K}_{(\.) \m{Z}} \v{\alpha}^*$ with 
\begin{gather}
\label{eqn:inducing-samples-ht-optim}
\v{\alpha}^* = \argmin_{\v\alpha\in\R^m} \; \frac{1}{2} \Vert \v{f}_\m{X}^{[\m{Z}]} + \v\eps - \m{K}_{\m{X} \m{Z}}\v\alpha \Vert_2^2 + \frac{\sigma^2}{2} \norm{\v\alpha }_{\m{K}_{\m{Z}\m{Z}}}^2,
\quad
\begin{aligned}
\v{f}_\m{X}^{[\m{Z}]} &\~[N](\v{0}, \m{K}_{\m{X}\m{Z}}\m{K}_{\m{Z}\m{Z}}^{-1}\m{K}_{\m{Z}\m{X}}),
\\
\v\eps &\~[N](\v{0}, \sigma^2 \m{I}).
\end{aligned}
\raisetag{0.925\baselineskip}
\end{gather}
\begin{proofbox}
\begin{proof}
We can show that these optimisation objectives result in the desired $\v{v}^*$ and $\v{\alpha}^*$ by deriving their minimisers analytically.
To this end, we first calculate their gradients.
\begin{align}
    \nabla_{\v{v}} \; \eqref{eqn:inducing-optim}
    &=
    \nabla_{\v{v}} \Big[ {\textstyle \frac{1}{2}} \norm{\v{y} - \m{K}_{\m{X}\m{Z}} \v{v}}_2^2 + {\textstyle \frac{\sigma^2}{2}} \norm{\v{v}}_{\m{K}_{\m{Z}\m{Z}}}^2 \Big] \\
    &=
    - \m{K}_{\m{Z}\m{X}} \del{\v{y} - \m{K}_{\m{X}\m{Z}} \v{v}}
    + \sigma^2 \m{K}_{\m{Z}\m{Z}} \v{v} \\
    \nabla_{\v{\alpha}} \; \eqref{eqn:inducing-samples-ht-optim}
    &=
    \nabla_{\v{\alpha}} \Big[ {\textstyle \frac{1}{2}} \Vert \v{f}_{\m{X}}^{[\m{Z}]} + \v\eps - \m{K}_{\m{X}\m{Z}} \v{\alpha} \Vert_2^2 + {\textstyle \frac{\sigma^2}{2}} \norm{\v{\alpha}}_{\m{K}_{\m{Z}\m{Z}}}^2 \Big] \\
    &=
    - \m{K}_{\m{Z}\m{X}} \del{\v{f}_{\m{X}}^{[\m{Z}]} + \v\eps - \m{K}_{\m{X}\m{Z}} \v{\alpha}}
    + \sigma^2 \m{K}_{\m{Z}\m{Z}} \v{\alpha}
\end{align}
Settings these gradients to zero and solving for $\v{v}$ and $\v{\alpha}$ yields $\v{v}^*$ and $\v{\alpha}^*$.
\begin{align}
    \v{v}^*
    &= \del{\sigma^2 \m{K}_{\m{Z}\m{Z}} + \m{K}_{\m{Z} \m{X}} \m{K}_{\m{X} \m{Z}}}\inv \m{K}_{\m{Z} \m{X}} \v{y} \\
    \v{\alpha}^*
    &= \del{\sigma^2 \m{K}_{\m{Z}\m{Z}} + \m{K}_{\m{Z} \m{X}} \m{K}_{\m{X} \m{Z}}}\inv \m{K}_{\m{Z} \m{X}} \del{\v{f}_{\m{X}}^{[\m{Z}]} + \v\eps}
\end{align}
We can rearrange $\v{v}^*$ and $\v{\alpha}^*$ by applying the Woodbury identity twice.
\begin{align}
    \v{v}^*
    &= \del{\sigma^2 \m{K}_{\m{Z}\m{Z}} + \m{K}_{\m{Z} \m{X}} \m{K}_{\m{X} \m{Z}}}\inv \m{K}_{\m{Z} \m{X}} \v{y} \\
    &= \sigma^{-2} \del{\m{K}_{\m{Z}\m{Z}} + \sigma^{-2} \m{K}_{\m{Z} \m{X}} \m{K}_{\m{X} \m{Z}}}\inv \m{K}_{\m{Z} \m{X}} \v{y} \\
    &= \sbr{\m{K}_{\m{Z} \m{Z}}\inv - \m{K}_{\m{Z} \m{Z}}\inv \m{K}_{\m{Z} \m{X}} \del{\sigma^2 \m{I} + \m{K}_{\m{X} \m{Z}} \m{K}_{\m{Z} \m{Z}}\inv \m{K}_{\m{Z} \m{X}}}\inv \m{K}_{\m{X} \m{Z}} \m{K}_{\m{Z} \m{Z}}\inv } \sigma^{-2} \m{K}_{\m{Z} \m{X}} \v{y} \\
    &= \m{K}_{\m{Z} \m{Z}}\inv \m{K}_{\m{Z} \m{X}} \sbr{ \m{I} - \del{\sigma^2 \m{I} + \m{K}_{\m{X} \m{Z}} \m{K}_{\m{Z} \m{Z}}\inv \m{K}_{\m{Z} \m{X}}}\inv \m{K}_{\m{X} \m{Z}} \m{K}_{\m{Z} \m{Z}}\inv \m{K}_{\m{Z} \m{X}}} \sigma^{-2} \v{y} \\
    &= \m{K}_{\m{Z} \m{Z}}\inv \m{K}_{\m{Z} \m{X}} \del{\m{K}_{\m{X} \m{Z}} \m{K}_{\m{Z} \m{Z}}\inv \m{K}_{\m{Z} \m{X}} + \sigma^2 \m{I}}\inv \v{y}
\end{align}
The case for $\v{\alpha}^*$ is analogous.
Therefore, \Cref{eqn:pathwise-zero-mean-inducing} can be expressed as
\begin{equation}
f_{(\.) \given \v{y}}^{[\m{Z}]}
= f_{(\.)}
+ \m{K}_{(\.) \m{Z}} (\v{v}^* - \v{\alpha}^*),
\end{equation}
which demonstrates that optimising the objectives from \Cref{eqn:inducing-optim,eqn:inducing-samples-ht-optim} results in the desired $\v{v}^*$ and $\v{\alpha}^*$ to draw samples from the inducing point posterior.
\end{proof}
\end{proofbox}
Exact implementation of \Cref{eqn:inducing-samples-ht-optim} is precluded by the need to sample $\v{f}_\m{X}^{[\m{Z}]}$ from a Gaussian with covariance $\m{K}_{\m{X}\m{Z}}\m{K}_{\m{Z}\m{Z}}^{-1}\m{K}_{\m{Z}\m{X}}$. 
However, we identify this matrix as a Nyström approximation to $\m{K}_{\m{X}\m{X}}$.
Thus, we can approximate \Cref{eqn:inducing-samples-ht-optim} by replacing $\v{f}_\m{X}^{[\m{Z}]}$ with $\v{f}_\m{X}$, which can be sampled with random Fourier features (see \Cref{sec:random_features}).
The approximation error is small when $m$ is large and the inducing points are close enough to the data, which is effectively required for inducing point methods to perform well in general.
Therefore, we use $\v{f}_\m{X}$ instead of $\v{f}_\m{X}^{[\m{Z}]}$ and apply the stochastic estimator from \Cref{eqn:stochastic_mean-objective} to the inducing point sampling objective.

The inducing point objectives differ from those presented in previous sections in that there are $\c{O}(m)$ instead of $\c{O}(n)$ learnable representer weights $\v{v}$ and $\v{\alpha}$, and we may choose the value of $m$ and locations $\m{Z}$ freely. 
The cost of inducing point representer weight updates is thus $\c{O}(ms)$, where $s$ is the number of samples. 
This contrasts with the $\c{O}(m^3)$ update cost of stochastic gradient variational Gaussian processes \citep{hensman13}.
\Cref{fig:grad_var_inducing_sgd} shows that SGD with $m \approx 100$k inducing points matches the performance of regular SGD on \textsc{houseelectric} with $n \approx 2$M, but is an order of magnitude faster.

\begin{figure}[ht]
\centering
\includegraphics[width=6in]{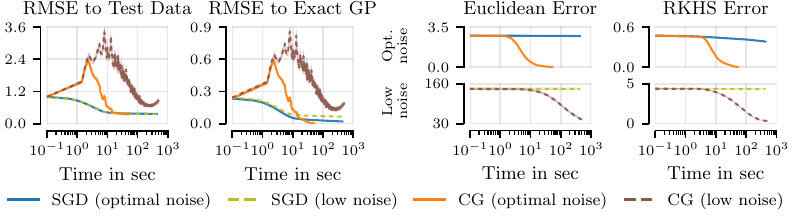}
\caption{
Convergence of GP posterior mean with SGD and CG as a function of time on the \textsc{elevators} dataset ($n \approx 16$k) while setting the noise scale to (i) maximise exact GP marginal likelihood and (ii) to $0.001$, labelled \emph{low noise}. We plot from left to right: test RMSE, RMSE to the exact GP mean at the test inputs, representer weight error $\norm{\v{v} - \v{v}^*}_{2}$, and RKHS error $\norm{\v{v} - \v{v}^*}_{\m{K}_{\m{X} \m{X}}}$. In the latter two plots, the low noise setting is shown on the bottom.
}
\label{fig:exact_vs_low_noise}
\end{figure}

\subsection{Implicit Bias and Posterior Geometry}
\label{sec:implicit_bias}
We have detailed an SGD-based scheme for obtaining approximate samples from a posterior Gaussian process.
Despite SGD's significantly lower cost per iteration than CG, its convergence to the true optima, shown in \Cref{fig:exact_vs_low_noise}, is much slower with respect to both Euclidean representer weight space, and the reproducing kernel Hilbert space (RKHS) induced by the kernel.
Nonetheless, the predictions obtained by SGD are very close to those of the exact GP, and effectively achieve the same test RMSE.
Moreover, \Cref{fig:error-and-eigenfunctions} shows the SGD posterior on a 1D toy task exhibits error bars of the correct width close to the data, and which revert smoothly to the prior far away from the data. 
Empirically, differences between the SGD and exact posteriors concentrate at the borders of data-dense regions.

We now argue the behaviour seen in \Cref{fig:error-and-eigenfunctions} is a general feature of SGD: one can expect it to obtain good performance even in situations where it does not converge to the exact solution.
Consider posterior function samples in pathwise form, namely $f_{(\.) \given \v{y}} = f_{(\.)} + \m{K}_{(\.)\m{X}}\v{v}$, where $f\~[GP](0, k)$ is a prior function sample and $\v{v}$ are the learnable representer weights.
We characterise the behaviour of SGD-computed approximate posteriors by splitting the input space $\c{X}$ into three regions, which we call the \emph{prior}, \emph{interpolation}, and \emph{extrapolation} regions.
The prior region consists of points which are sufficiently distant from the observed training data, the interpolation region refers to points which are close to the observed training data, and the extrapolation region corresponds to the remaining space after eliminating the prior and interpolation regions.
In the following, we will discuss and theoretically analyse these regions separate from each other.

\subsubsection{The Prior Region}
This corresponds to points sufficiently distant from the observed data. 
Here, for kernels that decay over space, the canonical basis functions $k(\., \v{x}_i)$ go to zero. 
Thus, both the true posterior and any approximations formulated pathwise revert to the prior.
More precisely, let $\c{X} = \R^d$, let $k$ satisfy $\lim_{c \to \infty} k(c\,\v{x}, \v{x}') = 0$ for all $\v{x}'$ and $\v{x}$ in $\c{X}$, and let $f_{(\.) \given \v{y}}$ be given by $f_{(\.) \given \v{y}} = f_{(\.)} + \m{K}_{(\.)\m{X}}\v{v}$, with $\v{v} \in \R^n$. Then, by passing the limit through the sum, for any fixed $\v{v}$, it follows immediately that $ \lim_{c \to \infty} f_{c\,\v{x} \given \v{y}} = f_{c\,\v{x}}$. Therefore, SGD cannot incur error in regions which are sufficiently far away from the data.
This effect is depicted in \Cref{fig:error-and-eigenfunctions}.

\subsubsection{The Interpolation Region}
This includes points close to the training data. 
We characterise this region via subspaces of the RKHS, where we show that SGD incurs small error.

Let $\m{K}_{\m{X}\m{X}} = \m{U}\m\Lambda\m{U}^\mathsf{T}$ be the eigendecomposition of the kernel matrix.
We index the eigenvalues $\m\Lambda = \mathrm{diag}(\lambda_1,\dotsc, \lambda_n)$ in descending order.
Define the \emph{spectral basis functions} as eigenvalue-weighted and eigenvector-rotated linear combinations of canonical basis functions
\begin{equation}
u^{(i)}(\.) = \sum_{j=1}^n \frac{U_{ji}}{\sqrt{\lambda_i}} k(\., \v{x}_j).
\end{equation}
These functions, which also appear in kernel principal component analysis, are orthonormal with respect to the RKHS inner product.
To characterise them further, we lift the Courant-Fischer characterisation of eigenvalues and eigenvectors to the RKHS $\c{H}_k$ induced by $k$, obtaining the expression
\begin{equation}
u^{(i)}(\.) = \argmax_{u \in \c{H}_k} \cbr{\sum_{i=1}^n u(\v{x}_i)^2 : \norm{u}_{\c{H}_k} = 1, \innerprod{u}{u^{(j)}}_{\c{H}_k} = 0, \forall j < i}.
\end{equation}
This means in particular that the top spectral basis function $u^{(1)}$ is a function of fixed RKHS norm, that is, of fixed degree of smoothness, as defined by the kernel $k$, which takes maximal values at the observations $\v{x}_1, ..., \v{x}_n$. 
Thus, $u^{(1)}$ will be large near clusters of observations.
The same will be true for the subsequent spectral basis functions, which also take maximal values at the observations, but are constrained to be RKHS-orthogonal to previous spectral basis functions. 
The proof is provided in Appendix \ref{apd:courant_in_input_space}.
\Cref{fig:error-and-eigenfunctions} confirms that the top spectral basis functions are indeed centred on the observed data.

\begin{figure}[ht]
\centering
\includegraphics[width=6in]{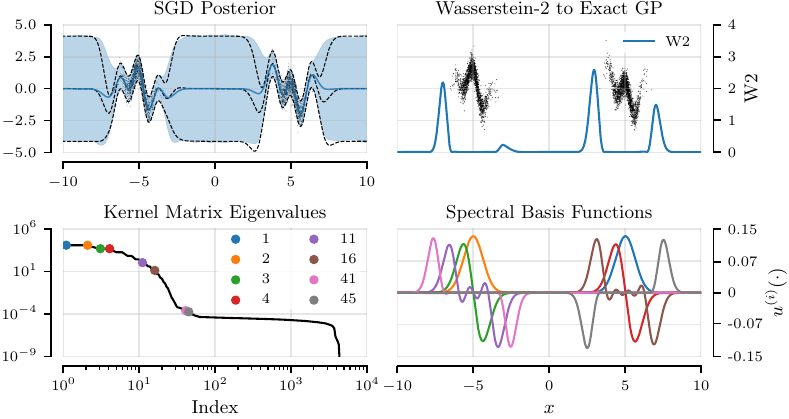}
\caption{SGD error and spectral basis functions. Top-left: SGD (blue) and exact GP (black, dashed) fit to a toy regression dataset ($n = 10$k). Top-right: Wasserstein-2 distance (W2) between both processes' marginals. The Wasserstein-2 distances are low near the data (interpolation region) and far away from the training data. The error concentrates at the edges of the data (extrapolation region). Bottom: The low-index spectral basis functions lie on the interpolation region, where the Wasserstein-2 distances are low, while functions of index $10$ and larger lie on the extrapolation region where the error is large.}
\label{fig:error-and-eigenfunctions}
\end{figure}

Empirically, SGD matches the true posterior in this region.
We formalise this observation by showing that SGD converges quickly in the directions spanned by spectral basis functions with large eigenvalues.

\begin{restatable}{proposition}{PropSGD}
\label{thm:sgd_convergence}
Let $\delta > 0$ and $\sigma^2 > 0$.
Let $\proj_{u^{(i)}}(\.)$ be the orthogonal projection onto the subspace spanned by $u^{(i)}$.
Let $\mu_{(\.)\given \v{y}}$ be the exact posterior predictive mean, and let $\hat{\mu}_{(\.) \given \v{y}}$ be the posterior predictive mean obtained by Polyak-averaged SGD after $t$ steps, starting from an initial set of representer weights equal to zero, and using a sufficiently small learning rate of $0 < \eta < \frac{1}{\lambda_1(\lambda_1 + \sigma^2)}$.
Assume the stochastic estimate of the gradient is $G$-sub-Gaussian.
Then, with probability $1-\delta$, we have for $i=1,...,n$ that
\begin{equation}
\norm[1]{\proj_{u^{(i)}} \mu_{(\.)\given\v{y}} - \proj_{u^{(i)}} \hat{\mu}_{(\.) \given \v{y}}}_{\c{H}_k} \leq \frac{1}{\sqrt{\lambda_i^3}} \del{\frac{\norm{\v{y}}_2}{\eta\sigma^2t} + G\sqrt{\frac{2}{t} \log\frac{n}{\delta}}}.
\end{equation}
\end{restatable}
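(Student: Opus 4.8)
The plan is to reduce the claimed RKHS bound to a one-dimensional statement about the representer weights along each eigendirection of $\m{K}_{\m{X}\m{X}}$, and then to analyse SGD on the quadratic objective \eqref{eqn:mean-optim} coordinate by coordinate in that basis. Write $\m{K}_{\m{X}\m{X}} = \m{U}\m\Lambda\m{U}\T$ with $\m\Lambda = \mathrm{diag}(\lambda_1,\dots,\lambda_n)$ and let $\v{u}_i$ be the $i$-th column of $\m{U}$; then, in representer coordinates, the spectral basis function $u^{(i)}$ is the vector $\v{u}_i/\sqrt{\lambda_i}$. Using the reproducing property in the form $\innerprod{\m{K}_{(\.)\m{X}}\v{a}}{\m{K}_{(\.)\m{X}}\v{b}}_{\c{H}_k} = \v{a}\T\m{K}_{\m{X}\m{X}}\v{b}$, the facts $\mu_{(\.) \given \v{y}} = \m{K}_{(\.)\m{X}}\v{v}^*$ and $\hat\mu_{(\.) \given \v{y}} = \m{K}_{(\.)\m{X}}\hat{\v{v}}$ (with $\hat{\v{v}}$ the Polyak-averaged iterate), and $\norm{u^{(i)}}_{\c{H}_k} = 1$, a short computation gives
\begin{equation}
\norm[1]{\proj_{u^{(i)}}\mu_{(\.) \given \v{y}} - \proj_{u^{(i)}}\hat\mu_{(\.) \given \v{y}}}_{\c{H}_k} = \sqrt{\lambda_i}\,\abs{\v{u}_i\T(\v{v}^* - \hat{\v{v}})}.
\end{equation}
So it suffices to show that $\abs{\v{u}_i\T(\v{v}^* - \hat{\v{v}})}$ is at most $\lambda_i^{-2}$ times the bracketed quantity in the statement.

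Next I would diagonalise the SGD dynamics. The objective \eqref{eqn:mean-optim} is quadratic with gradient $\nabla\c{L}(\v{v}) = \m{H}(\v{v} - \v{v}^*)$, where $\m{H} = \m{K}_{\m{X}\m{X}}(\m{K}_{\m{X}\m{X}} + \sigma^2\m{I})$ commutes with $\m{K}_{\m{X}\m{X}}$ and has eigenvalue $\lambda_i(\lambda_i + \sigma^2)$ along $\v{u}_i$. Writing $\hat{g}_k$ for the stochastic gradient at step $k$ and $\v\xi_k = \hat{g}_k - \nabla\c{L}(\v{v}_k)$ for its error, the update $\v{v}_{k+1} = \v{v}_k - \eta\hat{g}_k$ projected onto $\v{u}_i$ reads $e_i^{(k+1)} = \rho_i e_i^{(k)} - \eta\,\v{u}_i\T\v\xi_k$, where $e_i^{(k)} = \v{u}_i\T(\v{v}_k - \v{v}^*)$ and $\rho_i = 1 - \eta\lambda_i(\lambda_i + \sigma^2)$; the hypothesis $\eta < (\lambda_1(\lambda_1 + \sigma^2))^{-1}$ is exactly what makes $\rho_i \in (0,1)$ for every $i$. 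Unrolling from $\v{v}_0 = \v{0}$, so $e_i^{(0)} = -\v{u}_i\T\v{v}^*$, and averaging over $k = 1,\dots,t$ splits $\v{u}_i\T(\hat{\v{v}} - \v{v}^*)$ into a deterministic bias term $-\tfrac{1}{t}\del{\sum_{k=1}^t\rho_i^k}\v{u}_i\T\v{v}^*$ and a noise term $-\tfrac{\eta}{t}\sum_{j=0}^{t-1} w_j^{(i)}\,\v{u}_i\T\v\xi_j$ with weights $w_j^{(i)} = \sum_{l=0}^{t-1-j}\rho_i^l = (1 - \rho_i^{t-j})/(1 - \rho_i)$.

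Bounding the two pieces is then routine. For the bias, $\sum_{k\ge 1}\rho_i^k \leq 1/(1 - \rho_i) = 1/(\eta\lambda_i(\lambda_i + \sigma^2))$ and $\abs{\v{u}_i\T\v{v}^*} \leq \norm{\v{y}}_2/(\lambda_i + \sigma^2)$, so using $(\lambda_i + \sigma^2)^2 \geq \lambda_i\sigma^2$ the bias is at most $\norm{\v{y}}_2/(\eta\sigma^2 t\lambda_i^2)$. For the noise, $(\eta/t)\,w_j^{(i)} \leq 1/(t\lambda_i(\lambda_i + \sigma^2))$, hence $\sum_j \del{(\eta/t)w_j^{(i)}}^2 \leq 1/(t\lambda_i^2(\lambda_i + \sigma^2)^2)$; since $\E[\v\xi_k \given \c{F}_{k-1}] = \v{0}$ and each $\v\xi_k$ is $G$-sub-Gaussian, the $\v{u}_i\T\v\xi_k$ form a $G$-sub-Gaussian martingale-difference sequence and the accumulated noise is sub-Gaussian with parameter at most $G/(\sqrt{t}\lambda_i(\lambda_i + \sigma^2)) \leq G/(\sqrt{t}\lambda_i^2)$. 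A sub-Gaussian tail bound together with a union bound over $i = 1,\dots,n$ then controls the noise by $(G/\lambda_i^2)\sqrt{(2/t)\log(n/\delta)}$, simultaneously for all $i$, with probability $1 - \delta$. Adding the bias and noise bounds and multiplying by $\sqrt{\lambda_i}$ yields the claim.

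The main obstacle is the concentration step rather than the algebra. One must verify that the gradient-noise sequence is genuinely a martingale-difference sequence with respect to the natural filtration — which relies on the quadratic objective having an exactly affine gradient (so the deterministic recursion is truly linear and holds pathwise) and on the mini-batch and random-feature estimators in \eqref{eqn:stochastic_mean-objective} being unbiased — and then one must track the Polyak-averaging weights carefully enough to keep the variance proxy $O(1/t)$, and accept the $\log(n/\delta)$ factor that comes from requiring the bound to hold across all $n$ eigendirections at once. The eigenbasis reduction in the first step is what makes everything after it essentially bookkeeping; once that identity is in hand, the per-direction analysis is a textbook computation for a strongly convex quadratic with additive noise and iterate averaging.
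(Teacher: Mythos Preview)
Your proposal is correct and follows essentially the same route as the paper: reduce the RKHS projection norm to $\sqrt{\lambda_i}\,|\v{u}_i\T(\v{v}^* - \hat{\v{v}})|$, diagonalise the linear SGD recursion in the eigenbasis of $\m{K}_{\m{X}\m{X}}$, split the Polyak-averaged error into a geometric-series bias term and a weighted-sum noise term, bound each, and union-bound over the $n$ eigendirections. The paper packages the first step inside a slightly more general seminorm framework (index sets $\c{I}\subseteq\{1,\dots,n\}$) and handles the noise by first treating i.i.d.\ Gaussian perturbations and then relaxing to sub-Gaussian via moment generating functions, whereas your martingale-difference formulation is arguably the cleaner way to state the concentration step; but the substance and the resulting constants are identical.
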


We expect $G$ to be at most $\c{O}(\lambda_1^2 \norm{\v{y}}_\infty)$ with high probability.
The proof for \Cref{thm:sgd_convergence}, an additional pointwise convergence bound, and a variant which handles projections onto general subspaces spanned by basis functions are provided in Appendix \ref{apd:implicit_bias}.

The result extends immediately from the posterior mean to posterior samples by replacing $\v{y}$ with $\v{f}_{\m{X}} + \v\eps$.
Therefore, \emph{SGD quickly converges to the posterior GP in the data-dense region}, where the spectral basis functions corresponding to large eigenvalues are located.
Since convergence speed on the span of each basis function is independent of the magnitude of the other basis functions' eigenvalues, SGD can perform well even when the kernel matrix is ill-conditioned.
This is shown in \Cref{fig:exact_vs_low_noise}.

\subsubsection{The Extrapolation Region}
This can be found by elimination from the input space of the prior and interpolation regions, in both of which SGD incurs low error. 
Consider the spectral basis functions $u^{(i)}(\.)$ with small eigenvalues.
By orthogonality of $u^{(1)},...,u^{(n)}$, such functions cannot be large near the observations while retaining a prescribed norm. 
Their mass is therefore placed away from the observations. 
SGD converges slowly in this region, resulting in a large error in its solution in both a Euclidean and RKHS sense, as seen in \Cref{fig:exact_vs_low_noise}. 
Fortunately, due to the lack of data in the extrapolation region, the excess test error incurred due to SGD non-convergence may be low, resulting in \emph{benign non-convergence} \citep{ZouWBGK2021benign}.
Similar phenomena have been observed in the inverse problems literature, where this is called \emph{iterative regularisation} \citep{hansen98,jin23}.
\Cref{fig:error-and-eigenfunctions} shows that the Wasserstein-2 distance to the exact GP predictions is large in this region, as, when initialised at zero, SGD tends to return small representer weights, thus reverting to the prior.

\section{Experiments}
\label{sec:experiments}
We now turn to empirical evaluation of SGD GPs, focusing on their predictive and decision-making properties. 
We compare SGD GPs with the two most popular scalable GP techniques: preconditioned conjugate gradient (CG) \citep{WangPGT2019exactgp} and sparse stochastic variational inference (SVGP) \citep{hensman13}.
For all methods, we use $2000$ random Fourier features to draw each prior function used for computing posterior function samples via pathwise conditioning.
Following \citet{WangPGT2019exactgp}, we use a pivoted Cholesky preconditioner of size $100$ for CG, except in cases where this slows down convergence, where we report results without preconditioning instead.
For SVGP, we use $m = 4096$ inducing points for all datasets, initialising their locations with the $k$-means algorithm.
In all SGD experiments, we use a Nesterov momentum value of $0.9$ and Polyak averaging \citep{antoran2023sampling}.
Additionally, we draw $100$ new random features at each step to estimate the regulariser term.
Furthermore, we perform gradient clipping using a maximum gradient norm of $0.1$.

\subsection{Regression Baselines}
\label{subsec:regression}
We first compare SGD-based predictions with baselines in terms of predictive performance, scaling of computational cost with problem size, and robustness to the ill-conditioning of kernel matrices and corresponding linear systems.
Following \citet{WangPGT2019exactgp}, we consider 9 datasets from the UCI repository \citep{Dua2019UCI} ranging in size from $n = 15$k to $n \approx 2$M data points and dimensionality from $d = 3$ to $d = 90$. 
We report mean and standard deviation over five $90\%$-train $10\%$-test splits for the small and medium datasets, and three splits for the largest dataset.
For all methods, we use a 
Matérn-$\nicefrac{3}{2}$ kernel with a fixed set of hyperparameters obtained using maximum marginal likelihood.
For each dataset, we select a homoscedastic Gaussian noise scale, signal scale and a separate length scale per input.
For datasets with less than $50$k observations, we tune these hyperparameters to maximise the exact GP marginal likelihood. 
The cubic cost of this procedure makes it intractable at a larger scale.
Thus, for datasets with more than $50$k observations, we use the following procedure:
\begin{enumerate}
\item From the training data, select a centroid data point uniformly at random.
\item Select a subset of $10$k data points with the smallest Euclidean distance to the centroid.
\item Find hyperparameters by maximising the exact GP marginal likelihood on this subset.
\item Repeat the preceding steps with $10$ different centroids and average the hyperparameters.
\end{enumerate}
This approach avoids aliasing bias due to data subsampling and is tractable for large datasets.

We run SGD for $100$k steps using a learning rate of $0.5$ to estimate the representer weights of the mean function, a learning rate of $0.1$ to draw samples, and a fixed batch size of $512$ for both the mean function and samples.
For CG, we run up to $1000$ steps for datasets with $n \leq 500$k, and a relative residual norm tolerance of $0.01$.
On the four largest datasets, CG's cost per step is too large to run $1000$ steps.
Instead, we run $100$ steps, which takes roughly 9 hours per function sample on a TPUv2 core.
For SVGP, we learn the variational parameters for $m = 4096$ inducing points by maximising the ELBO with Adam until convergence.
For all methods, we estimate predictive variances for log-likelihood computations from 64 function samples drawn using pathwise conditioning.

\begin{figure}[t]
\centering
\includegraphics[width=6in]{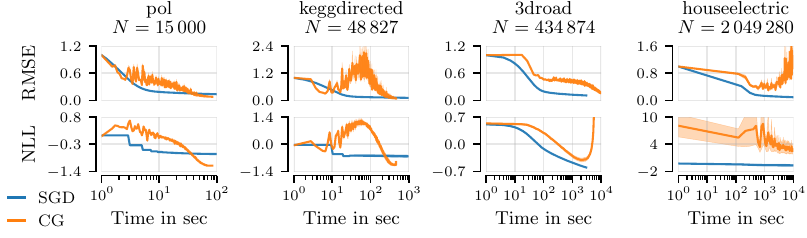}
\caption{Test RMSE and NLL as a function of compute time for CG and SGD. Step-like discontinuities are due to the fact that the metrics are evaluated at discrete intervals.}
\label{fig:rmse_llh_trace}
\end{figure}

\subsubsection{Predictive Performance and Scalability with Training Dataset Size}
Our complete set of results is provided in \Cref{tab:regression}, including test RMSE, test negative log-likelihood (NLL) and compute time needed to obtain the predictive mean on a single core of a TPUv2 device.
Drawing multiple samples requires repeating this computation, which we perform in parallel.
In the small setting ($n \leq 20$k), taking $100$k steps of SGD presents a compute cost comparable to running CG to tolerance, which usually takes around $500$ - $800$ steps. 
Here, CG eventually converges to the exact solution, while SGD tends to present increased test error due to non-convergence.
In the large setting ($n \geq 100$k), where neither method converges within the provided compute budget, SGD achieves better RMSE and NLL.
SVGP always converges faster than CG and SGD, but it only performs best on the \textsc{buzz} dataset, which can likely be summarised well by $m = 4096$ inducing points.

From \Cref{fig:rmse_llh_trace}, we see that SGD makes the vast majority of its progress in prediction space in its first few iterations, improving roughly monotonically with the number of steps. 
Thus, early stopping after $100$k iterations incurs only moderate errors. 
In contrast, CG's initial steps actually increase test error, resulting in very poor performance if stopped too early.
This interacts poorly with the number of CG steps needed and the cost per step, which generally grow with increased amounts of data \citep{terenin23}.

\begin{table}
\caption{Regression task mean and standard error for GP predictive mean RMSE, low-noise RMSE ($\dagger$), TPUv2 node hours used to obtain the predictive mean, and negative log-likelihood (NLL) computed with variances estimated from 64 function samples. SVGP is omitted for the low noise setting, where it fails to run. Metrics are reported for the datasets standardised to zero mean and unit variance.}
\label{tab:regression}
\scriptsize
\setlength{\tabcolsep}{2.5pt}
\renewcommand{\arraystretch}{1.1}
\begin{tabular}{l c c c c c c c c c c}
\toprule
\multicolumn{2}{c}{Dataset} & \textsc{pol} & \textsc{elevators} & \textsc{bike} & \textsc{protein} & \textsc{keggdir} & \textsc{3droad} & \textsc{song} & \textsc{buzz} & \textsc{houseelec} \\
\multicolumn{2}{c}{Size} & 15000 & 16599 & 17379 & 45730 & 48827 & 434874 & 515345 & 583250 & 2049280 \\
\midrule
\multirow{3}{*}{\rotatebox[origin=c]{90}{RMSE}}
 & SGD
 & 0.13\,$\pm$\,0.00 & 0.38\,$\pm$\,0.00 & 0.11\,$\pm$\,0.00 & \textbf{0.51\,$\pm$\,0.00} & 0.12\,$\pm$\,0.00 & \textbf{0.11\,$\pm$\,0.00} & \textbf{0.80\,$\pm$\,0.00} & 0.42\,$\pm$\,0.01 & \textbf{0.09\,$\pm$\,0.00} \\
 & CG
 & \textbf{0.08\,$\pm$\,0.00} & \textbf{0.35\,$\pm$\,0.00} & \textbf{0.04\,$\pm$\,0.00} & \textbf{0.50\,$\pm$\,0.00} & \textbf{0.08\,$\pm$\,0.00} & 0.15\,$\pm$\,0.01 & 0.85\,$\pm$\,0.03 & 1.41\,$\pm$\,0.08 & 0.87\,$\pm$\,0.14 \\
 & SVGP
 & 0.10\,$\pm$\,0.00 & 0.37\,$\pm$\,0.00 & 0.07\,$\pm$\,0.00 & 0.57\,$\pm$\,0.00 & 0.09\,$\pm$\,0.00 & 0.49\,$\pm$\,0.01 & 0.81\,$\pm$\,0.00 & \textbf{0.33\,$\pm$\,0.00} & 0.11\,$\pm$\,0.01 \\
\midrule
\multirow{3}{*}{\rotatebox[origin=c]{90}{RMSE $\dagger$}}
 & SGD
 & \textbf{0.13\,$\pm$\,0.00} & \textbf{0.38\,$\pm$\,0.00} & 0.11\,$\pm$\,0.00 & \textbf{0.51\,$\pm$\,0.00} & \textbf{0.12\,$\pm$\,0.00} & \textbf{0.11\,$\pm$\,0.00} & \textbf{0.80\,$\pm$\,0.00} & \textbf{0.42\,$\pm$\,0.01} & \textbf{0.09\,$\pm$\,0.00} \\
 & CG
 & 0.16\,$\pm$\,0.01 & 0.68\,$\pm$\,0.09 & \textbf{0.05\,$\pm$\,0.01} & 3.03\,$\pm$\,0.23 & 9.79\,$\pm$\,1.06 & 0.34\,$\pm$\,0.02 & 0.83\,$\pm$\,0.02 & 5.66\,$\pm$\,1.14 & 0.93\,$\pm$\,0.19 \\
 & SVGP
 & --- & --- & --- & --- & --- & --- & --- & --- & --- \\
\midrule
\multirow{3}{*}{\rotatebox[origin=c]{90}{Minutes}}
 & SGD
 & 3.51\,$\pm$\,0.01 & 3.51\,$\pm$\,0.01 & 5.70\,$\pm$\,0.02 & \textbf{7.10\,$\pm$\,0.01} & 15.2\,$\pm$\,0.02 & 27.6\,$\pm$\,11.4 & 220\,$\pm$\,14.5 & 347\,$\pm$\,61.5 & 162\,$\pm$\,54.3 \\
 & CG
 & \textbf{2.18\,$\pm$\,0.32} & \textbf{1.72\,$\pm$\,0.60} & \textbf{2.81\,$\pm$\,0.22} & 9.07\,$\pm$\,1.68 & \textbf{12.5\,$\pm$\,1.99} & 85.2\,$\pm$\,36.0 & 195\,$\pm$\,2.31 & 351\,$\pm$\,48.3 & 157\,$\pm$\,0.41 \\
 & SVGP
 & 21.2\,$\pm$\,0.27 & 21.3\,$\pm$\,0.12 & 20.5\,$\pm$\,0.02 & 20.8\,$\pm$\,0.04 & 20.8\,$\pm$\,0.05 & \textbf{21.0\,$\pm$\,0.12} & \textbf{24.7\,$\pm$\,0.05} & \textbf{25.6\,$\pm$\,0.05} & \textbf{20.0\,$\pm$\,0.03} \\
\midrule
\multirow{3}{*}{\rotatebox[origin=c]{90}{NLL}}
 & SGD
 & -0.70\,$\pm$\,0.02 & 0.47\,$\pm$\,0.00 & -0.48\,$\pm$\,0.08 & 0.64\,$\pm$\,0.01 & -0.62\,$\pm$\,0.07 & \textbf{-0.60\,$\pm$\,0.00} & \textbf{1.21\,$\pm$\,0.00} & 0.83\,$\pm$\,0.07 & \textbf{-1.09\,$\pm$\,0.04} \\
 & CG
 & \textbf{-1.17\,$\pm$\,0.01} & \textbf{0.38\,$\pm$\,0.00} & \textbf{-2.62\,$\pm$\,0.06} & \textbf{0.62\,$\pm$\,0.01} & \textbf{-0.92\,$\pm$\,0.10} & 16.27\,$\pm$\,0.45 & 1.36\,$\pm$\,0.07 & 2.38\,$\pm$\,0.08 & 2.07\,$\pm$\,0.58 \\
 & SVGP
 & -0.71\,$\pm$\,0.01 & 0.43,$\pm$\,0.00 & -1.27\,$\pm$\,0.02 & 0.86\,$\pm$\,0.01 & -0.70\,$\pm$\,0.02 & 0.67\,$\pm$\,0.02 & 1.22\,$\pm$\,0.00 & \textbf{0.25\,$\pm$\,0.04} & -0.89\,$\pm$\,0.10 \\
\bottomrule
\end{tabular}
\end{table}

\subsubsection{Robustness to Kernel Matrix Ill-Conditioning}
GP models are known to be sensitive to kernel matrix conditioning.
We explore how this affects the algorithms under consideration by fixing the noise to a low value of $\sigma = 0.001$ and running them on our regression datasets. 
\Cref{tab:regression} shows that the performance of CG severely degrades on all datasets and, for SVGP, optimisation diverges for all datasets.
SGD's results remain essentially unchanged. This is because the noise only changes the smallest kernel matrix eigenvalues substantially and these do not affect convergence for the top spectral basis functions. 
This mirrors results presented previously for the \textsc{elevators} dataset in \Cref{fig:exact_vs_low_noise}. 

\subsubsection{Regression with Large Numbers of Inducing Points}
We demonstrate the inducing point variant of our method, presented in \Cref{subsec:inducing_points}, on \textsc{houseelectric} ($n \approx 2$M). 
We select varying numbers of inducing points as a subset of the training dataset using Annoy, an approximate nearest neighbour algorithm \citep{Annoy}.
We run Annoy on the \textsc{houseelectric} dataset with \texttt{num\_trees} set to $50$, retrieve $100$ nearest neighbours for each point in the dataset, and keep close neighbours whose Euclidean distance to the original point is less than a specified \texttt{max\_dist} parameter.
For each point with more than one close neighbour, we eliminate the original point and any other points which are close to both the original point and any of its close neighbours.
The remaining points are kept, and their number is adjusted by modifying \texttt{max\_dist}.
\Cref{fig:grad_var_inducing_sgd} shows that the time required for $100$k SGD steps scales roughly linearly with inducing points. It takes $68$min for full SGD and $50$min, $25$min, and $17$min for $m = 1099$k, $728$k, and $218$k, respectively.
Performance in terms of RMSE and NLL degrades less than $10\%$, even when using $218$k inducing points.

\subsection{Large-Scale Parallel Thompson Sampling}
\label{subsec:bo}
A fundamental goal of scalable Gaussian processes is to produce uncertainty estimates useful for sequential decision-making. 
Motivated by problems in large-scale recommender systems, where both the initial dataset and the total number of users queried are simultaneously large \citep{rubens2015active, elahi2016survey}, we benchmark SGD on a large-scale Bayesian optimisation task.

We draw a target function from a GP prior $g \~[GP](0, k)$ and optimize it on $\c{X} = [0, 1]^d$ with parallel Thompson sampling \citep{hernandezlobato2017Parallel}.
In particular, we choose $\v{x}_{\mathrm{new}} = \argmax_{\v{x} \in \c{X}} f_{\v{x} \given \v{y}}$ for a set of posterior samples drawn in parallel. 
We compute these samples using pathwise conditioning with each respective scalable GP method.
For each function sample maximum, we evaluate $y_\mathrm{new} = g(\v{x}_{\mathrm{new}}) + \eps$, where $\eps\~[N](0, \sigma^2)$ with $\sigma = 0.001$, and we add the pair $(\v{x}_{\mathrm{new}}, y_{\mathrm{new}})$ to the training data.
We use an acquisition batch size of $1000$ samples, and maximise them with a multi-start gradient-based approach:
\begin{enumerate}
\item Evaluate the posterior function sample at a large number of nearby input locations.
We find nearby locations using a combination of exploration and exploitation.
For exploration, we sample locations uniformly at random from $[0, 1]^d$.
For exploitation, we first subsample the training data with probabilities proportional to the observed objective function values and then add Gaussian noise $\eps_{\t{nearby}} \~[N](0, \sigma_{\t{nearby}}^2)$, where $\sigma_{\t{nearby}} = \ell / 2$ and $\ell$ is the kernel length scale.
We find 10\% of nearby locations using the uniform exploration strategy and 90\% using the exploitation strategy.
\item Select the nearby locations which have the highest acquisition function values.
To find them, we first try $50$k nearby locations and identify the location with the highest acquisition function value.
We repeat this process $30$ times, finding and evaluating a total of $1.5$M nearby locations and obtaining a total of $30$ top nearby locations.
\item Maximise the acquisition function with gradient-based optimisation, using the top nearby locations as initialisation.
After optimisation, the best location becomes $\v{x}_\mathrm{new}$, the maximiser at which the target function $g$ will be evaluated in the next acquisition step.
Initialising at the top nearby locations, we perform $100$ steps of Adam on the sampled acquisition function with a learning rate of $0.001$ to find the maximiser.
\end{enumerate}

\begin{figure}[ht]
\centering
\includegraphics[width=6in]{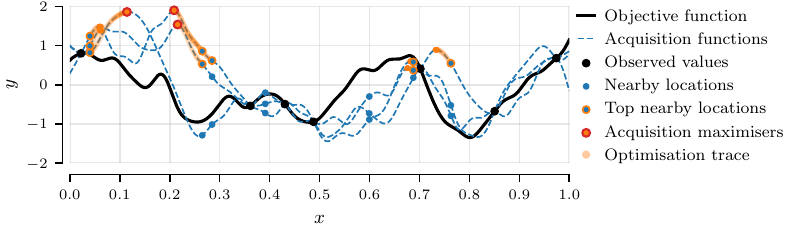}
\caption{Illustration of a single Thompson sampling acquisition step on a 1D problem.}
\label{fig:thompson_1D}
\end{figure}

In every Thompson step, we perform this process in parallel for $1000$ random acquisition functions sampled from the GP posterior, resulting in a total of $1000$ $\v{x}_\mathrm{new}$, which will be added to the training data and evaluated at the objective function.
Although we share the initial nearby locations between sampled acquisition functions, each acquisition function will, in general, produce distinct top nearby locations and maximisers.
\Cref{fig:thompson_1D} illustrates a single Thompson step on a 1D problem using $3$ acquisition functions, $7$ nearby locations and $3$ top nearby locations.

For the large-scale experiment, we set the search space dimensionality to $d = 8$, the largest considered by \citet{wilson20}, and initialise all methods with a dataset of $50$k observations sampled uniformly at random from $\c{X}$.
To prevent model misspecification as a confounding factor, we use a Matérn-$\nicefrac{3}{2}$ kernel and consider length scales of $(0.1, 0.2, 0.3, 0.4, 0.5)$ for both the target function and our models. For each length scale, we repeat the experiment with $10$ different random seeds.

In large-scale Bayesian optimisation, training and posterior function optimisation costs can become significant, and predictions may be needed on demand.
For this reason, we consider two variants of our experiment with different compute budgets.
In the small-compute setting, SGD is run for $15$k steps using a learning rate of $0.3$ for the mean and $0.0003$ for the samples, SVGP is given $m = 1024$ inducing points and $20$k steps to fit the variational parameters, and CG is run for $10$ steps. 
In the large-compute setting, all methods are run for 5 times as many optimisation steps.

Mean results and standard errors across length scales and seeds are presented in \Cref{fig:thompson}.
We plot the maximum function value achieved by each method.
In the small-compute setting, the time required for $30$ Thompson steps with SVGP and SGD are dominated by the algorithm used to maximise the models' posterior samples.
In contrast, CG takes roughly twice the time, requiring almost $3$h of wall-clock time. 
Despite this, SGD makes the largest progress per acquisition step, finding a target function value that improves upon the initial training set maximum twice as much as the other inference methods. 
VI and CG perform comparably, with the latter providing slightly more progress both per acquisition step and unit of time.
Despite their limited compute budget, all methods outperform random search by a significant margin.
In the large-compute setting, all inference methods achieve a similar maximum target value by the end of the experiment. 
CG and SGD make similar progress per acquisition step but SGD is faster per unit of time.
SVGP is slightly faster than SGD but achieves a lower maximum target value.
In summary, our results suggest that SGD can be an appealing uncertainty quantification technique for large-scale sequential decision-making.

\begin{figure}[t]
\centering
\includegraphics[width=6in]{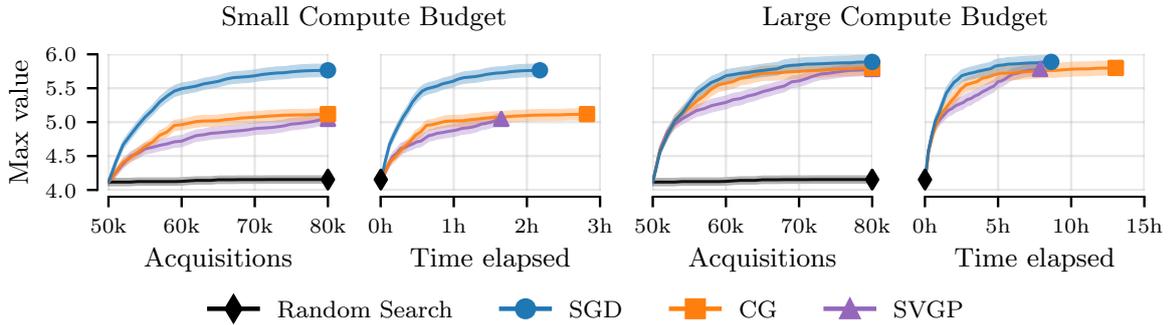}
\caption{Maximum function values (mean and standard error) obtained by Thompson sampling with our approximate inference methods as a function of acquisition steps and of compute time on an A100 GPU. The latter includes time spent drawing function samples and finding their maxima. All methods share a starting dataset of $50$k points and we take $30$ Thompson steps, acquiring $1000$ points per step.}
\label{fig:thompson}
\end{figure}

\section{Discussion}
In this chapter, we explored using stochastic gradient algorithms to approximately compute Gaussian process posterior means and function samples. 
We derived optimisation objectives with linear and sublinear cost for both and showed that SGD can produce accurate predictions, even when it does not converge to the optimum.
Additionally, we developed a spectral characterisation of the effects of non-convergence, showing that it manifests itself mainly through error in an extrapolation region located away, but not too far away, from the observations.
Furthermore, we benchmarked SGD on regression tasks of various scales, achieving state-of-the-art performance for sufficiently large or ill-conditioned settings.
On a Thompson sampling benchmark, where well-calibrated uncertainty is paramount, SGD matches the performance of more expensive baselines at a fraction of the computational cost.
In the next chapter, we will build upon and improve our SGD approach using insights from the literature on optimisation and kernel methods.

\chapter[Stochastic Dual Descent for Gaussian Processes]{Stochastic Dual Descent\\for Gaussian Processes}
\label{chap:sdd}

\ifpdf
    \graphicspath{{Chapter4/Figs/Raster/}{Chapter4/Figs/PDF/}{Chapter4/Figs/}}
\else
    \graphicspath{{Chapter4/Figs/Vector/}{Chapter4/Figs/}}
\fi

The previous chapter introduced stochastic gradient descent to draw samples from a Gaussian process posterior.
In this chapter, we build upon this algorithm to develop stochastic \emph{dual} descent for Gaussian processes, a substantially improved version of the former.
To this end, we propose a dual optimisation objective, which facilitates larger step sizes and much faster convergence due to more favourable curvature properties.
Additionally, we discuss different stochastic gradient estimators, focusing on the important difference between multiplicative noise versus additive noise.
Furthermore, we analyse and compare momentum acceleration and iterative averaging techniques, leading to recommendations which improve the convergence properties of the proposed algorithm.
Empirically, we surpass our own state-of-the-art performance from the previous chapter, and further demonstrate that our improved algorithm matches the performance of state-of-the-art graph neural networks on a molecular binding affinity prediction task.

This chapter includes content which is adapted from the following publication:
\begin{itemize}
    \item J. A. Lin, S. Padhy, J. Antorán, A. Tripp, A. Terenin, C. Szepesvári, J. M. Hernández-Lobato, and D. Janz. Stochastic Gradient Descent for Gaussian Processes Done Right. In \emph{International Conference on Learning Representations}, 2024.
\end{itemize}
My contributions to this project consist of proposing the idea, initiating the collaboration, developing major parts of the software implementation, conducting most of the experiments, and contributing to writing the manuscript.

\section{Introduction}
Gaussian process regression is a probabilistic framework for learning unknown functions.
It is the gold standard modelling choice in Bayesian optimisation, where uncertainty-aware decision-making is required to gather data efficiently in a sequential way.
The main limitation of Gaussian process models is that their fitting requires solving a large quadratic optimisation problem, minimising the kernel ridge regression objective, which, using direct methods, has a cost cubic in the number of observations.

Standard approaches to reducing the cost of fitting Gaussian process models either apply approximations to reduce the complexity of the quadratic optimisation problem, such as the Nyström and related variational approximations \citep{williams2000using,titsias09,hensman13}, or use carefully engineered iterative solvers, such as preconditioned conjugate gradients \citep{gardner18,WangPGT2019exactgp}, or a combination of both \citep{rudi2017falkon,lin2023sampling}.
An alternative approach, which we focus on in this chapter, is the use of stochastic gradient descent (SGD) to minimise the kernel ridge regression objective.

Existing work, including the previous chapter, has previously investigated the use of SGD in the context of Gaussian processes and related kernel methods \citep{lin2023sampling,dai2014doubly,kivinen2004online}.
In particular, in the previous chapter, we pointed out that SGD may be competitive with conjugate gradients (CG) in terms of both the mean and uncertainty predictions that it produces when the compute budget is limited.
In this chapter, we go a step further, and demonstrate that, when done right, SGD can broadly outperform CG.

To this end, we propose a simple SGD-based algorithm, called \emph{stochastic dual descent} (SDD).
Our algorithm is an adaptation of ideas from the stochastic dual coordinate ascent (SDCA) algorithm of \citet{shalev2013stochastic} to the large-scale deep-learning-type gradient descent setting, combined with insights on stochastic approximation from \citet{dieuleveut2017harder} and \citet{varre2021last}.
We demonstrate the following evidence in favour of SDD:
\begin{itemize}
    \item On standard UCI regression benchmarks with up to 2 million observations, stochastic dual descent either matches or improves upon the performance of conjugate gradients.
    \item On the large-scale Bayesian optimisation task considered in the previous chapter \citep{lin2023sampling}, stochastic dual descent outperforms stochastic gradient descent and other baselines, both against the number of iterations and against wall-clock time.
    \item On a molecular binding affinity prediction task, the performance of Gaussian process regression via stochastic dual descent matches state-of-the-art graph neural networks.
\end{itemize}

\section{Stochastic Dual Descent for Regression and Sampling}
We show our proposed algorithm, \emph{stochastic dual descent}, in \Cref{alg:compute-mean}.
It can be used for regression, where the goal is to produce a good approximation to the posterior mean function $\mu_{(\.) \given \v{y}}$, and sampling, that is, generating a sample from the Gaussian process posterior.
The algorithm takes as input a kernel matrix $\m{K}_{\m{X}\m{X}}$, whose entries can be computed individually using the kernel function $k$ and $\v{x}_1, \dots, \v{x}_n$ as needed, a likelihood variance $\sigma^2 > 0$, and a target vector $\v{b} \in \R^n$.
It produces a vector of coefficients $\widebar{\v{\v{\alpha}}}_{t_\mathrm{max}} \in \R^n$, 
which approximates
\begin{equation}
\label{eq:alpha-star}
\v{\v{\alpha}}^* (\v{b}) = (\m{K}_{\m{X}\m{X}} + \sigma^2 \m{I})\inv \v{b}.
\end{equation}
Now, given an $\v{\v{\alpha}} \in \R^n$, let
\begin{equation}
    h_{\v{\alpha}} (\.) = \sum_{i=1}^n k(\., \v{x}_i) \alpha_{i},
\end{equation}
such that $h_{\v{\alpha}^*(\v{y})}$ gives the mean function $\mu_{(\.) \given \v{y}}$.
Furthermore, given a sample $f \~[GP](0, k)$ from the Gaussian process prior and a noise vector sample $\v\eps \~[N](\v{0}, \sigma^2 \m{I})$, we have that
\begin{equation}
    f + h_{\v{\alpha}^*(\v{y} - (\v{f}_\m{X} + \v\eps))}
\end{equation}  
is a sample from the Gaussian process posterior via pathwise conditioning \citep{wilson20,wilson21}. 
In practice, one might approximate $f$ using random features, as done by \citet{wilson20,wilson21,lin2023sampling} (see \Cref{sec:random_features} for details).

\begin{algorithm}[t]
\caption{Stochastic dual descent for approximating $\v{\alpha}^*(\v{b}) = (\m{K}_\m{XX} + \sigma^2 \m{I})\inv \v{b}$}
\label{alg:compute-mean}
\begin{algorithmic}[1]
    \Require Kernel matrix $\m{K}_\m{XX}$ with rows $\v{k}_1, \dots, \v{k}_n \in \R^n$, targets $\v{b} \in \R^n$,\\
    likelihood variance $\sigma^2 > 0$,
    number of steps $t_\mathrm{max} \in \N^+$, 
    batch size $b \in \{1, \dots, n\}$, 
    step size $\beta > 0$,
    momentum parameter $\rho \in [0,1)$, 
    averaging parameter $r \in (0,1]$
    \State $\v{v}_0 = \v{0}$; $\v{\alpha}_0 = \v{0}$; $\widebar{\v{\alpha}}_0 = \v{0}$
    \For{$t \in \{1, \dots, t_\mathrm{max}\}$}
        \State Sample $\c{I}_t = \{i_1, \dots, i_b\} \~[U](\{1,\dotsc, n\})$ independently \Comment{random coordinates}
        \State $\v{g}_t = \frac{n}{b} \sum_{i \in \c{I}_t} (( \v{k}_i + \sigma^2 \v{e}_i)\T (\v{\alpha}_{t-1} + \rho \v{v}_{t-1}) - b_i)\v{e}_i$ \Comment{gradient estimate}
        \State $\v{v}_t = \rho \v{v}_{t-1} - \beta \v{g}_t$ \Comment{velocity update}
        \State $\v{\alpha}_t = \v{\alpha}_{t-1} + \v{v}_t$ \Comment{parameter update}
        \State $\widebar{\v{\alpha}}_t = r\v{\alpha}_t + (1-r)\widebar{\v{\alpha}}_{t-1}$ \Comment{iterate averaging}
    \EndFor
    \State \Return $\widebar{\v{\alpha}}_{t_\mathrm{max}}$
\end{algorithmic}
\end{algorithm}
The SDD algorithm, in contrast with previous SGD implementations, uses (i) a dual objective instead of the (primal) kernel ridge regression objective, (ii) stochastic approximation entirely via random subsampling of the data instead of random features, (iii) Nesterov's momentum, and (iv) geometric, rather than arithmetic, iterate averaging.
In the following subsections, we examine and justify each of the choices behind the algorithm design, and illustrate these on the UCI dataset \textsc{pol} \citep{Dua2019UCI}, chosen for its small size, which helps us to compare against less effective alternatives.

\subsection{Primal and Dual Objectives}
\label{subsec:primal_vs_dual}
The vector $\v{\alpha}^*(\v{b})$ of \Cref{eq:alpha-star} is the minimiser of the kernel ridge regression objective,
\begin{equation}
\label{eq:primal_objective}
L(\v{\alpha}) = \frac{1}{2}\norm{\v{b} - \m{K}_\m{XX} \v{\alpha}}_2^2 + \frac{\sigma^2}{2} \norm{\v{\alpha}}^2_{\m{K}_\m{XX}},
\end{equation}
over $\v{\alpha} \in \R^n$ \citep{scholkopf2002learning}.
We will refer to $L$ as the \emph{primal} objective, to contrast with our proposed dual objective, which we shall introduce shortly.

Consider using gradient descent to minimise $L(\v{\alpha})$. 
This entails constructing a sequence $(\v{\alpha}_t)_{t=1}^{t_{\mathrm{max}}}$ of elements in $\R^n$, which we initialise at the standard but otherwise arbitrary choice of $\v{\alpha}_0 = \v{0}$, and setting
\begin{equation}
    \v{\alpha}_{t+1} = \v{\alpha}_t - \beta \nabla L(\v{\alpha}_t),
\end{equation}
where $\beta > 0$ is a step size and $\nabla L$ is the gradient of $L$.
The speed at which $\v{\alpha}_t$ approaches $\v{\alpha}^*(\v{b})$ is determined by the condition number of the Hessian: the larger the condition number, the slower the convergence speed.
The intuitive reason for this correspondence is that, to guarantee convergence, the step size needs to scale inversely with the largest eigenvalue of the Hessian, while progress in the direction of an eigenvector underlying an eigenvalue is governed by the step size multiplied with the corresponding eigenvalue.
The \emph{primal} gradient and Hessian are
\begin{equation}
\label{eq:primal_gradient}
    \nabla L(\v{\alpha}) = \m{K}_\m{XX} (\m{K}_{\m{XX}}\v{\alpha} + \sigma^2 \v{\alpha} - \v{b})
    \quad \text{and} \quad
    \nabla^2 L(\v{\alpha}) = \m{K}_\m{XX}(\m{K}_\m{XX} + \sigma^2 \m{I})
\end{equation}
respectively, and therefore we have the following (tight) bounds on the relevant eigenvalues,
\begin{equation}
    0 \leq \lambda_n(\m{K}_\m{XX}(\m{K}_\m{XX} + \sigma^2 \m{I})) \leq \lambda_1(\m{K}_\m{XX} (\m{K}_\m{XX} + \sigma^2 \m{I})) \leq \kappa n(\kappa n + \sigma^2),
\end{equation}
where $\lambda_i(\.)$ returns the $i$th largest eigenvalue, and $\kappa = \sup_{\v{x}\in \c{X}} k(\v{x}, \v{x})$ is finite by assumption.
These bounds only allow for a step size $\beta$ on the order of $(\kappa n(\kappa n + \sigma^2))^{-1}$. Moreover, since the minimum eigenvalue is not bounded away from zero, we do not have a priori guarantees for the performance of gradient descent using $L$. 

\begin{figure}[t]
    \centering
    \includegraphics[width=6in]{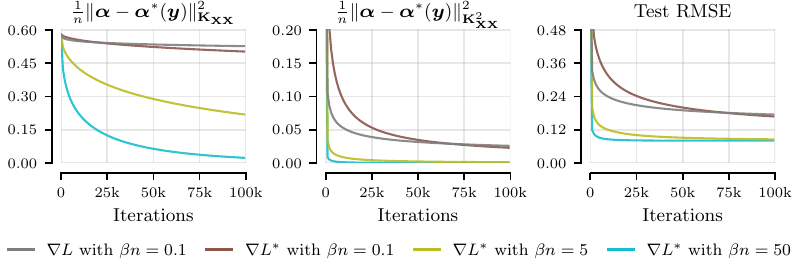}
    \caption{Comparison of full-batch primal and dual gradient descent on \textsc{pol} with varying step sizes. Primal gradient descent becomes unstable and diverges for (normalised) step sizes of $\beta n > 0.1$. Dual gradient descent is stable with larger step sizes, allowing for markedly faster convergence than the primal.
    For $\beta n = 0.1$, the dual gradient makes more progress in the $\m{K}_\m{XX}$-norm, whereas the primal gradient makes more progress in the $\m{K}_\m{XX}^2$-norm.}
    \label{fig:toy-gradient-primal-vs-dual}
\end{figure}

Instead, we consider minimisation of the \emph{dual objective}
\begin{equation}
    L^*(\v{\alpha}) = \frac12 \norm{\v{\alpha}}_{\m{K}_\m{XX} + \sigma^2 \m{I}}^2 - \v{\alpha}\T \v{b}.
\end{equation}
The dual objective $L^*$ has the same unique minimiser as $L$, namely $\v{\alpha}^*(\v{b})$, and the two are, up to rescaling, a strong dual pair.

\begin{proposition}
Let $L$ be the primal objective and $L^*$ be the dual objective.
We have that
\begin{equation}
    \min_{\v{\alpha} \in \R^n} L(\v{\alpha}) = -\sigma^2 \min_{\v{\alpha} \in \R^n} L^*(\v{\alpha}),
\end{equation}
with $\v{\alpha}^\star(\v{b})$ minimising both $L$ and $L^*$.
\end{proposition}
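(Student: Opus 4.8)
The plan is to evaluate both objectives directly at their common minimiser and compare the two minimum values. Throughout, write $\m{H} = \m{K}_\m{XX} + \sigma^2 \m{I}$ for brevity, so that $\v{\alpha}^\star(\v{b}) = \m{H}\inv\v{b}$ and hence $\m{H}\v{\alpha}^\star(\v{b}) = \v{b}$. Every computation below is anchored on this single identity.

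First I would confirm that $\v{\alpha}^\star(\v{b})$ minimises both functionals. Since $L^*$ is a convex quadratic with positive-definite Hessian $\m{H}$, it has a unique minimiser, characterised by $\grad L^*(\v{\alpha}) = \m{H}\v{\alpha} - \v{b} = \v{0}$, which is exactly $\v{\alpha}^\star(\v{b})$. For $L$, the gradient from \Cref{eq:primal_gradient} is $\grad L(\v{\alpha}) = \m{K}_\m{XX}(\m{H}\v{\alpha} - \v{b})$, which vanishes at $\v{\alpha}^\star(\v{b})$; since $L$ is convex (its Hessian $\m{K}_\m{XX}\m{H}$ is positive semidefinite), this stationary point is a global minimiser of $L$. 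It is worth noting that one argues this way, via convexity plus the vanishing-gradient condition, rather than by invoking uniqueness of a critical point, because $L$ need not be strictly convex when $\m{K}_\m{XX}$ is singular.

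Next I would compute the two minimal values. Substituting $\v{\alpha}^\star(\v{b})$ into $L^*$ and using $\m{H}\v{\alpha}^\star(\v{b}) = \v{b}$ collapses the quadratic term, giving $L^*(\v{\alpha}^\star(\v{b})) = \tfrac12(\v{\alpha}^\star(\v{b}))\T\v{b} - (\v{\alpha}^\star(\v{b}))\T\v{b} = -\tfrac12\v{b}\T\m{H}\inv\v{b}$. For $L$, the key observation is the residual identity $\v{b} - \m{K}_\m{XX}\v{\alpha}^\star(\v{b}) = \v{b} - (\m{H} - \sigma^2\m{I})\v{\alpha}^\star(\v{b}) = \sigma^2\v{\alpha}^\star(\v{b})$, together with $\norm{\v{\alpha}^\star(\v{b})}^2_{\m{K}_\m{XX}} = (\v{\alpha}^\star(\v{b}))\T(\m{H} - \sigma^2\m{I})\v{\alpha}^\star(\v{b}) = (\v{\alpha}^\star(\v{b}))\T\v{b} - \sigma^2\norm{\v{\alpha}^\star(\v{b})}_2^2$. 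Plugging both into $L$, the $\sigma^4\norm{\v{\alpha}^\star(\v{b})}_2^2$ contributions cancel and one is left with $L(\v{\alpha}^\star(\v{b})) = \tfrac{\sigma^2}{2}(\v{\alpha}^\star(\v{b}))\T\v{b} = \tfrac{\sigma^2}{2}\v{b}\T\m{H}\inv\v{b}$. Comparing, $\min_{\v{\alpha}} L(\v{\alpha}) = \tfrac{\sigma^2}{2}\v{b}\T\m{H}\inv\v{b} = -\sigma^2\bigl(-\tfrac12\v{b}\T\m{H}\inv\v{b}\bigr) = -\sigma^2\min_{\v{\alpha}} L^*(\v{\alpha})$, which is the claim.

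I do not expect a genuine obstacle here: the argument reduces to a handful of short linear-algebra manipulations, all consequences of $\m{H}\v{\alpha}^\star(\v{b}) = \v{b}$, and the only subtlety to flag is the convexity-versus-strict-convexity point in the first step. A more conceptual alternative would be to recognise $L^*$ as (a rescaling of) the Lagrangian dual of $L$ written in split form, $\min_{\v{z},\,\v{\alpha}} \tfrac12\norm{\v{z}}_2^2 + \tfrac{\sigma^2}{2}\norm{\v{\alpha}}^2_{\m{K}_\m{XX}}$ subject to $\v{z} = \v{b} - \m{K}_\m{XX}\v{\alpha}$, and then invoke strong duality (which holds since the problem is a feasible convex quadratic); this explains the name ``dual objective'' and the scaling factor $\sigma^2$, but the direct evaluation above is shorter and entirely self-contained, so that is the route I would write up.
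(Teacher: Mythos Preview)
Your proof is correct and takes a genuinely different route from the paper. The paper argues via Lagrangian duality: it rewrites $\min_{\v{\alpha}} L(\v{\alpha})$ as the constrained problem $\min_{\v{u},\v{\alpha}} \tfrac12\norm{\v{u}}_2^2 + \tfrac{\sigma^2}{2}\norm{\v{\alpha}}_{\m{K}_\m{XX}}^2$ subject to $\v{u} = \v{b} - \m{K}_\m{XX}\v{\alpha}$, introduces multipliers $\sigma^2\v{\lambda}$, invokes strong duality for convex quadratics to swap the order of $\min$ and $\sup$, and then solves the two inner minimisations via first-order conditions to obtain $-\sigma^2 \min_{\v{\lambda}} L^*(\v{\lambda})$. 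This is precisely the alternative you sketch in your final paragraph but decline to write up. Your chosen approach instead evaluates both objectives directly at the common stationary point $\v{\alpha}^\star(\v{b}) = \m{H}\inv\v{b}$ and compares the resulting values; it is shorter, entirely elementary, and avoids any appeal to duality theory. What the paper's approach buys is the conceptual explanation for the name ``dual objective'' and the appearance of the $\sigma^2$ scaling, whereas your approach treats the identity as a purely algebraic coincidence. Both are complete; yours is arguably cleaner as a verification, the paper's is more informative about structure.
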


\begin{proofbox}
\begin{proof}
We can write $\min_{\v{\alpha} \in \R^n} L(\v{\alpha})$ as the constrained optimisation problem
\begin{equation}
    \min_{\v{u} \in \R^n} \min_{\v{\alpha} \in \R^n} \frac{1}{2} \norm{\v{u}}_2^2 + \frac{\sigma^2}{2}\norm{\v{\alpha}}_{\m{K}_\m{XX}}^2 \quad \text{subject to} \quad \v{u} = \v{b} - \m{K}_\m{XX} \v{\alpha},
\end{equation}
which is quadratic in both $\v{u}$ and $\v{\alpha}$.
Introducing Lagrange multipliers $\v{\lambda} \in \R^n$, in the form $\sigma^2 \v{\lambda}$, where we recall that $\sigma^2 > 0$, the solution of the above is equal to that of
\begin{equation}
     \min_{\v{u} \in \R^n} \min_{\v{\alpha} \in \R^n} \sup_{\v{\lambda} \in \R^n} \frac{1}{2} \norm{\v{u}}_2^2 + \frac{\sigma^2}{2} \norm{\v{\alpha}}_{\m{K}_\m{XX}}^2 + \sigma^2 \v{\lambda}\T (\v{b} - \m{K}_\m{XX} \v{\alpha} - \v{u}),
\end{equation}
which is a finite-dimensional quadratic problem.
Therefore, we have strong duality \citep{boyd2004convex} and can thus exchange the order of the minimum operators and the supremum, yielding the equivalent problem
\begin{equation}
    \sup_{\v{\lambda} \in \R^n} \del{\min_{\v{u} \in \R^n} \frac{1}{2} \norm{\v{u}}_2^2 - \sigma^2 \v{\lambda}\T \v{u}} + \del{\min_{\v{\alpha} \in \R^n} \frac{\sigma^2}{2}\norm{\v{\alpha}}_{\m{K}_\m{XX}}^2 - \sigma^2 \v{\lambda}\T \m{K}_\m{XX} \v{\alpha}} + \sigma^2 \v{\lambda}\T \v{b}.
\end{equation}
Noting that the two inner minimisation problems are quadratic, we solve these analytically using the first-order optimality conditions, that is $\v{u} = \sigma^2 \v{\lambda}$ and $\v{\alpha} = \v{\lambda}$, to obtain that the above is equivalent to
\begin{equation}
    \sup_{\v{\lambda} \in \R^n} - \sigma^2 \del{\frac{1}{2} \norm{\v{\lambda}}_{\m{K}_\m{XX} + \sigma^2 \m{I}}^2 - \v{\lambda}\T \v{b}} = -\sigma^2 \min_{\v{\lambda} \in \R^n} L^*(\v{\lambda}).
\end{equation}
The claim follows by chaining the above equalities.
The fact that $\v{\alpha}^*(\v{b})$ minimises both $L$ and $L^*$ can also be established from the first-order optimality conditions. 
\end{proof}
\end{proofbox}

The gradient and Hessian of the dual objective are given by
\begin{equation}
    \label{eq:dual_gradient}
    \nabla L^*(\v{\alpha}) = \m{K}_{\m{XX}}\v{\alpha}  + \sigma^2\v{\alpha} - \v{b}
    \quad \text{and} \quad
    \nabla^2L^*(\v{\alpha}) = \m{K}_\m{XX} + \sigma^2 \m{I}.
\end{equation}
Examining the eigenvalues of the above hessian, we see that gradient descent on $L^*$ may use a step size of up to an order of $\kappa n$ higher than on $L$.
Furthermore, since the condition number of the dual Hessian satisfies $\mathrm{cond}(\m{K}_\m{XX} +\sigma^2 \m{I}) \leq 1+ \kappa n/ \sigma^2 $ and $\sigma^2$ is positive, we have faster convergence, and can provide an a priori bound on the number of iterations required for any fixed error level for any length $n$ sequence of observations.

\Cref{fig:toy-gradient-primal-vs-dual} contains an experimental comparison of gradient descent with primal and dual objectives on the UCI \textsc{pol} regression task. 
Gradient descent with the primal objective is only stable up to $\beta n = 0.1$ and diverges for larger step sizes.
In contrast, gradient descent with the dual objective is stable with a up to $500\times$ larger step size, and converges faster and to a better solution.
We show this on three evaluation metrics: distance to $\v{\alpha}^*(\v{y})$ measured in $\norm{\.}_{\m{K}_\m{XX}}^2$, the $\m{K}_\m{XX}$-norm (squared) and in $\norm{.}_{\m{K}_{\m{XX}}^2}^2$, the $\m{K}_\m{XX}^2$-norm (squared), and test set root-mean-square error (RMSE).
To understand the difference between the two norms, note that the $\m{K}_\m{XX}$-norm error bounds the error of approximating $h_{\v{\alpha}^*(\m{b})}$ uniformly.
\begin{proposition}
Let $\kappa = \sup_{\v{x} \in \c{X}} k(\v{x}, \v{x})$.
For any $\v{\alpha} \in \R^n$, we have 
\begin{equation}
    \label{eq:pointwise-error}
    \norm{h_{\v{\alpha}} - h_{\v{\alpha}^*(\v{b})}}_\infty \leq \sqrt{\kappa} \norm{\v{\alpha} - \v{\alpha}^*(\v{b})}_{\m{K}_\m{XX}}.
\end{equation}
\end{proposition}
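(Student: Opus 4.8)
The plan is to use the linearity of the map $\v{\alpha} \mapsto h_{\v{\alpha}}$: writing $\v{\gamma} = \v{\alpha} - \v{\alpha}^*(\v{b})$, we have $h_{\v{\alpha}} - h_{\v{\alpha}^*(\v{b})} = h_{\v{\gamma}}$ and $\norm{\v{\alpha} - \v{\alpha}^*(\v{b})}_{\m{K}_\m{XX}} = \norm{\v{\gamma}}_{\m{K}_\m{XX}}$, so the claim reduces to showing $\norm{h_{\v{\gamma}}}_\infty \le \sqrt{\kappa}\,\norm{\v{\gamma}}_{\m{K}_\m{XX}}$ for an arbitrary $\v{\gamma} \in \R^n$. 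The natural route is through the reproducing kernel Hilbert space $\c{H}_k$ induced by $k$. Since $h_{\v{\gamma}} = \sum_{i=1}^n \gamma_i\, k(\., \v{x}_i)$ is a finite linear combination of canonical basis functions $k(\., \v{x}_i) \in \c{H}_k$, bilinearity of the RKHS inner product together with the reproducing identity $\innerprod{k(\., \v{x}_i)}{k(\., \v{x}_j)}_{\c{H}_k} = k(\v{x}_i, \v{x}_j)$ yields $\norm{h_{\v{\gamma}}}_{\c{H}_k}^2 = \v{\gamma}\T \m{K}_\m{XX} \v{\gamma} = \norm{\v{\gamma}}_{\m{K}_\m{XX}}^2$.

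The second step is to control the supremum norm pointwise. Fix $\v{x} \in \c{X}$. By the reproducing property, $h_{\v{\gamma}}(\v{x}) = \innerprod{h_{\v{\gamma}}}{k(\., \v{x})}_{\c{H}_k}$, so Cauchy--Schwarz in $\c{H}_k$ gives $\abs{h_{\v{\gamma}}(\v{x})} \le \norm{h_{\v{\gamma}}}_{\c{H}_k}\,\norm{k(\., \v{x})}_{\c{H}_k} = \norm{h_{\v{\gamma}}}_{\c{H}_k}\,\sqrt{k(\v{x}, \v{x})} \le \sqrt{\kappa}\,\norm{h_{\v{\gamma}}}_{\c{H}_k}$, where I have used $\kappa = \sup_{\v{x} \in \c{X}} k(\v{x}, \v{x})$, finite by the standing assumption. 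Taking the supremum over $\v{x} \in \c{X}$ and substituting $\norm{h_{\v{\gamma}}}_{\c{H}_k} = \norm{\v{\gamma}}_{\m{K}_\m{XX}}$ from the first step completes the argument.

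If one wishes to avoid invoking the RKHS entirely, the same estimate follows from positive semi-definiteness alone: the Gram matrix of the augmented set $\{\v{x}, \v{x}_1, \dots, \v{x}_n\}$ is positive semi-definite, so for every scalar $t$ the quadratic $t^2 k(\v{x}, \v{x}) + 2t\,\v{k}_{\v{x}}\T \v{\gamma} + \v{\gamma}\T \m{K}_\m{XX} \v{\gamma}$, with $\v{k}_{\v{x}} = (k(\v{x}, \v{x}_1), \dots, k(\v{x}, \v{x}_n))\T$, is nonnegative; a nonpositive discriminant then gives $(\v{k}_{\v{x}}\T \v{\gamma})^2 \le k(\v{x}, \v{x})\,\v{\gamma}\T \m{K}_\m{XX} \v{\gamma} \le \kappa\,\norm{\v{\gamma}}_{\m{K}_\m{XX}}^2$, i.e. $\abs{h_{\v{\gamma}}(\v{x})} \le \sqrt{\kappa}\,\norm{\v{\gamma}}_{\m{K}_\m{XX}}$. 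There is no genuine obstacle in either version; the only points requiring a word of care are the identity $\norm{h_{\v{\gamma}}}_{\c{H}_k}^2 = \v{\gamma}\T \m{K}_\m{XX} \v{\gamma}$ (respectively, the positive semi-definiteness of the augmented Gram matrix) and the use of the assumption $\sup_{\v{x}} k(\v{x}, \v{x}) = \kappa < \infty$. Everything else is Cauchy--Schwarz.
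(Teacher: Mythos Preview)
Your proof is correct and follows essentially the same route as the paper: reproducing property plus Cauchy--Schwarz to bound the sup norm by $\sqrt{\kappa}$ times the RKHS norm, combined with the identity $\norm{h_{\v{\gamma}}}_{\c{H}_k}^2 = \v{\gamma}\T \m{K}_{\m{XX}}\v{\gamma}$. The paper phrases the latter using an evaluation operator $\m{\Phi}$ and its adjoint rather than direct bilinearity, but the content is identical; your additional RKHS-free argument via the augmented Gram matrix is a nice elementary alternative that the paper does not give.
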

\begin{proofbox}
\begin{proof}
Let $\c{H}$ be a Hilbert space of functions $\c{X} \to \R$ with inner product $\innerprod{\.}{\.}_\c{H}$ and corresponding norm $\norm{\.}_\c{H}$.
Let $k(\v{x}, \.)$ be the evaluation functional on $\c{H}$ at $\v{x} \in \c{X}$, such that
\begin{equation}
    \forall \v{x} \in \c{X}, \, \forall h \in \c{H}: \innerprod{k(\v{x}, \.)}{h}_\c{H} = h(\v{x}).
\end{equation}
To prove the claim, we first observe that, for any $\v{\alpha}, \v{\alpha}' \in \R^n$,
\begin{align}
    \norm{h_{\v{\alpha}} - h_{\v{\alpha}'}}_\infty 
    &= \sup_{\v{x} \in \c{X}} \; \abs{h_{\v{\alpha}}(\v{x}) - h_{\v{\alpha}'}(\v{x})}, \\
    &= \sup_{\v{x} \in \c{X}} \; \abs{\innerprod{k(\v{x}, \.)}{h_{\v{\alpha}} - h_{\v{\alpha}'}}_\c{H}}, \\
    &\leq \sup_{\v{x} \in \c{X}} \; \norm{k(\v{x}, \.)}_\c{H} \norm{h_{\v{\alpha}} - h_{\v{\alpha}'}}_\c{H}, \\
    &\leq \sqrt{\kappa} \norm{h_{\v{\alpha}} - h_{\v{\alpha}'}}_\c{H}.
\end{align}
For observations $\m{X}$, let $\m{\Phi}: \c{H} \to \R^n$ be the linear operator mapping $h \mapsto h(\m{X})$, where $h(\m{X}) = [h(\v{x}_1), \dots, h(\v{x}_n)]$.
We write $\m{\Phi^*}$ for the adjoint of $\m{\Phi}$.
Now, observe that $h_{\v{\alpha}} = \m{\Phi}^*\v{\alpha}$ and $h_{\v{\alpha}'} = \m{\Phi}^*\v{\alpha}'$, and thus we have the equalities
\begin{align}
    \norm{h_{\v{\alpha}} - h_{\v{\alpha}'}}_\c{H}^2
    &= \innerprod{\m{\Phi}^*(\v{\alpha} - \v{\alpha}')}{\m{\Phi}^* (\v{\alpha} - \v{\alpha}')}_\c{H}, \\
    &= \innerprod{\v{\alpha} - \v{\alpha}'}{\m{\Phi}\m{\Phi}^* (\v{\alpha} - \v{\alpha}')}_\c{H}, \\
    &= \norm{\v{\alpha} - \v{\alpha}'}_{\m{K}_\m{XX}}^2,
\end{align} 
where for the final equality, we note that $\m{K}_\m{XX}$ is the matrix of the operator $\m{\Phi}\m{\Phi}^*$ with respect to the standard basis.
Combining the above two displays yields the claim.
\end{proof}
\end{proofbox}
Uniform norm guarantees of this type are crucial for sequential decision-making tasks, such as Bayesian optimisation, where test input locations may be arbitrary.
The $\m{K}_\m{XX}^2$-norm metric, on the other hand, reflects the training error. 

Examining the two gradients shows the primal gradient optimises for the $\m{K}_\m{XX}^2$-norm, while the dual optimises for the $\m{K}_\m{XX}$-norm.
Indeed, we see in \Cref{fig:toy-gradient-primal-vs-dual} that, up to $70$k iterations, the dual method is better on the $\m{K}_\m{XX}$-norm metric while the primal method is better on the $\m{K}_\m{XX}^2$-norm metric, when both methods use $\beta n = 0.1$.
Later, the dual gradient performs better on all metrics.
This is also expected because the minimum eigenvalue of the Hessian of the dual objective is greater than the minimum eigenvalue of the primal objective.

\subsection{Random Features versus Random Coordinates}
\label{subsec:stochastic_approximation}
\begin{figure}[t]
    \centering
    \includegraphics[width=6in]{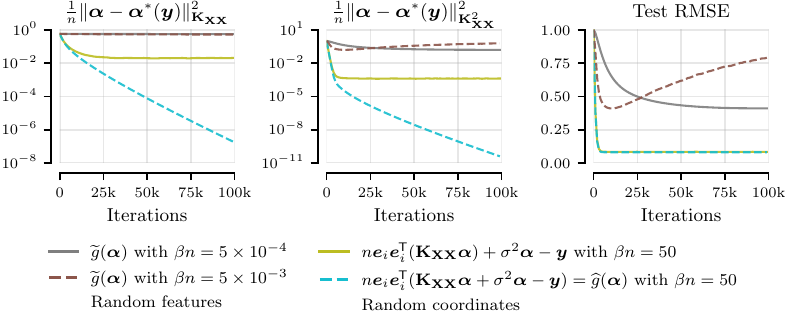}
    \caption{Comparison of stochastic dual descent on the \textsc{pol} dataset with either $\widetilde g$ (random Fourier features) or $\widehat g$ (random coordinates), using batch size $b = 512$, momentum $\rho = 0.9$ and averaging parameter $r = 0.001$ (see \Cref{subsec:acceleration_and_averaging} for an explanation of the latter two). Random features converge with $\beta n=5\times 10^{-4}$ but perform poorly, and diverge with a higher step size. Random coordinates are stable with $\beta n = 50$ and show a much stronger performance on all metrics. We include a version of the random coordinate estimator where only the $\m{K}_\m{XX} \v{\alpha}$ term is subsampled: this breaks the multiplicative noise property, and results in an estimate which is worse on both the $\m{K}_\m{XX}$-norm and the $\m{K}_\m{XX}^2$-norm metric.}
    \label{fig:toy-features-vs-coordinates}
\end{figure}

To compute either the primal or the dual gradients, presented in \Cref{eq:primal_gradient,eq:dual_gradient}, we need to compute matrix-vector products of the form $\m{K}_\m{XX} \v{\alpha}$, which require order $n^2$ computations.
We now introduce and discuss two types of stochastic approximations for our dual gradient $g(\v{\alpha}) = \nabla L^*(\v{\alpha})$, which reduce the cost to order $n$ per step, and carefully examine the noise they introduce into the gradient.

For the sake of exposition (and exposition only, this is not an assumption of our algorithm), assume that we are in an $m$-dimensional (finite) linear model setting, such that $\m{K}_\m{XX}$ is of the form $\sum_{j=1}^m \v{z}_j \v{z}_j\T$, where $\v{z}_j\in \R^n$ collects the values of the $j$th feature of the observations $\v{x}_1, \dots, \v{x}_n$.
Then, for $j \~[U]({\{1, \dots, m\}})$, we have that $\E[m \v{z}_j \v{z}_j\T] = \m{K}_\m{XX}$, and therefore 
\begin{equation}
    \widetilde g(\v{\alpha}) = m\v{z}_j \v{z}_j\T \v{\alpha} + \sigma^2\v{\alpha} - \v{b}
\end{equation}
is an unbiased estimate of $g(\v{\alpha})$, and we call $\widetilde g$ the \emph{random feature} estimator.
The alternative we consider are \emph{random coordinates}, where we sample $i \~[U](\{1, \dotsc, n\})$ and calculate   
\begin{equation}
    \widehat g(\v{\alpha}) =  n \v{e}_i \v{e}_i\T g(\v{\alpha})
    = n \v{e}_i (\v{k}_i\T \v{\alpha} + \sigma^2 \alpha_i - b_i),
\end{equation}
with $\v{k}_i$ being the $i$th row of $\m{K}_\m{XX}$ as a column vector.
Observe that $\widehat g(\v{\alpha})$ zeros all but the $i$th coordinate of $g(\v{\alpha})$, and then scales the result by $n$.
Since $\E[n \v{e}_i \v{e}_i\T] = \m{I}$, $\widehat g(\v{\alpha})$ is also unbiased. Note that the cost of calculating either $\widetilde g(\v{\alpha})$ or $\widehat g(\v{\alpha})$ is linear in $n$, and therefore both achieve our goal of reduced computation time. 

However, while $\widetilde g$ and $\widehat g$ may appear similarly appealing, the nature of the noise introduced by these, and thus their qualities, are quite different.
In particular, one can show that
\begin{align}
\norm{ \widehat g(\v{\alpha}) - g(\v{\alpha}) } \leq \norm{ (n \v{e}_i \v{e}_i\T - \m{I}) (\m{K}_\m{XX} + \sigma^2 \m{I})} \norm{\v{\alpha} - \v{\alpha}^*(\v{b})},
\end{align}
where $n \v{e}_i \v{e}_i\T$ is the random coordinate approximation to the identity matrix.
As such, the noise introduced by $\widehat g(\v{\alpha})$ is proportional to the distance between the current iterate $\v{\alpha}$ and the solution $\v{\alpha}^*$, making it a \emph{multiplicative noise} gradient oracle \citep{dieuleveut2017harder}.
Intuitively, multiplicative noise oracles automatically reduce the amount of noise injected as the iterates get closer to their target.
On the other hand, we have that
\begin{align}
\norm{ \widetilde g(\v{\alpha}) - g(\v{\alpha}) } = \norm{(m \v{z}_j \v{z}_j\T - \m{K}_\m{XX}) \v{\alpha}}\,,
\end{align}
where the error in $\widetilde g(\v{\alpha})$ is \emph{not} reduced as $\v{\alpha}$ approaches $\v{\alpha}^*(\v{b})$.
The behaviour of $\widetilde g$ is that of an \emph{additive noise} gradient oracle. Algorithms using multiplicative noise oracles, when convergent, often yield better performance \citep{varre2021last}.

Another consideration is that when the number of features $m$ is larger than $n$, individual features $\v{z}_j$ might also be less informative than individual gradient coordinates $\v{e}_i \v{e}_i\T g(\v{\alpha})$.
Thus, selecting features uniformly at random, as in $\widetilde g$, may yield a poorer estimate.
While this can be addressed by introducing an importance sampling distribution for the features $\v{z}_j$ \citep{li2019towards}, this adds implementation complexity, and could likewise be used to improve the random coordinate estimate.

In \Cref{fig:toy-features-vs-coordinates}, we compare variants of stochastic dual descent with either random Fourier features or random coordinates.
We see that random features, which produce additive noise, can only be used with very small step sizes and have poor asymptotic performance. 
Further, we test two versions of random coordinates: $\widehat g(\v{\alpha})$, where, as presented before, we subsample the whole gradient, and an alternative, $n \v{e}_i \v{e}_i\T (\m{K}_\m{XX} \v{\alpha}) + \sigma^2 \v{\alpha} - \v{b}$, where only the $\m{K}_\m{XX} \v{\alpha}$ term is subsampled.
While both are stable with much higher step sizes than random features, the latter has worse asymptotic performance.
This is a kind of \emph{Rao-Blackwellisation trap}: introducing the known value of $\sigma^2 \v{\alpha} - \v{b}$ in place of its estimate $n \v{e}_i \v{e}_i\T (\sigma^2 \v{\alpha} - \v{b})$ destroys the multiplicative property of the noise, leading to worse rather than better performance.

There are many other possibilities for constructing stochastic gradient estimates. 
For example, \citet{dai2014doubly} applied the random coordinate method on top of the random feature method, in an effort to further reduce the order $n$ cost per iteration of gradient computation. However, based on the above discussion, we generally recommend estimates that produce multiplicative noise, and our randomised coordinate gradient estimates as the simplest of such estimates. 

\subsection{Momentum and Iterate Averaging}
\label{subsec:acceleration_and_averaging}

\begin{figure}[t]
    \centering
    \includegraphics[width=6in]{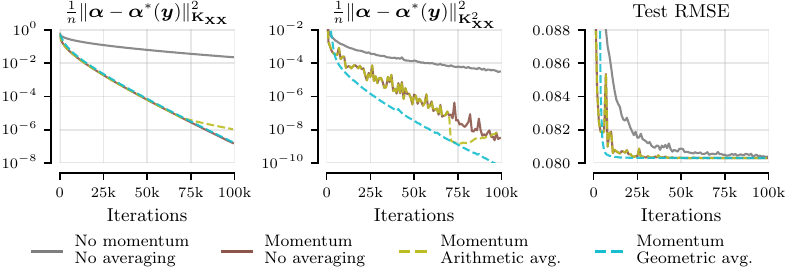}
    \caption{Comparison of optimisation strategies for random coordinate estimator of the dual objective on the \textsc{pol} dataset, using momentum $\rho = 0.9$, averaging parameter $r = 0.001$, batch size $b = 128$, and step size $\beta n = 50$. Nesterov's momentum significantly improves convergence speed across all metrics. The dashed olive line, marked \emph{arithmetic averaging}, shows the regular iterate up until $70$k steps, at which point averaging commences and the averaged iterate is shown. Once enabled, arithmetic iterate averaging slows down convergence in the $\m{K}_\m{XX}$-norm metric. Geometric iterate averaging outperforms arithmetic averaging and unaveraged iterates throughout optimisation.}
    \label{fig:toy-iterate-averaging}
\end{figure}

Momentum, or acceleration, is a range of modifications to the usual gradient descent updates that aim to improve the rate of convergence, in particular, with respect to its dependence on the curvature of the optimisation problem \citep{polyak1964some}.
Many schemes for momentum exist, with RMSProp, AdaGrad and Adam being popular in the deep learning literature.
However, as our objective has a constant curvature, we recommend the use of the simpler Nesterov's momentum \citep{nesterov1983method}, as adapted by \citet{sutskever2013importance} (see \Cref{alg:compute-mean} for the form of these updates).
Examining \Cref{fig:toy-iterate-averaging}, we see that momentum is vital on this problem, independent of iterate averaging.
We set the momentum parameter to $\rho = 0.9$.

Polyak-Ruppert averaging returns an average over (the tail of) stochastic gradient iterates, rather than the final iterate, to reduce noise in the estimate \citep{polyak1990new,ruppert1988efficient,polyak1992acceleration}.
While Polyak-Ruppert averaging is necessary with a constant step size and additive noise, it is not necessary under multiplicative noise \citep{varre2021last}, and can potentially slow down convergence.
For our problem, we recommend using \emph{geometric averaging} instead, where we let $\overline{\v{\alpha}}_0 = \v{\alpha}_0$ and, at each step, compute
\begin{equation}
    \overline{\v{\alpha}}_t = r\v{\alpha}_t + (1-r) \overline{\v{\alpha}}_{t-1} \quad \text{using an averaging parameter} \quad r \in (0, 1],
\end{equation}
and return $\overline{\v{\alpha}}_{t_\mathrm{max}}$.
Geometric averaging is an anytime approach, because it does not rely on a fixed tail window size.
Therefore, it can be used in combination with early stopping, and the value of $r$ can be tuned adaptively.
\Cref{fig:toy-iterate-averaging} shows that geometric averaging outperforms both arithmetic averaging and the final iterate $\v{\alpha}_{t_\mathrm{max}}$.
We set $r = 100/t_\mathrm{max}$.

\subsection{Connections to the Literature}
\label{sec:related-work}
The dual formulation for the kernel ridge regression objective was first pointed out in the kernel literature by \citet{saunders1998ridge}.
It features in textbooks on the topic \citep{scholkopf2002learning}, albeit with no mention of its better conditioning. 
Gradient descent on the dual objective is equivalent to applying the \emph{kernel adatron} update rule of \citet{frie1998kernel}, stated there for the hinge loss, and to the \emph{kernel least-mean-square} algorithm of \citet{liu2008kernel}, analysed theoretically in \citet{dieuleveut2017harder}.
\citet{shalev2013stochastic} introduce \emph{stochastic dual coordinate ascent} (SDCA), which uses the dual objective with random coordinates and analytic line search.
\citet{tu2016large} implement a block version of SDCA.
\citet{bo2008greedy} propose a method similar to SDCA, but with coordinates chosen by active learning, later reimplemented by \citet{wu2024large}.

The algorithm from the previous chapter, based on \citet{lin2023sampling}, which we will refer to as SGD in our upcoming experiments, uses the gradient estimate 
\begin{equation}
\label{eq:sgd-loss}
    \nabla L(\v{\alpha}) \approx
        n \v{k}_i( \v{k}_i\T \v{\alpha} - b_i) + \sigma^2 \smash{\sum_{j=1}^m} \v{z}_j \v{z}_j\T \v{\alpha},
\end{equation}
where $i \~[U](\{1, \dotsc, n\})$ is a uniformly random index and $\sum_{j=1}^m \v{z}_j \v{z}_j\T$ is a random Fourier feature approximation of $\m{K}_\m{XX}$.
This can be understood as targeting the gradient of the primal objective with a mixed multiplicative-additive estimator. Stochastic gradient descent was also used, amongst others, by \citet{dai2014doubly} for kernel regression with the primal objective and random Fourier features, by \citet{kivinen2004online} for online learning with the primal objective and random sampling of training data, and by \citet{antoran2023sampling} for large-scale Bayesian linear models with the dual objective. 

\section{Experiments}
We present three experiments which confirm the strength of our SDD algorithm on standard benchmarks.
The first two experiments, on UCI regression and large-scale Bayesian optimisation, replicate those from the previous chapter based on \citet{lin2023sampling}, and compare against SGD \citep{lin2023sampling}, CG \citep{WangPGT2019exactgp}, and SVGP \citep{hensman13}.
Unless indicated otherwise, we use the code, setup and hyperparameters from the previous chapter based on \citet{lin2023sampling}.
Our third experiment tests stochastic dual descent on five molecule-protein binding affinity prediction benchmarks of \citet{Ortegon2022dockstring}.

\subsection{Regression Baselines}
We benchmark on UCI regression datasets \citep{Dua2019UCI} from the previous chapter based on \citet{lin2023sampling}, which are also considered by \citet{WangPGT2019exactgp}.
We run SDD for $100$k iterations, the same number used by \citet{lin2023sampling} for SGD, but with step sizes $100\times$ larger than \citet{lin2023sampling}, except for \textsc{elevators}, \textsc{keggdirected}, and \textsc{buzz}, where this causes divergence, and we use $10\times$ larger step sizes instead.
We run CG to a tolerance of $0.01$, except for the four largest data sets, where we stop CG after $100$ iterations.
This still provides CG with a larger compute budget than SGD and SDD. 
CG uses a pivoted Cholesky preconditioner of rank $100$.
For SVGP, we use $3000$ inducing points for the smaller five datasets and $9000$ for the larger four to match the runtime of the other methods. 

\begin{table}[t]
\caption{Root-mean-square-error (RMSE), compute time (on an A100 GPU), and negative log-likelihood (NLL), for UCI regression tasks for all methods considered. We report mean values and standard error across five $90\%$-train $10\%$-test splits for all data sets, except the largest, where three splits are used. Targets are normalised to zero mean and unit variance.}
\label{tab:UCI_regression}
\centering
\scriptsize
\setlength{\tabcolsep}{2.5pt}
\renewcommand{\arraystretch}{1.1}
\begin{tabular}{l l c c c c c c c c c}
\toprule
\multicolumn{2}{c}{Dataset} & \textsc{pol} & \textsc{elevators} & \textsc{bike} & \textsc{protein} & \textsc{keggdir} & \textsc{3droad} & \textsc{song} & \textsc{buzz} & \textsc{houseelec} \\
\multicolumn{2}{c}{Size} & 15000 & 16599 & 17379 & 45730 & 48827 & 434874 & 515345 & 583250 & 2049280 \\
\midrule
\multirow{4}{*}{\rotatebox[origin=c]{90}{RMSE}}
 & SDD
 & \textbf{0.08\,$\pm$\,0.00} & \textbf{0.35\,$\pm$\,0.00} & \textbf{0.04\,$\pm$\,0.00} & \textbf{0.50\,$\pm$\,0.01} & \textbf{0.08\,$\pm$\,0.00} & \textbf{0.04\,$\pm$\,0.00} & \textbf{0.75\,$\pm$\,0.00} & \textbf{0.28\,$\pm$\,0.00} & \textbf{0.04\,$\pm$\,0.00} \\
 & SGD
 & 0.13\,$\pm$\,0.00 & 0.38\,$\pm$\,0.00 & 0.11\,$\pm$\,0.00 & 0.51\,$\pm$\,0.00 & 0.12\,$\pm$\,0.00 & 0.11\,$\pm$\,0.00 & 0.80\,$\pm$\,0.00 & 0.42\,$\pm$\,0.01 & 0.09\,$\pm$\,0.00 \\
 & CG
 & \textbf{0.08\,$\pm$\,0.00} & \textbf{0.35\,$\pm$\,0.00} & \textbf{0.04\,$\pm$\,0.00} & \textbf{0.50\,$\pm$\,0.00} & \textbf{0.08\,$\pm$\,0.00} & 0.18\,$\pm$\,0.02 & 0.87\,$\pm$\,0.05 & 1.88\,$\pm$\,0.19 & 0.87\,$\pm$\,0.14 \\
 & SVGP
 & 0.10\,$\pm$\,0.00 & 0.37\,$\pm$\,0.00 & 0.08\,$\pm$\,0.00 & 0.57\,$\pm$\,0.00 & 0.10\,$\pm$\,0.00 & 0.47\,$\pm$\,0.01 & 0.80\,$\pm$\,0.00 & 0.32\,$\pm$\,0.00 & 0.12\,$\pm$\,0.00 \\
\midrule
\multirow{4}{*}{\rotatebox[origin=c]{90}{Time (min)}}
 & SDD
 & 1.88\,$\pm$\,0.01 & 1.13\,$\pm$\,0.02 & 1.15\,$\pm$\,0.02 & 1.36\,$\pm$\,0.01 & 1.70\,$\pm$\,0.00 & \textbf{3.32\,$\pm$\,0.01} & \textbf{185\,$\pm$\,0.56} & \textbf{207\,$\pm$\,0.10} & \textbf{47.8\,$\pm$\,0.02} \\
 & SGD
 & 2.80\,$\pm$\,0.01 & 2.07\,$\pm$\,0.03 & 2.12\,$\pm$\,0.04 & 2.87\,$\pm$\,0.01 & 3.30\,$\pm$\,0.12 & 6.68\,$\pm$\,0.02 & 190\,$\pm$\,0.61 & 212\,$\pm$\,0.15 & 69.5\,$\pm$\,0.06 \\
 & CG
 & \textbf{0.17\,$\pm$\,0.00} & \textbf{0.04\,$\pm$\,0.00} & \textbf{0.11\,$\pm$\,0.01} & \textbf{0.16\,$\pm$\,0.01} & \textbf{0.17\,$\pm$\,0.00} & 13.4\,$\pm$\,0.01 & 192\,$\pm$\,0.77 & 244\,$\pm$\,0.04 & 157\,$\pm$\,0.01 \\
 & SVGP
 & 11.5\,$\pm$\,0.01 & 11.3\,$\pm$\,0.06 & 11.1\,$\pm$\,0.02 & 11.1\,$\pm$\,0.02 & 11.5\,$\pm$\,0.04 & 152\,$\pm$\,0.15 & 213\,$\pm$\,0.13 & 209\,$\pm$\,0.37 & 154\,$\pm$\,0.12 \\
\midrule
\multirow{4}{*}{\rotatebox[origin=c]{90}{NLL}}
 & SDD
 & \textbf{-1.18\,$\pm$\,0.01} & \textbf{0.38\,$\pm$\,0.01} & -2.49\,$\pm$\,0.09 & \textbf{0.63\,$\pm$\,0.02} & \textbf{-0.92\,$\pm$\,0.11} & \textbf{-1.70\,$\pm$\,0.01} & \textbf{1.13\,$\pm$\,0.01} & \textbf{0.17\,$\pm$\,0.06} & \textbf{-1.46\,$\pm$\,0.10} \\
 & SGD
 & -0.70\,$\pm$\,0.02 & 0.47\,$\pm$\,0.00 & -0.48\,$\pm$\,0.08 & 0.64\,$\pm$\,0.01 & -0.62\,$\pm$\,0.07 & -0.60\,$\pm$\,0.00 & 1.21\,$\pm$\,0.00 & 0.83\,$\pm$\,0.07 & -1.09\,$\pm$\,0.04 \\
 & CG
 & \textbf{-1.17\,$\pm$\,0.01} & \textbf{0.38\,$\pm$\,0.00} & \textbf{-2.62\,$\pm$\,0.06} & \textbf{0.62\,$\pm$\,0.01} & \textbf{-0.92\,$\pm$\,0.10} & 16.3\,$\pm$\,0.45 & 1.36\,$\pm$\,0.07 & 2.38\,$\pm$\,0.08 & 2.07\,$\pm$\,0.58 \\
 & SVGP
 & -0.67\,$\pm$\,0.01 & 0.43\,$\pm$\,0.00 & -1.21\,$\pm$\,0.01 & 0.85\,$\pm$\,0.01 & -0.54\,$\pm$\,0.02 & 0.60\,$\pm$\,0.00 & 1.21\,$\pm$\,0.00 & 0.22\,$\pm$\,0.03 & -0.61\,$\pm$\,0.01 \\
\bottomrule
\end{tabular}

\end{table}

The results, reported in \Cref{tab:UCI_regression}, show that SDD matches or outperforms all baselines on all UCI datasets in terms of RMSE of the mean prediction across test data.
SDD strictly outperforms SGD on all datasets and metrics, matches CG on the five smaller data sets, where the latter reaches tolerance, and outperforms CG on the four larger datasets.
The same holds for the negative log-likelihood (NLL), computed using $64$ posterior function samples, except on \textsc{bike}, where CG marginally outperforms SDD. 
Since SDD requires only one matrix-vector multiplication per step, as opposed to two for SGD, it provides about $30\%$ wall-clock time speed-up relative to SGD.
We run SDD for $100$k iterations to match the SGD baseline, SDD often converges earlier than that.

\begin{figure}[t]
\centering
\includegraphics[width=6in]{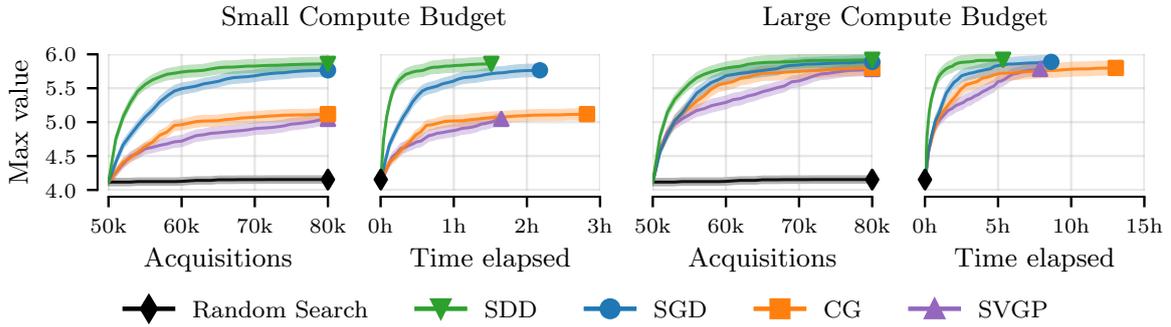}
\caption{Results for the parallel Thompson sampling task. Plots show mean and standard error of the maximum function values identified, across $5$ length scales and $10$ seeds, against both the number of observations acquired and the corresponding compute time on an A100 GPU. Reported compute time includes drawing posterior function samples and finding their maxima. All methods share an initial dataset of $50$k random points, and perform $30$ steps of parallel Thompson sampling, acquiring $1000$ points at each step.}
\label{fig:bayesopt-task}
\end{figure}

\subsection{Large-Scale Parallel Thompson Sampling}
Next, we replicate the synthetic large-scale black-box function optimisation experiment from the previous chapter, which is based on \citet{lin2023sampling}.
The experiment consists of finding the maxima of functions mapping $[0,1]^8 \to \R$ sampled from Matérn-$\nicefrac{3}{2}$ Gaussian process priors with length scales $\ell = 0.1, 0.2, 0.3, 0.4, 0.5$, and $10$ random functions per length scale, using parallel Thompson sampling \citep{hernandezlobato2017Parallel}.
Each method uses the same kernel as the target function, and the latter is drawn using $2000$ random Fourier features.
All methods are initialised with the same $50$k points chosen uniformly at random on the domain. 
We run $30$ iterations of parallel Thompson sampling, acquiring $1000$ points at each iteration.
The experiment considers two different compute budgets.
For the small compute budget, SGD and SDD are run for $15$k steps, SVGP is given $20$k steps and CG is run for $10$ steps.
For the large compute budget, all methods are run for $5\times$ as many steps.
Here, SGD uses a step size of $\beta n = 0.3$ for the mean and $\beta n = 0.0003$ for the samples \citep{lin2023sampling}.
SDD uses step sizes that are $10\times$ larger, namely $\beta n = 3$ for the mean and $\beta n = 0.003$ for the samples.
We present the results on this task in \Cref{fig:bayesopt-task}, averaged over both length scales and random seeds.
In both small and large compute settings, SDD makes the most progress in terms of maximum value found while also using the least compute. The performances of SDD and SGD degrade gracefully when the compute budget is limited.

\subsection{Molecule-Protein Binding Affinity Prediction}
\label{expt:molecules}
The binding affinity between a molecule and certain proteins is a widely used preliminary filter in drug discovery \citep{pinza2019docking}, and machine learning is increasingly used to estimate this quantity \citep{yang2021efficient}.
In this experiment, we show that Gaussian processes with SDD are competitive with graph neural networks for binding affinity prediction.

In this experiment, we use the \textsc{dockstring} regression benchmark \citep{Ortegon2022dockstring}, which contains five tasks, corresponding to five different proteins. 
The inputs are the graph structures of $250$k candidate molecules and the targets are real-valued affinity scores from the docking simulator \emph{AutoDock Vina} \citep{trott2010autodock}. 
For each protein, we use standard train-test splits of $210$k and $40$k molecules, respectively, which were produced by structure-based clustering to avoid similar molecules from occurring both in the train and test set \citep{Ortegon2022dockstring}.
We follow all preprocessing steps of \citet{Ortegon2022dockstring} for this benchmark, including limiting the maximum docking score to $5$.

Furthermore, we use Morgan fingerprints of dimension $1024$ \citep{rogers2010extended} to represent the molecules.
They encode subgraphs up to a certain radius around each atom in a molecule and can be interpreted as sparse vectors of counts, analogous to a bag of words representation of a document.
Accordingly, the Tanimoto coefficient $T(\v{x}, \v{x}')$, also known as the Jaccard index, is a way to measure similarity between fingerprints \citep{ralaivola2005graph}.
It is defined as
\begin{equation}
    T(\v{x}, \v{x}') = \frac{\sum_i\min(x_i,x'_i)}{\sum_i{\max(x_i,x'_i)}},
\end{equation}
which is a valid kernel function with a known random feature expansion based on random hashes \citep{tripp2023tanimoto}.
The feature expansion builds upon prior work for fast retrieval of documents using random hashes which approximate the Tanimoto coefficient, that is, a distribution $\P_h$ over hash functions $h$ such that
\begin{equation*}
    \P_h(h(\v{x}) = h(\v{x}')) = T(\v{x}, \v{x}').
\end{equation*}
Following \citet{tripp2023tanimoto}, we extend hashes into random features by using the former to index a random tensor whose entries are independent Rademacher random variables.
We use the random hash of \citet{ioffe2010improved}.
We consider a Gaussian process with Tanimoto kernel and the hyperparameters of \citet{tripp2023tanimoto}, namely a scalar kernel amplitude, the noise variance $\sigma^2$, and a constant prior mean.
These are chosen using an exact Gaussian process on a randomly selected subset of the data.
The same values are used for all Gaussian process models to ensure that the performance differences are solely due to the quality of the posterior approximation.
The SGD method uses $100$-dimensional random features for the regulariser.

\begin{table}[t]
    \caption{Test set $R^2$ scores obtained for each target protein on the \textsc{dockstring} molecular binding affinity prediction task. Results with (\ensuremath{\cdot})\textsuperscript{\ensuremath{\dagger}} are from \citet{Ortegon2022dockstring} and results with (\ensuremath{\cdot})\textsuperscript{\ensuremath{\ddagger}} are from \citet{tripp2023tanimoto}. SVGP uses $1000$ inducing points.}
    \label{tab:molecule_regression}
    \centering
\footnotesize
\setlength{\tabcolsep}{2.5pt}
\renewcommand{\arraystretch}{1.1}
\hfill
\begin{tabular}{l c c c c c}
\toprule
Method & \textsc{ESR2} & \textsc{F2} & \textsc{KIT} & \textsc{PARP1} & \textsc{PGR} \\
\midrule
Attentive FP\textsuperscript{\ensuremath{\dagger}} & \textbf{0.627} & \textbf{0.880} & \textbf{0.806} & \textbf{0.910} & \textbf{0.678} \\
MPNN\textsuperscript{\ensuremath{\dagger}} & 0.506 & 0.798 & 0.755 & 0.815 & 0.324 \\
XGBoost\textsuperscript{\ensuremath{\dagger}} & 0.497 & 0.688 & 0.674 & 0.723 & 0.345 \\
\bottomrule
\end{tabular}
\hfill
\begin{tabular}{l c c c c c}
\toprule
Method & \textsc{ESR2} & \textsc{F2} & \textsc{KIT} & \textsc{PARP1} & \textsc{PGR} \\
\midrule
SDD & \textbf{0.627} & \textbf{0.880} & 0.790 & 0.907 & 0.626 \\
SGD & 0.526 & 0.832 & 0.697 & 0.857 & 0.408 \\
SVGP\textsuperscript{\ensuremath{\ddagger}} & 0.533 & 0.839 & 0.696 & 0.872 & 0.477 \\ 
\bottomrule
\end{tabular}
\hfill
\end{table}

In \Cref{tab:molecule_regression}, following \citet{Ortegon2022dockstring}, we report $R^2$ values. Alongside results for SDD and SGD, we include results from \citet{Ortegon2022dockstring} for XGBoost and two graph neural networks, MPNN \citep{gilmer2017neural} and the state-of-the-art Attentive FP \citep{xiong2019pushing}.
We also include the results for SVGP reported by \citet{tripp2023tanimoto}. 
Our results show that SDD matches the performance of Attentive FP on the ESR2 and FP2 proteins, and comes close on the others. 
To the best of our knowledge, this is the first time that Gaussian processes are competitive on a large-scale molecular prediction task.

\section{Discussion}
In this chapter, we introduced stochastic dual descent, a specialised first-order stochastic optimisation algorithm for computing Gaussian process predictions and posterior samples.
To design this algorithm, we adapted various ideas from the optimisation literature for Gaussian processes, arriving at an algorithm which is simultaneously simple and matches or exceeds the performance of relevant baselines.
We showed that stochastic dual descent yields strong performance on standard regression benchmarks and a large-scale Bayesian optimisation benchmark, and matches the performance of state-of-the-art graph neural networks on a molecular binding affinity prediction task.
The next chapter proposes generic improvements for iterative linear system solvers, which are also applicable to stochastic dual descent.

\chapter[Improving Linear System Solvers for Hyperparameter Optimisation]{Improving Linear System Solvers\\for Hyperparameter Optimisation}
\label{chap:solvers}

\ifpdf
    \graphicspath{{Chapter5/Figs/Raster/}{Chapter5/Figs/PDF/}{Chapter5/Figs/}}
\else
    \graphicspath{{Chapter5/Figs/Vector/}{Chapter5/Figs/}}
\fi

The previous two chapters considered stochastic gradient and dual descent as scalable linear system solvers for posterior inference in Gaussian processes.
In particular, we assumed fixed kernel and noise hyperparameters, which were obtained via heuristics or set to constants.
In this chapter, we consider hyperparameter optimisation with iterative linear system solvers, proposing generic improvements which are applicable to any solver.
We introduce a pathwise gradient estimator and warm start linear system solvers, which results in significantly faster convergence.
Additionally, we investigate iterative linear system solvers on a limited compute budget, stopping them before reaching convergence, which is common practice in large-scale settings.
Empirically, our techniques provide speed-ups of up to $72\times$ when solving until convergence, and decrease the average residual norm by up to $7\times$ when stopping early.

This chapter includes content which is adapted from the following publications:
\begin{itemize}
    \item J. A. Lin, S. Padhy, B. Mlodozeniec, and J. M. Hernández-Lobato. Warm Start Marginal Likelihood Optimisation for Iterative Gaussian Processes. In \emph{Advances in Approximate Bayesian Inference}, 2024.
    \item J. A. Lin, S. Padhy, B. Mlodozeniec, J. Antorán, and J. M. Hernández-Lobato. Improving Linear System Solvers for Hyperparameter Optimisation in Iterative Gaussian Processes. In \emph{Advances in Neural Information Processing Systems}, 2024.
\end{itemize}
My contributions to these projects consist of developing the ideas, initiating and managing the collaborations, implementing most of the source code, conducting most of the experiments, contributing some parts of the theory, and writing the manuscripts.

\begin{figure}[t]
    \centering
    \includegraphics[width=6in]{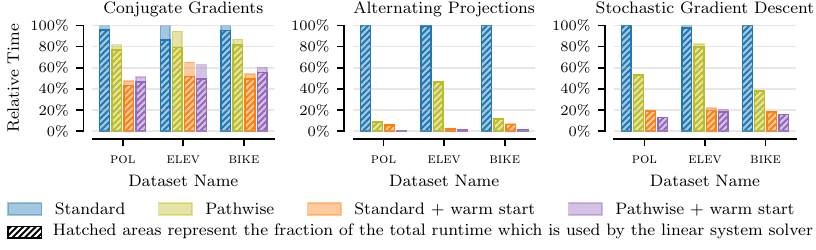}
    \caption{Comparison of relative runtimes for different methods, linear system solvers, and datasets. The linear system solver (hatched areas) dominates the total training time (coloured patches). The pathwise gradient estimator requires less time than the standard estimator. Initialising at the previous solution (warm start) further reduces the runtime of the linear system solver for both estimators.}
    \label{fig:train_solver_time}
\end{figure}

\section{Introduction}
Gaussian processes \citep{rasmussen2006} (GPs) are a versatile class of probabilistic machine learning models which are used widely for Bayesian optimisation of black-box functions \citep{Snoek2012}, climate and earth sciences \citep{ghasemi2021application,tazi2023beyond}, and data-efficient learning in robotics and control \citep{Deisenroth2015}.
However, their effectiveness depends on good estimates of hyperparameters, such as kernel length scales and observation noise variance.
These quantities are typically learned by maximising the marginal likelihood, which balances model complexity with training data fit.
In general, the marginal likelihood is a non-convex function of the hyperparameters and evaluating its gradient requires inverting the kernel matrix.
Using direct methods, this requires compute and memory resources which are respectively cubic and quadratic in the number of training examples.
This is intractable when dealing with large datasets of modern interest.

Approaches to improve the scalability of GPs can roughly be grouped into two categories.
Sparse methods \citep{candela2005,titsias09,titsias2009report,hensman13} approximate the kernel matrix with a low-rank surrogate, which is cheaper to invert.
However, this reduced flexibility may result in failure to properly fit increasingly large or sufficiently complex data \citep{lin2023sampling}.
On the other hand, iterative methods express GP computations in terms of systems of linear equations.
The solution to these linear systems is approximated up to a specified numerical precision with linear system solvers, such as conjugate gradients (CG) \citep{gardner18,WangPGT2019exactgp}, alternating projections (AP) \citep{shalev2013stochastic,tu2016large,wu2024large}, or stochastic gradient descent (SGD) \citep{lin2023sampling, lin2024stochastic}.
These iterative methods allow for a trade-off between compute time and accuracy.
However, convergence can be slow in the large data regime, where system conditioning is often poor.

In this chapter, we focus on iterative methods and identify techniques, which were important to the success of previously proposed methods, but did not receive special attention in the literature.
Many of these amount to amortisations which leverage previous computations to accelerate subsequent ones.
We analyse and adapt these techniques, and show that they can be applied to accelerate different linear solvers, obtaining speed-ups of up to $72\times$ without sacrificing predictive performance (see \Cref{fig:train_solver_time}).

In the following, we summarise our contributions:
\begin{itemize}
\item We introduce a pathwise estimator of the marginal likelihood gradient and demonstrate that, under real-world conditions, the solutions to the linear systems required by this estimator are closer to the origin than those of the standard gradient estimator, allowing our solvers to converge faster. Additionally, these solutions transform into samples from the GP posterior without further matrix inversions, amortising the computational costs of predictive posterior inference.

\item We propose to warm start linear system solvers throughout marginal likelihood optimisation by reusing linear system solutions to initialise the solver in the subsequent step. This results in faster convergence. Although this technically introduces bias into the optimisation, we show that, theoretically and empirically, the optimisation quality does not suffer.

\item We investigate the behaviour of linear system solvers on a limited compute budget, such that reaching the specified tolerance is not guaranteed. Here, warm starting allows the linear system solver to accumulate solver progress across marginal likelihood steps, progressively improving the solution quality of the linear system solver despite early stopping.

\item We demonstrate empirically that the methods above either reduce the required number of iterations until convergence without sacrificing performance or improve the performance if a limited compute budget hinders convergence. Across different UCI regression datasets and linear system solvers, we observe average speed-ups of up to $72\times$ when solving until the tolerance is reached, and increased performance when the compute budget is limited.
\end{itemize}

\subsection{Marginal Likelihood Optimisation in Iterative Gaussian Processes}
\label{sec:marginal_likelihood_iterative}
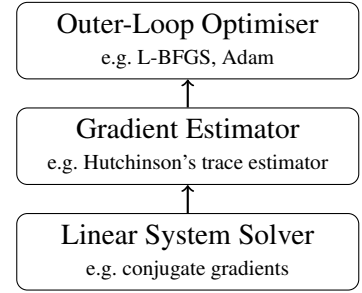
\begin{wrapfigure}{r}{0.35\textwidth}
\vspace{-0.4cm}
\centering
\begin{tikzpicture}[
  node/.style={rectangle, rounded corners, minimum width=4.5cm, minimum height=1cm, draw=black},
  arrow/.style={thick,<-},
  node distance=1.4cm
]

\node (optimiser) [node, align=center] {\small{Outer-Loop Optimiser}\\[-.2cm]\scriptsize{e.g.\ L-BFGS, Adam}};
\node (gradient) [node, below of=optimiser, align=center] {\small{Gradient Estimator}\\[-.2cm]\scriptsize{e.g.\ Hutchinson's trace estimator}};
\node (solver) [node, below of=gradient, align=center] {\small{Linear System Solver}\\[-.2cm]\scriptsize{e.g.\ conjugate gradients}};

\draw [arrow] (optimiser) -- (gradient);
\draw [arrow] (gradient) -- (solver);

\end{tikzpicture}
\caption{Marginal likelihood optimisation for iterative GPs.}
\label{fig:diagram}
\vspace{-.5cm}
\end{wrapfigure}
Marginal likelihood optimisation for iterative GPs consists of bi-level optimisation, where the outer loop maximises the marginal likelihood \eqref{eq:mll} using stochastic estimates of its gradient \eqref{eq:mll_grad}.
Computing these gradient estimates requires the solution to systems of linear equations.
These solutions are obtained using an iterative solver in the inner loop.
\Cref{fig:diagram} illustrates this three-level hierarchy.
In this chapter, we assume a prior mean of zero and define $\m{H}_{\v{\theta}} = \m{K}_\m{XX} + \sigma^2 \m{I}$, where $\v{\theta}$ includes all kernel hyperparameters $\v{\vartheta}$ and observation noise $\sigma^2$.
Throughout this chapter, we use a Matérn-\nicefrac{3}{2} kernel, parametrised by a scalar signal scale and a length scale per input dimension.
The marginal likelihood $\c{L}$ as a function of $\v{\theta}$ and its gradient $\nabla_{\theta_i} \c{L}$ with respect to $\theta_i$ can be expressed as
\begin{align}
    \label{eq:mll}
    \c{L}(\v{\theta}) &= -\frac{1}{2} \v{y}\T \m{H}_{\v{\theta}}\inv \v{y} - \frac{1}{2} \log \det \m{H}_{\v{\theta}} - \frac{n}{2} \log 2 \pi, \\
    \label{eq:mll_grad}
    \nabla_{\theta_i} \c{L}(\v{\theta}) &= \frac{1}{2} \del{\m{H}_{\v{\theta}}\inv \v{y}}\T \frac{\partial \m{H}_{\v{\theta}}}{\partial \theta_i} \del{\m{H}_{\v{\theta}}\inv \v{y}} - \frac{1}{2} \tr \del{\m{H}_{\v{\theta}}\inv \frac{\partial \m{H}_{\v{\theta}}}{\partial \theta_i}},
\end{align}
where the partial derivative of $\m{H}_{\v{\theta}}$ with respect to $\theta_i$ is a $n \times n$ matrix (see \Cref{sec:model_selection} for reference).
We assume $n$ is too large to compute the inverse or log-determinant of $\m{H}_{\v{\theta}}$ and iterative methods are used instead.

\subsubsection{Outer-Loop Optimiser}
The outer-loop optimiser maximises the marginal likelihood $\c{L}$ using its gradient.
Common choices are L-BFGS, when exact gradients are available \citep{artemev2021tighter}, and Adam in the large data setting, when stochastic approximation is required \citep{WangPGT2019exactgp, wu2024large}.
We consider the case where gradients are stochastic and use Adam.

\subsubsection{Gradient Estimator}
The gradient of the marginal likelihood involves two computationally expensive components: linear solves against the targets $\m{H}_{\v{\theta}}\inv \v{y}$ and the trace term $\tr \del{\m{H}_{\v{\theta}}\inv \partial \m{H}_{\v{\theta}} / \partial \theta_i}$.
An unbiased estimate of the latter can be obtained using $s$ probe vectors and Hutchinson's trace estimator \citep{Hutchinson1990} (see \Cref{sec:iterative_methods} for reference),
\begin{equation}
    \label{eq:trace}
    \tr \del{\m{H}_{\v{\theta}}\inv \frac{\partial \m{H}_{\v{\theta}}}{\partial \theta_i}}
    = \E_{\v{z}} \sbr{\v{z}\T \m{H}_{\v{\theta}}\inv \frac{\partial \m{H}_{\v{\theta}}}{\partial \theta_i} \v{z}}
    \approx \frac{1}{s} \sum_{j=1}^s \v{z}_j\T \m{H}_{\v{\theta}}\inv \frac{\partial \m{H}_{\v{\theta}}}{\partial \theta_i} \v{z}_j,
\end{equation}
where the probe vectors $\v{z}_j \in \R^n$ satisfy $\E [\v{z}_j \v{z}_j\T] = \m{I}$, and $\v{z}_j\T \m{H}_{\v{\theta}}\inv$ is obtained using a linear solve.
We refer to this as the \emph{standard estimator} and set $s = 64$, unless otherwise specified.

\subsubsection{Linear System Solver}
Substituting the trace estimator above back into the gradient, we obtain an unbiased gradient estimate in terms of the solution to a batch of linear systems,
\begin{equation}
\label{eq:standard_system}
    \m{H}_{\v{\theta}} \, \sbr{\, \v{v}_{\v{y}}, \v{v}_1, \dots, \v{v}_s \,}
    = \sbr{\, \v{y}, \v{z}_1, \dots, \v{z}_s \,},
\end{equation}
which share the coefficient matrix $\m{H}_{\v{\theta}}$.
Since $\m{H}_{\v{\theta}}$ is positive-definite, the solution $\v{v} = \m{H}_{\v{\theta}}\inv \v{b}$ to the system $\m{H}_{\v{\theta}} \, \v{v} = \v{b}$ can be obtained by finding the unique minimiser of a corresponding convex quadratic objective,
\begin{equation}
    \label{eq:quadratic}
    \v{v} = \underset{\v{u}}{\arg \min} \;\; \frac{1}{2} \v{u}\T \m{H}_{\v{\theta}} \, \v{u} - \v{u}\T \v{b},
\end{equation}
facilitating the use of iterative solvers (see \Cref{sec:iterative_methods}).
Most popular in the GP literature are conjugate gradients (CG) \citep{gardner18,WangPGT2019exactgp,wilson20,wilson21}, alternating projections (AP) \citep{shalev2013stochastic,tu2016large,wu2024large} and stochastic gradient descent (SGD) \citep{lin2023sampling,lin2024stochastic}.
For details about CG, we refer to \Cref{sec:iterative_methods}.
AP optimises the original quadratic objective by iteratively solving smaller quadratic objectives, where each smaller objective corresponds to a block matrix along the diagonal of $\m{H}_{\v{\theta}}$.
The block size is chosen such that the smaller objectives can be solved directly using Cholesky factorisation.
The solution to the original quadratic objective is obtained by accumulating lower-dimensional updates and keeping track of the residual.
Pseudocode for AP is provided in \Cref{alg:ap}.
For more information about SGD, we refer to \Cref{chap:sdd}, because we use the improved stochastic dual descent variant.

Solvers are often run until the relative residual norm $\norm{\v{b} - \m{H}_{\v{\theta}} \v{u}}_2 / \norm{\v{b}}_2$ reaches a certain tolerance $\tau$ \citep{WangPGT2019exactgp,maddox2021iterative,wu2024large}.
Following \citet{maddox2021iterative}, we set $\tau = 0.01$.
The linear system solver in the inner loop dominates the computational costs of marginal likelihood optimisation for iterative GPs, as illustrated in \Cref{fig:train_solver_time}.
Therefore, improving linear system solvers is the main focus of this chapter.

\begin{algorithm}[t]
    \caption{Alternating projections for solving $\m{H}_{\v{\theta}} \, \v{v} = \v{b}$}
    \label{alg:ap}
\begin{algorithmic}[1]
    \Require Coefficient matrix $\m{H}_{\v{\theta}}$, targets $\v{b}$, tolerance $\tau$, maximum number of iterations $t_{\mathrm{max}}$, block size $b$, block partitions $[1], [2], \dots, [\lceil \frac{n}{b} \rceil]$
    \State $\v{v} \gets \v{0}$ (or previous solution if warm start); $\bm{r} \gets \v{b} - \m{H}_{\v{\theta}} \v{v}$; $t \gets 0$
    \While{$t < t_{\mathrm{max}}$ \textbf{ and } $\Vert \bm{r} \Vert_2 / \Vert \v{b} \Vert_2 > \tau$}
        \State $[i] \gets \texttt{arg\_max}(\Vert \v{r}[1] \Vert_2, \dots, \Vert \v{r}[\lceil \frac{n}{b} \rceil] \Vert_2)$ \Comment{Select block with largest residual}
        \State $\v{d} \gets \texttt{cholesky\_solve}(\m{H}_{\v{\theta}}[i, i], \bm{r}[i])$ \Comment{Solve lower-dimensional quadratic}
        \State $\v{v}[i] \gets \v{v}[i] + \v{d}$ \Comment{Update global solution vector}
        \State $\v{r} \gets \v{r} - \m{H}_{\v{\theta}}[:, i] \, \v{d}$ \Comment{Update running residual}
        \State $t \gets t + 1$ \Comment{Update iteration counter}
    \EndWhile
    \State \Return $\v{v}$
\end{algorithmic}
\end{algorithm}

\section{Pathwise Estimation of Marginal Likelihood Gradients}
\label{sec:pathwise_estimator}
We introduce the \emph{pathwise estimator}, an alternative to the standard estimator which reduces the required number of solver iterations until convergence (see \Cref{fig:initial_distance}).
Additionally, the estimator also provides us with posterior function samples via pathwise conditioning, hence the name \emph{pathwise} estimator.
This facilitates predictions without further linear solves.

We modify the standard estimator to absorb $\m{H}_{\v{\theta}}\inv$ into the distribution of the random probe vectors \citep{antoran2023sampling},
\begin{equation}
    \label{eq:pathwise_estimator}
    \tr \del{\m{H}_{\v{\theta}}\inv \frac{\partial \m{H}_{\v{\theta}}}{\partial \theta_i}}
    = \tr \del{\E [\v{\hat{z}}_j \v{\hat{z}}_j\T] \frac{\partial \m{H}_{\v{\theta}}}{\partial \theta_i}}
    = \E_{\v{\hat{z}}} \sbr{\v{\hat{z}}\T \frac{\partial \m{H}_{\v{\theta}}}{\partial \theta_i} \v{\hat{z}}}
    \approx \frac{1}{s} \sum_{j=1}^s \v{\hat{z}}_j\T \frac{\partial \m{H}_{\v{\theta}}}{\partial \theta_i} \v{\hat{z}}_j,
\end{equation}
where $\E [\v{\hat{z}}_j \v{\hat{z}}_j\T] = \m{H}_{\v{\theta}}\inv$.
Probe vectors $\v{\hat{z}}$ with the desired second moment can be obtained as
\begin{equation}
\begin{aligned}
    \v{f}_\m{X} &\~[N](\v{0}, \m{K}_\m{XX}) \\
    \v{\eps} &\~[N](\v{0}, \sigma^2 \m{I})
\end{aligned}
\implies
\v{f}_\m{X} + \v\eps = \v{\xi} \~[N](\v{0}, \m{H}_{\v{\theta}})
\implies
\v{\hat{z}} = \m{H}_{\v{\theta}}\inv \v{\xi} \sim \mathcal{N}(\v{0}, \m{H}_{\v{\theta}}\inv),
\end{equation}
where $\v{f}_\m{X}$ is a sample from the Gaussian process prior evaluated at the observed inputs $\m{X}$ and $\v\eps$ is a Gaussian noise sample.
Following \citet{wilson20,lin2023sampling}, we use random features \citep{rahimi08,sutherland15} to efficiently draw samples $f \~[GP](0, k)$ from the Gaussian process prior (see \Cref{sec:random_features}).
We use 2000 random features throughout this chapter.
Finally, we obtain $\v{v}_{\v{y}}$ and $\v{\hat{z}}_j$ by solving
\begin{equation}
\label{eq:pathwise_system}
    \m{H}_{\v{\theta}} \, \left[ \, \v{v}_{\v{y}}, \v{\hat{z}}_1, \dots, \v{\hat{z}}_s \, \right]
    = \left[ \, \v{y}, \v{\xi}_1, \dots, \v{\xi}_s \, \right],
\end{equation}
akin to the standard estimator, where the $\v{\xi}_j$ are resampled in each outer-loop step.

The name \emph{pathwise} estimator comes from the fact that solving the linear systems provides us with all the necessary terms to construct a set of $s$ posterior samples via pathwise conditioning (see \Cref{eq:pathwise_conditioning} for reference). Each of these is given by
\begin{equation}
    \v{f}_{(\.) \given \v{y}} = \v{f}_{(\.)} + \m{K}_{(\.)\m{X}} \m{H}_{\v{\theta}}\inv (\v{y} - \v{\xi})
    =  \v{f}_{(\.)} + \m{K}_{(\.)\m{X}}(\v{v}_{\v{y}} - \v{\hat{z}}).
\end{equation}
We can use these to make predictions without requiring any additional linear system solves.

\begin{figure}[t]
    \centering
    \includegraphics[width=6in]{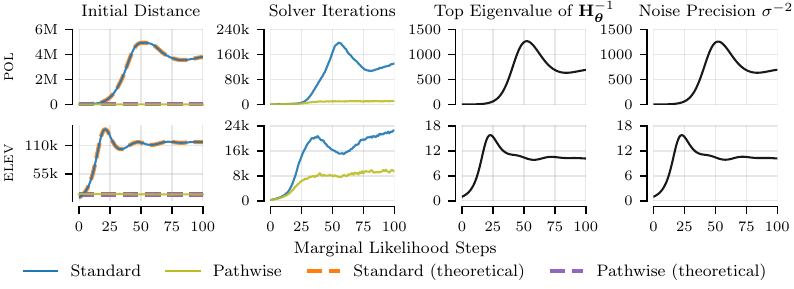}
    \caption{On the \textsc{pol} and \textsc{elevators} datasets, the pathwise estimator results in a lower RKHS distance \eqref{eq:RKHS_distance} between solver initialisation and solution, as predicted by theory (left) (see \Cref{eq:standard_dist,eq:pathwise_dist} for reference). This results in fewer AP iterations until reaching the tolerance (left middle). When using the standard estimator, the initial distance follows the top eigenvalue of $\m{H}_{\v{\theta}}\inv$ (right middle), which is strongly related to the noise precision (right). The latter tends to increase during marginal likelihood optimisation when fitting the data. The effects are greater on \textsc{pol} due to the higher noise precision.}
    \label{fig:initial_distance}
\end{figure}

\subsection{Initial Distance to the Linear System Solution}
Under realistic conditions, the pathwise estimator moves the solution of the linear system closer to the origin.
To show this, we consider the linear system $\m{H}_{\v{\theta}} \v{u} = \v{b}$ and measure the RKHS distance between the initialisation $\v{u}_{\mathrm{init}}$ and the solution $\v{u} = \m{H}_{\v{\theta}}\inv \, \v{b}$ as $\lVert \v{u}_{\mathrm{init}} - \v{u} \rVert_{\m{H}_{\v{\theta}}}^2$.
With $\v{u}_{\mathrm{init}} = \v{0}$, which is standard \citep{WangPGT2019exactgp,lin2023sampling,wu2024large},
\begin{equation}
\label{eq:RKHS_distance}
    \norm{\v{u}_{\mathrm{init}} - \v{u}}_{\m{H}_{\v{\theta}}}^2
    = \norm{\v{u}}_{\m{H}_{\v{\theta}}}^2
    = \v{u}\T \m{H}_{\v{\theta}} \v{u}
    = \v{b}\T \m{H}_{\v{\theta}}\inv \m{H}_{\v{\theta}} \m{H}_{\v{\theta}}\inv \v{b}
    = \v{b}\T \m{H}_{\v{\theta}}\inv \v{b}.
\end{equation}
Since $\v{b}$ is a random vector in our case, we analyse the expected squared distance,
\begin{equation}
    \E \sbr{\v{b}\T \m{H}_{\v{\theta}}\inv \v{b}}
    = \E \sbr{\tr \del{\v{b}\T \m{H}_{\v{\theta}}\inv \v{b}}}
    = \E \sbr{\tr \del{\v{b} \v{b}\T \m{H}_{\v{\theta}}\inv}}
    = \tr \del{\E \sbr{\v{b} \v{b}\T} \m{H}_{\v{\theta}}\inv}.
\end{equation}
For the standard estimator \eqref{eq:trace}, we substitute $\v{b} = \v{z}$ with $\E \sbr{\v{z} \v{z}\T} = \m{I}$, yielding
\begin{equation}
\label{eq:standard_dist}
    \E \sbr{\norm{\v{u}_{\mathrm{init}} - \v{u}}_{\m{H}_{\v{\theta}}}^2}
    = \tr\del{\E \sbr{\v{z} \v{z}\T} \m{H}_{\v{\theta}}\inv}
    = \tr\del{\m{I} \, \m{H}_{\v{\theta}}\inv}
    = \tr\del{\m{H}_{\v{\theta}}\inv}.
\end{equation}
For the pathwise estimator \eqref{eq:pathwise_estimator}, we substitute $\v{b} = \v{\xi}$ with $\E \sbr{\v{\xi} \v{\xi}\T} = \m{H}_{\v{\theta}}$, yielding
\begin{equation}
\label{eq:pathwise_dist}
    \E \sbr{\norm{\v{u}_{\mathrm{init}} - \v{u}}_{\m{H}_{\v{\theta}}}^2}
    = \tr\del{\E \sbr{\v{\xi} \v{\xi}\T} \m{H}_{\v{\theta}}\inv}
    = \tr\del{\m{H}_{\v{\theta}} \, \m{H}_{\v{\theta}}\inv}
    = \tr\del{\m{I}}
    = n.
\end{equation}
Therefore, the initial distance for the standard estimator is equal to the trace of $\m{H}_{\v{\theta}}\inv$, whereas it is constant for the pathwise estimator.
\Cref{fig:initial_distance} illustrates that this trace follows the top eigenvalue, which matches the noise precision.
As the model fits the data, the noise precision increases, increasing the initial distance for the standard but not for the pathwise estimator.
In practice, the latter leads to faster solver convergence (see \Cref{tab:results_main}).

\subsection{How Many Probe Vectors and Posterior Samples Do We Need?}
In the literature, it is common to use $s \leq 16$ probe vectors for marginal likelihood optimisation \citep{gardner18,maddox2021iterative, antoran2023sampling,wu2024large}.
However, a larger number of posterior samples, around $s = 64$, is necessary to make accurate predictions \citep{antoran2023sampling,lin2023sampling,lin2024stochastic} (see \Cref{fig:n_samples}).
Thus, to amortise linear system solves across marginal likelihood optimisation and prediction, we must use the same number of probes for both.
Interestingly, as shown in \Cref{fig:n_samples}, using $64$ instead of $16$ probe vectors only increases the runtime by around $10$\% because the computational costs are dominated by kernel function evaluations, which are shared among probe vectors.
\begin{figure}[hb]
    \centering
    \includegraphics[width=6in]{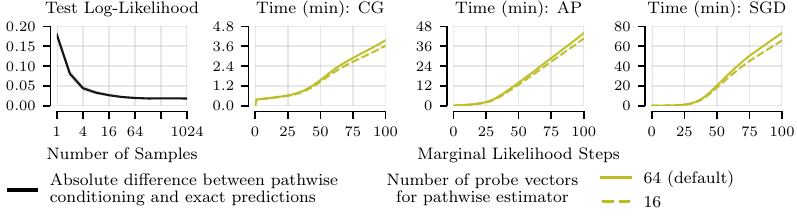}
    \caption{On the \textsc{pol} dataset, increasing the number of posterior samples improves the performance of pathwise conditioning until diminishing returns start to manifest with more than $64$ samples (left). Furthermore, with $4\times$ as many probe vectors, the total cumulative runtime only increases by around $10$\% because the computational costs are dominated by shared kernel function evaluations (right).}
    \label{fig:n_samples}
\end{figure}

\subsection{Estimator Variance}
The variance of the pathwise estimator is less than or equal to the variance of the standard estimator with Gaussian probe vectors, as formally stated by the proposition below.
\begin{proposition}
Let the probe vectors of the standard estimator be $\v{z} \~[N](\v{0}, \m{I})$, and the probe vectors of the pathwise estimator be $\v{\hat{z}} \~[N](\v{0}, \m{H}_{\v{\theta}}\inv)$.
Then we have
\begin{equation}
    \Var \ubr{\del{\v{\hat{z}}\T \frac{\partial \m{H}_{\v{\theta}}}{\partial \theta_i} \v{\hat{z}}}}_{\textnormal{pathwise estimator}}
    \leq
    \Var \ubr{\del{\v{z}\T \m{H}_{\v{\theta}}\inv \frac{\partial \m{H}_{\v{\theta}}}{\partial \theta_i} \v{z}}}_{\textnormal{standard estimator}},
\end{equation}
with equality if and only if $\m{H}_{\v{\theta}}\inv$ and $\partial \m{H}_{\v{\theta}} / \partial \theta_i$ commute with each other.
\end{proposition}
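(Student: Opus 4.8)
The plan is to reduce everything to traces of products of the two symmetric matrices $\m{H}_{\v{\theta}}$ and $\m{A}$, where I write $\m{A}$ as shorthand for $\partial \m{H}_{\v{\theta}}/\partial\theta_i$, using the standard second-moment formula for Gaussian quadratic forms. Concretely, for $\v{z} \~[N](\v{0}, \m{I})$ and an arbitrary matrix $\m{M} \in \R^{n \times n}$, Isserlis' theorem (Wick's formula) gives $\E[(\v{z}\T \m{M} \v{z})^2] = \tr(\m{M})^2 + \tr(\m{M}\m{M}\T) + \tr(\m{M}^2)$, hence $\Var(\v{z}\T \m{M} \v{z}) = \tr(\m{M}\m{M}\T) + \tr(\m{M}^2)$. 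For a general covariance, writing $\v{w} = \m{\Sigma}^{1/2} \v{z} \~[N](\v{0}, \m{\Sigma})$ and taking $\m{M}$ symmetric turns the quadratic form into $\v{z}\T \m{\Sigma}^{1/2} \m{M} \m{\Sigma}^{1/2} \v{z}$, so $\Var(\v{w}\T \m{M} \v{w}) = 2\tr(\m{M}\m{\Sigma}\m{M}\m{\Sigma})$. These two identities are the only probabilistic input.

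First I would apply these to the two estimators. For the pathwise estimator, $\v{\hat{z}} \~[N](\v{0}, \m{H}_{\v{\theta}}\inv)$ with the symmetric matrix $\m{A}$, so $\Var_{\mathrm{pathwise}} = 2\tr(\m{A}\m{H}_{\v{\theta}}\inv \m{A}\m{H}_{\v{\theta}}\inv)$. For the standard estimator, the relevant matrix $\m{M} = \m{H}_{\v{\theta}}\inv \m{A}$ is in general \emph{not} symmetric, so the first formula must be used: since $\m{M}\T = \m{A}\m{H}_{\v{\theta}}\inv$, cyclic permutation of the trace gives $\Var_{\mathrm{standard}} = \tr(\m{H}_{\v{\theta}}\inv \m{A}^2 \m{H}_{\v{\theta}}\inv) + \tr(\m{H}_{\v{\theta}}\inv \m{A}\m{H}_{\v{\theta}}\inv \m{A}) = \tr(\m{A}^2 \m{H}_{\v{\theta}}^{-2}) + \tfrac{1}{2}\Var_{\mathrm{pathwise}}$. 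Subtracting, the claim reduces to the matrix inequality $\tr(\m{A}^2 \m{B}^2) \ge \tr(\m{A}\m{B}\m{A}\m{B})$ with $\m{B} = \m{H}_{\v{\theta}}\inv$, where both $\m{A}$ and $\m{B}$ are symmetric.

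Then I would prove this inequality via the commutator $[\m{A}, \m{B}] = \m{A}\m{B} - \m{B}\m{A}$, which is anti-symmetric because $\m{A}$ and $\m{B}$ are symmetric. Hence $\norm{[\m{A}, \m{B}]}_F^2 = \tr([\m{A}, \m{B}][\m{A}, \m{B}]\T) = -\tr([\m{A}, \m{B}]^2)$, and expanding $[\m{A},\m{B}]^2 = \m{A}\m{B}\m{A}\m{B} - \m{A}\m{B}^2\m{A} - \m{B}\m{A}^2\m{B} + \m{B}\m{A}\m{B}\m{A}$ together with cyclicity yields $-\tr([\m{A}, \m{B}]^2) = 2\tr(\m{A}^2\m{B}^2) - 2\tr(\m{A}\m{B}\m{A}\m{B})$. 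Therefore $\Var_{\mathrm{standard}} - \Var_{\mathrm{pathwise}} = \tr(\m{A}^2\m{B}^2) - \tr(\m{A}\m{B}\m{A}\m{B}) = \tfrac{1}{2}\norm{[\m{A}, \m{B}]}_F^2 \ge 0$, with equality exactly when $[\m{A}, \m{B}] = \v{0}$, i.e.\ when $\m{H}_{\v{\theta}}\inv$ and $\partial \m{H}_{\v{\theta}}/\partial\theta_i$ commute; note this is equivalent to $\m{H}_{\v{\theta}}$ commuting with $\partial \m{H}_{\v{\theta}}/\partial\theta_i$, since conjugating $\m{A}\m{H}_{\v{\theta}} = \m{H}_{\v{\theta}}\m{A}$ by $\m{H}_{\v{\theta}}\inv$ gives $\m{A}\m{H}_{\v{\theta}}\inv = \m{H}_{\v{\theta}}\inv\m{A}$ and back.

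The bookkeeping I expect to be the main nuisance is tracking which intermediate matrices are symmetric: $\m{H}_{\v{\theta}}$, $\m{H}_{\v{\theta}}\inv$, $\m{A}$, and $\m{H}_{\v{\theta}}^{-1/2}\m{A}\m{H}_{\v{\theta}}^{-1/2}$ are all symmetric, but $\m{H}_{\v{\theta}}\inv \m{A}$ is not — which is precisely why the standard estimator picks up the extra non-negative term $\tfrac{1}{2}\norm{[\m{A}, \m{H}_{\v{\theta}}\inv]}_F^2$ relative to the pathwise one. Everything else is routine cyclic-trace manipulation. If one prefers to avoid the non-symmetric quadratic-form formula, an alternative is to symmetrise at the outset using $\v{z}\T \m{M} \v{z} = \v{z}\T \tfrac{1}{2}(\m{M} + \m{M}\T)\v{z}$, but this leads to the same algebra.
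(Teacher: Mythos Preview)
Your proof is correct. Both you and the paper begin identically: apply the Gaussian quadratic-form variance identity to obtain $\Var_{\text{standard}} = \tr(\m{M}\m{M}\T) + \tr(\m{M}^2)$ and $\Var_{\text{pathwise}} = 2\tr(\m{M}^2)$ for $\m{M} = \m{H}_{\v{\theta}}\inv (\partial \m{H}_{\v{\theta}}/\partial\theta_i)$, reducing the claim to $\tr(\m{M}\m{M}\T) \geq \tr(\m{M}^2)$. The routes then diverge at this inequality. The paper takes the complex Schur decomposition $\m{M} = \m{Q}\m{T}\m{Q}^*$ and argues that $\tr(\m{M}\m{M}\T) = \norm{\m{T}}_F^2 = \sum_j \lambda_j^2 + \sum_{j<l} |t_{jl}|^2$, whereas $\tr(\m{M}^2) = \sum_j \lambda_j^2$; this last identification requires the separate observation that $\m{M}$ is similar to the symmetric matrix $\m{H}_{\v{\theta}}^{-1/2}(\partial\m{H}_{\v{\theta}}/\partial\theta_i)\m{H}_{\v{\theta}}^{-1/2}$ and hence has real eigenvalues. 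Your commutator argument, rewriting the gap as $\tfrac{1}{2}\norm{[\partial\m{H}_{\v{\theta}}/\partial\theta_i,\, \m{H}_{\v{\theta}}\inv]}_F^2$, is more elementary: it avoids the Schur decomposition and the diagonalisability step entirely, yields an explicit non-negative expression for the variance gap, and makes the equality condition $[\m{A},\m{B}] = \v{0}$ immediate rather than appealing to a side result. The paper's route, on the other hand, localises the excess variance in the strictly upper-triangular Schur entries, which is a slightly different structural picture.
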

\begin{proofbox}
\begin{proof}
Let $\m{A} = \m{H}_{\v{\theta}}\inv (\partial \m{H}_{\v{\theta}}/\partial \theta_i)$, where $\m{H}_{\v{\theta}}\inv$ is positive definite and $\partial \m{H}_{\v{\theta}}/\partial \theta_i$ is symmetric, but $\m{A}$ is potentially not symmetric.
Using standard identities for multivariate normal random variables \citep{Petersen2006}, we calculate the variance of the standard estimator as
\begin{align}
    \Var \del{\v{z}\T \m{H}_{\v{\theta}}\inv \frac{\partial \m{H}_{\v{\theta}}}{\partial \theta_i} \v{z}}
    &= \tr \del{\m{H}_{\v{\theta}}\inv \frac{\partial \m{H}_{\v{\theta}}}{\partial \theta_i} \del{ \m{H}_{\v{\theta}}\inv \frac{\partial \m{H}_{\v{\theta}}}{\partial \theta_i} + \frac{\partial \m{H}_{\v{\theta}}}{\partial \theta_i} \m{H}_{\v{\theta}}\inv}}, \\
    &= \tr \del{\m{A}^2}
    + \tr \del{\m{A} \m{A}\T},
\end{align}
and the variance of the pathwise estimator is given by
\begin{equation}
    \Var \del{\v{\hat{z}}\T \frac{\partial \m{H}_{\v{\theta}}}{\partial \theta_i} \v{\hat{z}}}
    = 2\,\tr \del{\frac{\partial \m{H}_{\v{\theta}}}{\partial \theta_i}  \m{H}_{\v{\theta}}\inv \frac{\partial \m{H}_{\v{\theta}}}{\partial \theta_i}  \m{H}_{\v{\theta}}\inv}
    = 2 \tr \del{\m{A}^2}.
\end{equation}
Therefore, the difference between both variances depends on the relationship between $\tr \del{\m{A} \m{A}\T}$ and $\tr \del{\m{A}^2}$.
Using the Frobenius norm $\norm{\.}_{\mathrm{F}}$ and the Schur decomposition $\m{A} = \m{Q}\m{T}\m{Q}\inv$, where $\m{Q}$ is unitary and $\m{T}$ is upper triangular, we express $\tr \del{\m{A}\m{A}\T}$ as
\begin{equation}
    \tr \del{\m{A} \m{A}\T}
    = \norm{\m{A}}_{\mathrm{F}}^2
    = \norm{\m{Q}\m{T}\m{Q}\inv}_{\mathrm{F}}^2
    = \norm{\m{T}}_{\mathrm{F}}^2
    = \sum_{j=1}^n \del{\lambda_{j}^2 + \sum_{l=j+1}^n t_{jl}^2},
\end{equation}
where $\lambda_j$ are the eigenvalues of $\m{A}$ and $t_{jl}$ are the upper off-diagonal entries of $\m{T}$.
To establish a relationship with $\tr \del{\m{A}^2}$, we observe that $\m{A}$ is similar to the symmetric matrix $\m{H}_{\v{\theta}}^{-\frac{1}{2}} (\partial \m{H}_{\v{\theta}}/\partial \theta_i) \m{H}_{\v{\theta}}^{-\frac{1}{2}}$.
Therefore, $\m{A}$ is diagonalisable and $\tr \del{\m{A}^2} = \sum_{j=1}^n \lambda_j^2$.
Since each individual $t_{jl}^2 \geq 0$, we conclude that $\sum_{j=1}^n\sum_{l=j+1}^n t_{jl}^2 \geq 0$, and thus
\begin{equation}
    \tr \del{\m{A}^2} \leq \tr \del{\m{A}\m{A}\T}
    \implies
    \Var \del{\v{\hat{z}}\T \frac{\partial \m{H}_{\v{\theta}}}{\partial \theta_i} \v{\hat{z}}}
    \leq
    \Var \del{\v{z}\T \m{H}_{\v{\theta}}\inv \frac{\partial \m{H}_{\v{\theta}}}{\partial \theta_i} \v{z}},
\end{equation}
with equality if and only if $\m{H}_{\v{\theta}}\inv$ and $\partial \m{H}_{\v{\theta}} / \partial \theta_i$ commute \citep{Horn1991}.
\end{proof}
\end{proofbox}

For example, consider $\partial \m{H}_{\v{\theta}} / \partial \sigma = 2 \sigma \m{I}$.
In this case, $\m{H}_{\v{\theta}}\inv$ and $\partial \m{H}_{\v{\theta}} / \partial \sigma$ commute, such that the two variances are equal.
In general, a sufficient condition for matrix multiplication to be commutative is simultaneous diagonalisability of two matrices.
Related work develops trace estimators with lower variance \citep{Hutch++,XTrace}, but we did not pursue these as we find the variance to be sufficiently low, even when relying on only $s = 16$ probe vectors.

\subsection{Approximate Prior Function Samples Using Random Features}
In practice, the pathwise estimator requires samples from the prior $f \~[GP](0, k)$, which is computationally expensive for large datasets without the use of random features.
In \Cref{fig:pathwise_exact_prior}, we show that, despite using random features, the marginal likelihood optimisation trajectory of the pathwise estimator matches the trajectory of exact optimisation using Cholesky factorisation and backpropagation most of the time.
Further, we confirm that deviations of the pathwise estimator are indeed due to the use of random features by demonstrating that we can remove these deviations using exact samples from the prior instead.
\begin{figure}[ht]
    \centering
    \includegraphics[width=6in]{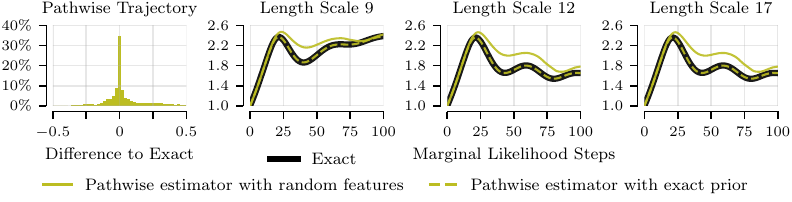}
    \caption{Across all datasets and marginal likelihood steps, most hyperparameter trajectories of the pathwise estimator rarely differ from exact optimisation, as shown by the histogram illustrating the differences between hyperparameters (left). On selected length scales of the \textsc{elevators} dataset, the pathwise estimator deviates due to the use of random features to approximate prior function samples. With exact samples from the prior, the pathwise estimator matches exact optimisation again (right).}
    \label{fig:pathwise_exact_prior}
\end{figure}

\begin{figure}[ht]
    \centering
    \includegraphics[width=6in]{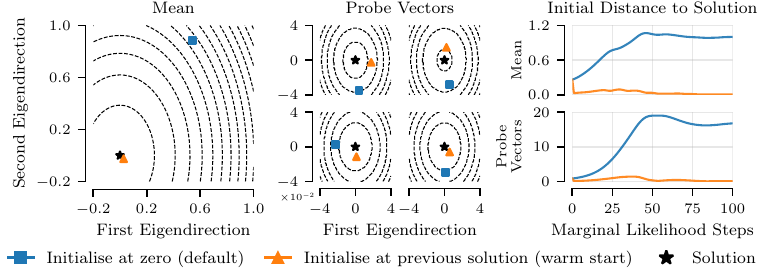}
    \caption{Two-dimensional cross-sections of top eigendirections of the inner-loop quadratic objective after $20$ marginal likelihood steps on the \textsc{pol} dataset, centred on the current solution placed at the origin (left and middle). Warm starting significantly reduces the initial RKHS distance to the solution throughout marginal likelihood optimisation (right).}
    \label{fig:2d_cross_section}
\end{figure}
\section{Warm Starting Linear System Solvers}
\label{sec:warm_start}
Iterative linear system solvers are typically initialised at zero \citep{gardner18,WangPGT2019exactgp,lin2023sampling,wu2024large,lin2024stochastic}.
However, because the outer-loop marginal likelihood optimisation does not change the hyperparameters much between consecutive steps, we expect that the solution to inner-loop linear systems also does not change much between consecutive steps.
Therefore, we suggest to \emph{warm start} linear system solvers by initialising them at the solution of the previous step \citep{lin2024warm}.
This requires that the targets of the linear systems, $\v{z}_j$ or $\v{\xi}_j$, are not resampled throughout optimisation, which can introduce bias \citep{chen20}.
However, we find that warm starting consistently provides gains across all linear system solvers for both the standard and the pathwise estimator, and that the bias is negligible.
\Cref{fig:2d_cross_section} visualises the two top eigendirections of the inner-loop quadratic objective on \textsc{pol}.
Throughout training, warm starting at the solution to the previous linear system results in a substantially smaller initial distance to the current solution.

\subsection{Effects on Linear System Solver Convergence}
Reducing the initial RKHS distance to the solution reduces the required number of solver iterations until convergence for all solvers, as shown in \Cref{fig:solver_iterations_main} and \Cref{tab:results_main}.
However, the effectiveness depends on the solver type.
Due to using line search, CG is more sensitive to the search direction rather than the distance to the solution.
It only obtains a $2.1\times$ speed-up on average.
AP and SGD benefit more, with average speed-ups of $18.9\times$ and $5.1\times$, respectively.

\begin{figure}[ht]
    \centering
    \includegraphics[width=6in]{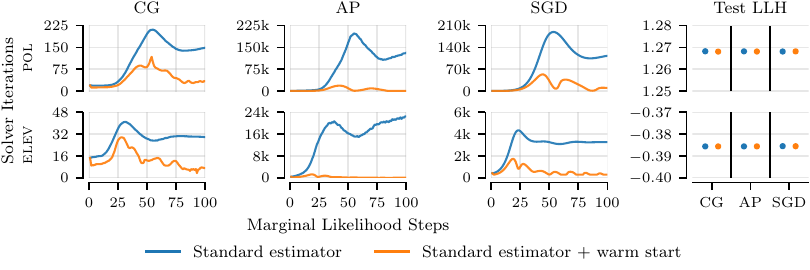}
    \caption{Required number of linear system solver iterations to reach the tolerance $\tau = 0.01$ during marginal likelihood optimisation on the \textsc{pol} and \textsc{elevators} datasets. Warm starting with the previous solution reduces the required number of iterations to reach the tolerance without sacrificing predictive performance.}
    \label{fig:solver_iterations_main}
\end{figure}

\subsection{Does Warm Starting Introduce Bias?}
A potential concern when warm starting is that the latter introduces bias into the optimisation trajectory because the linear system targets are not resampled throughout optimisation.
Although individual gradient estimates are unbiased, estimates are correlated along the optimisation trajectory.
In fact, after fixing the targets, gradients become deterministic and it is unclear whether the induced optimum converges to the true optimum.
The concern might be likened to how pointwise convergence of integrable functions does not always imply convergence of the integrals of those functions, potentially biasing the optima of the limit of the integrals.
Fortunately, one can show that the marginal likelihood at the optimum implied by these gradients will converge in probability to the marginal likelihood of the true optimum.
\begin{proposition}
    \label{thm:bound-on-biased-optimum}
    (informal) Under reasonable assumptions, the marginal likelihood $\c{L}$ of the hyperparameters obtained by maximising the objective implied by the warm-started gradients $\tilde{\v{\theta}}^*$ will converge in probability to the marginal likelihood of a true maximum $\v{\theta}^*$,
    \begin{equation}
    \c{L}(\tilde{\v{\theta}}^*) \overset{p}{\to} \c{L}(\v{\theta}^*) \quad \text{as} \quad s \to \infty.
    \end{equation}
\end{proposition}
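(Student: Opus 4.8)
The plan is to exploit the fact that fixing the probe vectors turns the stochastic gradient into a deterministic vector field $\tilde{\v g}_s$ which, as $s\to\infty$, converges to $\nabla\c L$ not merely pointwise but \emph{uniformly} on the region where optimisation takes place; the uniform convergence is then transferred first to the stationarity of $\tilde{\v\theta}^*$ and finally to the value $\c L(\tilde{\v\theta}^*)$. Conditionally on the fixed draws of the $s$ probe vectors (or prior samples $\v\xi_j$), the map $\v\theta\mapsto\tilde{\v g}_s(\v\theta)$ is a smooth deterministic field, and $\tilde{\v\theta}^*$ is the stationary point to which the outer-loop optimiser is driven, so $\tilde{\v g}_s(\tilde{\v\theta}^*)=\v 0$. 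By unbiasedness of Hutchinson's estimator (respectively the pathwise estimator established earlier in this chapter), each coordinate of $\tilde{\v g}_s-\nabla\c L$ is an average of $s$ i.i.d.\ mean-zero random functions of $\v\theta$. The ``reasonable assumptions'' are: the hyperparameters remain in a compact set $\Theta$ on which the eigenvalues of $\m H_{\v\theta}$ are bounded above and away from zero (which follows from the usual box constraints on length scales and noise, or from coercivity of $-\c L$), and $\c L$ is smooth with isolated stationary points in $\Theta$ (true for real-analytic $\c L$, as for the Matérn-\nicefrac{3}{2} kernel used here).

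\textbf{Uniform law of large numbers and near-stationarity.} On the compact set $\Theta$, the random summands defining $\tilde{\v g}_s$ and their $\v\theta$-derivatives are uniformly bounded and Lipschitz in $\v\theta$ (smoothness of $\m H_{\v\theta}$ together with the spectral bounds), so a uniform law of large numbers — pointwise LLN plus stochastic equicontinuity, or a direct covering argument using the Lipschitz constants — gives
\[ \sup_{\v\theta\in\Theta}\norm{\tilde{\v g}_s(\v\theta)-\nabla\c L(\v\theta)}\overset{p}{\to}0\qquad\text{as }s\to\infty. \]
This is exactly the upgrade from pointwise to uniform convergence that the integral analogy in the main text alludes to. Since $\tilde{\v g}_s(\tilde{\v\theta}^*)=\v 0$, it follows that
\[ \norm{\nabla\c L(\tilde{\v\theta}^*)}=\norm{\nabla\c L(\tilde{\v\theta}^*)-\tilde{\v g}_s(\tilde{\v\theta}^*)}\le\sup_{\v\theta\in\Theta}\norm{\nabla\c L(\v\theta)-\tilde{\v g}_s(\v\theta)}\overset{p}{\to}0, \]
so with probability tending to one $\tilde{\v\theta}^*$ is an arbitrarily good approximate stationary point of the true marginal likelihood.

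\textbf{Transfer to the objective value.} It remains to deduce $\c L(\tilde{\v\theta}^*)\overset{p}{\to}\c L(\v\theta^*)$. Because the stationary points of $\c L$ in $\Theta$ are isolated, an approximate stationary point must lie in a shrinking neighbourhood of one of them; near each stationary point $\v\theta^*$ a gradient-domination (\L{}ojasiewicz-type) inequality $\abs{\c L(\v\theta)-\c L(\v\theta^*)}\le C\norm{\nabla\c L(\v\theta)}^{\gamma}$ then bounds the value gap by the previous display. To pin down \emph{which} stationary point, note that the perturbed iterates follow a field uniformly close to $\nabla\c L$ from the same initialisation, so by a Grönwall estimate they shadow the exact gradient-ascent trajectory over any finite horizon, and the \L{}ojasiewicz inequality near the limiting maximum prevents them from escaping its basin afterwards; hence $\v\theta^*$ is the same (generically local) maximum that exact marginal likelihood optimisation would reach. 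Chaining the three steps yields $\abs{\c L(\tilde{\v\theta}^*)-\c L(\v\theta^*)}\overset{p}{\to}0$.

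\textbf{Main obstacle.} The crux is the transfer step: near-stationarity of $\tilde{\v\theta}^*$ does not by itself imply a near-optimal value of the non-convex $\c L$, so one must control the geometry of $\c L$ around its stationary set — either by isolated stationary points plus continuity, or quantitatively via a \L{}ojasiewicz/Polyak inequality that simultaneously bounds the value gap and confines the perturbed trajectory. A secondary, more routine technicality is rigorously justifying the compact-confinement assumption and the uniform LLN, the latter reducing to establishing Lipschitz bounds on the random summands over $\Theta$.
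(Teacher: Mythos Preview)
The dissertation does not actually contain a proof of this proposition: immediately after stating it, the text defers with ``See \citet[Appendix A]{lin2024improving} for a formal proof and further details,'' so there is no in-paper argument to compare against directly.

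Evaluated on its own terms, your sketch is a reasonable route and you correctly identify the transfer from near-stationarity to near-optimal value of a non-convex $\c L$ as the crux. Two places deserve sharpening. First, the phrase ``objective implied by the warm-started gradients'' in the statement is loose: for fixed probes $\v z_j$, the Hutchinson field $\v\theta\mapsto\frac{1}{s}\sum_j\v z_j\T\m H_{\v\theta}\inv(\partial\m H_{\v\theta}/\partial\theta_i)\v z_j$ is in general \emph{not} conservative (the mixed partials agree only when the matrices $\m H_{\v\theta}\inv\partial_i\m H_{\v\theta}$ commute), so there is no genuine surrogate objective being maximised, only a vector field whose zeros the outer loop seeks. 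Your argument already works at the level of the field and is therefore unaffected, but the premise $\tilde{\v g}_s(\tilde{\v\theta}^*)=\v 0$ then requires its own justification rather than following from ``$\tilde{\v\theta}^*$ maximises something.'' Second, the two-stage picture of Grönwall shadowing over a finite horizon followed by \L{}ojasiewicz trapping is the right idea, but making it rigorous needs the uniform gradient error to already be small enough \emph{before} the exact trajectory enters the trapping neighbourhood, which ties the required $s$ quantitatively to the transient length; this is precisely where the unspecified ``reasonable assumptions'' do the real work.
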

See \citet[Appendix A]{lin2024improving} for a formal proof and further details.
In practice, a small number of samples seems to be sufficient.

\begin{figure}[ht]
\centering
\includegraphics[width=6in]{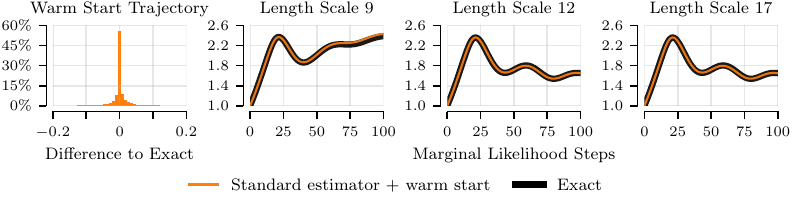}
\caption{Across marginal likelihood steps and datasets, warm starting results in hyperparameter trajectories which barely differ from exact optimisation, as shown by the histogram (left). On the length scales from \Cref{fig:pathwise_exact_prior}, warm starting matches exact optimisation (right).}
\label{fig:warm_start_trajectory}
\end{figure}

\subsection{Warm Starting the Pathwise Estimator}
One advantage of the pathwise estimator from \Cref{sec:pathwise_estimator} is the reduced RKHS distance between the origin and the solution of the quadratic objective.
However, when warm starting, the inner-loop solver no longer initialises at the origin.
Therefore, one may be concerned that we lose this advantage, but this is not the case empirically.
As shown in \Cref{tab:results_main}, combining both techniques further accelerates AP and SGD, reaching $72.1\times$ and $7.2\times$ average speed-ups across our datasets relative to the standard estimator without warm starting.

\begin{table}[h]
\caption{Test log-likelihoods, total training times, and average speed-up among datasets for CG, AP, and SGD after $100$ outer-loop marginal likelihood steps using Adam with a learning rate of $0.1$. We consider five UCI regression datasets with $n<50$k, which allows us to solve to tolerance, and report the mean over $10$ data splits. All hyperparameters are initialised at $1$.}
\label{tab:results_main}
\vspace{0.2cm}
\centering
\small
\setlength{\tabcolsep}{4pt}
\begin{tabular}{l c c | c c c c c | c c c c c | r}
\toprule
& \scriptsize{path} & \scriptsize{warm} & \multicolumn{5}{c|}{Test Log-Likelihood} & \multicolumn{5}{c|}{Total Time (min)} & \multicolumn{1}{c}{Average} \\
& \scriptsize{wise} & \scriptsize{start} & \textsc{pol} & \textsc{elev} & \textsc{bike} & \textsc{prot} & \textsc{kegg} & \textsc{pol} & \textsc{elev} & \textsc{bike} & \textsc{prot} & \textsc{kegg} & \multicolumn{1}{c}{Speed-Up} \\
\midrule
\multirow{4}{*}{\rotatebox[origin=c]{90}{CG}} & & & 1.27 & -0.39 & 2.15 & -0.59 & 1.08 & 4.83 & 1.58 & 5.08 & 29.9 & 28.0 & --- \\
 & \checkmark & & 1.27 & -0.39 & 2.07 & -0.62 & 1.08 & 3.96 & 1.49 & 4.41 & 20.0 & 26.4 & \textbf{1.2} $\times$ \\
 & & \checkmark & 1.27 & -0.39 & 2.15 & -0.59 & 1.08 & 2.28 & 1.03 & 2.74 & 11.5 & 12.8 & \textbf{2.1} $\times$ \\
 & \checkmark & \checkmark & 1.27 & -0.39 & 2.06 & -0.62 & 1.08 & 2.47 & 1.00 & 3.07 & 13.7 & 13.0 & \textbf{1.9} $\times$ \\
\midrule
\multirow{4}{*}{\rotatebox[origin=c]{90}{AP}} & & & 1.27 & -0.39 & 2.15 & -0.59 & --- & 493. & 77.8 & 302. & 131. & > 24 h & --- \\
 & \checkmark & & 1.27 & -0.39 & 2.07 & -0.62 & 1.08 & 27.9 & 1.67 & 19.9 & 16.4 & 211. & > \textbf{5.4} $\times$ \\
 & & \checkmark & 1.27 & -0.39 & 2.15 & -0.59 & 1.08 & 44.0 & 36.4 & 35.1 & 55.8 & 491. & > \textbf{18.9} $\times$ \\
 & \checkmark & \checkmark & 1.27 & -0.39 & 2.06 & -0.62 & 1.08 & 3.90 & 1.21 & 5.40 & 12.3 & 14.0 & > \textbf{72.1} $\times$ \\
\midrule
\multirow{4}{*}{\rotatebox[origin=c]{90}{SGD}} & & & 1.27 & -0.39 & 2.15 & -0.59 & 1.08 & 139. & 5.54 & 412. & 75.2 & 620. & --- \\
 & \checkmark & & 1.27 & -0.39 & 2.07 & -0.63 & 1.08 & 73.6 & 4.58 & 156. & 24.0 & 412. & \textbf{2.1} $\times$ \\
 & & \checkmark & 1.27 & -0.39 & 2.15 & -0.59 & 1.08 & 26.5 & 1.22 & 74.3 & 11.2 & 168. & \textbf{5.1} $\times$ \\
 & \checkmark & \checkmark & 1.27 & -0.39 & 2.06 & -0.62 & 1.07 & 17.9 & 1.14 & 64.2 & 11.9 & 58.7 & \textbf{7.2} $\times$ \\
\bottomrule
\end{tabular}
\end{table}

When warm starting, the right-hand sides of the linear system must not be resampled.
In this case, $\v{f}_\m{X}$ and $\v{\eps}$ are sampled once and fixed afterwards for each $\v{\xi}_j$.
However, $\v{f}_\m{X}$ depends on kernel hyperparameters $\v{\vartheta}$ and $\v{\eps}$ depends on noise scale $\sigma$, which change during each outer-loop step.
Therefore, what does it mean to sample $\v{f}_\m{X}$ and $\v{\eps}$ once and keep them fixed afterwards?
For $\v{f}_\m{X}$, we keep the parameters of the random features fixed.
For $\v{\eps}$, we apply the reparametrisation $\v{\eps} = \sigma \v{w}$, where $\v{w} \~[N](\v{0}, \m{I})$ is sampled once and fixed afterwards, such that $\v{\eps}$ becomes deterministic.
This corresponds to a particular instance of a prior sample, although the distribution of the sample can change due to changes in the hyperparameters.
In each outer-loop step, the random features are recomputed using the fixed random feature parameters and the updated kernel hyperparameters, and the prior function sample is then evaluated at the training data using the updated random features.
Both of these operations take $\mathcal{O}(nm)$ time and are efficient as long as the number of random features $m$ is reasonable.

\section{Solving Linear Systems on a Limited Compute Budget}
\label{sec:compute_budget}
Our experiments so far have only considered relatively small datasets with $n {\,<\,} 50$k, such that inner-loop solvers can reach the tolerance in a reasonable amount of time.
However, on large datasets, reaching a low relative residual norm can become computationally infeasible.
Instead, linear system solvers are commonly given a limited compute budget.
\citet{gardner18} limit the number of CG iterations to $20$, \citet{wu2024large} use $11$ epochs of AP, \citet{antoran2023sampling} run SGD for $50$k iterations, and \citet{lin2023sampling,lin2024stochastic} run SGD for $100$k iterations.
While effective for managing computational costs, it is not well understood how early stopping affects different solvers and marginal likelihood optimisation.
Furthermore, it is also unclear whether a certain tolerance is required to obtain good predictions. 

\begin{figure}[t]
    \centering
    \includegraphics[width=6in]{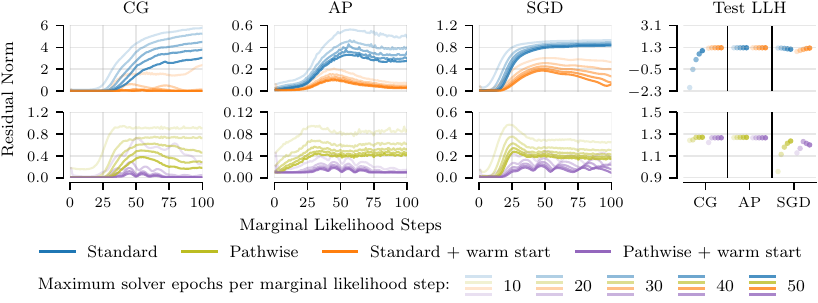}
    \caption{Relative residual norms of the probe vector linear systems at each marginal likelihood step on the \textsc{pol} dataset when solving until the tolerance or a maximum number of solver epochs is reached. Increasing the compute budget generally reduces the residual norm. Given the same compute budget, the pathwise estimator reaches lower residual norms than the standard estimator. Adding warm starts further reduces the residual norm for both estimators. However, the final test log-likelihood does not always match the residual norm. Surprisingly, good predictive performance can be obtained even if the residual norm is much higher than the tolerance $\tau = 0.01$.}
    \label{fig:residual_norm_main}
\end{figure}

\subsection{The Effects of Early Stopping}
We repeat the experiments from \Cref{tab:results_main} but introduce limited compute budgets: $10$, $20$, $30$, $40$ or $50$ solver epochs, where one epoch refers to computing each value in $\m{H}_{\v{\theta}}$ once.
This means that, for CG, one epoch corresponds to one iteration, whereas, for AP and SGD, it depends on the block or batch size, respectively.
Since kernel function evaluations dominate the computational costs of linear system solvers, this results in similar time budgets across methods while preventing compute wastage as a result of time-based stopping.
In this setting, linear system solvers terminate upon either reaching the relative residual norm tolerance or when the compute budget is exhausted, whichever occurs first.

In \Cref{fig:residual_norm_main}, we illustrate the relative residual norms reached for each compute budget on the \textsc{POL} dataset.
In general, the residual norms increase as $\m{H}_{\v{\theta}}$ becomes more ill-conditioned during optimisation, and as the compute budget is decreased.
The increase in residual norms is much larger for CG than the other solvers, which is consistent with previous reports of CG not being amenable to early stopping \citep{lin2023sampling}.
AP seems to behave slightly better than SGD under a limited compute budget.
Both the pathwise estimator and warm starting combine well with early stopping, reaching lower residual norms when using a budget of 10 solver epochs than the standard estimator without warm starting using a budget of 50 epochs. 

In terms of predictive performance, we see that CG with the standard estimator and no warm starting suffers the most from early stopping.
Changing to the pathwise estimator and warm starting recovers good performance most of the time.
SGD also shows some sensitivity to early stopping, but there seems to be a stronger correlation between invested compute and final performance.
Surprisingly, AP generally achieves good predictive performance even on the smallest compute budget, despite not reaching the tolerance of $\tau = 0.01$.
Overall, the relationship between reaching a low residual norm and obtaining good predictive performance seems to be weak.
This is an unexpected yet interesting observation, and future research should investigate the suitability of the relative residual norm as a metric to determine solver convergence.

\subsection{Demonstration on Large Datasets}
After analysing early stopping on small datasets, we now turn to evaluation on larger UCI datasets $391$k $<$ $n$ $<$ $1.8$M \citep{Dua2019UCI}, where solving until reaching the tolerance becomes computationally infeasible.
Therefore, we introduce a compute budget of $10$ solver epochs per marginal likelihood step.
Hyperparameters are initialised with the heuristic from the previous chapters \citep{lin2023sampling} and optimised using a learning rate of $0.03$ for $30$ Adam steps ($15$ for \textsc{houseelectric} due to high computational costs).
We use the pathwise estimator because it accelerates solver convergence (see \Cref{sec:pathwise_estimator}), and because it enables efficient tracking of predictive performance during optimisation.

To enforce positive value constraints during hyperparameter optimisation, we reparametrise each hyperparameter $\theta_i \in \mathbb{R}_{>0}$ as $\theta_i = \log(1 + \exp (\nu_k))$ and apply outer-loop optimiser steps to $\nu_k \in \mathbb{R}$, to facilitate unconstrained optimisation.
Additionally, to improve numerical stability, the relative residual norm tolerance is implemented by solving the rescaled system $\m{H}_{\v{\theta}}\, \v{\tilde{u}} = \v{\tilde{b}}$, where $\v{\tilde{b}} = \v{b} / (\norm{\v{b}}_2 + \epsilon)$, until $\norm{\bm{\tilde{r}}}_2 = \Vert\v{\tilde{b}} - \m{H}_{\v{\theta}} \v{\tilde{u}}\Vert_2 \leq \tau = 0.01$ and then returning $\v{u} = (\norm{\v{b}}_2 + \epsilon) \, \v{\tilde{u}}$, where $\epsilon$ is set to a small constant value to prevent division by zero.
Furthermore, since we are solving batches of systems of linear equations of the form $\m{H}_{\v{\theta}} \, \left[ \, \v{v}_{\v{y}}, \v{v}_1, \dots, \v{v}_s \, \right] = \left[ \, \v{y}, \v{z}_1, \dots, \v{z}_s \, \right]$, we track the residuals of each individual system and calculate separate residual norms for the mean and for the probe vectors, where the residual norm for the mean $\norm{\v{r}_{\v{y}}}_2$ corresponds to the system $\m{H}_{\v{\theta}} \, \v{v}_{\v{y}} = \v{y}$ and the residual norm for the probe vectors $\norm{\v{r}_{\v{z}}}_2$ is defined as the arithmetic average over residual norms corresponding to the systems $\m{H}_{\v{\theta}} \, \left[ \, \v{v}_1, \dots, \v{v}_s \, \right] = \left[ \, \v{z}_1, \dots, \v{z}_s \, \right]$.
Both residual norms must reach the tolerance $\tau$ to satisfy the termination criterion.
We use separate residual norms because $\norm{\v{r}_{\v{y}}}_2$ typically converges faster than $\norm{\v{r}_{\v{z}}}_2$, such that an average other all systems tends to dilute the latter.

For CG, we use a pivoted Cholesky preconditioner of rank $100$ in all experiments, following previous work \citep{WangPGT2019exactgp}.
For AP, we use a block size of $b = 1000$ for all datasets, except \textsc{protein} and \textsc{keggdirected}, where we use $b = 2000$ instead.
Following \citet{wu2024large}, we cache the Cholesky factorisation of every block and select the block with the largest residual norm during each AP iteration.
For SGD, we estimate the current residual by updating it sparsely whenever we compute the gradient on a batch of data, leveraging the property that the negative gradient is equal to the residual.
In practice, we find that this estimates an approximate upper bound on the true residual, which becomes fairly accurate after a few iterations.
Furthermore, we use a batch size of $b = 500$, momentum of $\rho = 0.9$, and no Polyak averaging, because averaging is not strictly necessary \citep{lin2024stochastic} and would interfere with our residual estimation heuristic.
We use learning rates of $30$, $20$, $30$, $20$, and $20$ for the \textsc{pol}, \textsc{elevators}, \textsc{bike}, \textsc{keggdirected} and \textsc{protein} datasets, respectively, picking the largest learning rate from a grid $[5, 10, 20, 30, 50, 60, 70, 80, 90, 100]$ that does not cause the inner linear system solver to diverge on the very first outer marginal likelihood loop.
For the larger datasets, we use learning rates of $10$, $50$, and $50$ for \textsc{buzz}, \textsc{song} and \textsc{houseelectric}, respectively, picking half of the largest learning rate as above.
We find that the larger datasets are more sensitive to diverging when the hyperparameters change, and therefore we choose half of the largest learning rate possible at initialisation.
All solvers are initialised either at zero (no warm start) or at the previous solution (warm start).

\begin{figure}[t]
    \centering
    \includegraphics[width=6in]{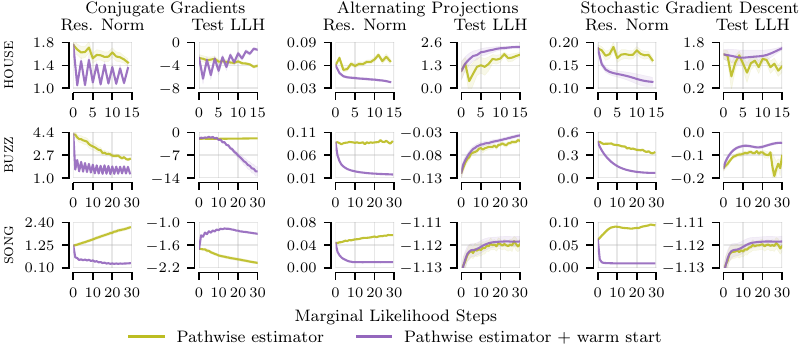}
    \caption{Relative residual norms and test log-likelihoods during marginal likelihood optimisation on large datasets using the pathwise estimator. Warm starting allows solver progress to accumulate over multiple marginal likelihood steps, leading to decreasing residual norms. Without warm starting, residual norms tend to remain similar or increase throughout marginal likelihood steps. Despite reaching significantly lower residual norms, the predictive performance does not always improve, akin to \Cref{fig:residual_norm_main}.}
    \label{fig:large_datasets_main}
\end{figure}

\Cref{fig:large_datasets_main} visualises the evolution of the relative residual norm of the probe vector linear systems and the predictive test log-likelihood during marginal likelihood optimisation.
For all solvers, warm starting leads to lower residual norms throughout outer-loop steps.
This suggests a synergistic behaviour between early stopping and warm starting: the latter allows solver progress to accumulate across marginal likelihood steps, which can be interpreted as amortising the inner-loop linear system solve over multiple outer-loop steps.
Despite the lower residual norm, CG is brittle under early stopping, obtaining significantly worse performance than AP and SGD on \textsc{buzz} and \textsc{houseelectric}. 
AP and SGD seem to be more robust to early stopping.
However, lower residual norms do not always translate to improved predictive performance.
Furthermore, we find that SGD can suffer from the fact that the optimal learning rate changes as the hyperparameters change.

\section{Discussion}
\label{sec:conclusion}
Building on a hierarchical view of marginal likelihood optimisation, this chapter consolidated several iterative GP techniques into a common framework, analysing them and showing their applicability across different linear system solvers.
Overall, these provided speed-ups of up to $72\times$ when solving until a specified tolerance is reached, and decreased the average relative residual norm by up to $7\times$ under a limited compute budget.
Additionally, our analyses led to the following findings:
Firstly, the pathwise gradient estimator accelerates linear system solvers by moving solutions closer to the origin, and also provides amortised predictions as an added benefit by turning probe vectors into posterior samples via pathwise conditioning.
Secondly, warm starting solvers at previous solutions during marginal likelihood optimisation reduces the number of solver iterations to tolerance at the cost of introducing negligible bias into the optimisation trajectory.
Furthermore, warm starting combines well with pathwise gradient estimation.
Finally, stopping iterative linear system solvers after exhausting a limited compute budget generally increases the relative residual norm.
However, when paired with warm starting, solver progress accumulates, amortising inner-loop linear system solves over multiple outer-loop steps.
Nonetheless, we also observed that low relative residual norms are not always necessary to obtain good predictive performance, which presents an interesting avenue for future research.

\chapter[Scalable Gaussian Processes with Latent Kronecker Structure]{Scalable Gaussian Processes\\with Latent Kronecker Structure}
\label{chap:lkgp}

\ifpdf
    \graphicspath{{Chapter6/Figs/Raster/}{Chapter6/Figs/PDF/}{Chapter6/Figs/}}
\else
    \graphicspath{{Chapter6/Figs/Vector/}{Chapter6/Figs/}}
\fi

After proposing generic improvements for iterative linear system solvers in the previous chapter, this chapter shifts the focus to the effective combination of iterative linear system solvers with Kronecker product-structured kernel matrices.
The latter enable scalable inference via factorised matrix decompositions, but usually require fully gridded data from a Cartesian product space to be applicable.
To lift this limitation, we propose \emph{latent} Kronecker structure, expressing the covariance matrix of observed values as the projection of a latent Kronecker product.
In combination with iterative linear system solvers and pathwise conditioning, this substantially reduces computational time and memory requirements compared to standard iterative methods without latent Kronecker structure.
Empirically, we demonstrate our method on real-world datasets with up to five million examples.

This chapter includes content which is adapted from the following publications:
\begin{itemize}
    \item J. A. Lin, S. Ament, M. Balandat, and E. Bakshy. Scaling Gaussian Processes for Learning Curve Prediction via Latent Kronecker Structure. In \emph{NeurIPS Bayesian Decision-making and Uncertainty Workshop}, 2024.
    \item J. A. Lin, S. Ament, M. Balandat, D. Eriksson, J. M. Hernández-Lobato, and E. Bakshy. Scalable Gaussian Processes with Latent Kronecker Structure. In \emph{International Conference on Machine Learning}, 2025. 
\end{itemize}

My contributions to these projects consist of developing the ideas, implementing the source code, conducting the experiments, deriving the theoretical results, and contributing to writing the manuscripts.

\section{Introduction}
Gaussian processes (GPs) are probabilistic models prized for their flexibility, data efficiency, and well-calibrated uncertainty estimates.
They play a key role in many applications such as Bayesian optimisation \citep{garnett2023bobook}, reinforcement learning \citep{PILCO}, and active learning \citep{riis2022bayesian}.
However, exact GPs are notoriously challenging to scale to large numbers of training examples $n$.
This primarily stems from having to solve an $n \times n$ linear system involving the kernel matrix to compute the marginal likelihood and the posterior, which has $\c{O}(n^3)$ time complexity using direct methods.

To address these scalability challenges, a plethora of approaches have been proposed, which usually fall into two main categories.
\emph{Sparse} GP approaches reduce the size of the underlying linear system by introducing a set of \emph{inducing points} to approximate the full GP; they include conventional \citep{candela2005} and variational \citep{titsias09,titsias2009report,hensman13} formulations. 
\emph{Iterative} methods \citep{gardner18,WangPGT2019exactgp} employ solvers such as the linear conjugate gradient method, often leveraging hardware parallelism and structured kernel matrices.

Kronecker product structure permits efficient matrix-vector multiplication (MVM).
This structure arises when a product kernel is evaluated on data which can be expressed as a Cartesian product.
An example of this is spatio-temporal data, as found in climate science for instance, where a quantity of interest, such as temperature or humidity, varies across space and is measured at regular time intervals.
In particular, with observations collected at $p$ locations and $q$ times, resulting in $pq$ data points in total, the kernel matrix generated by evaluating a product kernel on this data can be expressed as the Kronecker product of two smaller $p \times p$ and $q \times q$ matrices.
This allows MVM in $\c{O}(p^2q + pq^2)$ instead of $\c{O}(p^2q^2)$ time and $\c{O}(p^2+q^2)$ instead of $\c{O}(p^2q^2)$ space, such that iterative methods require substantially less time and memory. 
However, in real-world data, this structure frequently breaks due to missing observations, typically requiring a reversion to generic sparse or iterative methods that either do not take advantage of special structure or entail approximations to the matrix.

In this chapter, we propose \emph{latent Kronecker structure}, which enables efficient inference despite missing values by representing the joint covariance matrix of observed values as a lazily-evaluated product between projection matrices and a latent Kronecker product.
Combined with iterative methods, which otherwise would not be able to leverage Kronecker structure in the presence of missing values, this reduces the asymptotic time complexity from $\c{O}(p^2 q^2)$ to $\c{O}(p^2q + pq^2)$, and space complexity from $\c{O}(p^2 q^2)$ to $\c{O}(p^2 + q^2)$.
Importantly, unlike sparse approaches, our method does not introduce approximations of the GP prior.
This avoids common problems of sparse GPs, such as underestimating predictive variances \citep{jankowiak20ppgpr}, and overestimating noise \citep{titsias09,bauer2016understanding} due to limited model expressivity with a constant number of inducing points.

In contrast, our method, Latent Kronecker GP (LKGP), facilitates highly scalable inference of \emph{exact} GP models with product kernels.
We empirically demonstrate the superior computational scalability, speed, and modelling performance of our approach on various real-world applications including inverse dynamics prediction for robotics, learning curve prediction, and climate modelling, using datasets with up to five million data points.

\section{Latent Kronecker Structure}
\label{sec:method}
In this chapter, we consider a Gaussian process $f: \c{X} \to \R$ defined on a Cartesian product space $\c{X} = \c{S} \times \c{T}$.
For illustrative purposes, $\c{S}$ could be associated with spatial dimensions and $\c{T}$ could refer to a time dimension or an integer task index, as is common in multi-task GPs \citep{bonilla2007multi}, but we emphasise that $\c{T}$ is not limited to one-dimensional spaces.
A natural way to model this problem is to define a kernel $k_{\c{X}}$ on the product space $\c{X}$, which results in a joint covariance
\begin{equation}
    \mathrm{Cov}(f(\v{x}), f(\v{x}'))
    = k_{\c{X}}(\v{x}, \v{x}')
    = k_{\c{X}}((\v{s}, \v{t}), (\v{s}', \v{t}')),
\end{equation}
where $\v{x}, \v{x}' \in \c{X}$, $\v{s}, \v{s}' \in \c{S}$, and $\v{t}, \v{t}' \in \c{T}$.
However, this results in scalability issues when performing GP regression.
If our training data consists of $n$ outputs $\{ y_i \}_{i=1}^n = \v{y} \in \R^n$ observed at $p$ spatial locations $\{ \v{s}_j \}_{j=1}^p = \m{S} \subset \c{S}$ and $q$ time steps or tasks $\{ \v{t}_k \}_{k=1}^q = \m{T} \subset \c{T}$, such that $n = pq$, then the joint covariance matrix requires $\c{O}(p^2 q^2)$ space and computing its Cholesky factor takes $\c{O}(p^3 q^3)$ time.

\subsection{Ordinary Kronecker Structure}
A common way to reduce the computational complexity is to introduce product kernels and Kronecker structure \citep{bonilla2007multi,stegle2011efficient,zhe19scalable}.
Defining
\begin{equation}
    k_{\c{X}}(\v{x}, \v{x}') = k_{\c{X}}((\v{s}, \v{t}), (\v{s}', \v{t}')) = k_{\c{S}}(\v{s}, \v{s}') \ k_{\c{T}}(\v{t}, \v{t}'), 
\end{equation}
where $k_{\c{S}}$ only considers spatial locations $\v{s}_j$ and $k_{\c{T}}$ solely acts on time steps or tasks $\v{t}_k$, allowing the joint covariance matrix $\m{K}_{\m{X}\m{X}}$ to factorise as
\begin{equation}
    \underset{n \times n}{\m{K}_{\m{X}\m{X}}}
    = k_{\c{S}}(\m{S}, \m{S}) \otimes k_{\c{T}}(\m{T}, \m{T})
    = \underset{p \times p}{\m{K}_{\m{S} \m{S}}}
    \otimes 
    \underset{q \times q}{\m{K}_{\m{T} \m{T}}},
\end{equation}
which can be exploited by expressing decompositions of $\m{K}_{\m{X}\m{X}}$ in terms of decompositions of $\m{K}_{\m{S}\m{S}}$ and $\m{K}_{\m{T}\m{T}}$ instead.
This reduces the asymptotic time complexity to $\c{O}(p^3 + q^3)$ and space complexity to $\c{O}(p^2 + q^2)$ (see \Cref{sec:structured_kernel_matrices} for details).

However, the joint covariance matrix $\m{K}_{\m{X}\m{X}}$ only exhibits the Kronecker structure if observations $y_i$ are available for each time step or task $\v{t}_k$ at every spatial location $\v{s}_j$, and this is often not the case.
For example, suppose we are considering temperatures $y_i$ on different days $\v{t}_k$ measured by various weather stations at locations $\v{s}_j$.
If a single weather station does not record the temperature on any given day, then the resulting observations are no longer fully gridded, and thus ordinary Kronecker methods cannot be applied.
To clarify, these \emph{missing values} refer to missing output observations which correspond to inputs from a Cartesian product space $\c{X} = \c{S} \times \c{T}$.
They do not refer to \emph{missing features} in the input data, and imputing missing values is not the main goal of our contribution.
Instead, we focus on improving scalability by proposing a method which facilitates the use of Kronecker products in the presence of missing values.

\subsection{Dealing with Missing Values}
In the context of gridded data with missing values, such that $n < pq$, a key insight is that the joint covariance matrix over observed values is a submatrix of the joint covariance matrix over the whole Cartesian product grid.
Although the former generally does not have Kronecker structure, the latter does. We leverage this \emph{latent} Kronecker structure in the latter matrix by expressing the joint covariance matrix over observed values as
\begin{equation}
    \underset{n \times n}{\m{K}_{\m{X}\m{X}}}
    = \underset{n \times pq}{\m{P}_{}} \; ( \underset{p \times p}{\m{K}_{\m{S}\m{S}}} \otimes \underset{q \times q}{\m{K}_{\m{T}\m{T}}} ) \; \underset{pq \times n}{{\m{P}_{}}\T},
\end{equation}
where the projection matrix $\m{P}$ is constructed from the identity matrix by removing rows which correspond to missing values.
See \Cref{fig:latent_kronecker} for an illustration.
Crucially, this is not an approximation and facilitates inference in the exact GP model.
In practice, we implement projections efficiently without explicitly instantiating or multiplying by $\m{P}$.

\begin{figure}[t]
\centering
\includegraphics[width=6in]{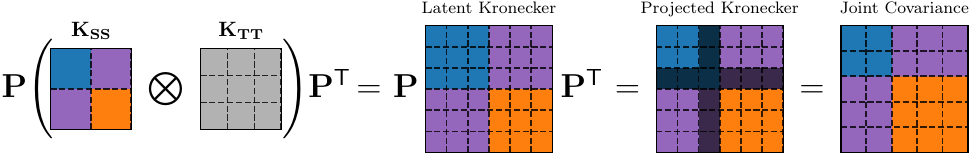}
\caption{The joint covariance matrix over $\{ \textcolor{tab:blue}{(\v{s}_1, \v{t}_1)}, \textcolor{tab:blue}{(\v{s}_1, \v{t}_2)}, \textcolor{tab:orange}{(\v{s}_2, \v{t}_1)}, \textcolor{tab:orange}{(\v{s}_2, \v{t}_2)}, \textcolor{tab:orange}{(\v{s}_2, \v{t}_3)} \}$, namely two out of three time steps at spatial location $\textcolor{tab:blue}{\v{s}_1}$ and three out of three time steps at spatial location $\textcolor{tab:orange}{\v{s}_2}$, can be expressed as the projection of a latent Kronecker product.}
\label{fig:latent_kronecker}
\end{figure}

\subsection{Efficient Inference via Iterative Methods}
As a result of introducing the projections, the eigenvalues and eigenvectors of $\m{K}_{\m{X} \m{X}}$ cannot be expressed in terms of eigenvalues and eigenvectors of $\m{K}_{\m{S} \m{S}}$ and $\m{K}_{\m{T} \m{T}}$ anymore, which prevents the use of Kronecker structure for efficient matrix factorisation.
However, despite the projections, the structure can still be leveraged for fast matrix multiplication.
Therefore, we augment the standard Kronecker product equation,
$(\m{A} \otimes \m{B}) \, \mathrm{vec}(\m{C}) = \mathrm{vec}(\m{B} \m{C} \m{A}\T)$,
with the aforementioned projections, yielding
\begin{equation}
    \m{P} (\m{A} \otimes \m{B}) \m{P}\T \mathrm{vec}(\m{C})
    = \m{P} \mathrm{vec}(\m{B} \mathrm{vec}\inv (\m{P}\T \mathrm{vec}(\m{C})) \m{A}\T).
\end{equation}
In practice, $\mathrm{vec}$ and $\mathrm{vec}\inv$ are implemented as reshaping operations, $\m{P}\T \, \mathrm{vec}(\m{C})$ amounts to zero padding, and left-multiplying by $\m{P}$ is slice indexing, all of which typically do not incur significant computational overheads.
This facilitates efficient inference in the exact GP model via iterative methods, which only rely on 
MVM to compute solutions of linear systems (see \Cref{sec:iterative_methods}).
Leveraging the latent Kronecker structure reduces the asymptotic time complexity of MVM from $\c{O}(n^2)$ to $\c{O}(p^2q + pq^2)$, and the asymptotic space complexity from $\c{O}(n^2)$ to $\c{O}(p^2 + q^2)$, or from $\c{O}(n)$ to $\c{O}(p + q)$ when using lazy kernel evaluations.

\subsection{Posterior Samples via Pathwise Conditioning}
In the context of product kernels and gridded data, \citet{maddox2021highdimout} proposed to draw posterior samples via pathwise conditioning \citep{wilson20,wilson21} to exploit Kronecker structure, reducing the asymptotic time complexity from $\c{O}(p^3q^3)$ to $\c{O}(p^3 + q^3)$.
In our product space notation, the pathwise conditioning equation can be written as
\begin{equation}
    f_{({\.}_{\v{s}}, {\.}_{\v{t}}) 
    \given \v{y}} = f_{({\.}_{\v{s}}, {\.}_{\v{t}})}
    + ( \m{K}_{({\.}_{\v{s}})\m{S}} \otimes \m{K}_{({\.}_{\v{t}})\m{T}})
    (\m{K}_{\m{S}\m{S}} \otimes \m{K}_{\m{T} \m{T}} + \sigma^2 \m{I})\inv(\v{y} - (\v{f}_{\m{S} \times \m{T}} + \v\eps)),
\end{equation}
where $f_{({\.}_{\v{s}}, {\.}_{\v{t}})}$ is a GP prior sample, $\v{f}_{\m{S} \times \m{T}}$ is its evaluation at $\m{X} = \m{S} \times \m{T}$, and $\v\eps \sim \c{N}(\v{0}, \sigma^2 \m{I})$.

To support latent Kronecker structure, we introduce projections $\m{P}$ and $\m{P}\T$, such that
\begin{equation}
    f_{({\.}_{\v{s}}, {\.}_{\v{t}}) 
    \given \v{y}} = f_{({\.}_{\v{s}}, {\.}_{\v{t}})}
    + ( \m{K}_{({\.}_{\v{s}})\m{S}} \otimes \m{K}_{({\.}_{\v{t}})\m{T}}) \m{P}\T
    (\m{P} ( \m{K}_{\m{S}\m{S}} \otimes \m{K}_{\m{T} \m{T}} ) \m{P}\T + \sigma^2 \m{I})\inv(\v{y} - (\m{P} \v{f}_{\m{S} \times \m{T}} + \v\eps)),
\end{equation}
resulting in exact samples from the exact GP posterior.
In combination with iterative linear system solvers, latent Kronecker structure can be leveraged for fast matrix multiplication.

\begin{figure}[t]
\centering
\includegraphics[width=6in]{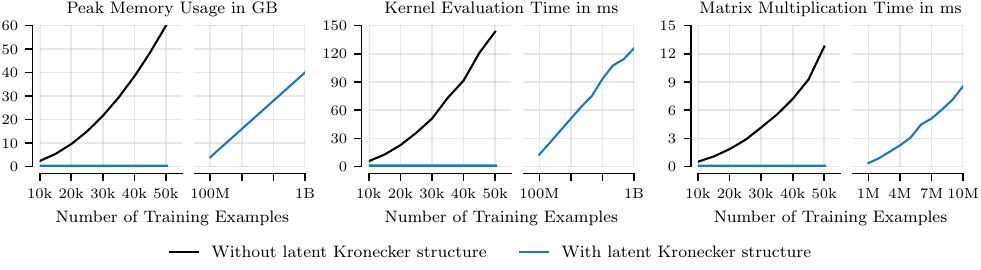}
\caption{Illustration of computational resources used during kernel evaluation and matrix multiplication on ten-dimensional synthetic datasets of different sizes. Without latent Kronecker structure, memory usage escalates quickly and kernel evaluation time dominates matrix multiplication time asymptotically. With latent Kronecker structure, computations can be scaled to several orders of magnitude larger datasets under similar computational resource usage, and matrix multiplication time dominates kernel evaluation time asymptotically. Reported results are the mean over one hundred repetitions using a squared exponential kernel.}
\label{fig:synthetic}
\end{figure}

\subsection{Discussion of Computational Benefits}
The most apparent benefits of using latent Kronecker structure are the improved asymptotic time and space complexities of matrix multiplication, as discussed earlier.
However, there are further, more subtle benefits. In particular, the number of kernel evaluations changes from $\c{O}(n^2)$ evaluations of $k_{\c{X}}$ to $\c{O}(p^2)$ evaluations of $k_{\c{S}}$ plus $\c{O}(q^2)$ evaluations of $k_{\c{T}}$.
This may seem irrelevant if kernel matrices fit into memory, because, in this case, the cost of kernel evaluations is amortised, and the total runtime is dominated by matrix multiplication.
But, if kernel matrices do not fit into memory, their values must be rematerialised during each matrix multiplication, leading to many repeated kernel evaluations which may dominate the total runtime.
Due to much lower memory requirements, latent Kronecker structure also raises the threshold after which kernel values must be rematerialised.
\Cref{fig:synthetic} illustrates memory usage, kernel evaluation times, and matrix multiplication times on ten-dimensional synthetic datasets of various sizes, assuming a balanced factorisation $p = q = \sqrt{n}$.

\subsection{Efficiency of Latent Kronecker Structure}
Latent Kronecker structure can be understood as padding $n \leq pq$ observations to complete a grid of $pq$ values.
Therefore, an important question is: when does the number of missing values $pq - n$ become large enough such that this padding becomes inefficient and dense matrices without structure are preferable?
\begin{proposition}
\label{thm:break_even_point}
    Let $\gamma = 1 - n / pq$ be the missing ratio, that is, the relative amount of padding required to complete a grid with $pq$ values.
    Let $\gamma^*$ be the asymptotic break-even point, that is, the particular missing ratio at which a kernel matrix without factorisation has the same asymptotic performance as a kernel matrix with latent Kronecker structure.
    The asymptotic break-even points for MVM
    time and memory usage are, respectively,
    \begin{equation}
        \gamma_{\mathrm{time}}^* = 1 - \sqrt{\frac{1}{p} + \frac{1}{q}} \quad \text{and} \quad
        \gamma_{\mathrm{mem}}^* = 1 - \sqrt{\frac{1}{p^2} + \frac{1}{q^2}}.
    \end{equation}
\end{proposition}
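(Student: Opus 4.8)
The plan is to compare the asymptotic costs of the two representations of the kernel matrix and solve for the missing ratio $\gamma$ at which they coincide. A dense matrix without any structure acting on $n = (1-\gamma)pq$ points has matrix-vector multiplication (MVM) time complexity $\c{O}(n^2) = \c{O}((1-\gamma)^2 p^2 q^2)$ and space complexity $\c{O}(n^2) = \c{O}((1-\gamma)^2 p^2 q^2)$. The latent Kronecker representation, as established earlier in this chapter, has MVM time complexity $\c{O}(p^2 q + p q^2)$ and space complexity $\c{O}(p^2 + q^2)$, independent of $\gamma$ since the padding always completes the full grid of $pq$ values.

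First I would equate the time complexities: set $(1-\gamma)^2 p^2 q^2 = p^2 q + p q^2$, which upon dividing both sides by $p^2 q^2$ gives $(1-\gamma)^2 = \frac{1}{q} + \frac{1}{p}$. Taking square roots and noting that $\gamma \in [0,1)$, so $1 - \gamma > 0$, yields $1 - \gamma_{\mathrm{time}}^* = \sqrt{\frac{1}{p} + \frac{1}{q}}$, hence $\gamma_{\mathrm{time}}^* = 1 - \sqrt{\frac{1}{p} + \frac{1}{q}}$. For memory, I would repeat the argument with $(1-\gamma)^2 p^2 q^2 = p^2 + q^2$; dividing by $p^2 q^2$ gives $(1-\gamma)^2 = \frac{1}{q^2} + \frac{1}{p^2}$, and taking square roots gives $\gamma_{\mathrm{mem}}^* = 1 - \sqrt{\frac{1}{p^2} + \frac{1}{q^2}}$. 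This matches the claimed formulas exactly.

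I would then briefly note the interpretation: for $\gamma < \gamma^*$ (few missing values, dense grid close to full) the latent Kronecker representation is asymptotically faster and more memory-efficient, whereas for $\gamma > \gamma^*$ the dense representation without structure wins because the padding overhead $pq - n$ dominates. The main obstacle here is not technical difficulty — the derivation is an elementary manipulation — but rather pinning down precisely what "asymptotic performance" means so that the comparison is well-posed: one must be careful that the $\c{O}(\cdot)$ expressions being equated carry matching implied constants, or equivalently that one is comparing leading-order terms with the same units (operations, respectively floating-point numbers stored). Since both sides of each equation are genuine leading-order operation/storage counts for the respective algorithms (dense MVM versus the Kronecker-structured MVM of \Cref{sec:iterative_methods} composed with reshapes, zero-padding, and slicing, all of lower order), this is legitimate, and the square-root step is valid because all quantities involved are positive. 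I would close by remarking that the formula is corroborated empirically in the experiments, as referenced in the chapter introduction.
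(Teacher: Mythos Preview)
Your proposal is correct and follows essentially the same approach as the paper: substitute $n = (1-\gamma)pq$, equate the leading-order costs $n^2$ with $p^2q + pq^2$ (respectively $p^2 + q^2$), and solve the resulting quadratic in $1-\gamma$, selecting the root consistent with $\gamma \in [0,1)$. Your additional remarks on interpretation and on the legitimacy of equating leading-order terms are helpful commentary but not part of the paper's argument.
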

\begin{proofbox}
\begin{proof}
Let $\gamma = 1 - n / pq$ be the missing ratio, with $n, p, q > 0$, $n \leq pq$, and $\gamma \in [0, 1)$.
Alternatively, we can also write $n = (1 - \gamma) pq$.
The asymptotic time complexity of MVM of a $n \times n$ matrix with a vector of $n$ elements is $\c{O}(n^2)$ for a generic matrix and $\c{O}(p^2q + pq^2)$ for a Kronecker product of two matrices of sizes $p \times p$ and $q \times q$, respectively.
To find the asymptotic break-even point, we equate these asymptotic time complexities, write $n$ in terms of $\gamma$, $p$, and $q$, and solve the quadratic equation for $\gamma$,
\begin{equation}
    n^2 = \del{(1 - \gamma_{\mathrm{time}})pq}^2 = p^2q + pq^2
    \implies
    \gamma_{\mathrm{time}}^* = 1 \pm \sqrt{\frac{1}{p} + \frac{1}{q}}.
\end{equation}
For the asymptotic space complexity, we follow the same approach starting with $\c{O}(n^2)$ and $\c{O}(p^2 + q^2)$.
Again, we set these terms equal to each other and solve for $\gamma$,
\begin{equation}
    n^2 = \del{(1 - \gamma_{\mathrm{mem}})pq}^2 = p^2 + q^2 \implies
    \gamma_{\mathrm{mem}}^* = 1 \pm \sqrt{\frac{1}{p^2} + \frac{1}{q^2}}.
\end{equation}
Since $p, q > 0$ and $\gamma \in [0, 1)$, we conclude that the asymptotic break-even points are
\begin{equation}
    \gamma_{\mathrm{time}}^* = 1 - \sqrt{\frac{1}{p} + \frac{1}{q}}
    \quad \text{ and } \quad
    \gamma_{\mathrm{mem}}^* = 1 - \sqrt{\frac{1}{p^2} + \frac{1}{q^2}},
\end{equation}
which gives the claim.
\end{proof}
\end{proofbox}
In the next section, we validate these asymptotic break-even points empirically.

\section{Experiments}
\label{sec:lkgp_experiments}
We conduct three distinct experiments to empirically evaluate LKGPs: inverse dynamics prediction for robotics, learning curve prediction in an automated machine learning setting, and modelling and prediction of missing values in spatio-temporal climate data. 
In the first experiment, we compare LKGP against iterative methods without latent Kronecker structure, and in the second and third experiment, we compare LKGP to various sparse and variational methods.
For all our experiments, we start with a fully gridded dataset and introduce missing values which are withheld during training and used as test data.
This allows us to have ground truth information for missing values.
We compute predictive means and variances of LKGP using 64 posterior samples obtained via pathwise conditioning, akin to \citet{lin2023sampling}.

\subsection{Inverse Dynamics Prediction}
In this experiment, we predict the inverse dynamics of a SARCOS anthropomorphic robot arm with seven degrees of freedom.
In this problem, we wish to learn the mapping of seven joint positions, velocities, and accelerations to their corresponding seven joint torques.
The main goal of this experiment is to compare LKGPs to standard iterative methods, because the former effectively implements the latter using more efficient matrix algebra.
Additionally, we empirically validate the break-even points predicted by \Cref{thm:break_even_point}, at which both methods should require the same amount of time or memory.

For this experiment, we consider the positions, velocities, and accelerations as $\c{S}$ and the seven torques as $\c{T}$. 
We choose $k_{\c{S}}$ to be a squared exponential kernel and $k_{\c{T}}$ to be a full-rank ICM kernel \citep{bonilla2007multi}, demonstrating that LKGP is compatible with discrete kernels.
We select subsets of the training data split with $p = 5000$ joint positions, velocities, and accelerations, and their corresponding $q = 7$ joint torques, and introduce $10$\%, $20$\%, ..., $90$\% missing values uniformly at random, resulting in $n \leq 35k$. 
The value of $p$ is chosen such that the kernel matrix without factorisation fits into GPU memory.
This results in a fairer comparison since it allows both methods to amortise kernel evaluation costs, such that the comparison is essentially between matrix multiplication costs.
Larger $p$ would force regular iterative methods to recompute kernel values, leading to higher overall compute times.
For both methods, observation noise and kernel hyperparameters are initialised with GPyTorch default values \citep{gardner18}, and inferred by using Adam with a learning rate of $0.1$ to maximise the marginal likelihood for $50$ iterations.
Additionally, both methods use conjugate gradients with a relative residual norm tolerance of $0.01$ as linear system solver.

\begin{figure}[t]
\centering
\raisebox{0.2in}{
\begin{tikzpicture}
\robotArmBaseLink[world width=2.5]
\robotArm[draw annotations=false,config={q1=120,q2=-100,q3=-80},geometry={a1=1.5,a2=1.5,a3=1}]{3}
\end{tikzpicture}}
\hspace{0.1in}
\includegraphics[width=4.75in]{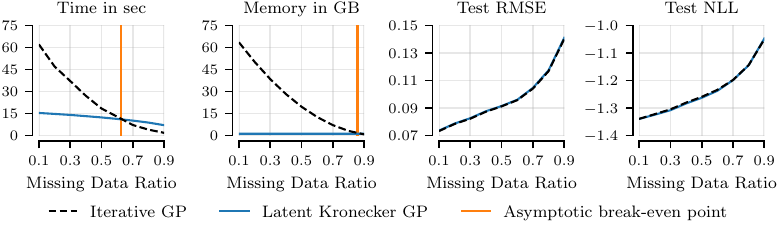}
\caption{Predicting the inverse dynamics of an anthropomorphic robot arm with seven degrees of freedom. Compared to standard iterative methods, leveraging latent Kronecker structure results in significantly lower runtime and memory requirements while maintaining the same predictive performance. The asymptotic break-even point, at which both methods asymptotically require the same amount of time or memory, closely matches the empirical break-even point.
Reported results are the mean over $10$ different random splits of the data.}
\label{fig:sarcos}
\end{figure}

\Cref{fig:sarcos} illustrates the required computational resources and predictive performances for different missing data ratios.
For small missing data ratios, LKGP requires significantly less time and memory.
As the missing ratio increases, the iterative methods eventually become slightly more efficient.
Notably, the empirical break-even points, where time and memory usage is the same for both methods, matches the asymptotic break-even points predicted by \Cref{thm:break_even_point}.
Moreover, the predictive performance of both methods is equivalent across all missing ratios in terms of test root-mean-square-error and test negative log-likelihood.

\subsection{Learning Curve Prediction}
In this experiment, we predict the continuation of partially observed learning curves, given fully observed and other partially observed examples.
We use data from LCBench \citep{ZimLin2021a}, which contains $35$ learning curve datasets, each containing $2000$ learning curves with $52$ steps each, where every step refers to a neural network training epoch.
Each curve within a particular dataset is obtained by training a neural network on the same data but using different hyperparameter configurations.
The main goal of this experiment is to evaluate the performance under a realistic non-uniform pattern of missing values.
In particular, learning curves are observed until a particular time step and missing all remaining values, which simulates neural network hyperparameter optimisation, where learning curve information becomes available as the neural network continues to train.
In this setting, predictions could be leveraged to save computational resources by pruning runs which are predicted to perform poorly \citep{elsken2019nassurvey}. 
We compare our method against SVGP \citep{hensman13}, VNNGP \citep{wu2022variational}, and CaGP \citep{wenger2024computation} (see \Cref{sec:sparse_and_variational} for details).

\begin{figure}[t]
\centering
\includegraphics[width=6in]{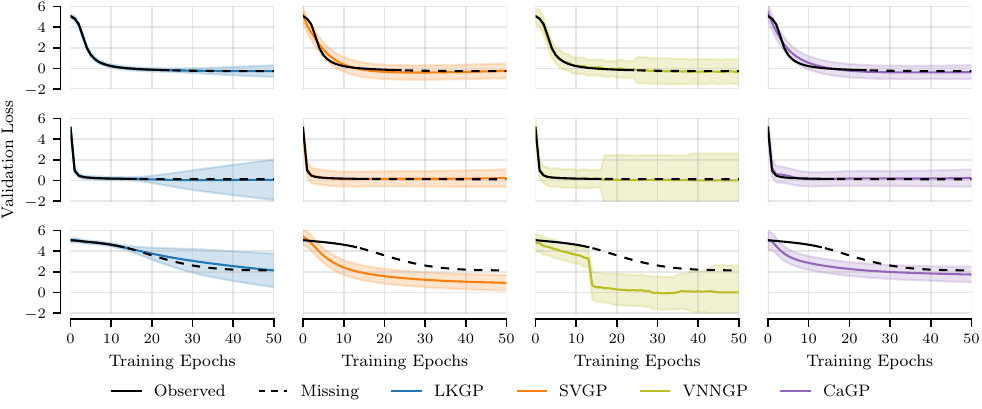}
\caption{Learning curve prediction on the Fashion-MNIST dataset from the LCBench benchmark \citep{ZimLin2021a}. Partially observed learning curves are extrapolated into the future. The predictive mean and two standard deviations of various GP models are visualised. All models are reasonably accurate in the mean predictions, but LKGP produces the most sensible uncertainty estimates, starting with low uncertainty in the observed part of the learning curve and gradually increasing the predicted uncertainty into the missing part. The third row illustrates an outlier with significantly different behaviour than most other learning curves. The sparse methods struggle here because the limited number of inducing points is unlikely to cover such a case. LKGP can adapt because it performs inference in the exact GP.}
\label{fig:lcbench_illustration}
\end{figure}
\begin{table*}[t]
\caption{[Learning Curve Prediction] Predictive performances and total runtimes of learning curve prediction on every fifth LCBench dataset. Compared to the baselines, LKGP produces the best negative log-likelihood on average while also requiring the least time. Reported numbers are the mean over $10$ random seeds. Results of the best model are boldfaced and results of the second best model are underlined.}
\label{tab:lcbench_results_main}
\vspace{0.1in}
\centering
\setlength{\tabcolsep}{3pt}
\small
\begin{adjustbox}{max width=\textwidth}
\begin{tabular}{ l@{\hspace{3.5pt}}l l | c c c c c c c | c}
\toprule
& & Model & APSFailure & MiniBooNE & blood & covertype & higgs & kr-vs-kp & segment & Average Rank \\
\midrule
\multirow{4}{*}{\rotatebox[origin=c]{90}{Train}} & \multirow{4}{*}{\rotatebox[origin=c]{90}{RMSE}} & LKGP & \textbf{1.705} $\pm$ \textbf{0.047} & \textbf{0.047} $\pm$ \textbf{0.001} & \textbf{0.278} $\pm$ \textbf{0.017} & \textbf{0.061} $\pm$ \textbf{0.001} & \textbf{0.039} $\pm$ \textbf{0.001} & \textbf{0.008} $\pm$ \textbf{0.000} & \textbf{0.006} $\pm$ \textbf{0.000} & \textbf{1.171} $\pm$ \textbf{0.560} \\
 & & SVGP & \underline{3.893} $\pm$ \underline{0.149} & 0.201 $\pm$ 0.004 & 0.463 $\pm$ 0.009 & 0.240 $\pm$ 0.003 & 0.200 $\pm$ 0.003 & 0.101 $\pm$ 0.001 & 0.086 $\pm$ 0.001 & 2.771 $\pm$ 0.680 \\
 & & VNNGP & 4.920 $\pm$ 0.194 & \underline{0.113} $\pm$ \underline{0.001} & \underline{0.284} $\pm$ \underline{0.029} & \underline{0.158} $\pm$ \underline{0.003} & \underline{0.130} $\pm$ \underline{0.002} & \underline{0.067} $\pm$ \underline{0.001} & \underline{0.072} $\pm$ \underline{0.001} & \underline{2.314} $\pm$ \underline{0.708} \\
 & & CaGP & 3.972 $\pm$ 0.147 & 0.219 $\pm$ 0.005 & 0.511 $\pm$ 0.011 & 0.241 $\pm$ 0.004 & 0.208 $\pm$ 0.002 & 0.105 $\pm$ 0.002 & 0.091 $\pm$ 0.001 & 3.743 $\pm$ 0.553 \\
\midrule
\multirow{4}{*}{\rotatebox[origin=c]{90}{Test}} & \multirow{4}{*}{\rotatebox[origin=c]{90}{RMSE}} & LKGP & 2.935 $\pm$ 0.150 & 0.335 $\pm$ 0.006 & 0.747 $\pm$ 0.016 & 0.351 $\pm$ 0.008 & 0.394 $\pm$ 0.027 & 0.261 $\pm$ 0.005 & \underline{0.152} $\pm$ \underline{0.003} & 2.600 $\pm$ 0.901 \\
 & & SVGP & \textbf{2.716} $\pm$ \textbf{0.145} & \textbf{0.316} $\pm$ \textbf{0.005} & \textbf{0.540} $\pm$ \textbf{0.013} & \underline{0.294} $\pm$ \underline{0.005} & \textbf{0.285} $\pm$ \textbf{0.005} & \textbf{0.232} $\pm$ \textbf{0.003} & \textbf{0.145} $\pm$ \textbf{0.002} & \textbf{1.657} $\pm$ \textbf{1.068} \\
 & & VNNGP & 3.053 $\pm$ 0.159 & 0.705 $\pm$ 0.013 & 0.915 $\pm$ 0.017 & 0.677 $\pm$ 0.011 & 0.642 $\pm$ 0.009 & 0.591 $\pm$ 0.006 & 0.568 $\pm$ 0.005 & 3.743 $\pm$ 0.731 \\
 & & CaGP & \underline{2.810} $\pm$ \underline{0.154} & \underline{0.325} $\pm$ \underline{0.006} & \underline{0.593} $\pm$ \underline{0.015} & \textbf{0.281} $\pm$ \textbf{0.004} & \underline{0.296} $\pm$ \underline{0.004} & \underline{0.239} $\pm$ \underline{0.004} & 0.153 $\pm$ 0.002 & \underline{2.000} $\pm$ \underline{0.000} \\
\midrule
\multirow{4}{*}{\rotatebox[origin=c]{90}{Train}} & \multirow{4}{*}{\rotatebox[origin=c]{90}{NLL}} & LKGP & \textbf{1.955} $\pm$ \textbf{0.020} & \textbf{-1.573} $\pm$ \textbf{0.022} & \textbf{0.160} $\pm$ \textbf{0.055} & \textbf{-1.311} $\pm$ \textbf{0.017} & \textbf{-1.718} $\pm$ \textbf{0.018} & \textbf{-2.933} $\pm$ \textbf{0.005} & \textbf{-2.943} $\pm$ \textbf{0.005} & \textbf{1.029} $\pm$ \textbf{0.167} \\
 & & SVGP & 3.098 $\pm$ 0.100 & \underline{-0.132} $\pm$ \underline{0.020} & 0.652 $\pm$ 0.021 & \underline{0.025} $\pm$ \underline{0.013} & \underline{-0.150} $\pm$ \underline{0.012} & \underline{-0.800} $\pm$ \underline{0.014} & \underline{-0.977} $\pm$ \underline{0.007} & \underline{2.400} $\pm$ \underline{0.685} \\
 & & VNNGP & 3.005 $\pm$ 0.041 & -0.079 $\pm$ 0.010 & \underline{0.639} $\pm$ \underline{0.045} & 0.104 $\pm$ 0.009 & -0.056 $\pm$ 0.009 & -0.489 $\pm$ 0.008 & -0.456 $\pm$ 0.006 & 3.286 $\pm$ 0.881 \\
 & & CaGP & \underline{2.794} $\pm$ \underline{0.045} & -0.042 $\pm$ 0.022 & 0.798 $\pm$ 0.021 & 0.034 $\pm$ 0.017 & -0.103 $\pm$ 0.012 & -0.718 $\pm$ 0.013 & -0.900 $\pm$ 0.009 & 3.286 $\pm$ 0.564 \\
\midrule
\multirow{4}{*}{\rotatebox[origin=c]{90}{Test}} & \multirow{4}{*}{\rotatebox[origin=c]{90}{NLL}} & LKGP & \underline{2.349} $\pm$ \underline{0.047} & \textbf{0.051} $\pm$ \textbf{0.026} & 1.047 $\pm$ 0.029 & \textbf{-0.057} $\pm$ \textbf{0.019} & \textbf{-0.328} $\pm$ \textbf{0.023} & \textbf{-0.633} $\pm$ \textbf{0.029} & \textbf{-1.214} $\pm$ \textbf{0.020} & \textbf{1.257} $\pm$ \textbf{0.602} \\
 & & SVGP & \textbf{2.327} $\pm$ \textbf{0.065} & \underline{0.319} $\pm$ \underline{0.018} & \textbf{0.781} $\pm$ \textbf{0.020} & 0.199 $\pm$ 0.016 & \underline{0.160} $\pm$ \underline{0.018} & \underline{0.065} $\pm$ \underline{0.032} & \underline{-0.496} $\pm$ \underline{0.014} & \underline{2.543} $\pm$ \underline{1.024} \\
 & & VNNGP & 2.733 $\pm$ 0.031 & 1.046 $\pm$ 0.017 & 1.365 $\pm$ 0.019 & 1.011 $\pm$ 0.016 & 0.891 $\pm$ 0.013 & 0.717 $\pm$ 0.013 & 0.712 $\pm$ 0.009 & 3.200 $\pm$ 1.141 \\
 & & CaGP & 2.451 $\pm$ 0.033 & 0.321 $\pm$ 0.021 & \underline{0.916} $\pm$ \underline{0.023} & \underline{0.142} $\pm$ \underline{0.016} & 0.171 $\pm$ 0.013 & 0.088 $\pm$ 0.031 & -0.463 $\pm$ 0.013 & 3.000 $\pm$ 0.000 \\
\midrule
\multirow{4}{*}{\rotatebox[origin=c]{90}{Time}} & \multirow{4}{*}{\rotatebox[origin=c]{90}{in min}} & LKGP & \textbf{0.371} $\pm$ \textbf{0.008} & \textbf{1.458} $\pm$ \textbf{0.013} & \textbf{0.487} $\pm$ \textbf{0.003} & \textbf{1.728} $\pm$ \textbf{0.010} & \textbf{2.447} $\pm$ \textbf{0.022} & \textbf{1.493} $\pm$ \textbf{0.017} & \textbf{2.012} $\pm$ \textbf{0.018} & \textbf{1.000} $\pm$ \textbf{0.000} \\
 & & SVGP & \underline{6.473} $\pm$ \underline{0.032} & \underline{6.475} $\pm$ \underline{0.032} & \underline{6.479} $\pm$ \underline{0.031} & \underline{6.474} $\pm$ \underline{0.032} & \underline{6.492} $\pm$ \underline{0.033} & \underline{6.500} $\pm$ \underline{0.030} & \underline{6.475} $\pm$ \underline{0.032} & \underline{2.000} $\pm$ \underline{0.000} \\
 & & VNNGP & 26.34 $\pm$ 0.443 & 25.89 $\pm$ 0.161 & 25.78 $\pm$ 0.151 & 25.85 $\pm$ 0.158 & 25.87 $\pm$ 0.181 & 25.85 $\pm$ 0.154 & 26.31 $\pm$ 0.464 & 4.000 $\pm$ 0.000 \\
 & & CaGP & 7.067 $\pm$ 0.028 & 7.036 $\pm$ 0.028 & 7.033 $\pm$ 0.023 & 7.015 $\pm$ 0.020 & 7.046 $\pm$ 0.024 & 7.024 $\pm$ 0.019 & 7.023 $\pm$ 0.020 & 3.000 $\pm$ 0.000 \\
\bottomrule
\end{tabular}
\end{adjustbox}
\end{table*}

For this experiment, we consider hyperparameter configurations as $\c{S}$ and learning curve progression steps as $\c{T}$, such that $p = 2000$, $q = 52$, and thus $n \leq 104$k for each dataset.
We choose both $k_{\c{S}}$ and $k_{\c{T}}$ to be squared exponential kernels.
Out of the $p = 2000$ curves per dataset, $10$\% are provided as fully observed during training and the remaining $90$\% are partially observed. 
The early stopping point is chosen uniformly at random.
For all methods, observation noise and kernel hyperparameters are initialised with GPyTorch default values \citep{gardner18}, and optimised using Adam.
LKGP is trained for $100$ iterations using a learning rate of $0.1$, conjugate gradients with a relative residual norm tolerance of $0.01$ and a pivoted Cholesky preconditioner of rank $100$.
SVGP is trained for $30$ epochs using a learning rate of $0.01$, a batch size of $1000$, and $10$k inducing points, which are initialised at random training data examples.
VNNGP is trained for $1000$ epochs using a learning rate of $0.01$, a batch size of $1000$, inducing points placed at every training example, and $256$ nearest neighbours.
CaGP is trained for $1000$ epochs using a learning rate of $0.1$ and $512$ actions.
See \Cref{fig:lcbench_illustration} for an illustration and qualitative comparison of the GP methods which are considered in this experiment.

\Cref{tab:lcbench_results_main} reports the predictive performance and total runtime for every fifth LCBench dataset and the average rank across all datasets.
On average, SVGP and CaGP achieve better test root-mean-square-error (RMSE), but LKGP produces the best test negative log-likelihood (NLL), providing quantitative evidence for the qualitative observation in \Cref{fig:lcbench_illustration} which suggests that LKGP produces the most sensible uncertainty estimates.
To explain why LKGP performs worse in terms of test RMSE, we suggest that LKGP might be overfitting.
Since missing values are intentionally accumulated at the end of individual learning curves to simulate early stopping, the train (observed) and test (missing) data do not have the same distribution, which makes this setting particularly prone to overfitting.
Our hypothesis is empirically supported by observing that LKGP consistently achieves the best performance on the training data.
The fact that LKGP still achieves the best test NLL is likely due to superior uncertainty quantification of the exact GP model compared to variational approximations, which is a commonly observed phenomenon.
Furthermore, LKGP requires the least amount of time by a large margin, suggesting that LKGP could potentially be scaled to much larger learning curve datasets.

\begin{figure}[t!]
\centering
\includegraphics[width=6in]{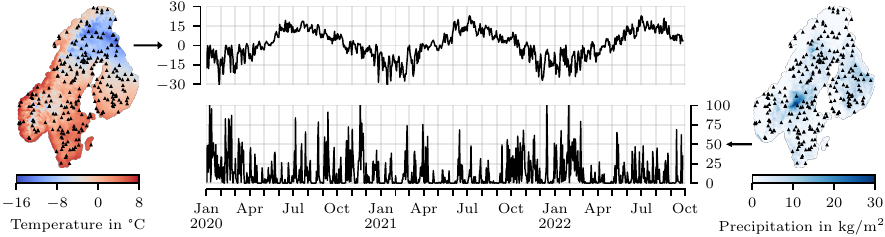}
\caption{Illustration of daily temperature and precipitation data from the Nordic Gridded Climate Dataset \citep{NGCD1,NGCD2}. The heatmaps (left and right) visualise snapshots of a single day and subsampled spatial locations. Every spatial location is associated with its own time series (middle). Temperatures (top) exhibit a seasonal periodic trend. Precipitation (bottom) is noisy but locally correlated.}
\label{fig:ngcd_visual}
\end{figure}

\subsection{Climate Data with Missing Values}
In this experiment, we predict the daily average temperature and precipitation at different spatial locations.
We use the Nordic Gridded Climate Dataset \citep{NGCD1,NGCD2}, which contains observations that are gridded in space and time, namely latitude, longitude and days.
The main goal of this experiment is to demonstrate the scalability of LKGP on large datasets with millions of observations.
Additionally, we investigate how increasing the missing data ratio impacts predictive performances and runtime requirements.
We compare our method to the same sparse and variational methods from the previous experiment.

For this experiment, we consider latitude and longitude jointly as $\c{S}$ and time in days as $\c{T}$.
We select $p = 5000$ spatial locations uniformly at random together with $q = 1000$ days of temperature or precipitation measurements, starting on January 1st, 2020, resulting in two datasets of size $n \leq 5$M.
See \Cref{fig:ngcd_visual} for an illustration.
We choose $k_{\c{S}}$ to be a squared exponential kernel and $k_{\c{T}}$ to be the product of a squared exponential kernel and a periodic kernel to capture seasonal trends.
Missing values are selected uniformly at random with ratios of $10$\%, $20$\%, ..., $50$\%.
For all methods, observation noise and kernel hyperparameters are initialised with GPyTorch default values \citep{gardner18} and optimised using Adam.
LKGP is trained for $100$ iterations using a learning rate of $0.1$, and conjugate gradients with a relative residual norm tolerance of $0.01$ and a pivoted Cholesky preconditioner of rank $100$.
SVGP is trained for $5$ epochs using a learning rate of $0.001$, a batch size of $1000$, and $10$k inducing points, which are initialised at random training data examples.
VNNGP is trained for $50$ epochs using a learning rate of $0.001$, a batch size of $1000$, $500$k inducing points placed at random training data examples, and $256$ nearest neighbours.
CaGP is trained for $50$ epochs using a learning rate of $0.1$ and $256$ actions.

\begin{table*}[ht!]
\caption{[Climate Data with Missing Values] Predictive performances and total runtimes of temperature and precipitation prediction across various missing data ratios. LKGP consistently achieves the best predictive performance and also requires the least time across both datasets and all missing ratios. Reported numbers are the mean over $5$ random seeds. Results of the best model are boldfaced and results of the second best model are underlined.}
\label{tab:ngcd_results}
\vspace{0.1in}
\centering
\setlength{\tabcolsep}{3pt}
\small
\begin{adjustbox}{max width=\textwidth}
\begin{tabular}{ l@{\hspace{3.5pt}}l l | c c c c c | c c c c c}
\toprule
& & & \multicolumn{5}{c|}{Temperature Dataset (with Missing Ratio $10\% - 50\%$)} & \multicolumn{5}{c}{Precipitation Dataset (with Missing Ratio $10\% - 50\%$)} \\
& & Model & $10\%$ & $20\%$ & $30\%$ & $40\%$ & $50\%$ & $10\%$ & $20\%$ & $30\%$ & $40\%$ & $50\%$ \\
\midrule
\multirow{4}{*}{\rotatebox[origin=c]{90}{Train}} & \multirow{4}{*}{\rotatebox[origin=c]{90}{RMSE}} & LKGP & \textbf{0.06} $\pm$ \textbf{0.00} & \textbf{0.06} $\pm$ \textbf{0.00} & \textbf{0.07} $\pm$ \textbf{0.00} & \textbf{0.07} $\pm$ \textbf{0.00} & \textbf{0.07} $\pm$ \textbf{0.00} & \textbf{0.16} $\pm$ \textbf{0.00} & \textbf{0.17} $\pm$ \textbf{0.00} & \textbf{0.17} $\pm$ \textbf{0.00} & \textbf{0.18} $\pm$ \textbf{0.00} & \textbf{0.18} $\pm$ \textbf{0.00} \\
 & & SVGP & 0.21 $\pm$ 0.00 & 0.21 $\pm$ 0.00 & 0.22 $\pm$ 0.00 & 0.22 $\pm$ 0.00 & 0.23 $\pm$ 0.00 & 0.70 $\pm$ 0.00 & 0.71 $\pm$ 0.00 & 0.71 $\pm$ 0.00 & 0.71 $\pm$ 0.00 & 0.72 $\pm$ 0.00 \\
 & & VNNGP & \underline{0.13} $\pm$ \underline{0.00} & \underline{0.13} $\pm$ \underline{0.00} & \underline{0.13} $\pm$ \underline{0.00} & \underline{0.13} $\pm$ \underline{0.00} & \underline{0.13} $\pm$ \underline{0.00} & \underline{0.45} $\pm$ \underline{0.00} & \underline{0.47} $\pm$ \underline{0.00} & \underline{0.49} $\pm$ \underline{0.00} & \underline{0.52} $\pm$ \underline{0.00} & \underline{0.58} $\pm$ \underline{0.00} \\
 & & CaGP & 0.18 $\pm$ 0.00 & 0.19 $\pm$ 0.00 & 0.19 $\pm$ 0.00 & 0.19 $\pm$ 0.00 & 0.19 $\pm$ 0.00 & 0.60 $\pm$ 0.00 & 0.61 $\pm$ 0.00 & 0.61 $\pm$ 0.00 & 0.62 $\pm$ 0.00 & 0.62 $\pm$ 0.00 \\
\midrule
\multirow{4}{*}{\rotatebox[origin=c]{90}{Test}} & \multirow{4}{*}{\rotatebox[origin=c]{90}{RMSE}} & LKGP & \textbf{0.08} $\pm$ \textbf{0.00} & \textbf{0.08} $\pm$ \textbf{0.00} & \textbf{0.08} $\pm$ \textbf{0.00} & \textbf{0.09} $\pm$ \textbf{0.00} & \textbf{0.09} $\pm$ \textbf{0.00} & \textbf{0.24} $\pm$ \textbf{0.00} & \textbf{0.25} $\pm$ \textbf{0.00} & \textbf{0.26} $\pm$ \textbf{0.00} & \textbf{0.27} $\pm$ \textbf{0.00} & \textbf{0.28} $\pm$ \textbf{0.00} \\
 & & SVGP & 0.21 $\pm$ 0.00 & 0.21 $\pm$ 0.00 & 0.22 $\pm$ 0.00 & 0.22 $\pm$ 0.00 & 0.23 $\pm$ 0.00 & 0.70 $\pm$ 0.00 & 0.71 $\pm$ 0.00 & 0.71 $\pm$ 0.00 & 0.72 $\pm$ 0.00 & 0.72 $\pm$ 0.00 \\
 & & VNNGP & \underline{0.13} $\pm$ \underline{0.00} & \underline{0.13} $\pm$ \underline{0.00} & \underline{0.13} $\pm$ \underline{0.00} & \underline{0.13} $\pm$ \underline{0.00} & \underline{0.13} $\pm$ \underline{0.00} & \underline{0.46} $\pm$ \underline{0.00} & \underline{0.48} $\pm$ \underline{0.00} & \underline{0.50} $\pm$ \underline{0.00} & \underline{0.53} $\pm$ \underline{0.00} & \underline{0.58} $\pm$ \underline{0.00} \\
 & & CaGP & 0.18 $\pm$ 0.00 & 0.19 $\pm$ 0.00 & 0.19 $\pm$ 0.00 & 0.19 $\pm$ 0.00 & 0.19 $\pm$ 0.00 & 0.61 $\pm$ 0.00 & 0.61 $\pm$ 0.00 & 0.61 $\pm$ 0.00 & 0.62 $\pm$ 0.00 & 0.63 $\pm$ 0.00 \\
\midrule
\multirow{4}{*}{\rotatebox[origin=c]{90}{Train}} & \multirow{4}{*}{\rotatebox[origin=c]{90}{NLL}} & LKGP & \textbf{-1.33} $\pm$ \textbf{0.02} & \textbf{-1.30} $\pm$ \textbf{0.01} & \textbf{-1.26} $\pm$ \textbf{0.00} & \textbf{-1.21} $\pm$ \textbf{0.00} & \textbf{-1.18} $\pm$ \textbf{0.00} & \textbf{-0.32} $\pm$ \textbf{0.00} & \textbf{-0.29} $\pm$ \textbf{0.00} & \textbf{-0.26} $\pm$ \textbf{0.00} & \textbf{-0.23} $\pm$ \textbf{0.00} & \textbf{-0.18} $\pm$ \textbf{0.00} \\
 & & SVGP & -0.14 $\pm$ 0.00 & -0.12 $\pm$ 0.00 & -0.10 $\pm$ 0.00 & -0.07 $\pm$ 0.00 & -0.05 $\pm$ 0.00 & 1.07 $\pm$ 0.00 & 1.07 $\pm$ 0.00 & 1.08 $\pm$ 0.00 & 1.09 $\pm$ 0.00 & 1.10 $\pm$ 0.00 \\
 & & VNNGP & \underline{-0.58} $\pm$ \underline{0.00} & \underline{-0.58} $\pm$ \underline{0.00} & \underline{-0.57} $\pm$ \underline{0.00} & \underline{-0.57} $\pm$ \underline{0.00} & \underline{-0.57} $\pm$ \underline{0.00} & \underline{0.64} $\pm$ \underline{0.00} & \underline{0.67} $\pm$ \underline{0.00} & \underline{0.71} $\pm$ \underline{0.00} & \underline{0.78} $\pm$ \underline{0.00} & \underline{0.87} $\pm$ \underline{0.00} \\
 & & CaGP & -0.22 $\pm$ 0.00 & -0.21 $\pm$ 0.00 & -0.20 $\pm$ 0.00 & -0.18 $\pm$ 0.00 & -0.17 $\pm$ 0.00 & 0.93 $\pm$ 0.00 & 0.94 $\pm$ 0.00 & 0.94 $\pm$ 0.00 & 0.95 $\pm$ 0.00 & 0.96 $\pm$ 0.00 \\
\midrule
\multirow{4}{*}{\rotatebox[origin=c]{90}{Test}} & \multirow{4}{*}{\rotatebox[origin=c]{90}{NLL}} & LKGP & \textbf{-1.14} $\pm$ \textbf{0.01} & \textbf{-1.11} $\pm$ \textbf{0.01} & \textbf{-1.07} $\pm$ \textbf{0.00} & \textbf{-1.03} $\pm$ \textbf{0.00} & \textbf{-0.99} $\pm$ \textbf{0.00} & \textbf{-0.02} $\pm$ \textbf{0.00} & \textbf{0.01} $\pm$ \textbf{0.00} & \textbf{0.04} $\pm$ \textbf{0.00} & \textbf{0.09} $\pm$ \textbf{0.00} & \textbf{0.14} $\pm$ \textbf{0.00} \\
 & & SVGP & -0.14 $\pm$ 0.00 & -0.12 $\pm$ 0.00 & -0.09 $\pm$ 0.00 & -0.07 $\pm$ 0.00 & -0.05 $\pm$ 0.00 & 1.07 $\pm$ 0.00 & 1.08 $\pm$ 0.00 & 1.08 $\pm$ 0.00 & 1.09 $\pm$ 0.00 & 1.10 $\pm$ 0.00 \\
 & & VNNGP & \underline{-0.57} $\pm$ \underline{0.00} & \underline{-0.56} $\pm$ \underline{0.00} & \underline{-0.56} $\pm$ \underline{0.00} & \underline{-0.56} $\pm$ \underline{0.00} & \underline{-0.55} $\pm$ \underline{0.00} & \underline{0.66} $\pm$ \underline{0.00} & \underline{0.69} $\pm$ \underline{0.00} & \underline{0.73} $\pm$ \underline{0.00} & \underline{0.79} $\pm$ \underline{0.00} & \underline{0.88} $\pm$ \underline{0.00} \\
 & & CaGP & -0.22 $\pm$ 0.00 & -0.20 $\pm$ 0.00 & -0.19 $\pm$ 0.00 & -0.18 $\pm$ 0.00 & -0.17 $\pm$ 0.00 & 0.94 $\pm$ 0.00 & 0.94 $\pm$ 0.00 & 0.94 $\pm$ 0.00 & 0.95 $\pm$ 0.00 & 0.96 $\pm$ 0.00 \\
\midrule
\multirow{4}{*}{\rotatebox[origin=c]{90}{Time}} & \multirow{4}{*}{\rotatebox[origin=c]{90}{in min}} & LKGP & \textbf{28.7} $\pm$ \textbf{0.33} & \textbf{26.9} $\pm$ \textbf{0.41} & \textbf{24.6} $\pm$ \textbf{0.19} & \textbf{23.0} $\pm$ \textbf{0.31} & \textbf{21.2} $\pm$ \textbf{0.16} & \textbf{15.6} $\pm$ \textbf{0.01} & \textbf{14.2} $\pm$ \textbf{0.01} & \textbf{12.7} $\pm$ \textbf{0.01} & \textbf{11.3} $\pm$ \textbf{0.00} & \textbf{9.87} $\pm$ \textbf{0.01} \\
 & & SVGP & \underline{150} $\pm$ \underline{0.08} & \underline{133} $\pm$ \underline{0.07} & \underline{117} $\pm$ \underline{0.04} & \underline{100} $\pm$ \underline{0.03} & \underline{83.3} $\pm$ \underline{0.03} & \underline{150} $\pm$ \underline{0.02} & \underline{133} $\pm$ \underline{0.13} & \underline{116} $\pm$ \underline{0.02} & \underline{99.9} $\pm$ \underline{0.05} & \underline{83.2} $\pm$ \underline{0.03} \\
 & & VNNGP & 158 $\pm$ 0.26 & 140 $\pm$ 0.17 & 123 $\pm$ 0.37 & 105 $\pm$ 0.16 & 87.8 $\pm$ 0.13 & 158 $\pm$ 0.40 & 141 $\pm$ 0.27 & 123 $\pm$ 0.11 & 105 $\pm$ 0.13 & 87.6 $\pm$ 0.22 \\
 & & CaGP & 500 $\pm$ 0.15 & 395 $\pm$ 0.06 & 301 $\pm$ 0.12 & 221 $\pm$ 0.01 & 153 $\pm$ 0.01 & 500 $\pm$ 0.02 & 395 $\pm$ 0.03 & 301 $\pm$ 0.18 & 221 $\pm$ 0.06 & 153 $\pm$ 0.07 \\
\bottomrule
\end{tabular}
\end{adjustbox}
\end{table*}

\Cref{tab:ngcd_results} reports the predictive performances and total runtimes of temperature and precipitation prediction.
On this large dataset with millions of examples, LKGP clearly outperforms the sparse baseline methods in both root-mean-square-error and negative log-likelihood.
This is unsurprising, given that LKGP performs inference in the exact GP model while the other methods are subject to a limited number of inducing points, nearest neighbours, or sparse actions.
However, due to latent Kronecker structure, LKGP also requires the least amount of time by a large margin.
Interestingly, VNNGP consistently outperforms SVGP and CaGP on this problem.
We suspect that this is due to the nearest neighbour mechanism working well on these datasets with actual spatial dimensions.

\section{Discussion}
In this chapter, we proposed a highly scalable exact Gaussian process model for product kernels, which leverages latent Kronecker structure to accelerate computations and reduce memory requirements for data arranged on a partial grid while allowing for missing values.
In contrast to existing Gaussian process models with Kronecker structure, our approach deals with missing values by combining projections and iterative linear algebra methods. 
Empirically, we demonstrated that our method has superior computational scalability compared to standard iterative methods, and substantially outperforms sparse and variational Gaussian processes in terms of prediction quality. 
Future work could investigate specialised kernels, multi-product generalisations, and heteroscedastic noise models.

\subsection{Limitations}
The primary limitation of LKGP is that it employs a product kernel, which assumes that points are only highly correlated if they are close in both $\c{S}$ and $\c{T}$.
While this is a reasonable assumption in many real-world settings, such as correlated time series data, it will not always be appropriate.
The other requirement for LKGP to be applicable is that the data lives on at least a partial Cartesian product grid.
This requirement is less restrictive since our model is highly competitive even if there are a lot of missing values, as our experiments and theory show.
Furthermore, if the data does not even live on a partial grid, it would be possible to generate an artificial grid, for example via local interpolation.

\chapter{Conclusion}
\label{chap:conclusion}

\ifpdf
    \graphicspath{{Chapter7/Figs/Raster/}{Chapter7/Figs/PDF/}{Chapter7/Figs/}}
\else
    \graphicspath{{Chapter7/Figs/Vector/}{Chapter7/Figs/}}
\fi

In this dissertation, we approached the topic of scalable Gaussian processes from a modern perspective, involving large datasets and hardware for massively-parallel computation.
We developed methods which are able to scale to such modern circumstances by focusing on the effective combination of iterative methods and pathwise conditioning.
In particular, all of our contributions are generally about reducing asymptotic time and memory requirements, leveraging fast matrix multiplication, improving the speed of convergence, and analysing the behaviour under non-convergence.
I hope that our contributions facilitate the use of Gaussian processes in modern large-scale settings.

In the following, I summarise the original contributions and insights of this dissertation:
\begin{itemize}
    \item \Cref{chap:sgd} introduced stochastic gradient descent as an iterative linear system solver to approximately compute samples from the Gaussian process posterior via pathwise conditioning.
    To this end, we derived custom low-variance optimisation objectives, which we further extended to inducing points for even greater scalability.
    A key insight from this chapter is the observation that \emph{stochastic gradient descent can produce accurate predictions even if it did not converge to the optimum}.
    To explain this phenomenon, we developed a spectral characterisation of the effects of non-convergence, showing that the error under non-convergence mainly manifests at the boundaries of the observed data.
    Empirically, we evaluated our proposed algorithm on various regression tasks, achieving state-of-the-art performance for sufficiently large or ill-conditioned settings.
    Furthermore, we demonstrated that, at a fraction of the computational cost, stochastic gradient descent matches or exceeds the performance of other more computationally expensive baseline methods on a large-scale Bayesian optimisation task.
    \item \Cref{chap:sdd} improved the proposed stochastic gradient descent algorithm from the previous chapter by combining various ideas from the optimisation and kernel communities.
    In particular, we introduced a dual optimisation objective, which facilitates larger step sizes due to more favourable curvature properties, leading to significantly faster convergence and reduced computational costs.
    Additionally, we analysed two different ways of performing stochastic gradient estimation, resulting in the important insight that \emph{the estimator with multiplicative noise, which decreases as the optimisation approaches the optimum, provides substantially improved convergence properties compared to the estimator with constant additive noise}.
    Furthermore, we investigated momentum and iterate averaging techniques, leading to recommendations with lower variance and superior convergence properties.
    Empirically, we improved upon the performance from the previous chapter, and further demonstrated that our method achieves state-of-the-art results on a molecular binding affinity prediction task, matching graph neural networks.
    \item \Cref{chap:solvers} proposed generic contributions towards iterative linear system solvers for marginal likelihood optimisation.
    We introduced a pathwise gradient estimator, which reduces the required number of solver iterations until convergence, and leverages a connection to pathwise conditioning to amortise the computational costs of making predictions.
    Additionally, we used intermediate solutions to warm start iterative linear system solvers, leading to faster convergence at the cost of introducing negligible bias.
    Furthermore, we investigated the behaviour of iterative linear system solvers under a limited computational budget, terminating optimisation before reaching convergence.
    Notably, we observed that, \emph{in certain cases, a small compute budget was sufficient to achieve good predictive performance, despite not reaching convergence}.
    Empirically, we showed that our techniques provide speed-ups of up to $72\times$ when solving until convergence, and decrease the average residual norm by up to $7\times$ when stopping early.
    \item \Cref{chap:lkgp} developed latent Kronecker structure for Gaussian process regression with Kronecker product-structured kernel matrices.
    The latter enables scalable inference via factorised matrix decompositions, but typically requires fully gridded data from a Cartesian product space to be applicable.
    To support incomplete grids, we expressed the kernel matrix as the projection of a latent Kronecker product, and leveraged fast matrix multiplication in combination with iterative methods and pathwise conditioning to perform scalable inference.
    In particular, \emph{the latent factorisation drastically reduces the memory requirements, allowing kernel values to be precomputed, which significantly reduces the overall computational costs}.
    On real-world datasets with up to five million examples, our method outperformed state-of-the-art sparse and variational approaches.
\end{itemize}

\section{Future Directions}
Finally, I will discuss potential directions for future research, based on the current limitations of our contributions.
Personally, I prefer to be optimistic about limitations and think of them more as opportunities for improvement rather than definitive weaknesses.
In this spirit, I will discuss some limitations of the methodological contributions of this dissertation, along with potential avenues for future work.

\subsubsection{Adaptive Learning Rate Selection for Stochastic Gradient Descent}
Stochastic gradient and dual descent (\Cref{chap:sgd,chap:sdd}) in their current form have several free hyperparameters, such as the learning rate, batch size, or momentum and iterate averaging parameters, which is inconvenient for any practitioner who simply wants to use them as a scalable black-box linear system solver.
In particular, the learning rate, also referred to as the step size, has a strong influence on the rate of convergence.
In combination with noise and kernel hyperparameter optimisation, the problem is further exacerbated, because the optimal learning rate depends on their values.
Therefore, future research could consider methods to select the learning rate in an automatic way \citep{defazio2023}.
If successful, this would greatly improve the convenience and practicality of these algorithms.

\subsubsection{Understanding and Leveraging Non-Convergence Properties}
Throughout this dissertation, we have repeatedly observed that terminating iterative linear system solvers before reaching convergence does not necessarily lead to worse performance.
While \Cref{chap:sgd} provided some theoretical arguments in the context of drawing posterior samples with stochastic gradient descent, the relationship between relative residual norm convergence and kernel hyperparameter optimisation in \Cref{chap:solvers} is unclear.
Future work could further investigate how the residual norm relates to the objective of marginal likelihood optimisation.
My impression is that the standard Euclidean norm of the residual, which is commonly used in combination with iterative linear system solvers, is suboptimal, because it does not depend on the spectrum of the kernel matrix.
Since the solutions of the linear systems are used to calculate inner products with respect to the derivative of the kernel matrix, I would expect the latter to play an important role in determining convergence.
The question then becomes how to formulate a convergence criterion which adequately reflects this while also being amenable to tractable, and ideally amortised, computation.
An improved convergence criterion could inform the design of new solvers which directly optimise for it.

\subsubsection{Latent Kronecker Structure of Higher Order and General Purpose}
\Cref{chap:lkgp} proposed the idea of expressing a covariance matrix as the projection of a latent Kronecker product to leverage fast matrix multiplication in the latent space while reducing memory requirements.
Although we only considered latent Kronecker products with two factors, it would be straightforward to generalise this idea to an arbitrary number of factors, as long as the data continues to be a subset of a corresponding Cartesian product space.
For example, in our large-scale experiment on climate data, we factorised the data into space and time.
With a multi-product generalisation, the space coordinates could be further factorised into latitudes and longitudes.
Furthermore, the concept of latent Kronecker structure could be applied to other applications.
Future work could, for example, investigate latent Kronecker structure for second-order optimisation methods.
The latter typically need to represent the Hessian or inverse Hessian matrix of the objective function, which can quickly become intractable due to quadratically increasing memory requirements.
Modern approaches for deep learning already leverage Kronecker factorisation \citep{eschenhagen2023}, such that latent Kronecker structure could be a natural extension.

\subsubsection{Gaussian Processes in the Era of Deep Learning}
Last but not least, I want to share my thoughts about Gaussian processes in the era of deep learning.
At machine learning conferences, I have occasionally been confronted by fellow researchers with the question:
\emph{`If you could scale Gaussian processes to arbitrary amounts of data, would you use them instead of neural networks?'}
Typically, this question was not asked in a derogatory way, but rather out of curiosity, and my answer would be: \emph{`It depends...'}
Aside from uncertainty quantification capabilities, which I mentioned in \Cref{chap:intro}, Gaussian processes and neural networks follow a fundamentally different approach when specifying a hypothesis space of latent functions.
The former is fully specified by explicit mean and covariance functions, whereas the latter explores a potentially much larger space which is implicitly defined by its architecture.
Some may argue that the latter is more powerful since it can adapt to the data, which allows neural networks to learn remarkably complex functions.
However, if the class of latent functions is known, one can explicitly choose, or even design \citep{swersky2014}, a covariance function  which incorporates this information.
I believe that this approach capitalises on the advantages of Gaussian processes and should thus be pursued instead of competing with neural networks to learn complex implicit functions.
Will Gaussian processes ever be able to generate photorealistic images of cute cats or have a fluent conversation in a natural language?
Most likely not.
Should we still cherish them? Yes.


\begin{spacing}{0.9}


\cleardoublepage


\printbibliography[heading=bibintoc, title={References}]

\end{spacing}


\begin{appendices} 

\chapter{Mathematical Derivations} 
\label{chap:math}
This appendix contains mathematical derivations which are too verbose to be included in the main body of this dissertation.
The following derivations are based on and adapted from:
\begin{itemize}
    \item J. A. Lin, J. Antorán, S. Padhy, D. Janz, J. M. Hernández-Lobato, and A. Terenin. Sampling from Gaussian Process Posteriors using Stochastic Gradient Descent. In \emph{Advances in Neural Information Processing Systems}, 2023.
\end{itemize}

\section{The Implicit Bias of Stochastic Gradient Descent}
\label{apd:implicit_bias}
This section provides the main theoretical results of \Cref{chap:sgd}, including \Cref{thm:sgd_convergence}.

\subsection{Convergence of Stochastic Gradient Descent}
\label{apd:sgd_convergence}
Before studying what happens in Gaussian processes, we first prove a fairly standard result on the convergence of SGD for a quadratic objective appearing in kernel ridge regression, which represents the posterior mean.
For simplicity, we do not analyse Nesterov momentum nor aspects such as gradient clipping.
Recall that a random variable $z$ is called \emph{$G$-sub-Gaussian} if $\E(\exp(\lambda(z - \E(z)))) \leq \exp(\smash{\frac{1}{2}}G\lambda^2)$ holds for all $\lambda$.
Furthermore, a random vector is called $G$-sub-Gaussian if its dot product with any unit vector is $G$-sub-Gaussian.
One can show this condition is equivalent to having tails that are no heavier than those of a Gaussian random vector, formulated appropriately.
Let $\m{K}_{\m{X}\m{X}} = \m{U}\m\Lambda\m{U}^\mathsf{T}$ be the eigenvalue decomposition of the kernel matrix with eigenvectors $\v{u}_i$ and eigenvalues $\lambda_1 \geq ... \geq \lambda_n > 0$.

\begin{lemma}
\label{lem:sgd-error-bound}
Let $\delta > 0$ and noise variance $\sigma^2 > 0$.
Let $\eta$ be a sufficiently small learning rate of $0 < \eta < \frac{1}{\lambda_1(\lambda_1 + \sigma^2)}$.
Let $\v{v}^* \in \R^n$ be the solution of the respective linear system, and let $\overline{\v{v}}_t$ be the Polyak-averaged SGD iterate after $t$ steps, starting from an initial condition of $\v{v}_0 = \v{0}$. 
Assume that the stochastic estimate of the gradient is unbiased and $G$-sub-Gaussian. Then, with probability $1-\delta$, we have for any $\c{I} \subseteq \{1, ..., n\}$ and all $i \in \c{I}$ that
\begin{equation}
|\v{u}_i^\mathsf{T}(\v{v}^* - \overline{\v{v}}_t)| \leq
\frac{1}{\lambda_i^2} \del{\frac{\norm{\v{y}}_2}{\eta \sigma^2 t} + G\sqrt{\frac{2}{t} \log\frac{|\c{I}|}{\delta}}},
\end{equation}
\end{lemma}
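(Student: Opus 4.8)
Let $\c{L}$ denote the exact (non-stochastic) kernel ridge regression objective from \Cref{eqn:mean-optim}. The plan is to exploit that $\c{L}$ is quadratic with a \emph{constant} Hessian $\m{K}_{\m{X}\m{X}}(\m{K}_{\m{X}\m{X}} + \sigma^2 \m{I})$, which shares its eigenvectors $\v{u}_1, \dots, \v{u}_n$ with $\m{K}_{\m{X}\m{X}}$ and has eigenvalues $\lambda_i(\lambda_i + \sigma^2)$, so that the SGD error decouples across eigendirections. Since $\v{v}^* = (\m{K}_{\m{X}\m{X}} + \sigma^2 \m{I})^{-1}\v{y}$ is the minimiser, we have $\nabla\c{L}(\v{v}) = \m{K}_{\m{X}\m{X}}(\m{K}_{\m{X}\m{X}} + \sigma^2 \m{I})(\v{v} - \v{v}^*)$; writing $e_t^{(i)} = \v{u}_i^\mathsf{T}(\v{v}_t - \v{v}^*)$, the update $\v{v}_t = \v{v}_{t-1} - \eta \v{g}_t$ projects onto the scalar recursion $e_t^{(i)} = \rho_i\, e_{t-1}^{(i)} + \eta\, \xi_t^{(i)}$ with $\rho_i = 1 - \eta \lambda_i(\lambda_i + \sigma^2)$ and projected noise $\xi_t^{(i)} = \v{u}_i^\mathsf{T}(\nabla\c{L}(\v{v}_{t-1}) - \v{g}_t)$. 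First I would check that the assumption $0 < \eta < \frac{1}{\lambda_1(\lambda_1 + \sigma^2)}$ forces $\rho_i \in (0,1)$ for every $i$, so that the geometric sums below converge.

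Next I would unroll the recursion from $\v{v}_0 = \v{0}$, obtaining $e_t^{(i)} = \rho_i^t e_0^{(i)} + \eta\sum_{s=1}^t \rho_i^{t-s}\xi_s^{(i)}$ with $e_0^{(i)} = -\v{u}_i^\mathsf{T}\v{v}^* = -(\v{u}_i^\mathsf{T}\v{y})/(\lambda_i + \sigma^2)$, then average over the first $t$ iterates (Polyak averaging) and swap the order of summation to write $\v{u}_i^\mathsf{T}(\overline{\v{v}}_t - \v{v}^*) = b_t^{(i)} + \sum_{s=1}^t c_s^{(i)}\xi_s^{(i)}$. Bounding the deterministic term by a geometric series gives $|b_t^{(i)}| \leq \frac{|e_0^{(i)}|}{t(1-\rho_i)} = \frac{\norm{\v{y}}_2}{t\,\eta\,\lambda_i(\lambda_i + \sigma^2)^2}$, which I would further bound by $\frac{\norm{\v{y}}_2}{t\,\eta\,\sigma^2\lambda_i^2}$ using $(\lambda_i + \sigma^2)^2 \geq \lambda_i\sigma^2$; similarly the averaging weights satisfy $|c_s^{(i)}| \leq \frac{\eta}{t(1-\rho_i)} = \frac{1}{t\,\lambda_i(\lambda_i + \sigma^2)}$, hence $\sum_{s=1}^t (c_s^{(i)})^2 \leq \frac{1}{t\,\lambda_i^2(\lambda_i + \sigma^2)^2}$.

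For the stochastic term I would observe that $(\xi_s^{(i)})_{s \geq 1}$ is a martingale difference sequence for the natural filtration: it is conditionally mean zero because $\v{g}_s$ is unbiased, and conditionally $G$-sub-Gaussian because the gradient noise vector is $G$-sub-Gaussian and $\v{u}_i$ is a unit vector. Hence $\sum_{s=1}^t c_s^{(i)}\xi_s^{(i)}$ is a sub-Gaussian martingale, and a standard Azuma--Hoeffding-type tail bound with variance proxy controlled by $\sum_s (c_s^{(i)})^2$ yields, after choosing the deviation level so that each failure probability is $\delta/|\c{I}|$ and taking a union bound over $i \in \c{I}$, a term of the form $\frac{G}{\lambda_i(\lambda_i + \sigma^2)}\sqrt{\frac{2}{t}\log\frac{|\c{I}|}{\delta}}$. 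Combining this with the bias bound and using $\frac{1}{\lambda_i(\lambda_i + \sigma^2)} \leq \frac{1}{\lambda_i^2}$ in both places gives the claimed inequality, simultaneously for all $i \in \c{I}$, with probability $1 - \delta$.

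I expect the main obstacle to be the stochastic term: one must use a \emph{conditional} sub-Gaussian concentration argument (a Chernoff/peeling argument down the filtration) rather than a bound for sums of independent variables, since $\xi_s^{(i)}$ depends on $\v{v}_{s-1}$ and thus on earlier noise. The remaining work---tracking the geometric-series constants so that the bound lands exactly in the $1/\lambda_i^2$ form, and pinning down the sub-Gaussian normalisation convention so that the $G$ dependence and the $\log(|\c{I}|/\delta)$ factor match the statement---is routine bookkeeping.
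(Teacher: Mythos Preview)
Your proposal is correct and follows essentially the same route as the paper: project onto eigendirections, unroll the linear recursion, Polyak-average, split into a deterministic bias term and a stochastic noise term, bound each via geometric series, and finish with a union bound over $\c{I}$. The one substantive difference is in the stochastic term: the paper first treats the perturbations as \emph{independent} $\c{N}(\v{0},\m{I})$ vectors, computes the exact variance of the resulting Gaussian, applies a Gaussian tail bound, and only then relaxes to $G$-sub-Gaussian by comparing moment generating functions; your martingale/Azuma--Hoeffding argument handles the conditionally sub-Gaussian case directly and is arguably the cleaner way to deal with the dependence of $\xi_s^{(i)}$ on past iterates that you correctly flag as the main obstacle.
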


\begin{proof}
\label{proof:lem:sgd-error-bound}
Consider the objective
\begin{equation}
L(\v{v}) = \frac{1}{2} \norm{\v{y} - \m{K}_{\m{X} \m{X}} \v{v}}_2^2 + \frac{\sigma^2}{2} \norm{\v{v}}_{\m{K}_{\m{X}\m{X}}}^2,
\end{equation}
and respective gradient
\begin{equation}
\frac{\partial L}{\partial \v{v}}(\v{v}) = \m{K}_{\m{X}\m{X}}^2 \v{v} - \m{K}_{\m{X}\m{X}} \v{y} + \sigma^2 \m{K}_{\m{X}\m{X}}\v{v} = \m{K}_{\m{X}\m{X}}(\m{K}_{\m{X}\m{X}}+ \sigma^2\m{I})\v{v} - \m{K}_{\m{X}\m{X}} \v{y}.
\end{equation}
Let us first look at non-stochastic gradient optimisation of $L$, without Polyak averaging.
The iterate $\v{v}_t$ is given by
\begin{equation}
\v{v}_t = \v{v}_{t-1} - \eta \frac{\partial L}{\partial \v{v}}(\v{v}_{t-1}) = \del{\m{I} - \eta \m{K}_{\m{X}\m{X}}(\m{K}_{\m{X}\m{X}}+\sigma^2\m{I})}\v{v}_{t-1} + \eta \m{K}_{\m{X}\m{X}}\v{y}.
\end{equation}
Writing $\m{M} = \del{\m{I} - \eta \m{K}_{\m{X}\m{X}}(\m{K}_{\m{X}\m{X}}+\sigma^2\m{I})}$, and recalling that $\v{v}_0 = \v{0}$, we thus have that
\begin{equation}
\label{eq:noiseless-iterate-sgd}
\v{v}_t = \eta \sum_{j=0}^{t-1} \m{M}^j \m{K}_{\m{X}\m{X}}\v{y} = \eta (\m{I}-\m{M})^{-1} (\m{I}-\m{M}^t) \m{K}_{\m{X}\m{X}} \v{y} = \v{v}^* - (\m{K}_{\m{X}\m{X}}+\sigma^2\m{I})^{-1} \m{M}^t \v{y},
\end{equation}
where, for our choice of $\eta$, by direct calculation using simultaneous diagonalisability of $\m{M}$ and $\m{K}_{\m{X}\m{X}}$, the eigenvalues of $\m{M}$ are strictly between $0$ and $1$.
Therefore, the geometric series converges and $\m{I} - \m{M}$ is positive definite.
Examining the error in direction $\v{u}_i$, we see that
\begin{equation}
\label{eq:noiseless-iterate-err-eigenval}
|\v{u}_i^\mathsf{T} (\v{v}^* - \v{v}_{t})| = |\v{u}_i^\mathsf{T} (\m{K}_{\m{X}\m{X}}+\sigma^2\m{I})^{-1} \m{M}^t \v{y}|
\leq \frac{\del{1 - \eta \lambda_i (\lambda_i + \sigma^2)}^t}{\lambda_i + \sigma^2}\norm{\v{y}}_2,
\end{equation}
which applies to ordinary gradient descent without stochastic gradients or Polyak averaging.
Next, we consider stochastic gradient optimisation.
We will first consider the case where the gradient is independently perturbed by $\v\zeta_t\~[N](\v{0},\m{I})$ for each step $t > 0$, and then relax this to sub-Gaussian noise in the sequel.

Specifically, we consider
\begin{equation}
\v{v}'_t = \v{v}'_{t-1} -\eta\del{\frac{\partial L}{\partial \v{v}}(\v{v}'_{t-1}) + \v\zeta_t} = \v{v}_t - \eta \sum_{j=0}^{t-1} \m{M}^j \v\zeta_{t-j},
\end{equation}
with $\v{v}'_0 = \v{v}_0 = \v{0}$.
For these iterates, we consider the respective Polyak-averaged iterates denoted by $\widebar{\v{v}}_t = \frac{1}{t} \sum_{j=1}^t \v{v}'_j$.
We have
\begin{equation}
|\v{u}_i^\mathsf{T}(\v{v}^* - \widebar{\v{v}}_t)| = \abs[4]{\frac{1}{t} \sum_{j=1}^t \v{u}_i^\mathsf{T}(\v{v}^* - \v{v}'_j)} \leq \abs[4]{\ubr{\frac{1}{t} \sum_{j=1}^t \v{u}_i^\mathsf{T}(\v{v}^* - \v{v}_j)}_{A_{i,t}}} + \abs[4]{\ubr{\frac{1}{t}\sum_{j=1}^t \v{u}_i^\mathsf{T}\v(\v{v}_j - \v{v}'_j)}_{B_{i,t}}},
\end{equation}
by expanding the Polyak averages and applying the triangle inequality.
Using \Cref{eq:noiseless-iterate-err-eigenval}, the triangle inequality and another geometric series, we can bound the first sum as
\begin{equation}
|A_{i,t}|
= \frac{\norm{\v{y}}_2}{(\lambda_i + \sigma^2) t} \sum_{j=1}^t \del{1 - \eta \lambda_i (\lambda_i + \sigma^2)}^j
\leq \frac{\norm{\v{y}}_2}{\eta \lambda_i (\lambda_i + \sigma^2)^2 t}.
\end{equation}
For the second sum, we observe that we can change the order of summation to count each $\v\zeta_j$ only once, rather than once for each Polyak average.
This gives 
\begin{align}
B_{i,t} &= \frac{\eta}{t} \sum_{j=1}^t \sum_{q=0}^{j-1} \v{u}_i^\mathsf{T} \m{M}^q \v\zeta_{j-q} = \frac{\eta}{t} \sum_{j=1}^t \del{\sum_{q=0}^{t-j} \v{u}_i^\mathsf{T} \m{M}^q} \v\zeta_j, \\
&= \frac{\eta}{t} \sum_{j=1}^t \sum_{q=0}^{t-j} \del{1 - \eta \lambda_i (\lambda_i + \sigma^2)}^q \v{u}_i^\mathsf{T}\v\zeta_j.
\end{align}
where $\v{u}_i^\mathsf{T}\v\zeta_j\~[N](0, 1)$ are independent and identically distributed.

The variance of $B_{i,t}$ can be bounded by using another geometric series
{\allowdisplaybreaks
\begin{align}
\Var(B_{i,t}) &= \Var\del{\frac{\eta}{t} \sum_{j=1}^t \sum_{q=0}^{t-j} \del{1 - \eta \lambda_i (\lambda_i + \sigma^2)}^q \v{u}_i^\mathsf{T}\v\zeta_j}, \\
&= \frac{\eta^2}{t^2} \sum_{j=1}^t \Var\del{\sum_{q=0}^{t-j} \del{1 - \eta \lambda_i (\lambda_i + \sigma^2)}^q \v{u}_i^\mathsf{T}\v\zeta_j}, \\
&= \frac{\eta^2}{t^2} \sum_{j=1}^t \left[ \del{\sum_{q=0}^{t-j} \del{1 - \eta \lambda_i (\lambda_i + \sigma^2)}^q}^2 \ubr{\Var\del{\v{u}_i^\mathsf{T}\v\zeta_j}}_{=1} \right], \\
& \leq \frac{\eta^2}{t^2} \sum_{j=1}^t \del{\frac{1}{\eta \lambda_i (\lambda_i + \sigma^2)}}^2
= \frac{1}{\lambda_i^2 (\lambda_i + \sigma^2)^2 t}.
\end{align}}
Thus, from standard tail inequalities for Gaussian random variables, for any $\delta' > 0$, we have
\begin{equation}
\P\del{|B_{i,t}| \leq \sqrt{\frac{2}{\lambda_i^2 (\lambda_i + \sigma^2)^2 t}\log\frac{1}{\delta'}}} \geq 1 - \delta'.
\end{equation}
We then take $\delta' = \delta/|\c{I}|$ and apply the union bound for all indices in $\c{I}$.
Finally, by comparing moment generating functions, we can relax the Gaussian assumption to $G$-sub-Gaussian random variables.
To complete the claim, we combine the bounds for the two sums as
\begin{align}
|\v{u}_i^\mathsf{T}(\v{v}^* - \widebar{\v{v}}_t)|
&\leq |A_{i,t}| + |B_{i,t}|, \\
&\leq \frac{\norm{\v{y}}_2}{\eta \lambda_i (\lambda_i + \sigma^2)^2 t} + G\sqrt{\frac{2}{\lambda_i^2 (\lambda_i + \sigma^2)^2 t} \log\frac{|\c{I}|}{\delta}}, \\
&= \frac{1}{\lambda_i (\lambda_i + \sigma^2)} \del{\frac{\norm{\v{y}}_2}{\eta (\lambda_i + \sigma^2) t} + G\sqrt{\frac{2}{t} \log\frac{|\c{I}|}{\delta}}}, \\
&\leq \frac{1}{\lambda_i^2} \del{\frac{\norm{\v{y}}_2}{\eta \sigma^2 t} + G\sqrt{\frac{2}{t} \log\frac{|\c{I}|}{\delta}}},
\end{align}
which gives the claim.
\end{proof}

\subsection{Convergence in Reproducing Kernel Hilbert Space}
Using the preceding result, we will show that SGD converges quickly with respect to a certain seminorm.
Let $k$ be the kernel and let $\c{H}_k$ be its associated reproducing kernel Hilbert space.
\begin{definition}
Let $\m{K}_{\m{X}\m{X}} = \m{U}\m\Lambda\m{U}^\mathsf{T}$ be the eigendecomposition of the kernel matrix.
Define the spectral basis functions as
\begin{equation}
u^{(i)}(\.) = \sum_{j=1}^n \frac{U_{ji}}{\sqrt{\lambda_i}} k(\., \v{x}_j).
\end{equation}
\end{definition}
These are precisely the basis functions which arise in kernel principal component analysis.
We also introduce two subspaces of the RKHS: the span of the representer weights, and the span of a subset of spectral basis functions.
These are defined as follows.
\begin{definition}
Define the representer weight space
\begin{equation}
R_{k,\m{X}} = \Span\cbr[1]{k(\., \v{x}_i) : i=1,...,n} \subseteq \c{H}_k,
\end{equation}
equipped with the subspace inner product.
\end{definition}
\begin{definition}
Let $\c{I} \subseteq \{1,...,n\}$ be an arbitrary set of indices.
Define the interpolation subspace
\begin{equation}
R_{k,\m{X}}^{\c{I}} = \Span\cbr[1]{u^{(i)} : i \in \c{I}} \subseteq R_{k,\m{X}}.
\end{equation}
\end{definition}
We can use these subspaces to define a seminorm on the RKHS, which we will use to measure convergence rates of SGD.
\begin{definition}
Define the interpolation seminorm
\begin{equation}
\abs{f}_{R_{k,\m{X}}^{\c{I}}} = \norm[0]{\proj_{R_{k,\m{X}}^{\c{I}}} f}_{\c{H}_k}.
\end{equation}
\end{definition}
Here, $\proj_{\c{S}} f$ denotes orthogonal projection of a function $f$ onto the subspace $\c{S}$ of a Hilbert space.
To ease notation, in cases where we project onto the span of a single vector, we write the vector in place of $\c{S}$.
The first order of business is to understand the relationship between the spectral basis functions and representer weight space.
\begin{lemma}
The functions $u^{(i)}$ with $i=1,...,n$ form an orthonormal basis of $R_{k,\m{X}}$.
\end{lemma}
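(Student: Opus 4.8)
The plan is to show two things: that the $n$ functions $u^{(i)}$ are orthonormal in the RKHS inner product, and that they span $R_{k,\m{X}}$, the span of the $n$ canonical basis functions $k(\.,\v{x}_j)$. Since $R_{k,\m{X}}$ is at most $n$-dimensional, exhibiting $n$ orthonormal elements inside it that also span it simultaneously establishes both that they form a basis and that $\dim R_{k,\m{X}}$ equals the number of nonzero eigenvalues (here all $\lambda_i > 0$ by the standing assumption in Lemma~\ref{lem:sgd-error-bound}).

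For orthonormality, I would compute $\innerprod{u^{(i)}}{u^{(i')}}_{\c{H}_k}$ directly using the reproducing property $\innerprod{k(\.,\v{x}_a)}{k(\.,\v{x}_b)}_{\c{H}_k} = k(\v{x}_a,\v{x}_b) = (\m{K}_{\m{X}\m{X}})_{ab}$. Expanding the definition,
\begin{equation}
\innerprod{u^{(i)}}{u^{(i')}}_{\c{H}_k}
= \sum_{a=1}^n \sum_{b=1}^n \frac{U_{ai}}{\sqrt{\lambda_i}} \frac{U_{bi'}}{\sqrt{\lambda_{i'}}} (\m{K}_{\m{X}\m{X}})_{ab}
= \frac{1}{\sqrt{\lambda_i \lambda_{i'}}} \v{u}_i\T \m{K}_{\m{X}\m{X}} \v{u}_{i'}
= \frac{\lambda_{i'}}{\sqrt{\lambda_i \lambda_{i'}}} \v{u}_i\T \v{u}_{i'}
= \delta_{i i'},
\end{equation}
using $\m{K}_{\m{X}\m{X}} \v{u}_{i'} = \lambda_{i'} \v{u}_{i'}$ and orthonormality of the eigenvectors $\v{u}_i$ of the symmetric matrix $\m{K}_{\m{X}\m{X}}$. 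This is the routine computational core.

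For spanning, I would argue that each $u^{(i)}$ lies in $R_{k,\m{X}}$ by construction (it is an explicit finite linear combination of the $k(\.,\v{x}_j)$), so $\Span\{u^{(i)}\} \subseteq R_{k,\m{X}}$. Conversely, since $\m{U}$ is orthogonal and hence invertible, one can invert the relation: writing the $u^{(i)}$ in matrix form, $k(\.,\v{x}_j) = \sum_{i=1}^n U_{ji}\sqrt{\lambda_i}\, u^{(i)}$, which follows from $\sum_i U_{ji} U_{li} = \delta_{jl}$. Hence each canonical basis function is in the span of the $u^{(i)}$, giving the reverse inclusion. Combined with the orthonormality just shown, the $n$ functions $u^{(1)},\dots,u^{(n)}$ are linearly independent elements of $R_{k,\m{X}}$ that also span it, so they form an orthonormal basis.

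I do not anticipate a genuine obstacle here — the statement is essentially a restatement of the spectral theorem transported through the feature map, and the only thing to be careful about is the normalisation by $\sqrt{\lambda_i}$ (which is why positivity of all eigenvalues, guaranteed by adding $\sigma^2 \m{I}$ implicitly or by the full-rank assumption in this section, matters) and keeping the two inclusions distinct. If one wanted to handle rank-deficient $\m{K}_{\m{X}\m{X}}$, the mild complication would be restricting to indices with $\lambda_i > 0$ and noting $R_{k,\m{X}}$ then has dimension equal to that count; but within the scope of this section all $\lambda_i > 0$, so the clean statement stands.
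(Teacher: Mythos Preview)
Your proposal is correct and follows essentially the same approach as the paper: the orthonormality computation via $\innerprod{u^{(i)}}{u^{(i')}}_{\c{H}_k} = \v{u}_i\T \m{K}_{\m{X}\m{X}} \v{u}_{i'} / \sqrt{\lambda_i \lambda_{i'}}$ is identical, and for the basis property the paper simply invokes dimension counting (orthonormal hence linearly independent, $n$ of them, $\dim R_{k,\m{X}} = n$), whereas you additionally exhibit the explicit inverse relation $k(\.,\v{x}_j) = \sum_i U_{ji}\sqrt{\lambda_i}\, u^{(i)}$ --- a slightly more constructive but equivalent argument.
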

\begin{proof}
Let $i \neq j$.
Using the reproducing property, the definition of an eigenvector, the fact that $\v{u}_i$ and $\v{u}_j$ are eigenvectors of $\m{K}_{\m{X}\m{X}}$, and orthonormality of eigenvectors, we have
\begin{equation}
\innerprod[0]{u^{(i)}}{u^{(j)}}_{\c{H}_k} = \frac{\v{u}_i^\mathsf{T} \m{K}_{\m{X}\m{X}} \v{u}_j}{\sqrt{\lambda_i\lambda_j}} = \frac{\lambda_j}{\sqrt{\lambda_i\lambda_j}} \v{u}_i^\mathsf{T} \v{u}_j = 0.
\end{equation}
The fact that $u^{(i)}$ form a basis follows from the fact that they are linearly independent, that there are $n$ of them in total, and that this equals the dimensionality of $R_{k,\m{X}}$ by definition.
Finally, we calculate the norm
\begin{equation}
\norm[0]{u^{(i)}}_{\c{H}_k}^2 = \frac{\v{u}_i^\mathsf{T} \m{K}_{\m{X}\m{X}} \v{u}_i}{\lambda_i} = 1,
\end{equation}
which gives the claim.
\end{proof}
Next, we derive a change-of-basis formula between the canonical and spectral basis functions.
\begin{lemma}
\label{lem:change-of-basis}
For some $\v\theta\in\R^n$, let $\theta(\.) = \sum_{i=1}^n \theta_i k(\., \v{x}_i) = \sum_{j=1}^n w_j u^{(j)}(\.)$.
Then we have 
\begin{equation}
\v{w} = \m\Lambda^{\frac{1}{2}} \m{U}^\mathsf{T} \v\theta
\quad \text{ and } \quad
\norm{\theta}_{\c{H}_k}^2 = \v{w}^\mathsf{T}\v{w}.
\end{equation}
\end{lemma}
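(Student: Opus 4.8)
The plan is to leverage the previous lemma, which shows that the spectral basis functions $u^{(1)}, \dots, u^{(n)}$ form an \emph{orthonormal} basis of the representer weight space $R_{k,\m{X}}$. Since $\theta(\.) = \sum_{i=1}^n \theta_i k(\., \v{x}_i)$ lies in $R_{k,\m{X}}$ by construction, the coefficients $w_j$ in the expansion $\theta = \sum_{j=1}^n w_j u^{(j)}$ are uniquely determined and given by $w_j = \innerprod{\theta}{u^{(j)}}_{\c{H}_k}$. The bulk of the work is thus reduced to evaluating these inner products and then applying Parseval's identity.

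First I would expand $w_j = \innerprod{\theta}{u^{(j)}}_{\c{H}_k} = \sum_{i=1}^n \theta_i \innerprod{k(\., \v{x}_i)}{u^{(j)}}_{\c{H}_k}$, and invoke the reproducing property to identify $\innerprod{k(\., \v{x}_i)}{u^{(j)}}_{\c{H}_k} = u^{(j)}(\v{x}_i)$. Next I would evaluate the spectral basis function at a training input directly from its definition: $u^{(j)}(\v{x}_i) = \sum_{l=1}^n \frac{U_{lj}}{\sqrt{\lambda_j}} k(\v{x}_i, \v{x}_l) = \frac{1}{\sqrt{\lambda_j}} (\m{K}_{\m{X}\m{X}} \v{u}_j)_i = \sqrt{\lambda_j}\, U_{ij}$, where the last equality uses that $\v{u}_j$ is an eigenvector of $\m{K}_{\m{X}\m{X}}$ with eigenvalue $\lambda_j$. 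Substituting back gives $w_j = \sqrt{\lambda_j} \sum_{i=1}^n U_{ij} \theta_i = \sqrt{\lambda_j}\, (\m{U}\T \v\theta)_j$, which in vector form is exactly $\v{w} = \m\Lambda^{\frac{1}{2}} \m{U}\T \v\theta$.

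For the norm identity, I would apply Parseval's identity with respect to the orthonormal basis $\{u^{(j)}\}$: $\norm{\theta}_{\c{H}_k}^2 = \norm[1]{\textstyle\sum_{j=1}^n w_j u^{(j)}}_{\c{H}_k}^2 = \sum_{j=1}^n w_j^2 = \v{w}\T \v{w}$, again relying only on the orthonormality established earlier. I do not expect a real obstacle here; the only step requiring a little care is the evaluation $u^{(j)}(\v{x}_i) = \sqrt{\lambda_j}\, U_{ij}$, which hinges on recognising $\sum_l U_{lj} k(\v{x}_i, \v{x}_l)$ as the $i$-th entry of $\m{K}_{\m{X}\m{X}} \v{u}_j$ and then applying the eigenvalue equation. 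An alternative derivation — matching coefficients of the canonical basis functions $k(\., \v{x}_i)$ (which are linearly independent since all $\lambda_i > 0$) and inverting $\m{U} \m\Lambda^{-\frac{1}{2}}$ — would also yield both claims, but the orthonormal-basis route is cleaner and avoids discussing linear independence.
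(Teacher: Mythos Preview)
Your proof is correct. It differs from the paper's in the first part: the paper expands $u^{(j)}(\.) = \sum_i \frac{U_{ij}}{\sqrt{\lambda_j}} k(\., \v{x}_i)$ inside the representation of $\theta$, matches coefficients of the canonical basis functions $k(\., \v{x}_i)$ to obtain $\v\theta = \m{U}\m\Lambda^{-1/2}\v{w}$, and then inverts using orthogonality of $\m{U}$. You instead compute $w_j = \innerprod{\theta}{u^{(j)}}_{\c{H}_k}$ directly via the reproducing property and the eigenvalue equation. Amusingly, the alternative you sketch at the end --- matching coefficients and inverting $\m{U}\m\Lambda^{-1/2}$ --- is precisely what the paper does. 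Your route avoids appealing to linear independence of the $k(\., \v{x}_i)$ and is arguably slightly cleaner; the paper's route is marginally more direct in that it never evaluates $u^{(j)}(\v{x}_i)$. The norm identity is handled identically in both, via orthonormality of the $u^{(j)}$.
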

\begin{proof}
By expanding $u^{(j)}$, we have 
\begin{equation}
\theta(\.) = \sum_{j=1}^n w_j u^{(j)}(\.) = \sum_{i=1}^n \sum_{j=1}^n w_j \frac{U_{ij}}{\sqrt{\lambda_j}} k(\., \v{x}_i).
\end{equation}
Since $k(\., \v{x}_i)$ form a basis, this implies $\theta_i = \sum_{j=1}^n w_j \frac{U_{ij}}{\sqrt{\lambda_j}}$, or equivalently $\v\theta = \m{U} \m\Lambda^{-\frac{1}{2}} \v{w}$ and $\v{w} = \m\Lambda^{\frac{1}{2}} \m{U}^\mathsf{T} \v\theta$.
From orthonormality of $u^{(j)}$, we conclude that
\begin{equation}
\norm{\theta}_{\c{H}_k}^2 = \innerprod{\theta}{\theta}_{\c{H}_k} = \sum_{i=1}^n \sum_{j=1}^n w_i w_j \innerprod[0]{u^{(i)}}{u^{(j)}}_{\c{H}_k} = \sum_{i=1}^n w_i^2 = \v{w}^\mathsf{T}\v{w},
\end{equation}
which gives the second claim.
\end{proof}
\vspace{-.325cm}
This allows us to get an explicit expression for the previously introduced seminorm.
\begin{lemma}
\label{lem:seminorm-formula}
For $\theta\in R_{k,\m{X}}$, letting $\m\Lambda_{\c{I}} = \mathrm{diag}(\lambda_1 \mathbbold{1}_{1\in\c{I}},...,\lambda_i \mathbbold{1}_{i\in\c{I}},...,\lambda_n \mathbbold{1}_{n\in\c{I}})$, we have
\begin{equation}
\abs{\theta}_{R_{k,\m{X}}^{\c{I}}}^2 = \v\theta^\mathsf{T} \m{U} \m\Lambda_{\c{I}} \m{U}^\mathsf{T} \v\theta.
\end{equation}
\end{lemma}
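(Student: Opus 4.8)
The plan is to reduce the claim to a coordinate computation in the orthonormal basis of spectral basis functions, and then translate back into representer-weight coordinates using the change-of-basis formula of Lemma~\ref{lem:change-of-basis}. Since $\theta \in R_{k,\m{X}}$ and $u^{(1)},\dots,u^{(n)}$ form an orthonormal basis of $R_{k,\m{X}}$ (by the lemma immediately preceding Lemma~\ref{lem:change-of-basis}), the function $\theta$ admits an exact finite expansion $\theta = \sum_{j=1}^n w_j u^{(j)}$ whose coordinate vector is $\v{w} = \m\Lambda^{1/2}\m{U}\T\v\theta$.

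First I would argue that the orthogonal projection onto $R_{k,\m{X}}^{\c{I}} = \Span\{u^{(i)} : i \in \c{I}\}$ acts by discarding the coordinates outside $\c{I}$, that is $\proj_{R_{k,\m{X}}^{\c{I}}} \theta = \sum_{i \in \c{I}} w_i u^{(i)}$. This is the step deserving a careful (though routine) justification: the candidate $\sum_{i\in\c{I}} w_i u^{(i)}$ clearly lies in $R_{k,\m{X}}^{\c{I}}$, and its difference from $\theta$ equals $\sum_{i\notin\c{I}} w_i u^{(i)}$, which is orthogonal to every $u^{(i)}$ with $i\in\c{I}$ by orthonormality; uniqueness of the orthogonal projection onto a closed subspace then yields the identification.

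Next I would compute the seminorm. By orthonormality of the $u^{(i)}$,
\begin{equation}
\abs{\theta}_{R_{k,\m{X}}^{\c{I}}}^2 = \norm[0]{\proj_{R_{k,\m{X}}^{\c{I}}} \theta}_{\c{H}_k}^2 = \sum_{i \in \c{I}} w_i^2 = \v{w}\T \m{D}_{\c{I}} \v{w},
\end{equation}
where $\m{D}_{\c{I}} = \mathrm{diag}(\mathbbold{1}_{1\in\c{I}},\dots,\mathbbold{1}_{n\in\c{I}})$. Substituting $\v{w} = \m\Lambda^{1/2}\m{U}\T\v\theta$ gives $\v{w}\T\m{D}_{\c{I}}\v{w} = \v\theta\T \m{U} \m\Lambda^{1/2}\m{D}_{\c{I}}\m\Lambda^{1/2} \m{U}\T \v\theta$, and since all three inner matrices are diagonal, entrywise inspection shows $\m\Lambda^{1/2}\m{D}_{\c{I}}\m\Lambda^{1/2} = \m\Lambda_{\c{I}}$, producing $\abs{\theta}_{R_{k,\m{X}}^{\c{I}}}^2 = \v\theta\T \m{U} \m\Lambda_{\c{I}} \m{U}\T \v\theta$ as claimed.

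The main obstacle is not analytical difficulty but bookkeeping: using the coordinate convention of Lemma~\ref{lem:change-of-basis} correctly (the coordinates are $\v{w} = \m\Lambda^{1/2}\m{U}\T\v\theta$, with the square root in the numerator), and cleanly identifying $\m\Lambda^{1/2}\m{D}_{\c{I}}\m\Lambda^{1/2}$ with the matrix $\m\Lambda_{\c{I}}$ appearing in the statement. Everything else is an immediate consequence of the orthonormal-basis structure already established in the excerpt.
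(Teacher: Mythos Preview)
Your proof is correct and follows essentially the same approach as the paper: expand $\theta$ in the orthonormal spectral basis, project by keeping only the $\c{I}$-coordinates, compute the squared norm as $\sum_{i\in\c{I}} w_i^2$, and translate back via the change-of-basis $\v{w} = \m\Lambda^{1/2}\m{U}\T\v\theta$. The paper's version is terser, jumping directly from $\sum_{j\in\c{I}} w_j^2$ to $\v\theta\T\m{U}\m\Lambda_{\c{I}}\m{U}\T\v\theta$ without naming the intermediate indicator matrix $\m{D}_{\c{I}}$, but the argument is identical.
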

\begin{proof}
First, we decompose $\theta$ in the orthonormal basis above, giving $\theta(\.) = \sum_{j=1}^n w_j u^{(j)}(\.)$.
By orthonormality, the definition of $R_{k,\m{X}}^{\c{I}}$, and properties of projections, we have 
\begin{equation}
(\proj_{R_{k,\m{X}}^{\c{I}}} \theta)(\.) = \sum_{j\in\c{I}} w_j u^{(j)}(\.).
\end{equation}
Therefore, again using orthonormality, we obtain
\begin{equation}
\abs{\theta}_{R_{k,\m{X}}^{\c{I}}}^2 = \sum_{j\in\c{I}} w_j^2 = \v\theta^\mathsf{T} \m{U} \m\Lambda_{\c{I}} \m{U}^\mathsf{T} \v\theta,
\end{equation}
which gives the claim.
\end{proof}
\vspace{-.325cm}
We claim that our variant of SGD converges quickly with respect to this seminorm.
\begin{proposition}
\label{prop:sgd-error-bound-rkhs}
Let $\mu_{(\.)\given \v{y}}$ be the exact posterior predictive mean.
Let $\hat{\mu}_{(\.) \given \v{y}}$ be the posterior predictive mean obtained by SGD under the assumptions of \Cref{lem:sgd-error-bound}.
For any $\c{I}$, we have 
\begin{equation}
\abs{\mu_{(\.) \given \v{y}} - \hat{\mu}_{(\.) \given \v{y}}}_{R_{k,\m{X}}^{\c{I}}} \leq \del{\frac{\norm{\v{y}}_2}{\eta\sigma^2t} + G\sqrt{\frac{2}{t} \log\frac{|\c{I}|}{\delta}}} \sqrt{\sum_{i\in\c{I}} \frac{1}{\lambda_i^3}}.
\end{equation}
\end{proposition}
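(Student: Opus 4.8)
The plan is to reduce the claimed reproducing kernel Hilbert space bound on $\abs{\mu_{(\.)\given\v{y}}-\hat\mu_{(\.)\given\v{y}}}_{R_{k,\m{X}}^{\c{I}}}$ to the coordinate-wise estimates already proved in \Cref{lem:sgd-error-bound}. Write $\v{v}^*=(\m{K}_\m{XX}+\sigma^2\m{I})\inv\v{y}$ for the exact representer weights, so that $\mu_{(\.)\given\v{y}}=\sum_i v_i^* k(\.,\v{x}_i)$, and let $\overline{\v{v}}_t$ be the Polyak-averaged SGD iterate, so that $\hat\mu_{(\.)\given\v{y}}=\sum_i (\overline{v}_t)_i k(\.,\v{x}_i)$. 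Both functions therefore live in $R_{k,\m{X}}$, and their difference has representer weights $\v\theta=\v{v}^*-\overline{\v{v}}_t$. By \Cref{lem:seminorm-formula}, applied to $\v\theta$, we have
\begin{equation}
\abs{\mu_{(\.)\given\v{y}}-\hat\mu_{(\.)\given\v{y}}}_{R_{k,\m{X}}^{\c{I}}}^2 = \v\theta\T\m{U}\m\Lambda_{\c{I}}\m{U}\T\v\theta = \sum_{i\in\c{I}}\lambda_i\,(\v{u}_i\T\v\theta)^2 = \sum_{i\in\c{I}}\lambda_i\,\del{\v{u}_i\T(\v{v}^*-\overline{\v{v}}_t)}^2,
\end{equation}
so the task is purely to bound each eigendirection error $\abs{\v{u}_i\T(\v{v}^*-\overline{\v{v}}_t)}$ and then sum.

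Next I would invoke \Cref{lem:sgd-error-bound} directly: choosing the index set in that lemma to be our $\c{I}$, the lemma gives, with probability $1-\delta$, simultaneously for all $i\in\c{I}$,
\begin{equation}
\abs{\v{u}_i\T(\v{v}^*-\overline{\v{v}}_t)} \leq \frac{1}{\lambda_i^2}\del{\frac{\norm{\v{y}}_2}{\eta\sigma^2 t} + G\sqrt{\frac{2}{t}\log\frac{|\c{I}|}{\delta}}}.
\end{equation}
Denote the bracketed quantity by $C_t$, which does not depend on $i$. Substituting into the seminorm identity,
\begin{equation}
\abs{\mu_{(\.)\given\v{y}}-\hat\mu_{(\.)\given\v{y}}}_{R_{k,\m{X}}^{\c{I}}}^2 \leq \sum_{i\in\c{I}}\lambda_i\cdot\frac{C_t^2}{\lambda_i^4} = C_t^2\sum_{i\in\c{I}}\frac{1}{\lambda_i^3}.
\end{equation}
Taking square roots yields exactly the claimed bound, since $\sqrt{C_t^2\sum_{i\in\c{I}}\lambda_i^{-3}} = C_t\sqrt{\sum_{i\in\c{I}}\lambda_i^{-3}}$.

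The only subtlety worth checking carefully is the probabilistic bookkeeping: \Cref{lem:sgd-error-bound} already delivers a \emph{uniform} high-probability statement over all $i$ in the chosen index set (it applies the union bound internally with $\delta'=\delta/|\c{I}|$), so no additional union bound is needed here and the single event of probability $1-\delta$ suffices for the whole sum. I should also confirm that the hypotheses match: the proposition inherits $\delta>0$, $\sigma^2>0$, the learning-rate condition $0<\eta<\frac{1}{\lambda_1(\lambda_1+\sigma^2)}$, the zero initialisation, and the $G$-sub-Gaussian gradient estimate, which are precisely the assumptions of \Cref{lem:sgd-error-bound}, so the lemma applies verbatim. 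There is essentially no hard step — the work is entirely in recognising that \Cref{lem:seminorm-formula} converts the seminorm into a $\lambda_i$-weighted sum of squared eigencoordinate errors, that the $\lambda_i^2$ in the denominator of the per-coordinate bound becomes $\lambda_i^{-4}$ after squaring, and that multiplying by the weight $\lambda_i$ leaves $\lambda_i^{-3}$, matching the $\sqrt{\sum_{i\in\c{I}}\lambda_i^{-3}}$ factor in the statement. The extension to posterior samples mentioned after \Cref{thm:sgd_convergence} follows by the same argument with $\v{y}$ replaced by $\v{f}_\m{X}+\v\eps$.
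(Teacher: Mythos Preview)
Your proposal is correct and follows essentially the same approach as the paper: apply \Cref{lem:seminorm-formula} to express the seminorm as $\sqrt{\sum_{i\in\c{I}}\lambda_i\,|\v{u}_i\T(\v{v}^*-\overline{\v{v}}_t)|^2}$, plug in the per-eigendirection bound from \Cref{lem:sgd-error-bound}, and simplify $\lambda_i\cdot\lambda_i^{-4}=\lambda_i^{-3}$. Your remarks on the union bound being already internal to the lemma and on the hypotheses carrying over verbatim are accurate and match the paper's reasoning.
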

\begin{proof}
Using \Cref{lem:seminorm-formula} and linearity, we have 
\begin{align}
\abs{\mu_{(\.) \given \v{y}} - \hat{\mu}_{(\.) \given \v{y}}}_{R_{k,\m{X}}^{\c{I}}}
&= \abs{\sum_{i=1}^n (v^*_i - \hat{v}_i) k(\., \v{x}_i)}_{R_{k,\m{X}}^{\c{I}}}, \\
&= \sqrt{(\v{v}^* - \hat{\v{v}})^\mathsf{T} \m{U}\m\Lambda_{\c{I}} \m{U}^\mathsf{T} (\v{v}^* - \hat{\v{v}})}, \\
&= \sqrt{\sum_{i\in\c{I}} \abs{\v{u}_i^\mathsf{T}(\v{v}^* - \hat{\v{v}})}^2 \lambda_i}, \\
&\leq \sqrt{\sum_{i\in\c{I}} \abs{\frac{1}{\lambda_i^2} \del{\frac{\norm{\v{y}}_2}{\eta\sigma^2t} + G\sqrt{\frac{2}{t} \log\frac{|\c{I}|}{\delta}}}}^2 \lambda_i}, \\
&= \del{\frac{\norm{\v{y}}_2}{\eta\sigma^2t} + G\sqrt{\frac{2}{t} \log\frac{|\c{I}|}{\delta}}} \sqrt{\sum_{i\in\c{I}} \frac{1}{\lambda_i^3}},
\end{align}
where the inequality is obtained using the result of \Cref{lem:sgd-error-bound}, yielding the claim.
\end{proof}
Our main claim follows directly from the established framework.
\PropSGD*
\begin{proof}
Choose $\c{I}$ to be a singleton and apply \Cref{prop:sgd-error-bound-rkhs}.
Replacing $|\c{I}|$ with $n$ in the final bound makes the claim hold with probability $1-\delta$ for all $i=1,...,n$ simultaneously.
\end{proof}
Therefore, SGD converges fast with respect to this seminorm.
In particular, taking $\c{I}$ to be the full index set, a position-dependent pointwise convergence bound follows.
\begin{corollary}
Under the assumptions of \Cref{lem:sgd-error-bound}, we have 
\begin{equation}
|\mu_{(\.) \given \v{y}} - \hat{\mu}_{(\.) \given \v{y}}| \leq \del{\frac{\norm{\v{y}}_2}{\eta\sigma^2t} + G\sqrt{\frac{2}{t} \log\frac{|\c{I}|}{\delta}}} \sum_{i=1}^n \frac{1}{\sqrt{\lambda_i^3}} |u^{(i)}(\.)|.
\end{equation}
\end{corollary}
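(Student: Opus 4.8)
The plan is to reduce the claimed pointwise bound to the per-direction representer-weight estimate already proved in \Cref{lem:sgd-error-bound} (with the index set taken to be all of $\{1,\dots,n\}$), using the change of basis between canonical and spectral basis functions from \Cref{lem:change-of-basis}. First I would write the error function as $\mu_{(\.)\given\v{y}} - \hat{\mu}_{(\.)\given\v{y}} = \sum_{i=1}^n (v_i^* - \hat{v}_i)\,k(\.,\v{x}_i)$, which lies in $R_{k,\m{X}}$ since both the exact and the SGD posterior means are representer-weight combinations of the canonical basis functions. Applying \Cref{lem:change-of-basis} with $\v\theta = \v{v}^* - \hat{\v{v}}$ yields spectral coefficients $\v{w} = \m\Lambda^{\frac12}\m{U}\T(\v{v}^* - \hat{\v{v}})$, so that $\mu_{(\.)\given\v{y}} - \hat{\mu}_{(\.)\given\v{y}} = \sum_{i=1}^n w_i\, u^{(i)}(\.)$ with $w_i = \sqrt{\lambda_i}\,\v{u}_i\T(\v{v}^* - \hat{\v{v}})$.

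Next I would apply the triangle inequality pointwise over the $n$ terms, obtaining $\abs{\mu_{(\.)\given\v{y}}(\.) - \hat{\mu}_{(\.)\given\v{y}}(\.)} \le \sum_{i=1}^n \sqrt{\lambda_i}\,\abs{\v{u}_i\T(\v{v}^* - \hat{\v{v}})}\,\abs{u^{(i)}(\.)}$, and then invoke \Cref{lem:sgd-error-bound} with $\c{I} = \{1,\dots,n\}$, which simultaneously (with probability $1-\delta$) bounds each $\abs{\v{u}_i\T(\v{v}^* - \hat{\v{v}})}$ by $\lambda_i^{-2}\del*{\norm{\v{y}}_2/(\eta\sigma^2 t) + G\sqrt{(2/t)\log(n/\delta)}}$. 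Substituting this in and using $\sqrt{\lambda_i}\cdot\lambda_i^{-2} = \lambda_i^{-3/2} = (\lambda_i^3)^{-1/2}$, the bracketed factor is the same for every $i$ and pulls out of the sum, which gives exactly the stated inequality (with $n$ in place of $\abs{\c{I}}$ inside the logarithm, matching \Cref{thm:sgd_convergence}).

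I do not expect a substantive obstacle here; this is essentially a repackaging of \Cref{prop:sgd-error-bound-rkhs} with $\c{I}$ the full index set, except that the projection/seminorm is replaced by a pointwise triangle-inequality bound. The only two points requiring care are (i) that the change of basis must be applied to the difference $\v{v}^* - \hat{\v{v}}$ as a single vector, so that the $u^{(i)}$-coefficient genuinely reads $\sqrt{\lambda_i}\,\v{u}_i\T(\v{v}^* - \hat{\v{v}})$ and not a term-by-term decomposition, and (ii) that the high-probability event in \Cref{lem:sgd-error-bound} is already uniform over all directions $i$ when $\c{I} = \{1,\dots,n\}$, so no additional union bound beyond the one absorbed into that lemma is needed. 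The result extends verbatim from the posterior mean to posterior samples by replacing $\v{y}$ with $\v{f}_\m{X} + \v\eps$, exactly as in the main text.
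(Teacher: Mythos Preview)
Your proposal is correct and follows essentially the same approach as the paper: expand the error in the spectral basis via \Cref{lem:change-of-basis}, apply the triangle inequality pointwise, and invoke \Cref{lem:sgd-error-bound} with $\c{I}=\{1,\dots,n\}$ to bound each $|\v{u}_i\T(\v{v}^*-\hat{\v{v}})|$, then factor out the common bracket. The two cautionary points you raise are exactly the ones that matter, and the paper handles them in the same way.
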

\begin{proof}
Using \Cref{lem:change-of-basis}, we write 
\begin{equation}
\mu_{(\.) \given \v{y}} - \hat{\mu}_{(\.) \given \v{y}}
= \sum_{j=1}^n (v^*_j - \hat{v}_j) k(\., \v{x}_i)
= \sum_{i=1}^n (w^*_i - \hat{w}_i) u^{(i)}(\.),
\end{equation}
where $\v{w}^* - \hat{\v{w}} = \m\Lambda^{\frac{1}{2}} \m{U}^\mathsf{T} (\v{v}^* - \hat{\v{v}})$.
Applying \Cref{lem:sgd-error-bound} with $\c{I} = \{1,...,n\}$, we conclude 
\begin{align}
|\mu_{(\.) \given \v{y}} - \hat{\mu}_{(\.) \given \v{y}}|
&\leq \sum_{i=1}^n |w^*_i - \hat{w}_i| |u^{(i)}(\.)|
= \sum_{i=1}^n \sqrt{\lambda_i}|\v{u}_i^\mathsf{T} (\v{v}^* - \hat{\v{v}})| |u^{(i)}(\.)|, \\
&\leq \sum_{i=1}^n \sqrt{\lambda_i} \abs{\frac{1}{\lambda_i^2} \del{\frac{\norm{\v{y}}_2}{\eta\sigma^2t} + G\sqrt{\frac{2}{t} \log\frac{|\c{I}|}{\delta}}}} |u^{(i)}(\.)|, \\
&= \del{\frac{\norm{\v{y}}_2}{\eta\sigma^2t} + G\sqrt{\frac{2}{t} \log\frac{|\c{I}|}{\delta}}} \sum_{i=1}^n \frac{1}{\sqrt{\lambda_i^3}} |u^{(i)}(\.)|,
\end{align}
which gives the claim.
\end{proof}
Examining the expression, the approximation error will be high at locations $\v{x} \in \c{X}$ where $u^{(i)}(\v{x})$ corresponding to tail eigenfunctions is large.
We proceed to analyse when this occurs.

\subsection{Courant-Fischer in the Reproducing Kernel Hilbert Space}
\label{apd:courant_in_input_space}
Since the preceding seminorm is induced by a projection within the RKHS, we want to understand what kind of functions the corresponding subspace contains.
To get a qualitative view, the basic idea is to lift the Courant-Fischer characterisation of eigenvalues and eigenvectors to the RKHS.
To this end, we will need the following variant of the min-max theorem, which is useful for simultaneously describing multiple eigenvectors as solutions to a sequence of Rayleigh quotient optimisation problems, where the next optimisation problem is performed on a subspace which ensures its solution is orthogonal to all previous solutions.
As before, we consider eigenvalues in decreasing order, namely $\lambda_1 \geq .. \geq \lambda_n \geq 0$.
\begin{corollary}
\label{res:courant-fischer}
Let $\m{A}$ be a positive semi-definite matrix.
Then 
\begin{equation}
\lambda_i = \max_{\substack{\v{u} \in \R^n\takeaway\{\v{0}\}\\\v{u}^\mathsf{T} \v{u}_j = 0, \forall j < i}} \frac{\v{u}^\mathsf{T}\m{A}\v{u}}{\v{u}^\mathsf{T}\v{u}},
\end{equation}
where the eigenvectors $\v{u}_i$ are the respective maximisers.
\end{corollary}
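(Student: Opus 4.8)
The plan is to prove this directly from the spectral decomposition of $\m{A}$, with no recourse to any prior min-max theorem. First I would fix an orthonormal eigenbasis $\v{u}_1, \dots, \v{u}_n$ of $\m{A}$ with associated eigenvalues $\lambda_1 \geq \dots \geq \lambda_n \geq 0$, so that $\m{A} = \sum_{k=1}^n \lambda_k \v{u}_k \v{u}_k\T$, and every $\v{u} \in \R^n$ expands as $\v{u} = \sum_{k=1}^n c_k \v{u}_k$ with $c_k = \v{u}\T \v{u}_k$. The claim then splits into an upper bound on the Rayleigh quotient over the constrained set, and the verification that the bound is attained at $\v{u}_i$.

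For the upper bound, take any $\v{u} \neq \v{0}$ with $\v{u}\T \v{u}_j = 0$ for all $j < i$. By orthonormality this forces $c_1 = \dots = c_{i-1} = 0$, so $\v{u} = \sum_{k \geq i} c_k \v{u}_k$ with the $c_k$ not all zero. Expanding numerator and denominator in this basis gives $\v{u}\T \m{A} \v{u} = \sum_{k \geq i} \lambda_k c_k^2$ and $\v{u}\T \v{u} = \sum_{k \geq i} c_k^2 > 0$, hence
\begin{equation}
\frac{\v{u}\T \m{A} \v{u}}{\v{u}\T \v{u}} = \frac{\sum_{k \geq i} \lambda_k c_k^2}{\sum_{k \geq i} c_k^2} \leq \lambda_i,
\end{equation}
since $\lambda_k \leq \lambda_i$ for $k \geq i$ and the weights $c_k^2$ are nonnegative with positive sum. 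For attainment, note that $\v{u}_i$ is itself admissible ($\v{u}_i\T \v{u}_j = 0$ for $j < i$ by orthonormality) and yields $\v{u}_i\T \m{A} \v{u}_i / \v{u}_i\T \v{u}_i = \lambda_i$. Therefore the supremum equals $\lambda_i$ and is achieved at $\v{u}_i$, which gives the claim.

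There is essentially no hard step here; it is routine linear algebra. The only point deserving a word of care is the case of repeated eigenvalues: the statement is to be understood relative to a single fixed orthonormal eigenbasis $\v{u}_1, \dots, \v{u}_n$, for which the nested orthogonality constraints are mutually consistent. A different choice of eigenbasis produces different — but equally valid — maximizers, so the phrasing "the eigenvectors $\v{u}_i$ are the respective maximizers" should be read as "a choice of orthonormal eigenvectors $\v{u}_i$ realizes the maximizers". If one instead wished to reduce to the classical Courant–Fischer theorem rather than argue from the spectral decomposition, the alternative would be a short induction on $i$ using the fact that the constrained maximum over the orthogonal complement of $\Span\{\v{u}_1, \dots, \v{u}_{i-1}\}$ coincides with the top eigenvalue of the restriction of $\m{A}$ to that complement; I would, however, favour the direct argument above.
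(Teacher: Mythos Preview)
Your argument is correct. The paper, by contrast, does not give a self-contained proof at all: it simply cites \citet[Theorem 4.2.2]{horn12}, i.e.\ the general Courant--Fischer min-max theorem, specialised to the decreasing-eigenvalue ordering with a particular choice of indices. So your route is genuinely different and more elementary: you work directly from the spectral decomposition and a weighted-average bound, avoiding any appeal to the full min-max machinery. What the paper's approach buys is brevity (one citation) and the reassurance of a textbook reference; what your approach buys is self-containment and the explicit handling of the repeated-eigenvalue caveat, which the paper leaves implicit. Both are entirely adequate for what is, as you note, a routine linear-algebra fact.
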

\begin{proof}
\citet[Theorem 4.2.2]{horn12}, modified appropriately to handle eigenvalues in decreasing order, and where we choose $i_1,...,i_k = 1,...,n-i+1$.
\end{proof}
To prove the main claim, we first consider the representer weight space $R_{k,\m{X}}$ and then $\c{H}_k$.
\begin{lemma}
\label{lem:courant-fischer-rkhs}
We have 
\begin{equation}
u^{(i)}(\.) = \argmax_{u \in R_{k,\m{X}}} \cbr{\sum_{i=1}^n u(\v{x}_i)^2 : \norm{u}_{\c{H}_k} = 1, \innerprod[0]{u}{u^{(j)}}_{\c{H}_k} = 0, \forall j < i}.
\end{equation}
\end{lemma}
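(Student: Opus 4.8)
The plan is to transport the constrained maximisation from the representer weight space $R_{k,\m{X}}$ into the spectral coordinates $\v{w}$ of \Cref{lem:change-of-basis}, where it becomes a standard Rayleigh-quotient problem for the diagonal matrix $\m\Lambda$, and then to read off the answer from the Courant--Fischer characterisation in \Cref{res:courant-fischer}.

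First I would set up the dictionary. Any $u \in R_{k,\m{X}}$ can be written both as $u(\.) = \sum_{j} \theta_j k(\., \v{x}_j)$ and as $u(\.) = \sum_j w_j u^{(j)}(\.)$, and by \Cref{lem:change-of-basis} the two coordinate vectors satisfy $\v{w} = \m\Lambda^{\frac{1}{2}} \m{U}\T \v\theta$ with $\norm{u}_{\c{H}_k}^2 = \v{w}\T\v{w}$. Using $u^{(j)}(\v{x}_i) = \sqrt{\lambda_j}\,U_{ij}$, which follows from the eigendecomposition $\m{K}_\m{XX} = \m{U}\m\Lambda\m{U}\T$, the vector of data evaluations is $(u(\v{x}_1),\dots,u(\v{x}_n))\T = \m{U}\m\Lambda^{\frac{1}{2}}\v{w}$, so
\begin{equation}
\sum_{i=1}^n u(\v{x}_i)^2 = \norm{\m{U}\m\Lambda^{\frac{1}{2}}\v{w}}_2^2 = \v{w}\T\m\Lambda\v{w}.
\end{equation}
Since $u^{(1)},\dots,u^{(n)}$ form an orthonormal basis of $R_{k,\m{X}}$, bilinearity of $\innerprod{\.}{\.}_{\c{H}_k}$ gives $\innerprod{u}{u^{(j)}}_{\c{H}_k} = w_j$, so the constraints $\innerprod{u}{u^{(j)}}_{\c{H}_k} = 0$ for $j < i$ become simply $\v{w}\T\v{e}_j = 0$ for $j < i$.

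With this dictionary in hand, the problem is to maximise $\v{w}\T\m\Lambda\v{w}$ over unit vectors $\v{w}$ orthogonal to $\v{e}_1,\dots,\v{e}_{i-1}$. As $\m\Lambda$ is diagonal and positive semi-definite with eigenvalues $\lambda_1 \geq \dots \geq \lambda_n$ in decreasing order and corresponding eigenvectors $\v{e}_1,\dots,\v{e}_n$, \Cref{res:courant-fischer} yields that the maximal value of the scale-invariant quotient $\v{w}\T\m\Lambda\v{w}/\v{w}\T\v{w}$ subject to these orthogonality constraints is $\lambda_i$, attained at $\v{w} = \v{e}_i$; restricting to the unit sphere changes neither the optimal value nor the fact that $\v{e}_i$ is optimal. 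Translating back through the dictionary, $\v{w} = \v{e}_i$ corresponds to $u = u^{(i)}$, which is feasible (it has unit RKHS norm and is RKHS-orthogonal to $u^{(1)},\dots,u^{(i-1)}$) and attains the maximum, which is exactly the assertion.

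I expect no serious obstacle here: the content is the coordinate change together with an application of a min--max theorem already recorded as \Cref{res:courant-fischer}. The only points needing care are verifying $\innerprod{u}{u^{(j)}}_{\c{H}_k} = w_j$ from orthonormality of the spectral basis, and checking that the hard equality constraint $\norm{u}_{\c{H}_k} = 1$ is correctly matched to the scale-invariant Rayleigh quotient; moreover, if $\lambda_i$ is a repeated eigenvalue the maximiser is not unique, but $u^{(i)}$ remains an admissible choice, which is all that is claimed.
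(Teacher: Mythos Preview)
Your proposal is correct and follows essentially the same approach as the paper: both reduce the RKHS maximisation to a finite-dimensional Rayleigh quotient and invoke \Cref{res:courant-fischer}. The paper applies Courant--Fischer to $\m{K}_{\m{X}\m{X}}$ and then performs a chain of substitutions to reach the RKHS formulation, whereas you first pass to spectral coordinates via \Cref{lem:change-of-basis} so that the objective becomes $\v{w}\T\m\Lambda\v{w}$ and the constraints become orthogonality to standard basis vectors; this is a minor reorganisation of the same computation, arguably slightly cleaner.
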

\begin{proof}
Define $\v{w}_j = \m{U} \m\Lambda^{-\frac{1}{2}} \v{u}_j$, where we recall that $\v{u}_j$ are the eigenvectors of the kernel matrix.
From \Cref{res:courant-fischer}, the reproducing property, and the explicit form of the RKHS inner product on $R_{k,\m{X}}$ in the basis defined by the canonical basis functions, we conclude that
\begin{align}
\lambda_i &= \max_{\substack{\v{u}\in\R^n\takeaway\{\v{0}\}\\\v{u}^\mathsf{T}\v{u}_j = 0, \forall j<i}} \frac{\v{u}^\mathsf{T} \m{K}_{\m{X}\m{X}} \v{u}}{\v{u}^\mathsf{T}\v{u}}
&=& \max_{\substack{\v{w}\in\R^n\takeaway\{\v{0}\}\\\v{w}^\mathsf{T}\m{K}_{\m{X}\m{X}}\v{w}_j = 0,\forall j<i}} \frac{\v{w}^\mathsf{T} \m{K}_{\m{X}\m{X}}^2 \v{w}}{\v{w}^\mathsf{T} \m{K}_{\m{X}\m{X}} \v{w}}, \\
&= \max_{\substack{\v{w}\in\R^n\takeaway\{\v{0}\}\\\v{w}^\mathsf{T}\m{K}_{\m{X}\m{X}}\v{w}_j = 0,\forall j<i}} \frac{\norm{\m{K}_{\m{X}\m{X}} \v{w}}_2^2}{\v{w}^\mathsf{T} \m{K}_{\m{X}\m{X}} \v{w}}
&=& \max_{\substack{\v{w}\in\R^n\takeaway\{\v{0}\}\\\v{w}^\mathsf{T}\m{K}_{\m{X}\m{X}}\v{w}_j = 0,\forall j<i}} \frac{\norm{\sum_{i=1}^n w_i k(\m{X},\v{x}_i)}_2^2}{\norm{\sum_{i=1}^n w_i k(\., \v{x}_i)}_{\c{H}_k}^2}, \\
&= \max_{\substack{u \in R_{k,\m{X}}\takeaway\{0\}\\\innerprod[0]{u}{u^{(j)}}_{\c{H}_k} = 0, \forall j < i}} \frac{\norm{u(\m{X})}_2^2}{\norm{u}_{\c{H}_k}^2}
&=& \max_{\substack{u \in R_{k,\m{X}}\\\norm{u}_{\c{H}_k}=1\\\innerprod[0]{u}{u^{(j)}}_{\c{H}_k} = 0, \forall j < i}} \sum_{i=1}^n u(\v{x}_i)^2,
\end{align}
which gives the claim.
\end{proof}
Next, we use an orthogonality argument, which relies on the projection theorem for Hilbert spaces, to extend the optimisation problem to $\c{H}_k$.
\begin{proposition}
We have 
\begin{equation}
u^{(i)}(\.) = \argmax_{u \in \c{H}_k} \cbr{\sum_{i=1}^n u(\v{x}_i)^2 : \norm{u}_{\c{H}_k} = 1, \innerprod[0]{u}{u^{(j)}}_{\c{H}_k} = 0, \forall j < i}.
\end{equation}
\end{proposition}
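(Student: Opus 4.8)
The plan is to reduce the optimisation over all of $\c{H}_k$ to the optimisation over the finite-dimensional representer weight space $R_{k,\m{X}}$, which is already handled by \Cref{lem:courant-fischer-rkhs}. Since $R_{k,\m{X}}$ is finite-dimensional, it is a closed subspace of $\c{H}_k$, so the Hilbert space projection theorem applies and every $u \in \c{H}_k$ decomposes uniquely as $u = u_\parallel + u_\perp$ with $u_\parallel = \proj_{R_{k,\m{X}}} u$ and $u_\perp$ orthogonal to $R_{k,\m{X}}$.

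The first key step is to observe that the objective depends only on $u_\parallel$. By the reproducing property, $u_\perp(\v{x}_i) = \innerprod{k(\.,\v{x}_i)}{u_\perp}_{\c{H}_k} = 0$ for each $i$, because $k(\.,\v{x}_i) \in R_{k,\m{X}}$; hence $u(\v{x}_i) = u_\parallel(\v{x}_i)$ and $\sum_{i=1}^n u(\v{x}_i)^2 = \sum_{i=1}^n u_\parallel(\v{x}_i)^2$. The second key step is to note that the orthogonality constraints $\innerprod{u}{u^{(j)}}_{\c{H}_k} = 0$ for $j < i$ are constraints on $u_\parallel$ alone, since each $u^{(j)} \in R_{k,\m{X}}$ and therefore $\innerprod{u}{u^{(j)}}_{\c{H}_k} = \innerprod{u_\parallel}{u^{(j)}}_{\c{H}_k}$.

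Next I would argue that a maximiser must lie in $R_{k,\m{X}}$. Suppose $u^*$ attains the supremum over the constraint set in $\c{H}_k$. Since $\lambda_i > 0$, \Cref{lem:courant-fischer-rkhs} shows the supremum is strictly positive, so $u^*_\parallel \neq 0$. Set $\tilde u = u^*_\parallel / \norm{u^*_\parallel}_{\c{H}_k}$. Then $\tilde u \in R_{k,\m{X}}$, $\norm{\tilde u}_{\c{H}_k} = 1$, $\tilde u$ still satisfies the orthogonality constraints, and by the Pythagorean identity $\norm{u^*_\parallel}_{\c{H}_k}^2 \leq \norm{u^*}_{\c{H}_k}^2 = 1$, so
\begin{equation}
\sum_{i=1}^n \tilde u(\v{x}_i)^2 = \frac{\sum_{i=1}^n u^*_\parallel(\v{x}_i)^2}{\norm{u^*_\parallel}_{\c{H}_k}^2} \geq \sum_{i=1}^n u^*_\parallel(\v{x}_i)^2 = \sum_{i=1}^n u^*(\v{x}_i)^2.
\end{equation}
Thus the supremum over $\c{H}_k$ equals the supremum over $R_{k,\m{X}}$ and is attained within $R_{k,\m{X}}$, so the claim follows by applying \Cref{lem:courant-fischer-rkhs}, whose maximiser is $u^{(i)}$.

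I expect the only delicate point to be the normalisation bookkeeping in the last step: one must rule out $u^*_\parallel = 0$ (handled by positivity of the supremum, which relies on $\lambda_i > 0$) and verify that renormalising the in-subspace component neither violates the constraints nor decreases the objective, which is exactly where the Pythagorean identity $\norm{u}_{\c{H}_k}^2 = \norm{u_\parallel}_{\c{H}_k}^2 + \norm{u_\perp}_{\c{H}_k}^2$ does the work. Everything else is a direct transfer from the representer-weight-space statement already proved.
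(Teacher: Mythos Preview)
Your proposal is correct and follows essentially the same approach as the paper: decompose via orthogonal projection, use the reproducing property to show the orthogonal component vanishes on the data, apply the Pythagorean identity, renormalise the in-subspace component to obtain a feasible point with at least as large an objective, and reduce to \Cref{lem:courant-fischer-rkhs}. The only minor difference is that you project directly onto $R_{k,\m{X}}$ and handle the orthogonality constraints separately (noting each $u^{(j)} \in R_{k,\m{X}}$), whereas the paper projects onto the intersection $\c{H}_k^{(i)} \cap R_{k,\m{X}}$; your version is arguably slightly cleaner but not substantively different.
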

\begin{proof}
Let $\c{H}_k^{(i)}$ be the orthogonal complement of the subspace $\Span\{u^{(j)}, j=1,...,i-1\}$ in $\c{H}_k$ and note by feasibility that $u^{(i)} \in \c{H}_k^{(i)}$.
Consider the decomposition of $u^{(i)}$ into its projection onto an orthogonal complement with respect to the finite-dimensional subspace $\c{H}_k^{(i)} \^ R_{k,\m{X}}$, namely
\begin{equation}
u^{(i)}(\.) = u^\parallel(\.) + u^\orth(\.),
\end{equation}
where this notation is defined as $u^\parallel = \proj_{\c{H}_k^{(i)}\^R_{k,\m{X}}} u^{(i)}$ and $u^\orth = u^{(i)} - u^\parallel$.
Observe that
\begin{equation}
u^\orth(\v{x}_i) = \innerprod[0]{u^\orth}{k(\., \v{x}_i)}_{\c{H}_k} = \innerprod[0]{\proj_{\c{H}_k^{(i)}} u^\orth}{k(\., \v{x}_i)}_{\c{H}_k} = \innerprod[0]{u^\orth}{\proj_{\c{H}_k^{(i)}} k(\., \v{x}_i)}_{\c{H}_k} = 0,
\end{equation}
where the first equality follows from the reproducing property, the second equality follows because $u^\orth \in \c{H}_k^{(i)}$, the third equality follows since the projection is orthogonal and thus self-adjoint, and the final equality follows because $u^\orth$ by definition lives in the orthogonal complement of $\c{H}_k^{(i)}\^R_{k,\m{X}}$ within $\c{H}_k^{(i)}$, with the subspace inner product.
Moreover, by the projection theorem and the fact that $\c{H}_k^{(i)}$ inherits its norm from $\c{H}_k$, note that 
\begin{equation}
\norm[0]{u^{(i)}}_{\c{H}_k}^2 = \norm[0]{u^\parallel}_{\c{H}_k}^2 + \norm[0]{u^\orth}_{\c{H}_k}^2.
\end{equation}
Suppose that $u^\orth \neq 0$.
This implies that $\norm[0]{u^\orth}_{\c{H}_k} > 0$.
From the above, we have 
\begin{equation}
\sum_{i=1}^n u^{(i)}(\v{x}_i)^2 = \sum_{i=1}^n u^\parallel(\v{x}_i)^2
\quad \text{ and } \quad
\norm[0]{u^\parallel}_{\c{H}_k} < \norm[0]{u^{(i)}}_{\c{H}_k}.
\end{equation}
Define the function 
\begin{equation}
u'(\.) = \frac{\norm[0]{u^{(i)}}_{\c{H}_k}}{\norm[0]{u^\parallel}_{\c{H}_k}} u^\parallel(\.),
\end{equation}
and note that $\norm{u'} = 1$, and $\innerprod[0]{u'}{u^{(j)}} = 0$ for all $j < i$ since $u^\parallel \in \c{H}_k^{(i)}$, which makes $u'$ a feasible solution to the optimisation problem considered.
At the same time, it satisfies 
\begin{equation}
\sum_{i=1}^n u'(\v{x}_i)^2 = \frac{\norm[0]{u^{(i)}}_{\c{H}_k}^2}{\norm[0]{u^\parallel}_{\c{H}_k}^2}\sum_{i=1}^n u^\parallel(\v{x}_i)^2 > \sum_{i=1}^n u^\parallel(\v{x}_i)^2 = \sum_{i=1}^n u^{(i)}(\v{x}_i)^2,
\end{equation}
which shows that $u^{(i)}$, contrary to its definition, is not actually optimal.
We conclude that $u^\orth = 0$, which means that $u^{(i)} \in \c{H}_k^{(i)} \^ R_{k,\m{X}}$.
The claim follows from \Cref{lem:courant-fischer-rkhs}.
\end{proof}
This means that the top eigenvalues can be understood as the largest possible squared sums of canonical basis functions evaluated at the data, subject to RKHS-norm constraints.
In situations where $k(\., \v{x})$ is continuous, non-negative, and decays as one moves away from the data, it is intuitively clear that such sums will be maximised by placing weight on canonical basis functions which cover as much of the data as possible.
This mirrors the RKHS view of kernel principal component analysis.

\end{appendices}


\end{document}